\documentclass[a4paper,11pt]{article}     %
\usepackage[margin=2cm]{geometry}          %
\usepackage{comment}                       %
\usepackage{placeins}                     %
\AddToHook{env/thebibliography/begin}{\footnotesize}
\newif\ifpreprintbuild \preprintbuildtrue  %
\makeatletter
\newcommand{\preprintonly}[1]{\ifpreprintbuild\expandafter\@firstofone\else\expandafter\@gobble\fi{#1}}
\newcommand{\journalonly}[1]{\ifpreprintbuild\expandafter\@gobble\else\expandafter\@firstofone\fi{#1}}
\newcommand{\journalpreprint}[2]{\ifpreprintbuild\expandafter\@secondoftwo\else\expandafter\@firstoftwo\fi{#1}{#2}}
\makeatother
\NewDocumentEnvironment{preprinttext}{+b}{\preprintonly{#1}}{}
\NewDocumentEnvironment{journaltext}{+b}{\journalonly{#1}}{}

\newcommand{\preprinttitle}{\fwname{}: Neuro-Symbolic Technical Intelligence for Auditing Engineering Specifications with an Application to Aircraft}                        %
\newtheorem{proposition}{Proposition}      %
\newenvironment{pf}{\par\noindent\textit{Proof.}\ \ignorespaces}{\hfill$\square$\par}
\makeatletter
\def\preprint@newlabel#1#2{\@newl@bel r{#1}{#2}\@newl@bel r{S-#1}{#2}\@newl@bel r{M-#1}{#2}}
\AddToHook{begindocument/before}{\let\newlabel\preprint@newlabel}
\makeatother

\usepackage[T1]{fontenc}
\usepackage[utf8]{inputenc}
\usepackage[english]{babel}
\ifdefined\pdfgentounicode   %
  \input{glyphtounicode}
\fi

\usepackage{graphicx}
\graphicspath{{figures/}}
\usepackage{adjustbox}
\usepackage{booktabs}
\usepackage{multirow}
\usepackage{array}
\usepackage{tabularx}
\usepackage{threeparttable}

\usepackage{amsmath}
\usepackage{amssymb}
\usepackage{textcomp}

\usepackage[ruled,vlined,linesnumbered]{algorithm2e}

\usepackage{tikz}
\usetikzlibrary{arrows.meta,positioning,fit,backgrounds,calc,shapes.geometric,decorations.pathreplacing}

\usepackage{url}
\usepackage{xcolor}
\usepackage{comment}
\usepackage{cite}                  %
\usepackage[hidelinks]{hyperref}   %

\newif\ifblind  \blindfalse
\newif\ifmarkup \markupfalse

\ifmarkup
  
  \DeclareRobustCommand{\omesso}[1]{\textcolor{red}{\relax#1}}
  
  \newenvironment{omessoblock}{\par\color{red}}{\par}
\else

  \ifblind
    \DeclareRobustCommand{\omesso}[1]{#1}
    
  \else
    \DeclareRobustCommand{\omesso}[1]{}
    \excludecomment{omessoblock}
  \fi
\fi

\newcommand{\fwname}{NeSySpec}   %
\newcolumntype{L}[1]{>{\raggedright\arraybackslash}p{#1}}
\newcolumntype{C}[1]{>{\centering\arraybackslash}p{#1}}
\providecommand{\keywords}[1]{\par\medskip\noindent\textbf{Keywords:} #1}

\makeatletter
\newenvironment{widefigure}[1][t]%
  {\if@twocolumn\def\wide@env{figure*}\else\def\wide@env{figure}\fi
   \begin{\wide@env}[#1]}%
  {\end{\wide@env}}
\newenvironment{widetable}[1][t]%
  {\if@twocolumn\def\wide@env{table*}\else\def\wide@env{table}\fi
   \begin{\wide@env}[#1]}%
  {\end{\wide@env}}
\makeatother

\definecolor{neuralfill}{HTML}{E8F0FE}  \definecolor{neuralline}{HTML}{1A56A8}
\definecolor{symbfill}{HTML}{FDF0E3}    \definecolor{symbline}{HTML}{A85A1A}
\definecolor{iofill}{HTML}{F0F0F2}      \definecolor{ioline}{HTML}{55555C}
\definecolor{outfill}{HTML}{E7F4EC}     \definecolor{outline}{HTML}{1E7A45}
\definecolor{expertfill}{HTML}{F5E8FD}  \definecolor{expertline}{HTML}{7A1AA8}
\tikzset{
  blk/.style    = {draw, rounded corners=2pt, align=center, font=\footnotesize,
                   minimum height=9mm, minimum width=15mm, inner sep=3pt,
                   line width=0.6pt},
  neural/.style = {blk, fill=neuralfill, draw=neuralline},
  symb/.style   = {blk, fill=symbfill,   draw=symbline},
  io/.style     = {blk, fill=iofill,     draw=ioline},
  result/.style = {blk, fill=outfill,    draw=outline},
  expert/.style = {blk, fill=expertfill, draw=expertline},
  fusion/.style = {blk, left color=neuralfill, right color=symbfill, draw=ioline},
  key/.style    = {minimum width=3.5mm, minimum height=2.5mm, inner sep=0pt},
  flow/.style   = {-{Stealth[length=2mm]}, line width=0.6pt, draw=ioline},
  lbl/.style    = {font=\scriptsize\itshape, text=ioline},
}

\newcommand{\feas}{\mathrm{feas}}
\newcommand{\sat}{\mathrm{sat}}
\newcommand{\Loc}{\mathrm{Loc}}
\newcommand{\vis}{\mathrm{vis}}
\newcommand{\pME}{\mathrm{pME}}

\newcommand{\IR}{I_{R}}

\newcommand{\KB}{\mathcal{K}}
\newcommand{\Aset}{\mathcal{A}}
\newcommand{\Rset}{\mathcal{R}}
\newcommand{\AVset}{\mathcal{AV}}
\newcommand{\AXset}{\mathcal{AX}}
\newcommand{\pvis}{p_{\mathrm{vis}}}

\newcommand{\lamaux}{\lambda_{\mathrm{aux}}}

\newcommand{\MTOW}{\mathit{MTOW}}
\newcommand{\OEW}{\mathit{OEW}}
\newcommand{\AR}{\mathit{AR}}

\newcommand{\src}[1]{}

\newcommand{\pzdmark}[1]{{\usefont{U}{pzd}{m}{n}\char#1}}
\newcommand{\capY}{\pzdmark{51}}
\newcommand{\capP}{(\pzdmark{51})}
\newcommand{\capN}{\pzdmark{55}}
\newcommand{\capYb}{\pzdmark{52}}

\usepackage{orcidlink}
\hypersetup{pdftitle={NeSySpec: Neuro-Symbolic Technical Intelligence for Auditing Engineering Specifications with an Application to Aircraft}}
\ifblind
  \hypersetup{pdfauthor={Anonymous Authors}}
\else
  \hypersetup{pdfauthor={Pier Mattia Basciano, Gianvito Antenori, Cristiano Andolfi, Giovanni Farina, Salvatore Monteleone}}
\fi

\begin{document}

\title{\preprinttitle}

\ifblind
  \author{Anonymous Authors\\ \textit{Anonymous Institution}}
\else
  \author{%
    Pier Mattia Basciano\textsuperscript{1,2,$\ast$,$\dagger$}\,\orcidlink{0009-0007-2551-4381}\quad
    Gianvito Antenori\textsuperscript{1,2}\,\orcidlink{0009-0000-4396-8490}\quad
    Cristiano Andolfi\textsuperscript{2}\,\orcidlink{0009-0000-3337-9342}\\
    Giovanni Farina\textsuperscript{1}\,\orcidlink{0000-0002-4792-5305}\quad
    Salvatore Monteleone\textsuperscript{1}\,\orcidlink{0000-0003-0158-2295}\\[0.6ex]
    \small\textsuperscript{1}\textit{Department of Engineering, Niccol\`o Cusano University, Rome, Italy}\\[0.2ex]
    \small\textsuperscript{2}\textit{SOMA Research Division, Rome, Italy}\\[0.4ex]
    \footnotesize\texttt{\{piermattia.basciano, gianvito.antenori, giovanni.farina, salvatore.monteleone\}@unicusano.it}\\
    \footnotesize\texttt{\{piermattia.basciano, gianvito.antenori, cristiano.andolfi\}@somatwin.com}\\[0.4ex]
    \footnotesize\textsuperscript{$\ast$}Corresponding author.\quad
    \textsuperscript{$\dagger$}Lead author, who carried out the large majority of the work.}
\fi
\date{}
\maketitle
\suppressfloats[t]%

\begin{abstract}
Engineers who audit a declared specification of an engineered system need to know which physical constraints it respects, which it violates, which field was altered and how to correct it. \fwname{} is an integrated and interpretable neuro-symbolic audit framework, built on Logic Tensor Networks, that answers these questions in one pipeline. Its axioms encode the physical relations of the domain as differentiable graded predicates whose tolerances are learned from a fleet of real records; from the same degrees, it diagnoses the violated constraint, identifies the altered field by abduction, proposes a minimum-change correction and ties image evidence to the parts it describes, abstaining exactly where evidence is missing, and returns all of it in one verification report. We validate \fwname{} on 107 aircraft types, with 16 test aircraft of 9 families held out of training, on specification records paired with ground photographs and satellite imagery and on synthetic errors: generated corruptions, edits labeled from the edit alone and deliberate falsifications. \fwname{} combines all the assessed audit outputs in one pipeline, which no learned, rule-based, solver-based, neuro-symbolic or language-model comparator does; on the test aircraft none is ahead of it in the macro-F1 of its constraint verdicts, in naming the violated constraint and the altered field or in the repair error, and it is competitive in record and image-pair detection. Its diagnosis names the violated constraint in 94.2\% of corrupted test records, against 79.5\% for classical Fellegi-Holt diagnosis, and on edits labeled from the edit alone, without retraining, abduction finds the altered field in 59.3\% of records, against 30.6\% for minimum-cardinality localization with random tie-breaking. Only the knowledge base, the visual predicates, the corruption generator and the data are specific to the domain.
\end{abstract}

\keywords{Engineering Specification Auditing; Neuro-Symbolic Artificial Intelligence; Logic Tensor Networks; Constraint-Based Diagnosis; Explainable Anomaly Detection; Reverse Engineering; Technical Intelligence; Multimodal Consistency}

\par\medskip\noindent\textbf{License.} This work is licensed under a Creative Commons Attribution 4.0 International License (CC BY 4.0, \url{https://creativecommons.org/licenses/by/4.0/}). Third-party photographs reproduced in the figures are excluded from this license.

\section{Introduction}
\label{sec:intro}

Certification, maintenance and procurement engineers work with engineered systems they did not design and reverse-engineer them from declared specifications of masses, dimensions, ratings and performance.
A record can be incomplete, inconsistent, valid for another variant or misreported, and public knowledge bases hold wrong numerical values~\cite{wienand2014detecting}; the errors of a record propagate into the performance models it feeds~\cite{sun2020openap}.
Auditing such records falls within the scope of civil technical intelligence, which gathers and assesses information about technologies and technical products~\cite{lichtenthaler2004technology,zhang2016technology}. Technical intelligence increasingly draws on open sources~\cite{glassman2012intelligence,pastorgalindo2020osint}. For aircraft, photographs and satellite imagery can also help verify the visually observable attributes reported in the specifications~\cite{maji2013fine,shermeyer2021rareplanes}.
We use reverse engineering in a first-approach sense: inferring which physical relations a declared specification and, when available, its image respect or violate, which field departs from them and which value restores them.
Table~\ref{tab:acronyms} lists the acronyms.

\begin{table}[t]%
  \centering
  \caption{Acronyms.}
  \label{tab:acronyms}
  \footnotesize
  \begin{tabularx}{\textwidth}{@{}l>{\raggedright\arraybackslash}X@{\hspace{1.5em}}l>{\raggedright\arraybackslash}X@{}}
    \toprule
    Acronym & Meaning & Acronym & Meaning \\
    \midrule
    AUROC & area under the receiver operating characteristic curve & MLP & multilayer perceptron \\
    BCE   & binary cross-entropy & MLP-SHAP & grouped SHAP attributions of the residual MLP \\
    CAD   & computer-aided design & MTOW & maximum take-off mass \\
    CAM   & class activation map & NeSy & neuro-symbolic artificial intelligence \\
    CFD   & computational fluid dynamics & OEW & operating empty mass \\
    FAA   & Federal Aviation Administration & PGD & projected gradient descent \\
    FEM   & finite element method & PSL & probabilistic soft logic \\
    FPR   & false-positive rate & RRR & Right for the Right Reasons \\
    ISA   & International Standard Atmosphere & SAM & Segment Anything Model \\
    LEN   & Logic Explained Networks & SHAP & Shapley additive explanations \\
    LTN   & Logic Tensor Networks & SMT & satisfiability modulo theories \\
    MCAR  & missing completely at random & SVM & support vector machine \\
    MICE  & multivariate imputation by chained equations & TPR & true-positive rate \\
    \bottomrule
  \end{tabularx}
\end{table}

Learned methods, from anomaly detectors to tabular foundation models and language models, score each record~\cite{han2022adbench,hollmann2025accurate,tsai2025anollm}, with field-level attributions~\cite{lundberg2017unified}. Statistical data editing and consistency-based diagnosis identify violated constraints and the fields implicated in those violations, based on bounds set by experts~\cite{fellegi1976systematic,reiter1987theory}.
Neuro-Symbolic Artificial Intelligence (NeSy) integrates learning with explicit knowledge and reasoning~\cite{davilagarcez2023neurosymbolic,vonrueden2021informed}, also surveyed for advanced air mobility~\cite{acharya2025integrating}, and can learn such bounds from data together with the constraints they delimit. To our knowledge, no previous work provides and evaluates together the outputs that such an audit needs, from the violated constraint and the altered field to part-grounded image evidence and a report (Table~\ref{tab:sota}).

We present \fwname{}, an integrated and interpretable neuro-symbolic framework for auditing engineering specifications, built on Logic Tensor Networks (LTN)~\cite{badreddine2022ltn}: trained on fleet data, it determines through its logical axioms which physical constraints a declared record respects and which it violates, and returns a verification report (Figures~\ref{fig:pipeline} and~\ref{fig:report}). %
Its graded predicates measure each physical, definitional or regulatory relation by its residual against bands learned from the training fleet, and cross-modal axioms compare the fields with what a vision module reads.
Masked quantification gives \fwname{} exact abstention on missing evidence: each verdict reduces to the constraints that the available evidence makes checkable, and the report lists every undeclared field with the constraints it leaves unchecked. %
The audit proceeds from detection to diagnosis and repair. It first scores the record using a masked mean of constraint violation degrees. This overall verdict can then be traced back to individual constraints, with the largest violation degree identifying the most violated constraint. An abductive search over single-field changes then localizes the altered field. Finally, a minimum-change repair searches for the smallest adjustment that restores coherence. %
A grounding axiom encourages each visual head to place its class-activation evidence on the part its predicate describes, so that the report ties image evidence to parts. %

\begin{widefigure}[t]
\centering
\begin{adjustbox}{max width=\textwidth}
\begin{tikzpicture}[
  trainflow/.style = {flow, densely dotted},
  alarmflow/.style = {flow, dashed},
]
  \node[io] (rec) at (0.85, 1.85) {Specification\\records};
  \node[io] (pho) at (0.85, 0)    {Ground\\photographs};
  \node[io] (sat) at (0.85,-1.85) {Satellite\\crops};

  \node[io, minimum height=50mm] (dat) at (3.05, 0)
       {Data and\\verification\\instances\\[3pt]{\scriptsize $(x, m, I)$}};
  \node[symb]   (kb)  at (6.05, 1.85) {Knowledge base\\and graded\\predicates};
  \node[expert] (exp) at (6.05, 0)    {Physical relations,\\train-prior bands,\\part definitions};
  \node[neural] (vis) at (6.05,-1.85) {Vision module\\and grounding\\axiom};
  \node[symb]   (ax)  at (8.9,-1.85) {Cross-modal\\axioms};
  \node[fusion] (agg) at (8.9, 0)    {Masked\\stratified\\aggregation};
  \node[fusion, right=1.25cm of agg] (hd)
       {Verdict\\and keys\\[1pt]{\scriptsize masked verdict,}\\{\scriptsize largest-}\\{\scriptsize violation key,}\\{\scriptsize second head}};
  \node[io, dashed] (alm) at ([yshift=3.5mm]hd |- rec) {Optional settings\\[1pt]{\scriptsize rank-then-explain alarm,}\\{\scriptsize critical-field alert,}\\{\scriptsize MICE imputation}};%
  \node[fusion, right=1.2cm of hd] (rep) {Abduction\\and repair};%
  \coordinate (outlane) at (0,-2.45);
  \node[result, anchor=east] (out) at (rep.east |- outlane)
       {\textbf{Verification report}\\[1pt]{\scriptsize constraint verdicts,}\\{\scriptsize violated constraint,}\\{\scriptsize altered field, repair,}\\{\scriptsize coverage, part maps}};

  \draw[flow] (rec) -- (rec -| dat.west);
  \draw[flow] (pho) -- (pho -| dat.west);
  \draw[flow] (sat) -- (sat -| dat.west);
  \draw[flow] (dat.east |- kb)  -- node[lbl, above] {$x$, $m$} (kb);
  \draw[flow] (dat.east |- vis) -- node[lbl, above] {$I$} (vis);
  \draw[flow] (exp) -- node[lbl, right] {axioms} (kb);
  \draw[trainflow] (exp) -- node[lbl, right] {parts} (vis);
  \draw[flow] (vis) -- node[lbl, above] {$\pvis$} (ax);
  \draw[flow] (dat.south) -- ++(0,-0.45) -| node[lbl, below, pos=0.25] {declared fields $x$} (ax.south);
  \draw[flow] (kb) -| node[lbl, above, pos=0.3] {$v_k$, $\alpha_k$} (agg.north);
  \draw[flow] (ax) -- node[lbl, right] {$\sat_k$, $\alpha_k$} (agg);
  \draw[flow] (agg) -- node[lbl, above] {$\feas$} node[lbl, below] {$v_k$, $\alpha_k$} (hd);
  \draw[flow] (hd) -- node[lbl, above] {verdicts} (rep);
  \draw[flow] (rep.south) -- (rep.south |- out.north);
  \draw[trainflow] (dat.north) -- ++(0,0.45) -| node[lbl, above, pos=0.25] {train-prior bands} ([xshift=-6mm]kb.north);
  \draw[alarmflow] ([xshift=6mm]kb.north) -- ++(0,0.8) -| node[lbl, above, pos=0.25] {physical residuals} (alm.north);
  \draw[alarmflow] (alm) -| node[lbl, above, pos=0.25] {flag} (rep.north);

  \begin{scope}[shift={(0,-3.75)}, every node/.append style={font=\scriptsize, text height=1.5ex, text depth=0.25ex}]
    \node[neural, key] (k1) at (0,0)      {}; \node[right=1mm of k1] {neural};
    \node[symb,   key] (k2) at (3.2,0)    {}; \node[right=1mm of k2] {symbolic};
    \node[fusion, key] (k3) at (6.4,0)    {}; \node[right=1mm of k3] {neuro-symbolic};
    \node[expert, key] (k4) at (9.6,0)    {}; \node[right=1mm of k4] {expert knowledge};
    \node[io,     key] (k5) at (0,-0.5)   {}; \node[right=1mm of k5] {data};
    \node[result, key] (k6) at (3.2,-0.5) {}; \node[right=1mm of k6] {output};
    \draw[trainflow] (6.22,-0.5) -- ++(7mm,0) node[right=1mm] {training time};
    \draw[alarmflow] (9.42,-0.5) -- ++(7mm,0) node[right=1mm] {alarm branch};
  \end{scope}
\end{tikzpicture}
\end{adjustbox}
\caption{Overview of the \fwname{} pipeline.}
\label{fig:pipeline}
\end{widefigure}

We validate \fwname{} on 107 aircraft, with specification records, FGVC-Aircraft photographs~\cite{maji2013fine} and RarePlanes satellite imagery~\cite{shermeyer2021rareplanes}, holding 16 test aircraft of 9 families out of training, and evaluate constraint verdicts and diagnosis, field localization and repair, record coherence, missing evidence and absent images, the grounding of the visual axioms and robustness to deliberate falsification, on errors labeled by the corruption generator and on errors labeled from the edit alone (Section~\ref{sec:res}). %
The constraint verdicts reach a macro-F1 of 0.795 on generator labels and, without retraining, 0.575 on intervention labels, above a network trained with the DL2 loss on the same residuals and targets, the like-for-like neuro-symbolic test, and find every labeled violation in 0.934 of generator-labeled test corruptions against 0.522 for that network, with micro precision on par on test and lower on the unseen class (Section~\ref{sec:res-diag}, Appendix~\ref{S-app:results-verdicts}). %
The largest-violation key identifies the violated constraint in 0.942 of generator-labeled test corruptions, outperforming Fellegi--Holt, solver-based and rule-based diagnosis, DL2 applied to the declared fields, and a prompted language model. Abduction then traces the error to an individual field, correctly identifying the altered field in 0.593 of independently labeled errors. By comparison, minimum-cardinality localization reaches 0.306 with random tie-breaking and 0.453 with oracle tie-breaking; a sizing-regression residual also reaches 0.453 (Figure~\ref{fig:diagnosis}). %
The record verdict detects incoherent records at a test AUROC of 0.979 on generator labels, and on satellite pairs the multimodal verifier, which names the contradicted axiom, detects incoherent record-image pairs at 0.994 over all pair types, against 0.748 for a prompted vision-language model (Section~\ref{sec:res-xmodal}). Table~\ref{tab:glance} lines these results up. No comparator returns every output of the audit, and on the test aircraft no comparator is ahead of \fwname{} in the macro-F1 of the constraint verdicts, in naming the violated constraint and the altered field or in the repair error on the altered field; in record and pair detection, learners on the residuals and late fusions are close to it and in some settings rank higher (Section~\ref{sec:res-setup}). %

Among the approaches evaluated here - the DL2 method on the declared fields, the network trained with the DL2 loss on the same residuals read per constraint, a constraint-layer repair ordering, LTN ablations, solver and rule diagnosis and prompted language and vision-language models -, \fwname{} is, to our knowledge, the most suitable neuro-symbolic framework for auditing engineering specifications: it provides and evaluates together the seven outputs of Table~\ref{tab:sota}, from constraint verdicts and field localization to part-grounded image evidence and a verification report. Its logic contributes more to the physical audit than to record classification. Training its knowledge base through \emph{Real Logic} (the differentiable formulation of first-order fuzzy logic underlying LTN~\cite{badreddine2022ltn}) improves detection over the same knowledge base left at its initial values. Detection also improves as more knowledge is encoded, a trend shared by a residual MLP using the same constraints (Section~\ref{sec:res-knowledge}). %
Only the knowledge base, visual predicates, corruption generator and data of \fwname{} are aircraft-specific, so another engineered system enters through these four parts (Appendix~\ref{app:porting}). %

The contributions of this work are:
\begin{itemize}
  \item An integrated and interpretable neuro-symbolic audit framework on Logic Tensor Networks whose graded residual-aware axioms, trained on fleet data, give each physical constraint of a declared engineering specification a held, violated or masked verdict and derive from these verdicts the altered field, a repair proposal, an exact accounting of missing evidence, part-grounded visual evidence and a verification report;
  \item A verification protocol that re-estimates the priors from training records, refits the generator oracle without test or out-of-distribution aircraft, labels errors independently of the generator, reads the held and violated verdicts against per-constraint labels and stress-tests the verdicts against falsification and human part annotations;
  \item A comparative study of rule-based and solver-based methods, models trained with constraint losses, language and vision-language models, and residual learners, all using the same physical knowledge base (Table~\ref{tab:glance}). \fwname{} identifies the violated constraint more often than classical diagnosis, DL2 applied to the declared fields, and a prompted language model. It also identifies the altered field more often than minimum-cardinality localization with random tie-breaking and produces a repair proposal from the same violation degrees (Figure~\ref{fig:diagnosis});
  \item A deployment map (Table~\ref{tab:deploy}) that assigns each audit question to the output of \fwname{} that answers it, including its optional rank-then-explain alarm and critical-field alert.
\end{itemize}

The remainder of this paper is organized as follows. Section~\ref{sec:sota} reviews background and related works. Section~\ref{sec:meth} presents the \fwname{} framework in full. Section~\ref{sec:res} reports experimental results. Section~\ref{sec:end} concludes with directions for future work. Appendix~\ref{app:porting} describes how to port \fwname{} to other engineered systems.
\section{Background and Related Works}
\label{sec:sota}

\subsection{Background}
\label{sec:background}

\paragraph{Real Logic and Logic Tensor Networks.}
LTN~\cite{badreddine2022ltn} implement Real Logic, a first-order language whose symbols are interpreted on data by a parametric grounding $\mathcal{G}_\Theta$: constants are mapped to tensors of real features, variables to sequences of individuals, and predicates to differentiable functions into $[0,1]$.
Fuzzy connectives and aggregators make the truth of every formula of a knowledge base $\KB$ differentiable, so that $\Theta$ is learnable from data.
In \fwname{} an individual is a declared record with its availability mask and an optional image, and the predicates are graded physical predicates and visual attribute heads (Sections~\ref{sec:meth-kb} and~\ref{sec:meth-vision}).
From the product configuration~\cite{badreddine2022ltn,vankrieken2022analyzing}, \fwname{} uses the negation $\neg a=1-a$, the product t-norm $a\wedge b=ab$ and the Reichenbach implication $a\rightarrow b=\IR(a,b)=1-a+ab$.
The abovementioned implication passes its consequent the gradient $\partial\IR/\partial b=a$, where the residuated Goguen and \L{}ukasiewicz implications pass none wherever $a\le b$~\cite{vankrieken2022analyzing}, so the localization pressure of the grounding axiom $P_k\rightarrow\Loc_k$ on a visual head grows with the truth of its own assertion (Section~\ref{sec:meth-vision}).
The universal quantifier is interpreted by the generalized mean error of the truth values $a_1,\dots,a_n$ that a formula takes on $n$ individuals~\cite{badreddine2022ltn},
\begin{equation}
\label{eq:bg-pme}
\pME_p(a_1,\dots,a_n)=1-\Big(\frac{1}{n}\sum_{i=1}^{n}(1-a_i)^{p}\Big)^{1/p},\qquad p\ge 1,
\end{equation}
which is the arithmetic mean for $p=1$ and tends to the minimum as $p\to\infty$.
Guarded quantification, $\forall x\,[g(x)]\colon\varphi(x)$, aggregates only over the individuals that satisfy a crisp condition $g$ and passes no gradient to the others~\cite{badreddine2022ltn}.

\paragraph{Learning from a knowledge base.}
Learning maximizes the satisfaction of the closed formulas $\varphi$ of $\KB$, aggregated by an operator such as $\pME_p$ into $\mathrm{SatAgg}$~\cite{badreddine2022ltn}, that is, it minimizes
\begin{equation}
\label{eq:bg-loss}
\mathcal{L}(\Theta)=1-\mathrm{SatAgg}_{\varphi\in\KB}\,\mathcal{G}_\Theta(\varphi).
\end{equation}
Labels enter as data axioms over the positive and negative examples of a predicate~\cite{badreddine2022ltn}, whose satisfaction, aggregated with the log-product that van Krieken et al.~\cite{vankrieken2022analyzing} recommend, is the negative binary cross-entropy (BCE) up to normalization.
In \fwname{} the verdict of a record is a masked aggregation of its axiom truths that applies this guard per axiom, with availability as the guard (Section~\ref{sec:meth-agg}), and both objectives are BCEs of data axioms, the visual one adding the loss of Eq.~\eqref{eq:bg-loss} over the grounding axioms (Sections~\ref{sec:meth-kb-obj} and~\ref{sec:meth-vision-obj}). %

\paragraph{Real Logic for specification auditing.}
Our formulation of the audit in \emph{Real Logic} has four properties. First, the bands and slopes of graded predicates are learned through their truth values, jointly with the record verdict. Second, guarded quantification uses evidence availability as its guard: constraints whose required evidence is missing are excluded, and the verdict reduces exactly to that of the available constraints (Proposition~\ref{prop:reduction}). Third, physical predicates and visual heads share a common truth scale, allowing cross-modal axioms to enter the same knowledge base. Finally, the grounding axiom uses the Reichenbach implication to scale the localization pressure with the truth of the visual predicate.

In the specification stratum, learned graded predicates are aggregated by a masked mean, with no logical connectives between them. Each axiom therefore retains its own truth value at inference, and the record verdict decomposes into constraint-level verdicts. Crisp rule validation also provides exact abstention (Table~\ref{tab:sota}), and Section~\ref{sec:res-knowledge} examines the contribution of each of the four properties. %

\subsection{Related Works}
\label{sec:related}

Specification auditing draws on several lines of research, from detecting and repairing errors in structured records to combining domain knowledge with visual evidence. We review work on data editing and diagnosis, learned anomaly detection, differentiable logic and neuro-symbolic reasoning, explanation supervision, and multimodal verification. %

\paragraph{Reverse engineering, specification records and data editing.}
Physics-driven correction flags machine-learning estimates of design parameters, such as those of an aero engine, whose accumulated physical error exceeds a threshold, and corrects them by surrogate-assisted optimization against physical evaluations~\cite{kang2023physics}; \fwname{} works from fleet data without a simulator and names the constraint that fails. %
Fellegi--Holt editing localizes errors by finding the minimum-weight set of fields whose modification satisfies every edit~\cite{fellegi1976systematic}. Data validation leaves a rule with a missing value unevaluated~\cite{vanderloo2021data}. Data repair methods, benchmarked by Ni et al.~\cite{ni2024automatic}, correct cells through probabilistic inference under integrity constraints in HoloClean~\cite{rekatsinas2017holoclean} or using a few example repairs in Baran~\cite{mahdavi2020baran}. The satisfiability modulo theories (SMT) solver Z3~\cite{demoura2008z3} returns conflict sets for consistency-based diagnosis~\cite{reiter1987theory} as unsatisfiable cores. It also supports minimum-cardinality localization through MaxSMT~\cite{bjorner2015nuz}. Rule engines and language models check building designs against codes~\cite{eastman2009automatic,iversen2026leveraging}.
Their bounds, where present, are crisp, set by experts or mined from data~\cite{schelter2018automating}, and they take no image evidence; \fwname{} grades every constraint against bands learned from the fleet, ranks the violated constraints by the graded degrees of its own verdict, localizes the altered field by abduction with no oracle to break ties and adds image evidence (Sections~\ref{sec:res-diag} and~\ref{sec:res-local}). %
A reference architecture for auditing mechanical engineering drawings checks the entities read from them against a standards knowledge graph~\cite{chen2026ai}. %
Hybrid fault diagnosis pairs model-based residuals with anomaly classifiers~\cite{jung2018combining}, or residuals learned from fault-free data with consistency-based isolation~\cite{mohammadi2025consistency}, and a neuro-symbolic architecture lets knowledge-based and learned methods support each other in anomaly detection and fault diagnosis~\cite{bohne2025domain}, a division of labor that \fwname{} offers as its optional rank-then-explain alarm, a residual-learner alarm explained by the violated constraint. %

\paragraph{Anomaly detection and learned detectors.}
TabPFN v2~\cite{hollmann2025accurate} and AnoLLM~\cite{tsai2025anollm} score records by in-context learning and language-model likelihood, RegValidator checks the detection ideas of a language model with data-validation rules~\cite{huang2026integrating}, GIDCL detects and corrects errors with a graph network and cleaning rules generated by a language model~\cite{yan2024gidcl}, and ZeroED detects erroneous values without manual labels through language-model reasoning~\cite{ni2025zeroed}. %
\fwname{} returns, for each record, which constraints the record respects and which it violates, the altered field, a repair proposal and the evidence coverage, together with a score; Section~\ref{sec:res-detect} compares the scores in area under the receiver operating characteristic curve - AUROC. %

\begin{widetable}[t]
\centering
\scriptsize
\setlength{\tabcolsep}{4pt}
\newcommand{\capH}[1]{\begin{tabular}[b]{@{}c@{}}#1\end{tabular}}
\begin{threeparttable}
\let\TPTrlap\relax
\caption{Comparison with the state of the art.}
\label{tab:sota}
\begin{tabularx}{\linewidth}{@{}>{\raggedright\arraybackslash}X*{7}{c}@{}}
\toprule
Work & \capH{Violated\\constraint} & \capH{Altered\\field} & \capH{Repair\\proposal} & \capH{Exact\\abstention} & \capH{Learned\\tolerances} & \capH{Part\\grounding} & \capH{Record\\report} \\
\midrule
LTN and its applications~\cite{badreddine2022ltn,donadello2017logic,hogea2024logiclstm,hogea2026rule,mondal2025logic} & \capP & \capP & \capN & \capP & \capP & \capP & \capN \\
DeepProbLog, Scallop, NeurASP, abductive learning\tnote{b}~\cite{manhaeve2021neural,li2023scallop,yang2020neurasp,hu2025efficient} & \capP & \capP & \capP & \capP & \capP & \capN & \capN \\
DL2\tnote{a}, semantic loss, constraint layers\tnote{a}~\cite{fischer2019dl2,xu2018semantic,stoian2024pishield} & \capP & \capN & \capP & \capN & \capN & \capN & \capN \\
KENN, LEN, PSL\tnote{b}~\cite{daniele2019kenn,ciravegna2023logic,bach2017hinge} & \capP & \capP & \capP & \capP & \capP & \capN & \capP \\
Statistical data editing and validation\tnote{a}~\cite{fellegi1976systematic,vanderloo2021data} & \capY & \capY & \capY & \capY & \capP & \capN & \capY \\
SMT diagnosis\tnote{a}, language model with SMT~\cite{reiter1987theory,bjorner2015nuz,hsia2025neurosymbolic} & \capY & \capY & \capY & \capP & \capN & \capN & \capP \\
HoloClean, Deequ~\cite{rekatsinas2017holoclean,schelter2018automating} & \capP & \capY & \capY & \capN & \capP & \capN & \capP \\
Learned and language-model data cleaning~\cite{mahdavi2020baran,yan2024gidcl,ni2025zeroed} & \capP & \capY & \capY & \capN & \capP & \capN & \capP \\
Tabular learners and anomaly scores\tnote{a}~\cite{hollmann2025accurate,tsai2025anollm} & \capN & \capN & \capN & \capP & \capN & \capN & \capN \\
Language models with rules\tnote{a}~\cite{yang2025qwen3,huang2026integrating} & \capP & \capP & \capP & \capP & \capP & \capN & \capP \\
Vision-language models with rules\tnote{a}~\cite{bai2025qwen3vl,doris2025designqa} & \capP & \capN & \capN & \capP & \capN & \capN & \capP \\
Rule checking of building designs~\cite{eastman2009automatic,iversen2026leveraging} & \capY & \capP & \capN & \capN & \capN & \capN & \capY \\
Hybrid fault diagnosis~\cite{jung2018combining,mohammadi2025consistency} & \capY & \capY & \capN & \capN & \capY & \capN & \capP \\
Neuro-symbolic visual compliance checking~\cite{barbara2023neurosymbolic,kim2026visual} & \capY & \capY & \capN & \capN & \capN & \capP & \capP \\
Explanation supervision\tnote{a}~\cite{ross2017right,simpson2019gradmask,gao2022res,rao2023studying,pukdee2023learning} & \capN & \capN & \capN & \capN & \capN & \capP & \capN \\
Tabular-image fusion\tnote{a}~\cite{du2024tip,tang2024autogluon} & \capN & \capN & \capN & \capP & \capN & \capN & \capN \\
\midrule
\textbf{\fwname{} (ours)} & \capYb & \capYb & \capYb & \capYb & \capYb & \capYb & \capYb \\
\bottomrule
\end{tabularx}
\begin{tablenotes}[para,flushleft]
\scriptsize
\item[] \capY{}: provided and evaluated in a cited work; \capP{}: expressible or partially provided; \capN{}: not reported.
\item[a] Benchmarked here (Section~\ref{sec:res}); constraint layers used for repair ordering (Section~\ref{sec:res-local}).
\item[b] Expressible capabilities, not evaluated in the cited works or benchmarked here. %
\end{tablenotes}
\end{threeparttable}
\end{widetable}

\paragraph{Differentiable logic and neuro-symbolic frameworks.}
Neuro-symbolic methods use explicit knowledge to guide learning and reasoning. Frameworks such as LTN, DeepProbLog~\cite{manhaeve2021neural}, Scallop~\cite{li2023scallop} and NeurASP~\cite{yang2020neurasp} combine trainable models with logical reasoning. Knowledge can also enter through training objectives or mechanisms that constrain or refine predictions, as in abductive learning~\cite{hu2025efficient}, DL2~\cite{fischer2019dl2}, the semantic loss~\cite{xu2018semantic}, constraint layers~\cite{stoian2024pishield,stoian2024realistic} and Knowledge Enhanced Neural Networks (KENN)~\cite{daniele2019kenn}. Probabilistic soft logic (PSL) infers truth values for open atoms under weighted rules~\cite{bach2017hinge}. Logic Explained Networks (LEN) extract rules from a trained network~\cite{ciravegna2023logic}. These frameworks can compute a truth, a probability or a distance to satisfaction per formula, from which a violated constraint can be read by an argmax or argmin, can carry a tolerance band as a neural or weighted predicate, and marginalize, infer or leave free what is unobserved (Table~\ref{tab:sota}, note b). Encoding \fwname{}'s knowledge base in DeepProbLog, Scallop or PSL would change its axiomatic and logical structure, with probabilistic facts and proofs in place of graded truths and masked quantification, so that comparison is left to future work (Section~\ref{sec:end}). %
\fwname{} joins these per-constraint truths with tolerance bands learned from the fleet, abstention under missing evidence (Proposition~\ref{prop:reduction}), a search for the altered field with its repair through the same degrees, part-grounded image evidence and a per-record report, provided and evaluated together, which no cited work does; Sections~\ref{sec:res-diag} and~\ref{sec:res-knowledge} compare it with the DL2 method on the declared fields and with a network trained with the DL2 loss on the same residuals, which returns per-constraint probabilities. %
LTN has grounded part-of relations, fault signatures, process traces~\cite{donadello2017logic,radtke2022expert,gaikwad2026neurosymbolic}, fault-diagnosis rules for gearboxes and rotating machinery~\cite{hogea2024logiclstm,hogea2026rule} and medical rules with learnable thresholds~\cite{mondal2025logic}; an answer set program names the missing, misplaced or wrongly connected component of electrical control panels after a separately trained detector~\cite{barbara2023neurosymbolic}, and GuardEn executes compiled safety rules on scene graphs of an image~\cite{kim2026visual}; \fwname{} adds graded constraints against bands learned from the fleet and visual predicates grounded in the parts they describe (Section~\ref{sec:res-grounding}). %

\paragraph{Explanation supervision and part-level evidence.}
Explanation-guided learning adds supervision on model explanations~\cite{gao2024going}. Right for the Right Reasons (RRR) penalizes input gradients in regions annotated as irrelevant~\cite{ross2017right}, while GradMask penalizes them outside the annotated region~\cite{simpson2019gradmask}. RES learns from imprecise human masks~\cite{gao2022res}. The Energy loss of Rao et al.~\cite{rao2023studying} increases the share of positive attribution inside annotated bounding boxes, and Pukdee et al.~\cite{pukdee2023learning} learn from explanation constraints satisfied in expectation.
\fwname{} expresses part-level supervision as an axiom on each head's class-activation map, gated by the head's assertion and guarded by part visibility. Section~\ref{sec:res-grounding} compares this approach with RRR, GradMask and Pukdee et al.\ under a common protocol. %

\paragraph{Multimodal consistency and verifiers.}
TIP pre-trains on images and masked tables for incomplete tabular data~\cite{du2024tip}, and AutoGluon-Multimodal (AutoGluon-MM) fuses tables and images with foundation models~\cite{tang2024autogluon}.
The fused verdict of \fwname{} aggregates its strata without labels and reduces exactly to the available constraints, and Section~\ref{sec:res-xmodal} compares it with these fusions; its cross-modal axioms read the attributes predicted by the visual heads, as concept bottleneck models predict a label from predicted concepts~\cite{koh2020concept}. %
Open vision-language models~\cite{bai2025qwen3vl,wang2025internvl35} can judge whether a record and an image agree and a language model~\cite{yang2025qwen3} whether a record is coherent, and vision-language models have been benchmarked on engineering rules, computer-aided design renderings and drawings, where they still struggle~\cite{doris2025designqa}; \fwname{} returns with its judgment a verdict for every constraint (Section~\ref{sec:res-xmodal}). %

\par\smallskip\indent
Table~\ref{tab:sota} compares the reviewed approaches on seven outputs needed for specification auditing: identification of the violated constraint, localization of the altered field, a repair proposal, exact abstention under missing evidence, learned tolerances, part-grounded image evidence, and a per-record verification report. To our knowledge, none of the reviewed works provides and evaluates all seven outputs together. Statistical data editing and validation come closest, with crisp bounds set by experts or fitted to data and no image evidence. The neuro-symbolic formulation of \fwname{} uses graded constraint violation degrees for the largest-violation key and for abductive field localization, without an oracle to break ties (Sections~\ref{sec:res-diag} and~\ref{sec:res-local}). %
\section{Proposed Methodology}
\label{sec:meth}

\fwname{} audits a declared record in one integrated, interpretable NeSy framework: through its logical axioms it gives a held or violated verdict on each physical relation that binds the record's fields and, when an image is paired with it, on whether the image agrees. The framework is organized into seven modules (Figure~\ref{fig:pipeline}), built on Real Logic (Section~\ref{sec:background}). Guarded quantification allows each axiom to contribute only when its required evidence is available (E1), while graded predicates express how well each constraint is satisfied (E2). A grounding axiom ties visual predictions to the relevant image parts (E3), and stratification preserves a separate verdict for each evidence source (E4). A monotone second head complements the main verdict with an additional score and diagnosis key (E5).
\begin{itemize}
  \item \textbf{Data and Verification Instances} encode records and images, then generate corrupted examples and controls;
  \item \textbf{Knowledge Base and Graded Predicates} grade each constraint by a violation degree, with bands and slopes learned from the fleet (E2);
  \item \textbf{Vision Module and Grounding Axiom} encourage each head to place its evidence on its part (E3);
  \item \textbf{Cross-Modal Axioms} compare visual posteriors with declared fields;
  \item \textbf{Masked Stratified Aggregation} guards each axiom by its availability, so the verdict reduces exactly to the available evidence (E1), with one stratum per evidence source (E4);
  \item \textbf{Verdict and Diagnosis Keys} return the verdict and name the violated constraint, with a monotone second head (E5);
  \item \textbf{Verification Report and Optional Checks} give the altered field, a repair proposal and the report, with three optional settings of the audit.
\end{itemize}
The verdict is the masked mean of the constraint degrees, so it decomposes into constraint-level verdicts, from which the violated constraint, the altered field, the repair, the accounting of missing evidence and the report follow; detection is one of these outputs. %
Algorithm~\ref{S-alg:forward} in Appendix~\ref{S-app:method} transcribes the forward pass from the code. In our framework, we used the LTNtorch library~\cite{carraro2024ltntorch}.
Appendix~\ref{app:repo} describes the organization of the implementation and gives an example of a verification report.

\paragraph{Domain-specific and domain-independent parts.}
The domain enters \fwname{} through the knowledge base, the visual predicates, the corruption generator and the data; masked aggregation, graded predicates, grounding axiom, keys, report and protocol are written to apply unchanged to engineered systems whose declared fields obey physical, definitional or regulatory relations with statable tolerances; fitted parameters and numbers concern aircraft. %
Appendix~\ref{app:porting} lists what a port to another engineered system requires.

\subsection{Data and Verification Instances}
\label{sec:meth-data}

\paragraph{Data acquisition.}
\label{sec:meth-data-acq}
A \emph{specification record} (Appendix~\ref{S-app:data}) for each of 107 aircraft types of 71 families in 7 classes %
declares 16 numeric fields in SI units: %
maximum take-off and operating empty mass $\MTOW$ and $\OEW$, wingspan $b$, wing area $S$, aspect ratio $\AR$, fuselage length $L_{\mathrm{fus}}$, cruise speed $V_{\mathrm{cr}}$ and altitude $h_{\mathrm{cr}}$, range $R$, ceiling $h_{\mathrm{ceil}}$, total static thrust $F_{\mathrm{tot}}$ or shaft power $P_{\mathrm{tot}}$, engine count $n_{\mathrm{eng}}$, usable fuel mass $m_{\mathrm{fuel}}$, stall speed $V_{\mathrm{s}}$ and passenger limit $n_{\mathrm{pax}}$, with the class $c$ and the engine type $e$. %
\emph{Ground photographs} of FGVC-Aircraft~\cite{maji2013fine}, 6,800 paired with the record of their type or family (Appendix~\ref{S-app:data-benchmarks}), carry variant, family and manufacturer labels, from which per-variant attribute labels are assigned. %
The real release of RarePlanes~\cite{shermeyer2021rareplanes} annotates 14,707 aircraft in \emph{satellite} tiles; a diamond through nose, wingtips and tail and the attributes form a record of 9 numeric fields: %
wingspan $b$, length $L$, diamond area $A_{\diamond}$, $n_{\mathrm{eng}}$, vertical-stabilizer count $n_{\mathrm{fin}}$, and codes for the airplane design group of the Federal Aviation Administration (FAA)\footnote{FAA Advisory Circular 150/5300-13B, Airport Design, March 31, 2022.}, canards, wing type $t$ and wing position, with the role $r$ and the propulsion $\pi$. %

\paragraph{Encoding.}
\label{sec:meth-data-enc}
In an instance $(x,m,I)$, the record $x\in(\mathbb R\cup\{\varnothing\})^{n_x}$ holds $n_x=16$ or 9 declared numeric fields and the categorical attributes, $\varnothing$ marking an undeclared field; $m_j=\mathbf 1[x_j\neq\varnothing]$, and $I=\varnothing$ without an image. %
Axiom $k$ is available, $\alpha_k(x,I)=1$, when its fields are declared and the record falls within its scope. %
In Figure~\ref{fig:data}(a), a wide-body turbofan record, outside the scope of the propeller axiom A4P, declares no stall speed, so A6 is masked ($\alpha_k=0$ for both), and an undetected landing gear masks the main-gear axiom AX3; panel (c) is the 23-field frame of the cross-dataset transfer (Appendix~\ref{S-app:data-benchmarks}).

\begin{widefigure}[t]
\centering
\begin{adjustbox}{max width=\textwidth}
\begin{tikzpicture}[
  x=1mm, y=1mm,
  ftxt/.style    = {font=\fontsize{7}{8.4}\selectfont},
  fitx/.style    = {font=\fontsize{7}{8.4}\selectfont\itshape},
  ptitle/.style  = {anchor=north west, inner sep=0pt, text height=2.2mm, font=\fontsize{8}{9.6}\selectfont\bfseries},
  psub/.style    = {anchor=north west, inner sep=0pt, ftxt, text=ioline},
  tcell/.style   = {inner sep=0pt, ftxt, text height=1.7mm, text depth=0.5mm},
  tframe/.style  = {draw=ioline, fill=iofill, line width=0.4pt},
  mbit1/.style   = {draw=ioline, fill=ioline!45, line width=0.4pt, minimum width=3.2mm, minimum height=2.4mm, inner sep=0pt, ftxt},
  mbit0/.style   = {mbit1, fill=white, densely dashed, text=ioline},
  achip/.style   = {draw=symbline, line width=0.4pt, minimum width=7mm, minimum height=2.6mm, inner sep=0pt, ftxt},
  a1/.style      = {achip, fill=symbfill},
  a0/.style      = {achip, fill=white, densely dashed, text=symbline},
  stbox/.style   = {symb, ftxt, align=flush center, inner sep=1mm, minimum height=0pt, minimum width=0pt, execute at begin node={\hyphenpenalty=10000\exhyphenpenalty=10000}},
  note/.style    = {inner sep=0pt, ftxt, text=ioline, align=flush left, execute at begin node={\hyphenpenalty=10000\exhyphenpenalty=10000}},
  silh/.style    = {draw=ioline, fill=white, line width=0.5pt},
  pmask/.style   = {draw=ioline, densely dashed, line width=0.6pt, fill=ioline, fill opacity=0.18},
  partlbl/.style = {inner sep=0.5pt, fitx, text=ioline},
  ucell/.style   = {draw=ioline!70, line width=0.3pt, minimum width=4.4mm, minimum height=2.6mm, inner sep=0pt},
  u2/.style      = {ucell, fill=ioline!45},
  u1/.style      = {ucell, fill=ioline!15},
  u0/.style      = {ucell, fill=white, draw=ioline, dash pattern=on 1pt off 0.7pt},
  dcell/.style   = {draw=ioline!70, line width=0.3pt, minimum width=3mm, minimum height=2.2mm, inner sep=0pt},
  dk0/.style     = {dcell, fill=ioline!15},
  dk1/.style     = {dcell, fill=white, draw=ioline, dash pattern=on 1pt off 0.7pt},
  dk2/.style     = {dcell, fill=ioline!85},
  dk3/.style     = {dcell, fill=ioline!45},
  dk4/.style     = {dcell, fill=white, path picture={\fill[ioline] (path picture bounding box.center) circle (0.5mm);}},
  hitc/.style    = {fill=symbline, draw=symbline, text=white, ftxt, line width=0.4pt, inner xsep=0.6mm, inner ysep=0.3mm, text height=1.7mm, text depth=0.5mm},
  trainflow/.style = {flow, densely dotted},
]
  \node[ptitle] at (0,0) {(a) Specification instance};
  \node[psub] at (0,-4) {record $x$, mask $m$ and photograph $I$};
  \draw[tframe] (0,-7) rectangle (46,-66);
  \node[tcell, anchor=west, text=ioline] at (1,-9) {field};
  \node[tcell, anchor=east, text=ioline] at (38,-9) {declared value $x_j$};
  \node[tcell, text=ioline] at (42,-9) {$m_j$};
  \foreach \i/\s/\v/\m in {
    1/{$\MTOW$}/{242,000 kg}/1,%
    2/{$\OEW$}/{129,400 kg}/1,%
    3/{$b$}/{60.30 m}/1,%
    4/{$S$}/{361.6 m$^{2}$}/1,%
    5/{$\AR$}/{10.06}/1,%
    6/{$L_{\mathrm{fus}}$}/{63.66 m}/1,%
    7/{$V_{\mathrm{cr}}$}/{242 m\,s$^{-1}$}/1,%
    8/{$h_{\mathrm{cr}}$}/{11,278 m}/1,%
    9/{$R$}/{11,750 km}/1,%
    10/{$h_{\mathrm{ceil}}$}/{12,527 m}/1,%
    11/{$F_{\mathrm{tot}}$}/{632.4 kN}/1,%
    12/{$P_{\mathrm{tot}}$}/{undeclared (kW)}/0,%
    13/{$n_{\mathrm{eng}}$}/{2}/1,%
    14/{$m_{\mathrm{fuel}}$}/{78,414 kg}/1,%
    15/{$V_{\mathrm{s}}$}/{undeclared (m\,s$^{-1}$)}/0,%
    16/{$n_{\mathrm{pax}}$}/{440}/1}{%
    \node[tcell, anchor=west] at (1,{-9-3*\i}) {\s};
    \ifnum\m=1
      \node[tcell, anchor=east] at (38,{-9-3*\i}) {\v};
      \node[mbit1] at (42,{-9-3*\i}) {1};
    \else
      \node[tcell, anchor=east, text=ioline] at (38,{-9-3*\i}) {\v};
      \node[mbit0] at (42,{-9-3*\i}) {0};
    \fi
  }
  \draw[ioline, line width=0.3pt] (0,-59) -- (46,-59);
  \foreach \i/\s/\v in {
    17/{class $c$}/{wide-body},%
    18/{engine type $e$}/{turbofan}}{%
    \node[tcell, anchor=west] at (1,{-10-3*\i}) {\s};
    \node[tcell, anchor=east] at (38,{-10-3*\i}) {\v};
  }
  \begin{scope}[shift={(0,-69)}]
    \fill[iofill] (0,0) rectangle (46,-26);
    \begin{scope}[shift={(23,-12)}]
      \draw[silh] (12,2.2) -- (15.3,9.8) -- (18,9.8) -- (17,3) -- cycle;
      \draw[silh] (-18,0) .. controls (-18,2) and (-16,2.2) .. (-14,2.2) -- (13,2.2) -- (18,3.3) -- (19,2)
                  .. controls (14,0.2) and (12,-2) .. (8,-2) -- (-14,-2) .. controls (-16,-2) and (-18,-1.4) .. cycle;
      \draw[silh] (17,3) ellipse (3 and 0.5);
      \draw[ioline, line width=0.6pt, dotted] (-12,0.8) -- (10,0.8);
      \fill[ioline] (-16,0.5) -- (-15,1.2) -- (-13.6,1.2) -- (-13.6,0.4) -- cycle;
      \draw[silh] (-4.2,-1.5) -- (5,-1.5) -- (9,-4.5) -- (5.2,-4.7) -- cycle;
      \draw[silh] (-1,-4) ellipse (3.4 and 1.1);
      \draw[ioline, line width=0.4pt] (-4.2,-4) ellipse (0.35 and 1);
      \draw[pmask] (-1,-4) ellipse (4.6 and 1.9);
      \draw[pmask] (0.4,-1.6) ellipse (6.4 and 1.3);
      \draw[pmask] (11,1.6) -- (14.4,10.7) -- (19,10.7) -- (20,3.4) -- (19,1.8) -- cycle;
      \node[partlbl, anchor=east] at (-6.5,-4) {engine};
      \node[partlbl, anchor=west] at (10,-4) {wing};
      \node[partlbl, anchor=east] at (12,6.5) {tail};
    \end{scope}
    \node[note, anchor=south west] at (1,-25) {landing gear not detected:\\$\vis_k=0$, AX3 masked};
    \draw[ioline, line width=0.4pt] (0,0) rectangle (46,-26);
    \node[note, text width=48mm, anchor=north west] at (0,-27) {photograph $I$ (schematic);\\regions~$\tilde M_k$ from part pseudo-masks};
  \end{scope}

  \begin{scope}[shift={(118,0)}]
    \node[ptitle] at (0,0) {(b) Satellite instance};
    \node[psub] at (0,-4) {record $x$, mask $m$ and crop $I$};
    \draw[tframe] (0,-7) rectangle (45,-45);
    \node[tcell, anchor=west, text=ioline] at (1,-9) {field};
    \node[tcell, anchor=east, text=ioline] at (39,-9) {annotated value $x_j$};
    \node[tcell, text=ioline] at (42,-9) {$m_j$};
    \foreach \i/\s/\v in {
      1/{$b$}/{47.07 m},%
      2/{$L$}/{53.72 m},%
      3/{$A_{\diamond}$}/{1,264.33 m$^{2}$},%
      4/{$n_{\mathrm{eng}}$}/{2},%
      5/{$n_{\mathrm{fin}}$}/{1},%
      6/{design group}/{4 (IV)},%
      7/{canards}/{0 (no)},%
      8/{wing type $t$}/{1 (swept)},%
      9/{wing position}/{0 (mid or low)}}{%
      \node[tcell, anchor=west] at (1,{-9-3*\i}) {\s};
      \node[tcell, anchor=east] at (39,{-9-3*\i}) {\v};
      \node[mbit1] at (42,{-9-3*\i}) {1};
    }
    \draw[ioline, line width=0.3pt] (0,-38) -- (45,-38);
    \foreach \i/\s/\v in {
      10/{role $r$}/{civil large},%
      11/{propulsion $\pi$}/{jet}}{%
      \node[tcell, anchor=west] at (1,{-10-3*\i}) {\s};
      \node[tcell, anchor=east] at (39,{-10-3*\i}) {\v};
    }
    \node[note, text width=45mm, anchor=north west] at (0,-46) {one row per observed aircraft;\\ split by site};
    \begin{scope}[shift={(0,-69)}]
      \fill[ioline!30] (0,0) rectangle (26,-26);
      \begin{scope}[shift={(13,-13)}]
        \draw[silh] (-1.2,2.7) -- (-9.3,-2.6) -- (-9.3,-3.6) -- (-1.2,-1.4) -- cycle;
        \draw[silh] (1.2,2.7) -- (9.3,-2.6) -- (9.3,-3.6) -- (1.2,-1.4) -- cycle;
        \draw[silh] (-1,-7.5) -- (-4.5,-10) -- (-4.5,-10.7) -- (-1,-9.6) -- cycle;
        \draw[silh] (1,-7.5) -- (4.5,-10) -- (4.5,-10.7) -- (1,-9.6) -- cycle;
        \draw[silh] (0,10.4) .. controls (1.1,9.6) and (1.2,8) .. (1.2,6.5) -- (1.2,-8.6) -- (0,-11) -- (-1.2,-8.6) -- (-1.2,6.5)
                    .. controls (-1.2,8) and (-1.1,9.6) .. cycle;
        \draw[silh, fill=ioline!15] (-4.6,0.5) ellipse (0.6 and 1.5);
        \draw[silh, fill=ioline!15] (4.6,0.5) ellipse (0.6 and 1.5);
        \draw[pmask] (0,10.4) -- (9.3,-3.1) -- (0,-11) -- (-9.3,-3.1) -- cycle;
        \draw[ioline, line width=0.4pt, densely dotted] (-9.3,-3.1) -- (9.3,-3.1);
        \draw[ioline, line width=0.4pt, densely dotted] (0,10.4) -- (0,-11);
        \node[partlbl, anchor=north] at (-5,-3.3) {$b$};
        \node[partlbl, anchor=west] at (1.5,-6) {$L$};
      \end{scope}
      \draw[ioline, line width=0.4pt] (0,0) rectangle (26,-26);
      \node[note, text width=17mm, anchor=north west] at (28,0) {crop $I$ (schematic); annotated diamond: region of all 25 heads};%
    \end{scope}
  \end{scope}

  \begin{scope}[shift={(82,0)}]
    \node[anchor=north, inner sep=0pt, text height=2.2mm, font=\fontsize{8}{9.6}\selectfont\bfseries] at (0,0) {Axiom strata of $\KB$};
    \node[stbox, text width=26mm, minimum height=14mm, anchor=north] (sA) at (-15,-5)
         {$\Aset$: specification\\ reads 15 of 16 fields\\ (not $h_{\mathrm{ceil}}$)};
    \node[stbox, text width=26mm, minimum height=14mm, anchor=north] (sR) at (15,-5)
         {$\Rset$: remote-sensing\\ reads 8 of 9 fields\\ (not wing position),\\ role and propulsion};
    \node[ftxt, anchor=north, inner sep=0pt] at (-15,-20) {$\alpha_k$ for (a): 10 of 12};
    \node[ftxt, anchor=north, inner sep=0pt] at (15,-20) {$\alpha_k$ for (b): 10 of 10};
    \foreach \pass in {1,0}{
      \foreach \k/\a [count=\n from 0] in {A1/1,A2/1,A3/1,A4/1,A4P/0,A5/1,A6/0,A7/1,A8/1,A9/1,A10/1,A11/1}{
        \pgfmathtruncatemacro\r{\n/4}
        \pgfmathtruncatemacro\c{\n-4*\r}
        \ifnum\a=\pass
          \ifnum\a=1
            \node[a1] at ({-29+3.5+7*\c},{-24.6-3*\r}) {\k};
          \else
            \node[a0] at ({-29+3.5+7*\c},{-24.6-3*\r}) {\k};
          \fi
        \fi
      }
    }
    \foreach \k [count=\n from 0] in {R1,R2,R3,R4,R5,R6,R7,R8,R9,R10}{
      \pgfmathtruncatemacro\r{\n/4}
      \pgfmathtruncatemacro\c{\n-4*\r}
      \node[a1] at ({1+3.5+7*\c},{-24.6-3*\r}) {\k};
    }
    \node[a1, key] (ka1) at (-25,-34.6) {};
    \node[ftxt, anchor=west, inner sep=1pt] at (ka1.east) {available, $\alpha_k=1$};
    \node[a0, key] (ka0) at (3,-34.6) {};
    \node[ftxt, anchor=west, inner sep=1pt] at (ka0.east) {masked, $\alpha_k=0$};
    \node[stbox, text width=56mm, anchor=north] (sX) at (0,-38)
         {$\AXset$: cross-modal axioms, inference only\\
          photograph: $\pvis(\cdot\mid I)$ against $n_{\mathrm{eng}}$\\ (AX1), engine type and cruise Mach (AX2),\\ $\MTOW$ (AX3); AX4 on the image alone\\
          crop: $\pvis(\cdot\mid I)$ against\\ $n_{\mathrm{eng}}$, propulsion, wing type,\\ role~(AX1~to~AX4) and design group (AX5)};
    \node[note, align=center, text width=52mm, anchor=north] at (0,-62) {a missing field or an invisible part masks every axiom that needs it};
    \node[stbox, text width=56mm] (sV) at (0,-82)
         {$\AVset$: grounding axioms, training only\\
          $P_k\rightarrow\Loc_k$, guard $\vis_k$, region $\tilde M_k$\\ (photograph) or annotated diamond (crop)};
  \end{scope}
  \draw[flow] (46,-12) -- (sA.west |- 0,-12);
  \draw[flow] (118,-12) -- (sR.east |- 0,-12);
  \draw[flow] (46,-48) -- (sX.west |- 0,-48);
  \draw[flow] (118,-42) -- (sX.east |- 0,-42);
  \draw[flow] (46,-69) -- node[lbl, fitx, left] {$I$} (sX.south west);
  \draw[flow] (118,-69) -- node[lbl, fitx, right] {$I$} (sX.south east);
  \draw[trainflow] (46,-82) -- (sV.west);
  \draw[trainflow] (118,-82) -- (sV.east);

  \begin{scope}[shift={(0,-105)}]
    \node[ptitle] at (0,0) {(c) Unified frame: union of the two tabular schemas (23 columns)};
    \foreach \j/\s in {1/{$\MTOW$}, 2/{$\OEW$}, 3/{$b$}, 4/{$S$}, 5/{$\AR$}, 6/{$L_{\mathrm{fus}}$}, 7/{$V_{\mathrm{cr}}$},
        8/{$h_{\mathrm{cr}}$}, 9/{$R$}, 10/{$h_{\mathrm{ceil}}$}, 11/{$F_{\mathrm{tot}}$}, 12/{$P_{\mathrm{tot}}$}, 13/{$n_{\mathrm{eng}}$},
        14/{$m_{\mathrm{fuel}}$}, 15/{$V_{\mathrm{s}}$}, 16/{$n_{\mathrm{pax}}$}, 17/{$L$}, 18/{$A_{\diamond}$}, 19/{$n_{\mathrm{fin}}$},
        20/{design group}, 21/{canards}, 22/{wing type $t$}, 23/{wing position}}{
      \node[ftxt, rotate=90, anchor=west, inner sep=0.5pt] at ({21+4.6*\j-2.3},-21) {\s};
    }
    \node[tcell, anchor=east] at (20,-23) {specification (a)};
    \node[tcell, anchor=east] at (20,-26) {satellite (b)};
    \foreach \t [count=\j] in {2,2,2,2,2,2,2,2,2,2,2,0,2,2,0,2,1,0,0,1,0,0,0}{
      \ifcase\t \node[u0] at ({21+4.6*\j-2.3},-23) {};
      \or       \node[u1] at ({21+4.6*\j-2.3},-23) {};
      \or       \node[u2] at ({21+4.6*\j-2.3},-23) {};
      \fi
    }
    \foreach \t [count=\j] in {0,0,2,0,0,1,0,0,0,0,0,0,2,0,0,0,2,2,2,2,2,2,2}{
      \ifcase\t \node[u0] at ({21+4.6*\j-2.3},-26) {};
      \or       \node[u1] at ({21+4.6*\j-2.3},-26) {};
      \or       \node[u2] at ({21+4.6*\j-2.3},-26) {};
      \fi
    }
    \foreach \j in {3,6,13,17}{
      \node[draw=ioline, line width=0.9pt, minimum width=4.6mm, minimum height=6.4mm, inner sep=0pt] at ({21+4.6*\j-2.3},{-23-1.5}) {};
    }
    \draw[ioline, line width=0.4pt, decorate, decoration={brace, mirror, amplitude=1mm}] ({21+0.2},-28) -- ({21+4.6*16-0.2},-28);
    \node[ftxt, anchor=north, inner sep=0pt, text=ioline] at ({21+4.6*8},-30) {16 specification fields};
    \draw[ioline, line width=0.4pt, decorate, decoration={brace, mirror, amplitude=1mm}] ({21+4.6*16+0.2},-28) -- ({21+4.6*23-0.2},-28);
    \node[ftxt, anchor=north, inner sep=0pt, text=ioline] at ({21+4.6*19.5},-30) {7 further annotation fields};
    \node[u2, key] (kc2) at (132,-5) {};  \node[ftxt, anchor=west, inner sep=1pt] at (kc2.east) {declared or annotated};
    \node[u1, key] (kc1) at (132,-8) {};  \node[ftxt, anchor=west, inner sep=1pt] at (kc1.east) {derived or aliased};
    \node[u0, key] (kc0) at (132,-11) {}; \node[ftxt, anchor=west, inner sep=1pt] at (kc0.east) {empty, masked};
    \node[ucell, key, line width=0.9pt, draw=ioline] (kcs) at (132,-14) {}; \node[ftxt, anchor=west, inner sep=1pt] at (kcs.east) {shared quantity};
  \end{scope}

  \begin{scope}[shift={(0,-142)}]
    \node[ptitle] at (0,0) {(d) Generated instances of the record of (a): one per corruption family and a control};
    \node[psub] at (0,-4) {each grid: the 16 fields of (a) in the same order, eight per row};
    \begin{scope}[shift={(0,-3)}]%
    \foreach \k/\nm/\cells in {
      0/{single value}/{0,2,0,0,0,0,0,0,0,0,0,1,0,0,1,0},%
      1/{geometric}/{0,0,0,2,0,0,0,0,0,0,0,1,0,0,1,0},%
      2/{performance}/{0,0,0,0,0,0,0,0,2,0,0,1,0,0,1,0},%
      3/{cruise}/{0,0,0,0,0,0,0,2,0,0,0,1,0,0,1,0},%
      4/{Frankenstein swap}/{0,0,2,2,2,2,0,0,0,0,0,1,0,0,1,0},%
      5/{noisy-but-coherent\\control}/{3,3,3,3,4,3,3,3,3,3,3,1,0,3,1,0}}{%
      \node[ftxt, anchor=south, align=center, inner sep=0pt, text depth=0.5mm] at ({26*\k+12},-7) {\nm};
      \foreach \t [count=\c from 0] in \cells {
        \pgfmathtruncatemacro\r{\c/8}
        \pgfmathtruncatemacro\q{\c-8*\r}
        \ifcase\t \node[dk0] at ({26*\k+1.5+3*\q},{-9-2.2*\r}) {};
        \or
        \or       \node[dk2] at ({26*\k+1.5+3*\q},{-9-2.2*\r}) {};
        \or       \node[dk3] at ({26*\k+1.5+3*\q},{-9-2.2*\r}) {};
        \or       \node[dk4] at ({26*\k+1.5+3*\q},{-9-2.2*\r}) {};
        \fi
      }
      \foreach \t [count=\c from 0] in \cells {
        \pgfmathtruncatemacro\r{\c/8}
        \pgfmathtruncatemacro\q{\c-8*\r}
        \ifnum\t=1 \node[dk1] at ({26*\k+1.5+3*\q},{-9-2.2*\r}) {};\fi
      }
    }
    \node[ftxt, anchor=north, inner sep=0pt] at (12,-13) {$\OEW/1.40$};%
    \node[hitc] at (12,-22) {A8};
    \node[ftxt, anchor=north, inner sep=0pt] at (38,-13) {$S/1.05$};%
    \node[hitc] at (38,-22) {A1};
    \node[ftxt, anchor=north, inner sep=0pt] at (64,-13) {$R\times1.20$};%
    \node[hitc] at (64,-22) {A5};
    \node[ftxt, anchor=north, inner sep=0pt] at (90,-13) {$h_{\mathrm{cr}}\times1.20$};%
    \node[hitc] at (90,-22) {A2};
    \node[ftxt, anchor=north, inner sep=0pt, align=center] at (116,-13) {geometry block\\from another class};%
    \node[hitc] (f3) at (116,-22) {A3};
    \node[hitc, left=0.6mm of f3] {A2};
    \node[hitc, right=0.6mm of f3] {A11};
    \node[ftxt, anchor=north, inner sep=0pt, align=center] at (142,-13) {relative noise 0.02;\\$\AR$ re-derived};%
    \node[ftxt] at (142,-22) {none, $y=0$};
    \begin{scope}[shift={(0,-27)}]
      \node[dk2, key] (k2) at (2,0) {};  \node[ftxt, anchor=west, inner sep=1pt] at (k2.east) {rescaled or copied};
      \node[dk3, key] (k3) at (31,0) {}; \node[ftxt, anchor=west, inner sep=1pt] at (k3.east) {noised};
      \node[dk4, key] (k4) at (46,0) {}; \node[ftxt, anchor=west, inner sep=1pt] at (k4.east) {re-derived};
      \node[dk0, key] (k0) at (65,0) {}; \node[ftxt, anchor=west, inner sep=1pt] at (k0.east) {unchanged};
      \node[dk1, key] (k1) at (85,0) {}; \node[ftxt, anchor=west, inner sep=1pt] at (k1.east) {undeclared};
      \node[hitc] (kh) at (105,0) {A$k$}; \node[ftxt, anchor=west, inner sep=1pt] at (kh.east) {flagged by the generator oracle};
    \end{scope}
    \end{scope}
  \end{scope}
\end{tikzpicture}
\end{adjustbox}
\caption{Data and verification instances.}
\label{fig:data}
\end{widefigure}

\paragraph{Corruption generator and labels.}
\label{sec:meth-data-gen}
The specification benchmark splits the records by family: the business-jet families form the unseen class, six random families the unseen families, and the others fill training, validation and test at 60/20/20 (Appendix~\ref{S-app:data-benchmarks}). Each clean record yields up to two instances of each of five corruption families: an impossible single value and geometric, performance and cruise inconsistencies multiply or divide one field by $1+\epsilon$, with $\epsilon=0.05$ and $0.10$, and a cross-class block swap (Frankenstein) copies a coherent block of fields from another class in the same split. The generator oracle, a crisp check of the same relations against class bands $[0.88\min,\,1.12\max]$ of each quantity over all 107 clean aircraft of every split, keeps an instance only when it flags the target constraint (any constraint for a swap); otherwise $\epsilon$ doubles, for at most six attempts (Table~\ref{S-tab:magnitudes}; realized instances in Figure~\ref{fig:data}(d)). The verifier is fitted on training-split instances only: the oracle's bands reach it through their labels and per-constraint targets, which thus carry the class extremes of all 107 aircraft, and no field value of another split enters training. %
Two noisy-but-coherent controls per record perturb each continuous field by a Gaussian factor with relative standard deviation 0.02, with $\AR$ rescaled according to the induced change in $b^2/S$; both controls must remain coherent. %
The \emph{intervention benchmark} (2,582 rows, same aircraft and splits) draws the same operators once per magnitude, without escalation or oracle certification, and labels each row from the edit alone. A row is labeled incoherent when a continuous field differs from its clean source by more than three noise standard deviations or an integer field changes. These tolerances are fixed before generation, and the label applies regardless of whether a physical relation is violated. An available constraint receives a positive label when its scope includes such an altered field. Noisy copies and edits within tolerance count as unaltered. These labels evaluate the constraint verdicts, the diagnosis key, abduction and detection. %
The satellite and multimodal benchmarks likewise include corrupted examples and controls. In the multimodal benchmark, each image is paired with its matching record, a mismatched record, or an altered version of its matching record. Mismatches involve aircraft from another class or role, or a sister variant. Occluded and degraded photographs serve as controls.. %

\subsection{Knowledge Base and Graded Predicates}
\label{sec:meth-kb}
\begin{widetable}[t]
\centering
\caption{Axiom catalog.}
\label{tab:axioms}
\scriptsize
\setlength{\tabcolsep}{3pt}
\begin{threeparttable}
\begin{tabular}{@{}L{6mm}L{70mm}L{25mm}L{28mm}L{27mm}@{}}
\toprule
ID & Graded predicate and quantity & Fields read & Available ($\alpha_k=1$) when & Learned parameters\\
\midrule
\multicolumn{5}{@{}l}{\textbf{Specification axioms $\Aset$} (records; 336 trainable scalars, 126 of them latents; $w_k=1$)}\\ %
A1 & graded identity of $(\AR-\hat A)/\hat A$, $\hat A=b^2/S$ & $b$, $S$, $\AR$ & all declared & tolerance (8); \mbox{$\kappa$, $\beta$ (2)}\\
A2 & graded band of $C_L=2\MTOW g/\big(\rho(h_{\mathrm{cr}})V_{\mathrm{cr}}^2S\big)$ & $\MTOW$, $V_{\mathrm{cr}}$, $h_{\mathrm{cr}}$, $S$ & all declared & band (16); \mbox{$\kappa$, $\beta$ (2)}\\
A3 & graded band of $\MTOW/S$ & $\MTOW$, $S$ & all declared & band (16); \mbox{$\kappa$, $\beta$ (2)}\\
A4 & graded band of $F_{\mathrm{tot}}/(\MTOW g)$ & $F_{\mathrm{tot}}$, $\MTOW$ & all declared; turbofan & band (16); \mbox{$\kappa$, $\beta$ (2)}\\
A4P & graded band of $\MTOW/P_{\mathrm{tot}}$ & $P_{\mathrm{tot}}$, $\MTOW$ & all declared; propeller & band (16); \mbox{$\kappa$, $\beta$ (2)}\\
A5 & graded identity of $\operatorname{clip}_3\ln(R/\hat R)$, $\hat R=\frac{V_{\mathrm{cr}}}{g\,c_T}\frac{L}{D}\ln\frac{\MTOW}{W_1}$ for a turbofan, $\hat R=\frac{\eta_p}{g\,c_P}\frac{L}{D}\ln\frac{\MTOW}{W_1}$ for a propeller aircraft; $W_1=\max(\MTOW-0.94\,m_{\mathrm{fuel}},\,0.05\,\MTOW)$ & $R$, $m_{\mathrm{fuel}}$, $\MTOW$; $V_{\mathrm{cr}}$ (turbofan) & $R$, $V_{\mathrm{cr}}$, $m_{\mathrm{fuel}}$, $\MTOW$ and $\AR$ declared & tolerance (8); \mbox{$\kappa$, $\beta$ (2)}; $L/D$, $c_T$, $c_P$, $\eta_p$ \mbox{per pair (84)}\\
A6 & graded identity of $\operatorname{clip}_2\ln(V_{\mathrm{s}}/\hat V_{\mathrm{s}})$, $\hat V_{\mathrm{s}}=\sqrt{2\MTOW g/(\rho_0SC_{L,\max})}$ & $V_{\mathrm{s}}$, $\MTOW$, $S$ & all declared & tolerance (8); \mbox{$\kappa$, $\beta$ (2)}; $C_{L,\max}$ \mbox{per pair (21)}\\
A7 & lower hinge of the margin $(\MTOW-\OEW-f_u\,m_{\mathrm{fuel}}-95\,n_{\mathrm{pax}})/\MTOW$ & $\OEW$, $m_{\mathrm{fuel}}$, $n_{\mathrm{pax}}$, $\MTOW$ & all declared & edge offsets (7); \mbox{$\kappa$ (1)}; $f_u$ \mbox{per pair (21)}\\
A8 & graded band of $\OEW/\MTOW$ & $\OEW$, $\MTOW$ & all declared & band (16); \mbox{$\kappa$, $\beta$ (2)}\\
A9 & graded band of $F_{\mathrm{tot}}/n_{\mathrm{eng}}$ if $F_{\mathrm{tot}}$ is declared, else of $P_{\mathrm{tot}}/n_{\mathrm{eng}}$ & $n_{\mathrm{eng}}$; $F_{\mathrm{tot}}$ or $P_{\mathrm{tot}}$ & $n_{\mathrm{eng}}$, and $F_{\mathrm{tot}}$ or $P_{\mathrm{tot}}$, declared & 2 bands (32); \mbox{$\kappa$, $\beta$ (4)}\\
A10 & graded band of $M=V_{\mathrm{cr}}/a(h_{\mathrm{cr}})$, per engine type $e$ & $V_{\mathrm{cr}}$, $h_{\mathrm{cr}}$ & all declared & band per $e$ (8); \mbox{$\kappa$, $\beta$ (2)}\\
A11 & sum of the graded bands of $L_{\mathrm{fus}}/b$ and $n_{\mathrm{pax}}/L_{\mathrm{fus}}$ & $L_{\mathrm{fus}}$, $b$, $n_{\mathrm{pax}}$ & all declared & 2 bands (32); \mbox{$\kappa$, $\beta$ (4)}\\
\midrule
\multicolumn{5}{@{}l}{\textbf{Remote-sensing axioms $\Rset$} (annotations; 57 trainable scalars; $w_k=1$ except $w_{\mathrm{R9}}=0.5$)}\\ %
R1 & graded band of $b$ for the declared design group, edges fixed by regulation & $b$, design group & all declared & \mbox{$\kappa$, $\beta$ (2)}\\
R2 & graded band of $b$ per role $r$; civil bands fixed by definition, military bands learned & $b$, $r$ & all declared & military bands (8); \mbox{$\kappa$, $\beta$ (4)}\\
R3 & graded band of $L/b$ per $r$ & $L$, $b$, $r$ & all declared & band per $r$ (14); \mbox{$\kappa$, $\beta$ (2)}\\
R4 & graded band of $n_{\mathrm{eng}}$ per $r$ & $n_{\mathrm{eng}}$, $r$ & all declared & band per $r$ (14); \mbox{$\kappa$, $\beta$ (2)}\\
R5 & $-\ln$ of $[\pi=\text{unpowered}]\,\zeta+[\pi\neq\text{unpowered}]\,(1-\zeta)$, $\zeta=\exp\big(-(n_{\mathrm{eng}}/\tau)^2\big)$ & $n_{\mathrm{eng}}$, $\pi$ & all declared & $\tau$ (1)\\
R6 & $-\ln$ of $p_r$, $1-p_r$ or $\tfrac12$ for a jet, propeller or unpowered $\pi$; $p_r=\sigma(\theta_r)$ & $\pi$, $r$ & all declared & $\theta_r$ per $r$ (7)\\
R7 & $-\ln$ of $\IR\big([t\in\{\text{delta, variable-swept}\}],\,[r\text{ military}]\big)$ & $t$, $r$ & all declared & none\\
R8 & $-\ln$ of $\IR\big([\text{canards}],\,[r\in\{\text{bomber, fighter}\}]\big)$ & canards, $r$ & all declared & none\\
R9 & $-\ln$ of $\IR\big([n_{\mathrm{fin}}\ge2],\,[r=\text{fighter}]\big)$ & $n_{\mathrm{fin}}$, $r$ & all declared & none\\
R10 & graded identity of $\ln A_\diamond-\ln(bL/2)$ & $A_\diamond$, $b$, $L$ & all declared & tolerance (1); \mbox{$\kappa$, $\beta$ (2)}\\

\bottomrule
\end{tabular}
\begin{tablenotes}[flushleft]
\scriptsize
\item Field symbols: Section~\ref{sec:meth-data-acq}; latents and graded predicates: Section~\ref{sec:meth-kb}, Eq.~\eqref{eq:soft}. Formulas in SI units. Graded band: two edge softplus hinges plus the in-band term; identity: same, with symmetric tolerance around zero; lower hinge: lower edge only. Degrees capped at $10^4$; logical axioms use $-\log\max(t,10^{-6})$, with truth $t$. Specification banks: fleet value plus class offset scaled by $n_g/(n_g+5)$ (engine type for A10); remote-sensing banks: flat. $\rho(h),a(h)$: International Standard Atmosphere density and sound speed; $\rho_0$: sea-level density; $g$: standard gravity; pair: (class, engine type); $[\cdot]$: indicator; $\operatorname{clip}_a$: clipping to $[-a,a]$; propeller: turboprop or piston. Initial values and bounds: Appendix~\ref{S-app:physics}..
\end{tablenotes}
\end{threeparttable}
\end{widetable}
\begin{widefigure}[t]
\centering
\begin{adjustbox}{max width=\textwidth}
\begin{tikzpicture}[
  zoomframe/.style = {draw=#1, dashed, rounded corners=4pt, line width=0.6pt, inner sep=3.5mm},
  tab/.style       = {minimum height=0pt, minimum width=0pt, inner sep=2pt, font=\scriptsize\bfseries},
  trainflow/.style = {flow, densely dotted},
  alarmflow/.style = {flow, dashed},
]
  \node[io]     (dat) at (-0.9, 0.5) {Data and\\verification\\instances};
  \node[expert] (phy) at (-0.4, 2.5) {Physical relations\\(ISA, Breguet, ratios)};
  \node[expert] (tol) at (5.95, 2.5) {Latent bounds\\(design ranges)};
  \node[fusion] (agg) at (13.14, 0.5) {Masked\\stratified\\aggregation};

  \node[symb] (der) at (2.4, 0.5) {Derived\\quantities $q$,\\coordinates $z$};
  \node[symb] (prd) at (5.95, 0.5)
       {Graded predicates and\\bounded latents\\[1pt]{\scriptsize degree $v_k(z;\ell,h,\kappa,\beta)$,}\\{\scriptsize truth $\sat_k=\exp(-v_k)$}};
  \node[symb] (str) at (9.3, 0.5)
       {Axiom strata\\[1pt]{\scriptsize $\Aset$: 12 axioms,}\\{\scriptsize $\Rset$: 10 axioms}}; %
  \node[symb] (obj) at (8.95,-1.4)
       {Tabular objective\\[1pt]{\scriptsize $\mathrm{BCE}(e^{-V_{\mathcal{T}}},1-y)$}\\{\scriptsize $+\,\lamaux\,\mathrm{BCE}(\sat_k,1-y_k)$}};

  \begin{scope}[on background layer]
    \node[zoomframe=symbline, fit={(der)(str)(obj)([yshift=-1.2mm]prd.north west)(prd.south east)}] (frame) {};
  \end{scope}
  \node[symb, tab, anchor=west] at ([xshift=4mm]frame.south west) {Knowledge base and graded predicates};

  \draw[flow] (dat) -- node[lbl, above, pos=0.4] {$x$, $m$} (der);
  \draw[flow] (phy) -| ([xshift=-4mm]der.north);
  \draw[flow] (der) -- (prd);
  \draw[flow] (prd) -- (str);
  \draw[flow] (str) -- (agg);
  \node[lbl, anchor=south west, inner xsep=0pt] at ([xshift=1mm]frame.east |- str.east) {$v_k$, $\alpha_k$};
  \draw[flow] (tol) -- node[lbl, right, pos=0.34] {initial values} (prd);
  \coordinate (env) at (0,-0.95);
  \draw[trainflow] (dat.south) |- ([xshift=-9mm]prd.south |- env) -- ([xshift=-9mm]prd.south);
  \node[lbl, below] at (3.0,-0.95) {train-prior bands};
  \draw[trainflow] ([xshift=-3mm]str.south) -- node[lbl, right] {$V_{\mathcal{T}}$, $v_k$} ([xshift=-3mm]str.south |- obj.north);
  \draw[trainflow] (obj.west) -| node[lbl, left, pos=0.76] {gradients} ([xshift=6mm]prd.south);
\end{tikzpicture}
\end{adjustbox}
\caption{Knowledge base and graded predicates.}
\label{fig:kb}
\end{widefigure}
The knowledge base $\KB$ has four strata (Table~\ref{tab:axioms}; visual part in Table~\ref{S-tab:axioms-visual}): the specification axioms $\Aset$, 12 relations on records; the remote-sensing axioms $\Rset$, 10 relations on annotations; %
the grounding axioms $\AVset$, in training only; and the cross-modal axioms $\AXset$, at inference only. The tabular strata contain no neural network (Figure~\ref{fig:kb}): each axiom computes an exact, differentiable quantity $q$ of the declared fields, with atmospheric properties from the International Standard Atmosphere (ISA)\footnote{ISO 2533:1975, Standard Atmosphere.} (Appendix~\ref{S-app:physics}), reads it in units of the scatter of clean training records, $z=(\log_{10}q-m_g)/\sigma_g$ for a band and $z=\xi/\sigma_g$ for the residual $\xi$ of an identity or the mass-balance margin, and grades it with a residual-aware predicate (E2), %
\begin{equation}\label{eq:soft}
v=\operatorname{softplus}\big(\kappa(\ell_g-z)\big)+\operatorname{softplus}\big(\kappa(z-h_g)\big)+\beta\Big(\sqrt{1+(z-c_g)^2}-1\Big),\qquad \sat=\exp(-v),
\end{equation}
a violation degree $v$ and its truth. The mean $m_g$ and scatter $\sigma_g\ge10^{-3}$ of group $g$ (class; engine type for A10; fleet for A1 and A2) come from its $n_g$ clean training rows (132 for records: 44 clean aircraft and 88 noisy copies), shrunk toward the fleet with weight $n_g/(n_g+5)$, $\sigma_g$ being a root-mean-square around 0 for an identity or the margin; $[\ell_g,h_g]$ is the band, $[-t_g,t_g]$ for an identity, with center $c_g$; each derived quantity has a slope $\kappa$ and a weight $\beta\ge0$ of the in-band term, which grades a value inside the band by its distance from the center. %
The mass balance A7 keeps its lower hinge only, A9 grades thrust or power per engine as declared, A11 sums its two quantities, every degree is capped at $10^4$, and $\sat_k=\exp(-v_k)$ is never clamped. %
Bounded latents supply quantities that records rarely declare: cruise lift-to-drag ratio $L/D$, thrust- and power-specific fuel consumptions $c_T$ and $c_P$, propeller efficiency $\eta_p$, maximum lift coefficient $C_{L,\max}$ and fuel utilization $f_u$, each a sigmoid of one parameter per pair of class $c$ and engine type $e$ (21 pairs), rescaled to conceptual-design ranges~\cite{raymer2018aircraft}; shared by a class and engine type, they check a record against its class. %

The train prior fits every data-dependent band on the 44 clean training aircraft and widens it to $[0.88\min,\,1.12\max]$, as the oracle does. %
The bands start from the train prior and are learned with their slopes from the fleet and its labeled corruptions, so they act as learned tolerances: 45.7\% of them keep both edges inside the training fleet range widened in the same way, and the others, all wing-loading bands among them, extend beyond it after training (Appendix~\ref{S-app:results-refine}). The learned slopes act as implicit per-constraint weights. %
In hierarchical banks each band or tolerance parameter of a class is a fleet value plus a class offset with the same weight, so a class without training aircraft, such as the business jets, is read with fleet statistics and parameters. %
The remote-sensing axioms follow the same recipe: six graded slots (R1, R2 for civil and for military roles, R3, R4 and R10) read the residual in standard units of the clean training rows, and the logical axioms R5 to R9 enter as the negative logarithm of their truth. %
Each tabular stratum's settings (the power $p$ of a masked power mean of the degrees, the in-band term, flat or hierarchical banks) are chosen on validation alone among eight candidates trained with eight seeds each; both strata select $p=1$, the masked mean of Eq.~\eqref{eq:pme}, and on generator-oracle labels the specification stratum keeps the in-band term and hierarchical banks. %

\paragraph{Training objective.}
\label{sec:meth-kb-obj}
With instance labels $y_i$, $y=1$ for incoherence, and per-constraint targets $y_{ik}$, the generator oracle's verdict on constraint $k$ for a corruption and zero otherwise, the tabular stratum $\mathcal T\in\{\Aset,\Rset\}$ minimizes over a mini-batch $\mathcal B$
\begin{equation}\label{eq:loss-tab}
\mathcal L_{\mathrm{tab}}=\frac{1}{|\mathcal B|}\sum_{i\in\mathcal B}\mathrm{BCE}\big(e^{-V_{\mathcal T}(x_i)},\,1-y_i\big)+\lamaux\,\big\langle\mathrm{BCE}\big(\sat_k(x_i),\,1-y_{ik}\big)\big\rangle_{\alpha_{ik}},\qquad
\langle q\rangle_{\chi}=\frac{\sum_j \chi_j\,q_j}{\max\big(1,\sum_j \chi_j\big)},
\end{equation}
with $V_{\mathcal T}$ of Eq.~\eqref{eq:pme}, both cross-entropies computed in log space from the unclamped degrees, $\lamaux=0.5$ and $\langle q\rangle_{\chi}$ the mean of $q$ where $\chi_j=1$. The residual learners of the comparison receive no per-constraint target. AdamW~\cite{loshchilov2017decoupled} trains bands, slopes and latents for at most 100 epochs, with early stopping on validation AUROC (Table~\ref{S-tab:config}). %

\subsection{Vision Module and Grounding Axiom}
\label{sec:meth-vision}
For each image, the vision module predicts the attributes that the cross-modal axioms compare with the declared record. To let an analyst inspect the evidence behind these predictions, a grounding axiom encourages each head to concentrate its evidence on the part it describes, and the report displays the corresponding activation map (Section~\ref{sec:meth-xmodal}). %

The module uses a backbone with one sigmoid head per class of each visual predicate: 25 heads covering eight predicates for photographs, and 25 covering seven predicates for satellite crops (Figure~\ref{S-fig:grounding} in Appendix~\ref{S-app:method}). %
On photographs a ResNet-50~\cite{he2016resnet} pretrained on ImageNet~\cite{deng2009imagenet} is read at its third stage, a $28\times28$ grid at 448~pixels. Head $k$ returns $P_k(I)=\sigma(s_k(I))$, with $s_k$ linear in the spatially averaged feature maps, and its class activation map (CAM)~\cite{zhou2016learning} normalized to unit mass,
\begin{equation}\label{eq:cam}
H_k(I)=\operatorname{ReLU}\Big(\textstyle\sum_c w_{kc}\,A_c(I)\Big),\qquad
\hat H_k(I)=\frac{H_k(I)}{\sum_{u,v}H_k(I)[u,v]+\varepsilon},
\end{equation}
where $A_c(I)$ is channel $c$ of the feature map, $w_{kc}$ the weight of head $k$ on it, $[u,v]$ a grid cell and $\varepsilon=10^{-6}$. %

On photographs the region $\tilde M_k(I)$ of head $k$ is the union of the automatic pseudo-masks of its parts, OWLv2~\cite{minderer2023scaling} boxes of confidence at least 0.15 segmented by the Segment Anything Model (SAM)~\cite{kirillov2023segment}, dilated by 2~native pixels and averaged over the grid cells, none annotated or verified by hand. The head is visible, $\vis_k(I)=1$, when a part is detected with a confidence of at least 0.20 (Appendix~\ref{S-app:method}). %
On satellite crops every region is the annotated diamond. The in-mask share $\rho_k$ and the localization predicate $\Loc_k$,
\begin{equation}\label{eq:rho-loc}
\rho_k(I)=\frac{\sum_{u,v}\hat H_k(I)[u,v]\,\tilde M_k(I)[u,v]}{\sum_{u,v}\hat H_k(I)[u,v]+\varepsilon},\qquad
\Loc_k(I)=\sigma\big(\kappa_k(\rho_k(I)-\ell_k)\big),\quad \ell_k=\sigma(\tilde\ell_k),\quad \kappa_k=\exp(\tilde\kappa_k),
\end{equation}
read ``at least a fraction $\ell_k$ of the attribution lies in the region'', with a learned threshold $\ell_k$ and slope $\kappa_k$ per head. The grounding axiom of head $k$ (E3) is the guarded Reichenbach implication
\begin{equation}\label{eq:av}
\mathcal{AV}_k:\ \forall I\,[\vis_k(I)]:\ P_k(I)\rightarrow\Loc_k(I),\qquad
\sat_{\mathcal{AV}_k}(I)=\IR\big(P_k(I),\Loc_k(I)\big)=1-P_k(1-\Loc_k).
\end{equation}
Its \emph{gate} $P_k$ asks a head that asserts its class to localize its evidence, with no label, and its \emph{guard} masks a conjunct whose part is not visible. %

\paragraph{Training objective.}
\label{sec:meth-vision-obj}
With per-head labels $t_{ik}$, the one-hot attribute labels, and their availability $\omega_{ik}$, the vision module minimizes
\begin{equation}\label{eq:loss-vis}
\begin{gathered}
\mathcal L_{\mathrm{vis}}=\big\langle\mathrm{BCE}\big(P_k(I_i),\,t_{ik}\big)\big\rangle_{\omega_{ik}}+\lambda_{\AVset}\Big(1-\big\langle\sat_{\mathcal{AV}_k}(I_i)\big\rangle_{\vis_k(I_i)}\Big)+\lambda_{\mathrm{fl}}\,\Phi,\\
\Phi=\Big\langle\operatorname{sg}\big[P_k(I_i)\big]\max\big(0,1-S_k(I_i)/S_0\big)\Big\rangle,
\end{gathered}
\end{equation}
with $\lambda_{\AVset}=1$ and $\lambda_{\mathrm{fl}}=0.1$. The \emph{floor} $\Phi$ ($S_k$ the total activation of $H_k$, $S_0=1$, $\operatorname{sg}$ a stop-gradient) stops a head from asserting its class through its bias with a negligible map; the cross-modal axioms read the frozen posteriors (Table~\ref{S-tab:config}).
Explanation supervision~\cite{ross2017right,gao2022res,pukdee2023learning} suggests instead an auxiliary loss, the Energy loss~\cite{rao2023studying}, here the mean of $1-\rho_k$ over the visible label-positive heads, with about 4.3 times the axiom's localization weight per conjunct at the same loss weight (Section~\ref{sec:res-grounding}). %

\par\smallskip\indent
We train and compare multiple visual backbones. Satellite experiments use ResNet-50, ConvNeXt-T and ViT-B/16, while photographic grounding is also evaluated across multiple backbones (Table~\ref{S-tab:config}; Appendix~\ref{S-app:results-grounding}).

\subsection{Cross-Modal Axioms}
\label{sec:meth-xmodal}
The cross-modal axioms (Table~\ref{S-tab:axioms-visual}; center of Figure~\ref{fig:data}) compare declared attributes with the visual posteriors $\pvis(\cdot\mid I)$ predicted from the image. For each predicate, the posterior is obtained by a softmax over its heads, or a sigmoid for a binary predicate. Axiom truths are computed from these posteriors, expected truths of implications, or products of such terms, without additional trainable parameters. An axiom is available when its declared value is covered by the corresponding visual head and the relevant parts are visible.

The report displays each grounded head's activation map and in-mask share $\rho_k$ alongside the cross-modal verdicts (Figure~\ref{S-fig:rankexplain}), allowing an analyst to inspect which image regions support a reported contradiction. %

\subsection{Masked Stratified Aggregation}
\label{sec:meth-agg}
A tabular stratum $\mathcal T\in\{\Aset,\Rset\}$ averages the degrees of its available axioms (E1), and the cross-modal stratum aggregates their truths by the masked $p$-mean error,
\begin{equation}\label{eq:pme}
V_{\mathcal T}(x)=\frac{\sum_{k\in\mathcal T}w_k\,\alpha_k\,v_k}{\sum_{k\in\mathcal T}w_k\,\alpha_k},\qquad
\feas_{\AXset}(x,I)=1-\left(\frac{\sum_{k\in\AXset}w_k\,\alpha_k\,(1-\sat_k)^{4}}{\sum_{k\in\AXset}w_k\,\alpha_k}\right)^{1/4},
\end{equation}
each where its denominator is positive, with $w_k=1$ except 0.5 for R9 and the satellite AX5. The feasibility $\feas_{\mathcal T}=\exp(-V_{\mathcal T})$ is the weighted geometric mean of the available truths, and $V_{\mathcal T}=\ln 2$ (feasibility 0.5) without available axioms; with unit weights $\feas_{\AXset}$ is the masked aggregator of LTNtorch, and without available axioms the cross-modal stratum drops out. The strata present combine by a weighted mean (E4), %
\begin{equation}\label{eq:strata}
\feas(x,I)=\frac{\lambda_{\mathcal T}\,\feas_{\mathcal T}(x)+\lambda_{\AXset}\,\gamma_{\AXset}(x,I)\,\feas_{\AXset}(x,I)}{\lambda_{\mathcal T}+\lambda_{\AXset}\,\gamma_{\AXset}(x,I)},\qquad
\gamma_{\AXset}=\Big[\textstyle\sum_{k\in\AXset}w_k\alpha_k>0\Big],
\end{equation}
with $\lambda_{\mathcal T}=\lambda_{\AXset}=1$; when $\gamma_{\AXset}=0$ the implementation returns $\feas_{\mathcal T}$ itself. Photographic pairs are scored by the cross-modal stratum alone, which reads no specification truth. %

\begin{proposition}[Exact reduction]
\label{prop:reduction}
Let the verdict be computed with Eqs.~\eqref{eq:pme} and~\eqref{eq:strata}.
(a)~For every stratum $\mathcal S$, instance $(x,I)$ and fixed availability pattern $\alpha$ (which, for cross-modal axioms, depends on the declared value and on visibility), let $\mathcal S'=\{k\in\mathcal S:\alpha_k(x,I)=1\}$. If $\mathcal S'$ is nonempty, the stratum verdict, $V_{\mathcal S}$ or $\feas_{\AXset}$, coincides with Eq.~\eqref{eq:pme} on $\mathcal S'$ with every axiom available; otherwise a tabular stratum takes $V_{\mathcal S}=\ln 2$ and the cross-modal stratum is absent ($\gamma_{\AXset}=0$). Either way it depends neither on the degree or truth of a masked axiom nor on a field read only by masked axioms, and has zero derivative with respect to them.
(b)~With a tabular stratum ($\lambda_{\mathcal T}>0$), if $I=\varnothing$, or every cross-modal axiom is masked, then $\feas(x,I)=\exp\big(-V_{\mathcal T}(x)\big)$ exactly; otherwise $\feas(x,I)$ is the verdict of the knowledge base restricted to the strata present (including the tabular stratum at feasibility $1/2$ when none of its axioms is available), with their weights renormalized.
\end{proposition}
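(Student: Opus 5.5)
The plan is a direct computation from the definitions in Eqs.~\eqref{eq:pme} and~\eqref{eq:strata}, carried out separately for part~(a) and part~(b). For part~(a), I fix a stratum $\mathcal S$, an instance $(x,I)$ and an availability pattern $\alpha$. Every summand in the numerator and denominator of $V_{\mathcal S}$ or of $\feas_{\AXset}$ carries the factor $\alpha_k\in\{0,1\}$, so each sum over $\mathcal S$ equals the same sum over $\mathcal S'$ with $\alpha_k=1$. If $\mathcal S'\neq\emptyset$, the denominator $\sum_{k\in\mathcal S'}w_k$ is positive because all weights are positive ($w_k\in\{0.5,1\}$). The stratum verdict is then literally Eq.~\eqref{eq:pme} evaluated on $\mathcal S'$ with every axiom available. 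If $\mathcal S'=\emptyset$, I invoke the stated conventions instead: a tabular stratum takes $V_{\mathcal S}=\ln 2$, and the cross-modal stratum has $\gamma_{\AXset}=0$.

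Independence and the zero derivative follow from one observation. For fixed $\alpha$, the resulting expression is a function only of $\{v_k\}_{k\in\mathcal S'}$ or $\{\sat_k\}_{k\in\mathcal S'}$ and of the constant weights; in the empty case it is a constant. A masked axiom's degree or truth therefore does not appear at all, and neither does a field read only by masked axioms. Since $v_k$ and $\sat_k$ for $k\in\mathcal S'$ are functions of their own fields, the partial derivative with respect to such a masked-only field is identically zero.

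One subtlety needs care. The degree of a masked axiom may be undefined when its fields are $\varnothing$. I handle this by reading the formula as the $\mathcal S'$-restricted sum, which is exactly the guarded-quantification semantics, rather than as $0\cdot v_k$. The same restricted reading is what an implementation with masking by selection computes, and it is what makes the derivative claim meaningful rather than $0\cdot\infty$.

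For part~(b), I take $\lambda_{\mathcal T}>0$. If $I=\varnothing$, every cross-modal axiom is unavailable, so in both listed cases $\sum_k w_k\alpha_k=0$ over $\AXset$ and $\gamma_{\AXset}=0$. Substituting into Eq.~\eqref{eq:strata} gives $\lambda_{\mathcal T}\feas_{\mathcal T}/\lambda_{\mathcal T}=\feas_{\mathcal T}=\exp(-V_{\mathcal T})$, where the cross-modal term is taken as absent rather than as $0\cdot(\text{undefined})$. This agrees with the implementation's convention of returning $\feas_{\mathcal T}$. Otherwise $\gamma_{\AXset}=1$, and $\feas$ is the weighted mean of the two present strata with weights $\lambda_{\mathcal T}/(\lambda_{\mathcal T}+\lambda_{\AXset})$ and $\lambda_{\AXset}/(\lambda_{\mathcal T}+\lambda_{\AXset})$, that is, renormalized. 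Each stratum value is, by part~(a), the verdict of its restricted set of axioms. If the tabular stratum has no available axiom, it enters at $\exp(-\ln 2)=1/2$.

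The main obstacle is not any calculation but stating the semantics precisely. Concretely, I must make explicit that $\alpha$ is held fixed, since it depends on declaredness and visibility and is not differentiable. I must also make explicit that masked terms are excluded rather than multiplied by zero. With those conventions in place, both parts reduce to substitution into the two equations.
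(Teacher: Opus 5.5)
Your proposal is correct and follows the paper's proof. Both arguments reduce the masked sums of Eq.~\eqref{eq:pme} to sums over $\mathcal S'$, use the $\ln 2$ and $\gamma_{\AXset}=0$ conventions for the empty case, and substitute $\gamma_{\AXset}\in\{0,1\}$ into Eq.~\eqref{eq:strata}, relying on the implementation's return of $\feas_{\mathcal T}$ for exactness. The only difference is how the $0\cdot v_k$ subtlety is handled: you define it away by reading the formula as the $\mathcal S'$-restricted sum, while the paper keeps the masked formula and argues that every masked degree and its derivatives are finite (finite placeholders for undeclared fields, degrees capped at $10^4$), so that selecting them to zero passes no value and no gradient, which is what its bit-for-bit implementation claim needs.
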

The proof is in Appendix~\ref{S-app:method-proof}. The reduction holds because only available axioms enter the mean, a structural property of E1 that also holds in training, where a masked axiom receives zero gradient; its practical value is an exact abstention, which crisp rule validation shares (Table~\ref{tab:sota}) and learned heads do not keep (Section~\ref{sec:res-missing}); stratification (E4) keeps one verdict per evidence source (Section~\ref{sec:res-knowledge}). %

\subsection{Verdict and Diagnosis Keys}
\label{sec:meth-heads}
\fwname{}'s verdict is $V=-\ln\feas(x,I)$, the masked mean degree $V_{\mathcal T}$ for a record without image. Bands and slopes are learned for each derived quantity, with no additional weights learned for aggregation. The resulting verdict serves as the record score. \fwname{} converts this score into a binary decision using the threshold $t^{\star}$ that maximizes Youden's index on the validation set~\cite{youden1950index}. %
A specification record with no available constraint scores $V=\ln 2$, above the validation thresholds of the specification verifier (0.310 to 0.558 across seeds), so it is flagged for lack of evidence, with an evidence coverage of 0 in its report (Section~\ref{sec:meth-report}). %
For a flagged specification record with at least one available constraint, \emph{constraint diagnosis} identifies the constraint with the largest violation degree. This gives the \emph{largest-violation key}: the argmax of the degrees entering the verdict, with no additional parameter to fit. We also retain a \emph{secondary key}, identifying the constraint that contributes most to the second head's score. %
The second head (E5) is a logistic regression with non-negative weights on the degrees $e_k=\alpha_k v_k$ of the available axioms,
\begin{equation}\label{eq:mono}
\hat p_{\mathrm{m}}(x,I)=\sigma\Big(\phi_0+\textstyle\sum_k\big(\phi_k\,e_k+\phi^{\mathrm a}_k\,\alpha_k\big)+\phi_{\mathrm s}\,V\Big),\qquad \phi_k,\ \phi_{\mathrm s}\ge0,
\end{equation}
with availability terms $\phi^{\mathrm a}_k$ free in sign, fitted by $L_2$-regularized maximum likelihood with a bound-constrained quasi-Newton method~\cite{byrd1995limited} and selected on validation AUROC; on satellite pairs it reads $V_{\Rset}$ in place of $V$ and also the cross-modal truths, with non-positive weights, and the cross-modal score $1-\feas_{\AXset}$ (0 without an image), with a non-negative weight. The contribution $c_k=\operatorname{logit}\hat p_{\mathrm{m}}(x,I)-\operatorname{logit}\hat p_{\mathrm{m}}(x,I\mid v_k{:=}0)$, with $V$ (or $V_{\Rset}$) recomputed, gives the secondary key, a learned second ranking of the same degrees that the report prints beside the largest-violation key (Section~\ref{sec:res-diag}, Figure~\ref{fig:diagnosis}b). %

\subsection{Verification Report and Optional Checks}
\label{sec:meth-report}
Every instance receives a verification report (Figure~\ref{S-fig:rankexplain} in Appendix~\ref{S-app:method}; example in Figure~\ref{fig:report}) whose per-constraint degrees state, by one rule, which constraints the record violates and which it respects: an available constraint is \emph{violated} when its truth $\exp(-v_k)$ is below 0.5, that is $v_k>\ln 2$, reached at the learned band edge, where a hinge of Eq.~\eqref{eq:soft} alone equals $\ln 2$, or inside the band where the learned in-band term adds to it (A1, A2); it is \emph{held} otherwise, and a masked constraint is not evaluated. The rule has no tuned parameter: it reads at 0.5 the truths that the per-constraint term of Eq.~\eqref{eq:loss-tab} trains toward 1 on constraints the generator leaves held and toward 0 on those it violates. Its evidence coverage counts the available constraints among those whose scope contains the record, so a masked axiom is an abstention, and the report gives the class of the nearest clean training record and its fleet similarity $\exp(-d/\tau)$, with $d$ the root-mean-square difference from it over the fields both records declare, in log-scaled standardized coordinates, and $\tau$ the median leave-one-out nearest-neighbor $d$ in the clean training fleet, so that a value below $e^{-1}$ marks a record farther from the fleet than a typical clean training record. %

For a flagged specification record, \emph{field localization} names the altered field by abduction, a search over single-field changes for the one that restores consistency; a field whose change alone restores it is a single-fault diagnosis in the sense of consistency-based diagnosis~\cite{reiter1987theory}. For every field $f$ read by an available constraint, the verifier evaluates the verdict when $f$ alone moves by $d$ in log space,
\begin{equation}\label{eq:abduction}
\psi_f(d)=V_{\Aset}\big(x^{(f,d)}\big),\qquad x^{(f,d)}_f=x_f\exp(d),\quad x^{(f,d)}_j=x_j\ (j\neq f),\qquad d\in\{-3,-2.995,\dots,3\},
\end{equation}
and ranks first the fields whose change alone can bring the verdict to $V\le-\ln 0.9\approx0.1054$, by the smallest such change, then the others by the lowest verdict they reach, a rule and a threshold selected on validation and independent of the diagnosis. %
\emph{Repair proposals} then solve
\begin{equation}\label{eq:repair}
d^{\star}=\arg\min_{d}\ \|d\|_2^2\quad\text{s.t.}\quad\feas_{\Aset}\big(x\odot\exp(d)\big)\ge\tau_r,\qquad d_j=0\ \text{for } j\notin F_{k_r}(x),
\end{equation}
the smallest log-space change of the $k_r$ top-ranked fields $F_{k_r}(x)$ that brings the verdict to the threshold, a counterfactual over the declared fields~\cite{guidotti2024counterfactual}, by an augmented-Lagrangian method~\cite{nocedal2006numerical} with Adam~\cite{kingma2014adam} steps, keeping its best-effort change when the threshold is not reached; validation chose $\tau_r = 0.9$ ($V\le-\ln 0.9$), a target stricter than the decision threshold, and $k_r = 1$ for the repair error (Section~\ref{sec:res-local}). %

\fwname{}'s audit has three optional settings. With \emph{rank-then-explain}, the residual MLP, an MLP on the 14 physical quantities of the oracle (Table~\ref{S-tab:comparators}), raises the alarm at its validation Youden threshold and the largest-violation key explains each flagged record; the flagged-and-diagnosed rate, the share of corrupted records flagged with a violated top-ranked constraint, scores this setting. %
The \emph{critical-field alert} flags a record that leaves undeclared a field required by an applicable axiom and declared by at least 95\% of the training records of its class - or of its engine type for classes with fewer than five training records -, reading only the undeclared fields, the class, the engine type and a table of the training records; with it enabled, \fwname{} flags a record when its verdict or the alert fires. %
With \emph{MICE imputation} of the missing fields enabled, \fwname{} scores imputed values in place of abstaining on them; Section~\ref{sec:res-missing} compares this setting with learners given the same imputer. %
\providecommand{\twoci}[4]{\ensuremath{[#1,\penalty 200\ #2]\cdot\allowbreak[#3,\penalty 200\ #4]}}
\providecommand{\oneci}[2]{\ensuremath{[#1,\penalty 200\ #2]}}
\ifdefined\bodyfigwidth\else\newlength{\bodyfigwidth}\fi
\setlength{\bodyfigwidth}{16.45cm}%

\section{Evaluation and Results}
\label{sec:res}

\subsection{Experimental Setup}
\label{sec:res-setup}\suppressfloats[t]%

The vision module and the tabular verifier are fitted separately, as described in Section~\ref{sec:meth} (Table~\ref{S-tab:config}, Appendix~\ref{S-app:protocol}); where a result uses one of \fwname{}'s optional settings, rank-then-explain, the critical-field alert or MICE imputation, the text names it. %
The configuration was selected on validation data and held-out training classes. The reported results use a visibility threshold calibrated on a subset including test and out-of-distribution images, and generator labels from an oracle fitted on all 107 aircraft (Appendix~\ref{S-app:protocol}). Sensitivity analyses at the end of this section repeat threshold selection and oracle fitting without test or out-of-distribution data. %

\paragraph{Benchmarks and labels.}
Table~\ref{tab:evalmap} maps each benchmark to its labels and to the sections that use it; every error in this study is synthetic, an edit or a constructed mismatch of public records and images. %
Two label families recur. \emph{Generator labels} come from the generator oracle of Section~\ref{sec:meth-data-gen}, which judges the same residual quantities as the knowledge base against class bands fitted on all 107 aircraft; its single-field corruptions are escalated from 5 or 10\% until the oracle flags them, mostly to 20 to 160\% (geometric ones stay at 5 or 10\%; Table~\ref{S-tab:magnitudes}). %
\emph{Intervention labels} cover single-field edits drawn once at 5 to 80\% and cross-class block swaps: a field counts as altered beyond three noise standard deviations (0.06 in natural-log units for continuous fields, 0.134 for the aspect ratio, any change of an integer field), so edits within this tolerance, e.g., the 5\% edits of continuous fields, count as unaltered, and a constraint is labeled when its scope contains an altered field. Such a constraint is involved in the edit, which need not violate it: an edit can keep the record inside the physical envelope of the fleet, and 613 of the 1,810 altered rows, about a third, break no constraint even under bands fitted on all 107 aircraft, so that no physical check can flag them (Appendix~\ref{S-app:results-detect}). %
Generator labels follow the same relations as the axioms and were \fwname{}'s per-constraint training targets; intervention labels, set from the edit alone, are independent of them. %
Minimum-cardinality localization (Fellegi-Holt and MaxSMT~\cite{bjorner2015nuz}, all optimal solutions) breaks ties at random unless the oracle tie-break is named. %

\begin{widetable}[t]
  \centering
  \caption{Benchmarks and labels.}
  \label{tab:evalmap}\vspace{3pt}
  \scriptsize
  \setlength{\tabcolsep}{4pt}
  \begin{tabularx}{\textwidth}{@{}L{2.55cm}>{\raggedright\arraybackslash}XC{1.6cm}L{1.3cm}L{4.1cm}@{}}
    \toprule
    Benchmark & Labels & Independent of the oracle & Sections & Size \\
    \midrule
    Specification records & generator labels: corruptions kept when the oracle flags them, with per-constraint labels; noisy-but-coherent controls; fields hidden at random & no & \ref{sec:res-diag} to \ref{sec:res-missing}, \ref{sec:res-knowledge} & 107 aircraft of 71 families; test: 16 aircraft of 9 families, 204 rows with 32 controls \\ %
    Intervention-labeled records & from the edit alone, with a tolerance fixed before generation & yes & \ref{sec:res-diag} to \ref{sec:res-detect}, \ref{sec:res-knowledge} & 2,582 rows, 1,810 altered; same aircraft and splits \\ %
    Falsification & the attacker's edits of genuine records & yes & \ref{sec:res-falsif} & 736 attack problems on the 46 genuine held-out records \\ %
    Satellite pairs (RarePlanes) & generator labels; image swaps & no & \ref{sec:res-detect}, \ref{sec:res-missing} & 5,250 pairs; 10 test and 8 unseen sites \\ %
    Photographic pairs (FGVC-Aircraft) & constructed mismatches, attribute labels, automatic pseudo-masks & yes & \ref{sec:res-detect}, \ref{sec:res-grounding} & 4,543 pairs; 800 test photographs \\ %
    Human part masks & PASCAL-Part~\cite{chen2014detect}, PartImageNet~\cite{he2022partimagenet} and UDA-Part~\cite{liu2022udapart} annotations, for evaluation only & yes & \ref{sec:res-grounding} & 535 aircraft in 506 PASCAL-Part images; 40 UDA-Part items \\ %
    \bottomrule
  \end{tabularx}
\end{widetable}%

\paragraph{Statistics.}
Intervals are 95\% two-level cluster-bootstrap percentile intervals~\cite{field2007bootstrapping}, reported as [aircraft]~$\cdot$~[family] for records. The $\pm$ notation denotes standard deviation across seeds. Residual MLP and \fwname{} runs are paired by seed. %
A method is higher or lower than another, ahead or below, when both intervals of their difference exclude zero, and on par with it when both include zero; when only one excludes zero, when a bound prints 0.000 or lies within 0.003 of zero, or when the reading changes in one of 20 alternative bootstrap streams, the text gives the levels in plain words (close, comparable, slightly). %
The text gives levels, Tables~\ref{S-tab:contrasts} to~\ref{S-tab:contrasts-4} and Appendix~\ref{S-app:results-verdicts} the differences and intervals, and Appendix~\ref{S-app:results} the spread over seeds and the readings across the alternative streams; no multiplicity correction is applied.

\paragraph{Comparators, metrics and cost.}
Learners on the raw fields or on the residuals, the 14 physical quantities that the oracle checks, score records (the residual MLP, TabPFN v2~\cite{hollmann2025accurate}, LightGBM~\cite{ke2017lightgbm}, a one-class support vector machine, SVM); class standardization and the grouped Shapley additive explanations (SHAP)~\cite{lundberg2017unified} of a second model (MLP-SHAP, LightGBM-SHAP) rank the residuals; Fellegi-Holt, Z3 and rule checkers return crisp violated sets, a network trained with the DL2 loss~\cite{fischer2019dl2} on the same residuals a probability per constraint, and language and vision-language models given the rules a judgment (Table~\ref{S-tab:comparators}). %
We call $V$ the record verdict and the held or violated outputs per constraint the constraint verdicts. The constraint verdicts use exact-set agreement per record, macro-F1 over the constraints with at least five labeled violations and the false-violation rate on clean records. Diagnosis and localization use top-1 rates, repair the log-space error and grounding the pointing game~\cite{zhang2018top}; detection scores the record verdict by AUROC and the false-positive rate (FPR) on the 32 controls at a true-positive rate (TPR) of 0.90. %
Experiments ran on an AMD Ryzen 9 9950X3D with 192~GB of RAM, an NVIDIA RTX 5090 and an RTX 4090. Per record, the verdict and the key take one pass through the knowledge base, abduction evaluates the verdict at 1,201 log-space steps for each field read by an available constraint, and the repair solves one augmented-Lagrangian problem; training curves are in Appendix~\ref{S-app:results-curves}.\label{sec:res-dyn} %

\noindent Figure~\ref{fig:audit} shows, for each test record, the violation $1-\sat_k=1-\exp(-v_k)$ of each constraint (mean of 8 seeds) with the rule of Section~\ref{sec:meth-report} drawn as a step at 0.5: dark cells are violated constraints, gray cells masked ones, framed cells the oracle's flags. %

\begin{widefigure}[t]
  \centering
  \includegraphics[width=\bodyfigwidth]{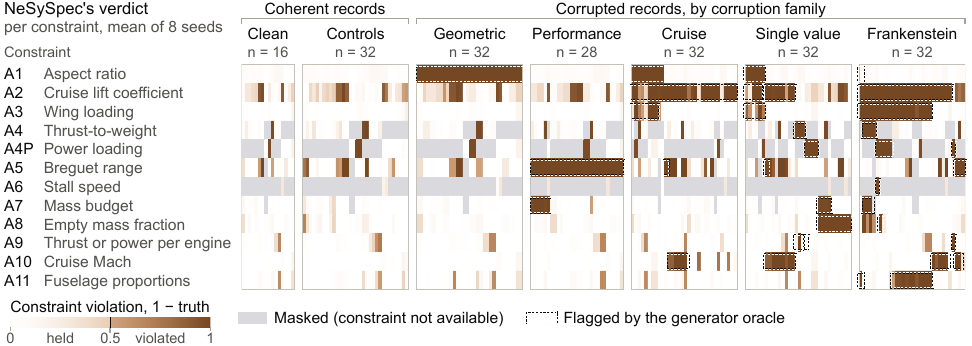}%
  \caption{Constraint-level verdicts on test records.}
  \label{fig:audit}
\end{widefigure}

\paragraph{The audit at a glance.}
Table~\ref{tab:glance} lines up \fwname{} with its comparators on every output of the audit. Learners on the physical residuals return a record score and need a second model, a head per constraint or attributions, for a verdict or a diagnosis; solvers return a violated set, an edit or a repair; language and vision-language models return a judgment. \fwname{} is the only method that returns every output, and without an image its fused verdict alone returns the record verdict exactly. On the test aircraft no comparator is ahead of it in the macro-F1 of the constraint verdicts, in naming the violated constraint and the altered field or in the repair error on the altered field; the network trained with the DL2 loss keeps fewer false violations (Section~\ref{sec:res-diag}). In record and pair detection, learners on the residuals and late fusions are close to \fwname{} and in some settings rank records or pairs higher (Sections~\ref{sec:res-detect} to~\ref{sec:res-missing}). Learners on the residuals do so on the unseen class, after retraining on intervention labels (Table~\ref{S-tab:contrasts-3}) and with 10\% of the fields hidden, the MLP trained with masks on test and the residual MLP on unseen families; late fusions do so on satellite pairs after learning from labeled pairs, as does Qwen3-VL-32B on photographs of unseen families. With the axiom's 8 seeds, the auxiliary loss points higher on test pseudo-masks (Section~\ref{sec:res-grounding}, Appendix~\ref{S-app:protocol}). %

\begin{widetable}[t]
  \centering
  \caption{The audit at a glance: \fwname{} and its comparators on each output.}
  \label{tab:glance}\vspace{3pt}
  \scriptsize
  \setlength{\tabcolsep}{2pt}
  \begin{tabularx}{\textwidth}{@{}>{\raggedright\arraybackslash}XC{1.75cm}C{1.3cm}C{1.75cm}C{0.85cm}C{1.05cm}C{1.3cm}C{1.1cm}C{1.1cm}C{1.05cm}@{}}
    \toprule
    Method & Verdicts macro-F1 & Violated constraint & Altered field & Repair error & Exact, no image & Part pointing, PASCAL-Part & Records AUROC & Half fields hidden, unseen class & Satellite pairs \\
    \midrule
    \multicolumn{10}{@{}l}{\textit{Learners on the physical residuals}} \\
    Residual MLP & \textperiodcentered & \textperiodcentered & \textperiodcentered & \textperiodcentered & \textperiodcentered & \textperiodcentered & 0.990 & 0.536 & \textperiodcentered \\ %
    TabPFN~v2 with native missing values & \textperiodcentered & \textperiodcentered & \textperiodcentered & \textperiodcentered & \textperiodcentered & \textperiodcentered & 0.957 & 0.539 & \textperiodcentered \\ %
    LightGBM trained with masks & \textperiodcentered & \textperiodcentered & \textperiodcentered & \textperiodcentered & \textperiodcentered & \textperiodcentered & - & 0.566 & \textperiodcentered \\ %
    MLP-SHAP & 0.693/0.479\textsuperscript{a} & 0.942 & \textperiodcentered & \textperiodcentered & \textperiodcentered & \textperiodcentered & 0.990 & - & \textperiodcentered \\ %
    LightGBM-SHAP & 0.595/0.418\textsuperscript{a} & 0.871 & \textperiodcentered & \textperiodcentered & \textperiodcentered & \textperiodcentered & 0.957 & - & \textperiodcentered \\ %
    Network with the DL2 loss, one head per constraint & 0.661/0.426\textsuperscript{b} & 0.752 & \textperiodcentered & \textperiodcentered & \textperiodcentered & \textperiodcentered & 0.965 & - & \textperiodcentered \\ %
    Class-standardized residual & 0.714/0.480\textsuperscript{b} & 0.929 & 0.403/- & \textperiodcentered & \textperiodcentered & \textperiodcentered & - & - & \textperiodcentered \\ %
    \multicolumn{10}{@{}l}{\textit{Solvers, statistics and imputation}} \\
    Fellegi-Holt & 0.829/0.505 & 0.795 & - & 0.241 & \textperiodcentered & \textperiodcentered & 0.856 & - & \textperiodcentered \\ %
    Z3 & 0.829/0.505 & 0.825 & - & - & \textperiodcentered & \textperiodcentered & 0.867 & - & \textperiodcentered \\ %
    Minimum-cardinality localization (Fellegi-Holt, MaxSMT), random tie-break & - & - & 0.367/0.306 & - & \textperiodcentered & \textperiodcentered & - & - & \textperiodcentered \\ %
    Same localization, oracle tie-break & - & - & 0.629/0.453 & - & \textperiodcentered & \textperiodcentered & - & - & \textperiodcentered \\ %
    Class I sizing regressions & \textperiodcentered & \textperiodcentered & 0.597/0.453 & 0.160 & \textperiodcentered & \textperiodcentered & - & - & \textperiodcentered \\ %
    MICE at the Class I field & \textperiodcentered & \textperiodcentered & \textperiodcentered & 0.120 & \textperiodcentered & \textperiodcentered & \textperiodcentered & \textperiodcentered & \textperiodcentered \\ %
    Constraint-layer ordering (PiShield style) & \textperiodcentered & \textperiodcentered & \textperiodcentered & 0.218 & \textperiodcentered & \textperiodcentered & \textperiodcentered & \textperiodcentered & \textperiodcentered \\ %
    \multicolumn{10}{@{}l}{\textit{Knowledge given as a loss or in words}} \\
    DL2 method on the declared fields & 0.425/0.259 & 0.443 & \textperiodcentered & \textperiodcentered & \textperiodcentered & \textperiodcentered & 0.703 & - & \textperiodcentered \\ %
    Language model given the constraints in words & 0.153/- & 0.128 & \textperiodcentered & \textperiodcentered & \textperiodcentered & \textperiodcentered & 0.620 & - & \textperiodcentered \\ %
    \multicolumn{10}{@{}l}{\textit{Images}} \\
    Auxiliary localization loss & \textperiodcentered & \textperiodcentered & \textperiodcentered & \textperiodcentered & \textperiodcentered & 0.739 & \textperiodcentered & \textperiodcentered & \textperiodcentered \\ %
    RRR, GradMask, Pukdee et al. & \textperiodcentered & \textperiodcentered & \textperiodcentered & \textperiodcentered & \textperiodcentered & lower & \textperiodcentered & \textperiodcentered & \textperiodcentered \\ %
    End-to-end fusions trained on labeled pairs & \textperiodcentered & \textperiodcentered & \textperiodcentered & \textperiodcentered & \textperiodcentered & \textperiodcentered & \textperiodcentered & \textperiodcentered & lower\textsuperscript{c} \\ %
    Learned fusions and heads on satellite pairs, \fwname{}'s second head included & \textperiodcentered & \textperiodcentered & \textperiodcentered & \textperiodcentered & 0 of 207 & \textperiodcentered & \textperiodcentered & \textperiodcentered & close\textsuperscript{d} \\ %
    Qwen3-VL-8B prompted with the rules & \textperiodcentered & \textperiodcentered & \textperiodcentered & \textperiodcentered & \textperiodcentered & \textperiodcentered & \textperiodcentered & \textperiodcentered & 0.748 \\ %
    \midrule
    \textbf{\fwname{} (ours)} & \textbf{0.795/0.575} & \textbf{0.942} & \textbf{0.742/0.593} & \textbf{0.133} & \textbf{9 of 9} & \textbf{0.737} & \textbf{0.979} & \textbf{0.577} & \textbf{0.994} \\ %
    \bottomrule
  \end{tabularx}
  \par\vspace{2pt}\begin{minipage}{\textwidth}\scriptsize
  Test unless stated otherwise; slash: generator/intervention labels. Diagnosis: top-1 accuracy; repair: log-space error on the altered field (lower is better); detection: AUROC (Z3 score: violated-edit count). Field masking is random; satellite AUROC pools all pair types. Exactness: satellite cells (test and unseen sites) whose no-image verdict matches the record verdict bit for bit. Pointing: 3 runs per method, loss weights 1/0.3 (ours/auxiliary). Fellegi--Holt/Z3 verdicts, MICE/Class I repairs and auxiliary-loss pointing are on par with \fwname{}. \textperiodcentered: output not provided or no image input; -: not reported.
\textsuperscript{a}Corrupted records only; corresponding \fwname{} macro-F1: 0.832/0.601.
\textsuperscript{b}Validation-tuned thresholds: network 0.746/0.541; residual 0.560 on intervention labels.
\textsuperscript{c}ResNet-50 for both; test and unseen sites.
\textsuperscript{d}Some fusions on knowledge-base truths, trained on labeled pairs, achieve higher AUROC on test swaps, unseen sites or with 30\% of images missing (Section~\ref{sec:res-xmodal}; Appendix~\ref{S-app:results-xmodal}).
Comparisons and intervals: Sections~\ref{sec:res-diag} to~\ref{sec:res-grounding}; Appendix~\ref{S-app:expsup}. %
  \end{minipage}
\end{widetable}

\paragraph{Robustness to the label oracle and the visibility threshold.}
Two further oracles exclude test and out-of-distribution aircraft: the first is fitted on the 44 training aircraft with the paper's tolerances, the second on the 61 training and validation aircraft with tolerances chosen by leaving one family out. On intervention labels, under both oracles, \fwname{} keeps its leads over Fellegi-Holt and Z3 in diagnosis and over minimum-cardinality localization with random tie-break on every split. It keeps its macro-F1 lead over the DL2-loss network on unseen families under both oracles, on test under the first and slightly under the second, and on the unseen class under the first only. Its narrower leads over the oracle tie-break, the crisp edits and the Class I residual become on par or lower on some splits. Its own levels on test are lower (macro-F1 0.488 and 0.537, abduction 0.532 and 0.491, key 0.732 and, slightly, 0.738, against 0.575, 0.593 and 0.786), and its false violations rise only under the paper's tolerances, whose bands flag 38 of the 63 clean held-out aircraft and narrow the bands it learns (Appendix~\ref{S-app:results-refine}). %
Re-selected on validation alone, the part detector's visibility threshold becomes 0.25. Retrained at 0.25, the photograph module keeps its comparisons against the networks without a localization term or with permuted masks and against RRR, GradMask and Pukdee et al.\ on PASCAL-Part, all kept at 0.20, against the auxiliary loss also retrained at 0.25 and against the vision-language models. Its pointing is slightly lower on test pseudo-masks (0.774 against 0.780) and on PASCAL-Part human masks (0.711 against 0.737), where the auxiliary loss falls as well (0.708 against 0.739), and its cross-modal AUROC is slightly lower on unseen families and the unseen class (Appendix~\ref{S-app:protocol}). %

\subsection{Constraint Verdicts and Diagnosis}
\label{sec:res-diag}

For every available constraint of a record, \fwname{} returns a graded degree and, by the fixed rule of Section~\ref{sec:meth-report}, a held or violated constraint verdict (Figures~\ref{fig:audit} and~\ref{fig:report}); these verdicts reach a macro-F1 of 0.795 on generator-label test and, without retraining, 0.575 on intervention-label test. %
Against a network trained with the DL2 loss on the same residuals and per-constraint targets, which returns a probability per constraint, they reach a higher macro-F1 on every split and both label families (the network at 0.661 and 0.426 on the two tests; Figure~\ref{fig:diagnosis}a), and the same key read from the network ranks the violated constraint first less often on every generator-label split (0.752 against 0.942). %
They also find every labeled violation of a corrupted record more often than the network on every split and both label families (0.934 against 0.522 on generator-label test; micro recall 0.959 against 0.635), with micro precision on par on test and lower on the unseen class (0.458 against 0.577 on generator and 0.517 against 0.639 on intervention labels; Appendix~\ref{S-app:results-verdicts}). %
With the network's thresholds tuned on validation the macro-F1 lead holds on generator-label test and is slight on intervention-label test (0.575 against 0.541; Table~\ref{S-tab:verdicts}). %
The network reports fewer false violations on clean records (0.020 against 0.074 on generator-label test, also on intervention labels), and \fwname{} also returns the altered field and a repair proposal from the same degrees. %

\begin{widefigure}[t]
\centering
{\scriptsize%
\begin{adjustbox}{max width=\textwidth}
\begin{tikzpicture}[
    ttl/.style  = {anchor=south west, inner sep=0pt, font=\bfseries},
    sub/.style  = {anchor=west, inner sep=0pt, text height=1.6ex, text depth=0.45ex, text=ioline, font=\itshape},
    cid/.style  = {anchor=west, inner sep=0pt, text height=1.6ex, text depth=0.45ex},
    val/.style  = {anchor=west, inner sep=0pt, text height=1.6ex, text depth=0.45ex},
    vio/.style  = {anchor=west, inner sep=0pt, text height=1.6ex, text depth=0.45ex, font=\bfseries},
    msk/.style  = {anchor=west, inner sep=0pt, text height=1.6ex, text depth=0.45ex, text=ioline, font=\itshape},
    txt/.style  = {anchor=north west, inner sep=0pt, text height=1.6ex, text depth=0.45ex},
    line/.style = {anchor=north west, inner sep=0pt, align=left},
    bar/.style  = {fill=symbfill, draw=symbline, line width=0.4pt},
    hit/.style  = {fill=symbline, draw=symbline, line width=0.4pt},
    cut/.style  = {fill=white, draw=symbline, line width=0.4pt},
  ]
  \node[ttl] at (0,0.12) {(a) Declared record (test split)};
  \node[line] at (0,0) {%
    \begin{tabular}{@{}l@{\hspace{4pt}}l@{\hspace{9pt}}l@{\hspace{4pt}}r@{\hspace{9pt}}l@{\hspace{4pt}}r@{\hspace{9pt}}l@{\hspace{4pt}}r@{}}
      class; engine type & narrow-body; turbofan & $\MTOW$ (kg) & 68,000 & wingspan (m) & 34.1 & cruise speed (m/s) & 230 \\ %
      engine count & 2 & $\OEW$ (kg) & 39,500 & wing area (m$^2$) & 122.6 & cruise altitude (m) & 11,278 \\ %
      total thrust (kN) & \textbf{160.17} \textit{\color{ioline}(clean 192.2)} & usable fuel (kg) & 19,465 & aspect ratio & 9.48 & range (km) & 5,750 \\ %
      \multicolumn{2}{@{}l}{not declared: shaft power, stall speed} & passenger limit & 132 & fuselage length (m) & 31.44 & ceiling (m) & 12,131 \\ %
    \end{tabular}};
  \begin{scope}[shift={(0,-1.87)}]
    \node[ttl] at (0,0.12) {(b) Constraint-level verdicts};
    \node[sub] at (0,-0.13) {constraint};
    \node[sub] at (3.5,-0.13) {verdict};
    \node[sub, anchor=east] at (5.7,-0.13) {$v_k$};
    \node[sub] at (5.8,-0.13) {degree};
    \foreach \id/\nm/\row in {A1/aspect ratio/1, A2/lift coefficient/2, A3/wing loading/3, A4/thrust to weight/4,
        A4P/power loading/5, A5/Breguet range/6, A6/stall speed/7, A7/mass balance/8, A8/empty-mass fraction/9,
        A9/thrust per engine/10, A10/cruise Mach number/11, A11/fuselage ratios/12} {
      \node[cid] at (0,{-0.2-0.27*\row}) {\id};
      \node[cid] at (0.72,{-0.2-0.27*\row}) {\nm};
    }
    \foreach \row in {4,10} {\node[cid] at (0.36,{-0.2-0.27*\row}) {$^{\scriptscriptstyle\square}$};}%
    \node[vio] at (3.5,{-0.2-0.27*4}) {violated};
    \node[vio] at (3.5,{-0.2-0.27*6}) {violated\raisebox{-0.45ex}{$^{\ast}$}};%
    \foreach \row in {1,2,3,8,9,10,11,12} {\node[val] at (3.5,{-0.2-0.27*\row}) {held};}
    \node[msk] at (3.5,{-0.2-0.27*5}) {out of scope};
    \node[msk] at (3.5,{-0.2-0.27*7}) {masked};
    \foreach \pv/\st/\row in {0.082/bar/1, 0.096/bar/2, 0.053/bar/3, 0.094/bar/8, 0.005/bar/11, 0.013/bar/12} {%
      \node[val, anchor=east] at (5.7,{-0.2-0.27*\row}) {\pv};
      \draw[\st] (5.8,{-0.285-0.27*\row}) rectangle ({5.8+0.85*\pv},{-0.115-0.27*\row});
    }
    \foreach \pv/\st/\row in {1.053/bar/6, 0.386/bar/9, 0.384/bar/10} {%
      \node[val, anchor=east] at (5.7,{-0.2-0.27*\row}) {\pv};
      \draw[\st] (5.8,{-0.285-0.27*\row}) rectangle ({5.8+0.85*\pv},{-0.115-0.27*\row});
    }
    \node[val, anchor=east] at (5.7,{-0.2-0.27*4}) {10.001};%
    \draw[bar] (5.8,{-0.285-0.27*4}) rectangle (7.75,{-0.115-0.27*4});
    \draw[cut, shift={(7.35,{-0.2-0.27*4})}, xslant=0.31] (-0.06,-0.13) rectangle (0.06,0.13);
    \node[msk] at (5.8,{-0.2-0.27*5}) {propeller axiom};
    \node[msk] at (5.8,{-0.2-0.27*7}) {no stall speed};
    \node[msk] at (6.78,{-0.2-0.27*6}) {class latents};%
    \foreach \ya/\yb in {0.5/4.5, 5.5/6.5, 7.5/12.6} {%
      \draw[black, densely dashed, line width=0.6pt] ({5.8+0.85*0.6931},{-0.2-0.27*\ya}) to ({5.8+0.85*0.6931},{-0.2-0.27*\yb});
    }
    \node[val, anchor=north] at ({5.8+0.85*0.6931},{-0.2-0.27*12.6}) {$\ln 2$};
    \draw[ioline, line width=0.3pt] (0,-3.9) to (7.75,-3.9);
    \node[cid] at (0,-4.14) {verdict $V$, mean degree};
    \node[val, anchor=east] at (5.7,-4.14) {1.217};%
    \draw[bar, line width=0.6pt] (5.8,-4.23) rectangle ({5.8+0.85*1.217},-4.05);
    \draw[black, densely dashed, line width=0.6pt] ({5.8+0.85*0.310},-4.31) to ({5.8+0.85*0.310},-3.97);
    \node[txt] at (0,-4.4) {decision: flagged (validation threshold 0.310, dashed)};%
    \node[txt] at (0,-4.72) {clean record: $V$ 0.191, not flagged; $^{\ast}$A5 violated there too};%
    \node[txt] at (0,-5.04) {largest-violation key (seed 0): A4, then A5};%
    \node[txt] at (0,-5.36) {evidence coverage: 10 of the 11 constraints in scope};%
    \node[txt] at (0,-5.68) {nearest clean record: narrow-body, fleet similarity 0.274,};
    \node[txt] at (0,-6.01) {below $e^{-1}$: farther from the fleet than typical};%
    \node[val] at (0,-6.5) {$^{\scriptscriptstyle\square}$\,oracle flag (benchmark only)};
    \draw[bar] (5.1,-6.59) rectangle (5.6,-6.41);
    \draw[cut, shift={(5.35,-6.5)}, xslant=0.31] (-0.06,-0.13) rectangle (0.06,0.13);
    \node[val] at (5.69,-6.5) {value off scale};
  \end{scope}
  \begin{scope}[shift={(8.95,-1.87)}]%
    \node[ttl] at (0,0.12) {(c) Abductive field search and repair};
    \node[line] at (0,-0.08) {%
      \begin{tabular}{@{}l@{\hspace{6pt}}r@{\hspace{6pt}}r@{}}
        \textit{\color{ioline}field changed alone} & \textit{\color{ioline}lowest $V$} & \textit{\color{ioline}log change} \\
        total thrust & 0.181 & $+0.245$ \\ %
        usable fuel & 1.103 & $-0.200$ \\ %
        range & 1.112 & $+0.200$ \\ %
        as declared & 1.217 & \\ %
      \end{tabular}};
    \node[line] at (0,-1.98) {%
      \begin{tabular}{@{}l@{}}
        \textbf{altered field}: total thrust; alone it brings $V$ \\
        below the decision threshold (0.181 against 0.310) \\ %
        \textbf{violated bands} (seed 0): A4 $T/W=0.240$ \\ %
        \textit{\color{ioline}(clean 0.288)}, below the learned band 0.278 to 0.332; \\ %
        A5 range 5,750~km, below the learned edge 5,830~km; \\ %
        \textbf{repair} of one field, stricter target \\ %
        $V\leq -\ln 0.9\approx 0.1054$: not reached: A5, which \\ %
        does not read the thrust, and the in-band degrees \\ %
        keep $V$ above it; proposal 204.59~kN \\
        \textit{\color{ioline}(clean 192.2~kN)}, a value to cross-check \\ %
      \end{tabular}};
  \end{scope}
\end{tikzpicture}
\end{adjustbox}\par}
\caption{Verification report of one declared record.}
\label{fig:report}
\end{widefigure}

Crisp consistency edits on validation-selected bands (Fellegi-Holt and Z3) are on par with the constraint verdicts in macro-F1 on generator labels, and on intervention labels the constraint verdicts lead them on unseen families (0.694 against 0.569; Appendix~\ref{S-app:results-verdicts}); the graded degree ranks the violations, and the largest-violation key names the violated constraint first more often than their complete violated set (below). %
With thresholds from the specification benchmark's validation, the constraint verdicts are ahead of the class-standardized residual, MLP-SHAP and LightGBM-SHAP in macro-F1 on test with both label families; with the residual's thresholds chosen on the intervention benchmark's validation, the residual is on par with them in macro-F1 on intervention-label test (0.560 against 0.575) and below them in exact-set agreement, with more false violations (0.174 against 0.084). %
The DL2 method on the declared fields and a language model given the constraints in words are below the constraint verdicts in macro-F1 on every split and label family (the language model on the specification benchmark only). %

\begin{widefigure}[t]
  \centering
  \includegraphics[width=\bodyfigwidth]{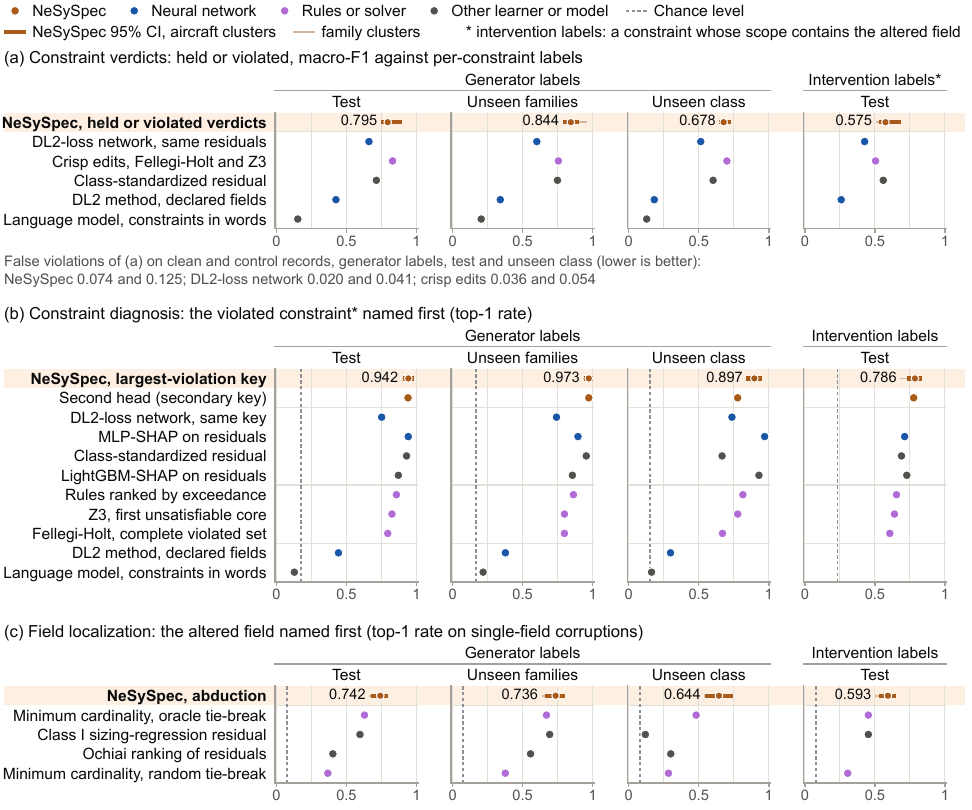}%
  \caption{Constraint verdicts, diagnosis and field localization.}
  \label{fig:diagnosis}
\end{widefigure}

On clean and control test records the constraint verdicts report a violation in 0.074 of the checks (Figure~\ref{fig:audit}), mostly on A2 and A5 (0.255 and 0.208 of their checks), where A2 evaluates the cruise lift coefficient at maximum take-off mass and A5 uses class-level latents; the report shows which relation the declared values strain and by how far (Figure~\ref{fig:report}c), and the key and abduction rank the degrees without this rule. %

The largest degree, the largest-violation key, names the violated constraint in 0.942 of corrupted test records (generator labels; on each of 8 seeds), above Fellegi-Holt, Z3 and rule diagnosis (0.795 to 0.857; Figure~\ref{fig:diagnosis}b); 49 to 85\% of these margins come from the 16 test records that the crisp edits leave unflagged (key 0.812, chance 0.106), and on the 140 they flag it is on par with them. %
On the 81 test records with a single violated constraint the key reaches 0.963, above the class-standardized residual, a z-score of the same residuals, and MLP-SHAP, which are on par with it over all corrupted test records; it comes with a held or violated verdict against learned tolerances, the altered field with its repair and exact abstention under missing evidence, all read from \fwname{}'s own degrees, and the residual's field ranking is below abduction (Section~\ref{sec:res-local}). %
On unseen families the key reaches 0.973 on each of 8 seeds, above Fellegi-Holt, Z3, MLP-SHAP and LightGBM-SHAP and on par with the class-standardized residual, with the rules at 0.863 (cells without an alternative-stream check; Appendix~\ref{S-app:results-diag}), and on the unseen class $0.897 \pm 0.007$, above Fellegi-Holt, Z3, the rules and the class-standardized residual and on par with LightGBM-SHAP, with MLP-SHAP, a second model on the residuals, at 0.970 (Table~\ref{S-tab:diag}). %
On intervention labels without retraining, the key ranks first a constraint whose scope contains the altered field in $0.786 \pm 0.006$ of corrupted test records (chance 0.232), above Fellegi-Holt, Z3 and rule diagnosis, with most of these margins from the 76 records that the crisp edits leave unflagged, and on par with LightGBM-SHAP, the strongest attribution on these labels; on unseen families \fwname{}'s second head, its secondary key, reaches 0.848 (largest-violation key 0.828; Appendix~\ref{S-app:results-diag}). %
With the optional rank-then-explain setting, \fwname{} flags and correctly diagnoses $0.901 \pm 0.013$ of corrupted test records and $0.849 \pm 0.043$ on the unseen class (5 seed pairs), above Fellegi-Holt and Z3 behind the same alarm on test and on par with MLP-SHAP, with the violated constraint, the altered field and a repair from its own degrees; with its own alarm, the record verdict and the key reach 0.929 on test (Appendix~\ref{S-app:results-diag}). %

Figure~\ref{fig:report} shows the report of the first single-field corruption of the test split whose altered field an available constraint reads, chosen by a fixed rule. Its total thrust is altered: the report names A4, the thrust-to-weight ratio below the band learned from the fleet, as the largest violation, abduction finds the thrust, and changing the thrust alone brings $V$ below the decision threshold (0.181 against 0.310). A9, flagged by the oracle's crisp band, is held, its thrust per engine inside the wider learned band; A5, which does not read the thrust, is violated as on the clean record (asterisk in panel b), so the repair, short of its stricter target, is a best-effort value to cross-check. %

\subsection{Field Localization and Repair}
\label{sec:res-local}

Abduction names the altered field more often than minimum-cardinality localization on every split, on generated and on independently labeled errors, and needs no oracle to break ties (Figure~\ref{fig:diagnosis}c). %
On single-field test corruptions (generator labels) it finds the altered field in $0.742 \pm 0.011$, above minimum-cardinality localization (0.367) and the Ochiai ranking of the class-standardized residuals; with the oracle tie-break, the setting most favorable to it, minimum-cardinality localization reaches 0.629, on par with abduction. %
On single-field records that no oracle selected or certified, abduction through the verifier fitted on generator labels, without retraining, finds the field moved beyond tolerance in 0.593 of test records, above minimum-cardinality localization (0.306) and its oracle tie-break (0.453); 78\% of the margin over the oracle tie-break comes from the 70 records that the crisp edits leave unflagged, and on the 175 of 245 that they flag abduction stays above random tie-breaking and on par with the oracle tie-break. %
With the selected abduction rule (Appendix~\ref{S-app:results-refine}), abduction is also ahead of the Class I sizing-regression residual, the field farthest from the conceptual-design (Class I) weight regressions on maximum take-off mass fitted on the training aircraft~\cite{raymer2018aircraft}, on test (0.597), on the unseen class (0.119) and on intervention labels (0.453, the level of the oracle tie-break as well), and on par with it on unseen families (0.693), with the violated constraint and a repair from the same verdict. %

The repair proposal of Eq.~\eqref{eq:repair} comes from \fwname{}'s own abduction and verdict, with the field and the violated constraint in the same report. %
It cuts the log-space error on the altered field from 0.366 to 0.133 and on the whole record to 0.200, and is closer on the record than the Fellegi-Holt, MaxSMT, nearest-neighbor and regression repairs and on the altered field than a constraint-layer repair ordering in the style of PiShield~\cite{stoian2024pishield}; at the field of the Class I residual, MICE~\cite{vanbuuren2011mice} (0.120 and 0.161) and missForest are on par with it on both errors, as is the regression on the altered field (Table~\ref{S-tab:diag}). %
The solver reaches its target, a verdict $V \le -\ln 0.9 \approx 0.1054$ stricter than the decision threshold, on 0.555 of the single-field test records and otherwise returns its best-effort value (Figure~\ref{fig:report}c); the report issues a proposal for each flagged record, as a value to cross-check. %

\subsection{Record Coherence}
\label{sec:res-detect}

On generator labels, with 16 test aircraft of 9 families held out of training, the record verdict tells coherent from incoherent records at a test AUROC of $0.979 \pm 0.002$, above learners on the raw fields (0.748 to 0.792), and each of its flags arrives with the violated constraint, the altered field and a repair proposal, read from the degrees of Figure~\ref{fig:audit}. %
In distribution TabPFN v2 is on par with it and the other learners on the residuals are close to it, the residual MLP at $0.990 \pm 0.003$ and TabPFN v2, LightGBM and the one-class SVM at 0.957 to 0.958 (Appendix~\ref{S-app:results-detect}). At a TPR of 0.90 the record verdict's false alarms on the 32 controls (0.086) are fewer than those of the Mahalanobis distance (0.656), each naming the constraint that raised it (Table~\ref{S-tab:contrasts-3}). %

On the unseen class of business jets, the key and abduction lead classical diagnosis and minimum-cardinality localization (Sections~\ref{sec:res-diag} and~\ref{sec:res-local}). In detection there the record verdict reaches 0.934 and five learners on the residuals rank higher (residual MLP 0.978, TabPFN v2 up to 0.984), and \fwname{}'s optional rank-then-explain setting lets the residual MLP raise the alarm and \fwname{} explain it, with false alarms on 0.205 of the 38 controls and 0.168 of the 19 pristine records at validation thresholds. %
On intervention labels, whose altered single fields move beyond the tolerance, by up to 80\%, the key and abduction lead the classical comparators without retraining (Sections~\ref{sec:res-diag} and~\ref{sec:res-local}); in detection there, the residual MLP retrained on these labels and TabPFN v2, whose statistics include the scored rows, rank test records higher (Table~\ref{S-tab:contrasts-3}). %

\label{sec:res-xmodal}%
\paragraph{Records with images.}
With a paired image, \fwname{} also checks the record against the image and names the cross-modal axiom that the image contradicts (Section~\ref{sec:meth-xmodal}). %
On satellite pairs, the multimodal verifier of Eq.~\eqref{eq:strata} achieves an AUROC of $0.994 \pm 0.002$ over all pair types with a ConvNeXt-T backbone~\cite{liu2022convnet} (8 seeds). Qwen3-VL-8B~\cite{bai2025qwen3vl}, prompted with rules R1-R10, reaches 0.748; the verifier's advantage holds across all evaluated backbones.

With ResNet-50 used for both, the verifier also outperforms end-to-end fusions trained on labeled pairs on test and unseen sites. Some learned fusions achieve higher AUROC on image swaps and unseen-site pairs. Without an image, the verifier returns the record verdict bit for bit. Appendix~\ref{S-app:results-xmodal} details the comparison protocols and their asymmetries. %
On photographs the cross-modal stratum is on par with Qwen3-VL-8B and Qwen3-VL-32B prompted with the same axioms on test (0.692 against 0.668 and 0.712) and prints the part map of its evidence beside the contradicted cross-modal axiom; on the families held out of \fwname{}'s training, most of which the 32-billion-parameter model names from the photograph alone, that model reaches 0.827 and the stratum 0.733. This advantage may partly reflect the visual and semantic priors acquired through extensive multimodal pretraining~\cite{bai2025qwen3vl}. These families are held out from our task-specific training; pretraining exposure is not controlled across the compared systems. Further comparisons should distinguish prior recognition from reasoning over the supplied physical constraints.%

\subsection{Missing Evidence and Absent Images}
\label{sec:res-missing}

\begin{widefigure}[t]
  \centering
  \includegraphics[width=\textwidth]{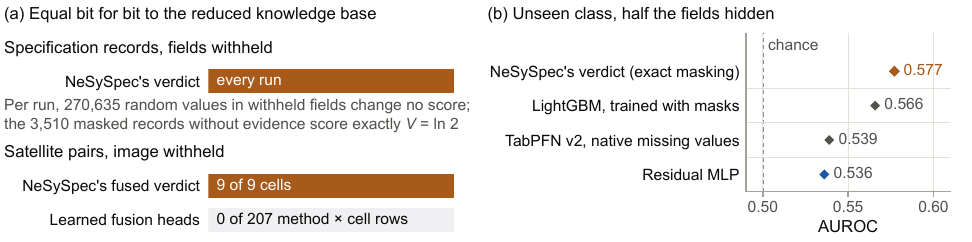}
  \caption{Verdicts under missing evidence.}
  \label{fig:missing}
\end{widefigure}

By Proposition~\ref{prop:reduction}, masked quantification makes every verdict a function of the available evidence only, and every run keeps this property (Figure~\ref{fig:missing}a): every masked record scores bit for bit as its sub-knowledge-base, random values written into withheld fields change nothing, and masked records without evidence score exactly $V = \ln 2$; without an image, \fwname{}'s fused verdict returns the tabular verdict bit for bit in 9 of 9 satellite cells, against 0 of 207 learned method $\times$ cell rows. %

With fields hidden completely at random (MCAR)~\cite{rubin1976inference}, a residual with a hidden input is missing for the learners; with half the fields hidden on the unseen class, where 26.4\% of the records keep no available constraint, \fwname{} reaches 0.577, above the residual MLP (0.536) and TabPFN v2 with native missing values (0.539) and on par with LightGBM trained with masks (0.566; Figure~\ref{fig:missing}b), while MICE imputation at scoring yields a slightly higher AUROC ($+0.033$). %
The masked verdict reads no filled value, and its report states which constraints remain checkable. %
With 10\% of the fields hidden in distribution it reaches 0.848 on test and 0.824 on unseen families, where the MLP trained with masks (0.891 on test), the residual MLP (0.916 on unseen families) and, with 10 to 30\% hidden on unseen families, MICE imputation of every missing field at scoring rank higher; with no field hidden, the masked verdict is ahead of MICE at scoring (Appendix~\ref{S-app:results-missing}). %

With MICE imputation enabled, \fwname{} keeps its constraint verdicts and largest-violation key and ranks records at 0.914 with 10\% of the fields hidden on test, on par with the MLP trained with masks given the same imputer (0.905); at its threshold it flags 0.210 of the clean-labeled records, against 0.137 for the masked verdict, \fwname{}'s default. %

With 30\% of satellite images hidden, the fused verdict keeps the exact reduction that no learned head keeps and ranks test pairs at 0.947, and heads trained with modality dropout~\cite{neverova2016moddrop}, such as CatBoost on knowledge-base truths (0.978), rank them higher; across sensors without labels the record verdict is the fallback (Appendix~\ref{S-app:results-xmodal}). %

\subsection{Grounding of the Visual Axioms}
\label{sec:res-grounding}

When a head asserts an attribute, the grounding axiom encourages it to place its evidence on the corresponding part. The Reichenbach gate scales this soft penalty with the strength of the head's assertion, without using attribute labels. The report then displays the evidence map for inspection. %

We compare methods using per-example penalties, keeping the batch's normalization constants fixed and stopping gradients through those statistics. Loss weights are selected on validation using one seed per candidate value (Appendix~\ref{S-app:expsup}). Under this common protocol, the network trained with the grounding axiom localizes the annotated parts substantially better than RRR~\cite{ross2017right} and GradMask~\cite{simpson2019gradmask}. Every cluster-bootstrap interval of its difference from these two and from Pukdee et al.~\cite{pukdee2023learning}, run both with the calibration of the protocol, which flattened their teacher toward the uniform prediction, and with a recalibrated arm, lies above zero on the FGVC pseudo-masks and on the human part masks of PASCAL-Part, PartImageNet~\cite{he2022partimagenet} and UDA-Part~\cite{liu2022udapart} (Appendix~\ref{S-app:expsup}). %
On test pseudo-masks it points to the part in $0.780 \pm 0.017$ of maps (8 seeds), against $0.360 \pm 0.066$ with permuted masks and $0.103 \pm 0.017$ without a localization term (3 runs each), so localization depends on anatomically correct masks, and it points to the annotated part in $0.737 \pm 0.015$ (3 runs at weight 1) of PASCAL-Part human masks outside the FGVC domain (Figure~\ref{S-fig:partgrounding} in Appendix~\ref{S-app:results-grounding}). %

Across loss weights 0.3, 1 and 3, the grounding axiom keeps at most one collapsed head, compared with up to 5.7 on average for the auxiliary localization loss. Its pointing accuracy rises from 0.512 at weight 0.3 to 0.790 at 1 and 0.803 at 3, with three runs per weight. Localization also holds across all evaluated backbones with per-head maps (Appendix~\ref{S-app:results-grounding}). %
At the validation-selected weights (1 for the axiom and 0.3 for the auxiliary loss), the axiom has a small accuracy advantage out of distribution (Appendix~\ref{S-app:expsup}). Localization is on par on test, out of distribution and on PASCAL-Part when comparing three runs per method. Including all eight axiom runs gives the auxiliary loss a higher pointing accuracy on test pseudo-masks (Appendix~\ref{S-app:protocol}). %

Figure~\ref{fig:heatmaps} shows maps of \fwname{}'s visual module with the grounding axiom and without a localization term: on ground photographs (a) each map prints its in-mask share $\rho$ of Eq.~\eqref{eq:rho-loc}; on satellite crops (b), outlined by the annotated diamond on which the axiom trains, the maps show where the heads look. %
Each row names the cross-modal axiom its head feeds, and the report prints the map beside that axiom's verdict (part maps in Figure~\ref{fig:pipeline}; layout in Figure~\ref{S-fig:rankexplain}), so the analyst sees on which part the evidence of a contradicted axiom lies. %

\begin{widefigure}[t]
  \centering
  \includegraphics[width=0.9\bodyfigwidth]{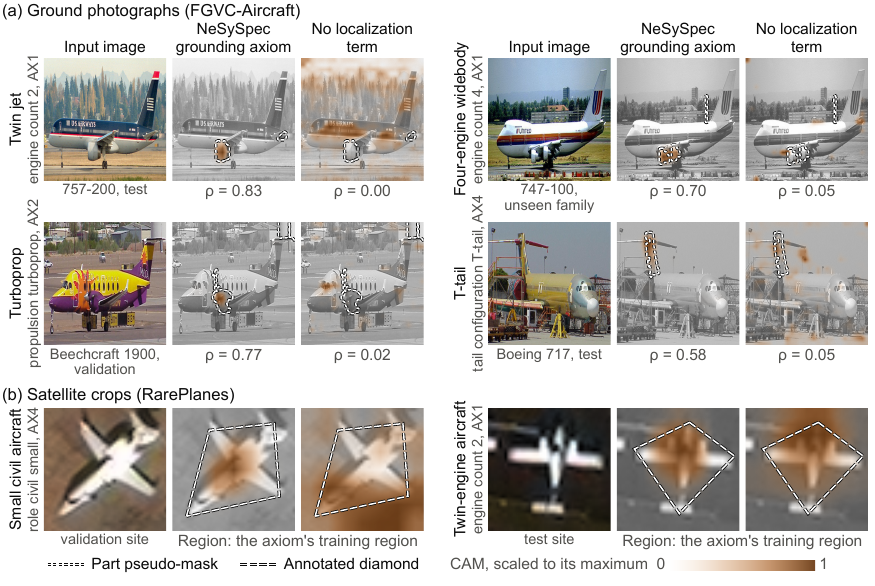}%
  \caption{Evidence heatmaps.}
  \label{fig:heatmaps}
\end{widefigure}

\subsection{Robustness to Deliberate Falsification}
\label{sec:res-falsif}\label{sec:res-indep}%

Under a black-box falsification protocol of bare lies, cover-ups that edit up to five further fields, and omissions (Table~\ref{tab:evalmap} and Appendix~\ref{S-app:results-indep}), a lie flagged by the record verdict is covered less often than one flagged by the residual MLP: with up to five further edits 0.936 of cover-ups pass against 1.000, and 0.247 of its flagged bare lies cannot be covered, against 0.000 for the MLP; a minimal cover of its flagged bare lies takes 2.69 further edited fields on average, against 1.38, 1.37 and 1.63 for the MLP, LightGBM and TabPFN v2 on the residuals (point estimates, censored at six; Appendix~\ref{S-app:results-indep}). %
At that budget the record verdict is also evaded less often than TabPFN v2, the one-class SVM, LightGBM on the raw fields and an isolation forest, while the share of its flagged bare lies beyond cover is 0.247, against 0.494 and 0.325 for the rules with hierarchical pooling and a Mahalanobis distance on the declared fields, whose minimal covers take more edits (3.62 and 3.45); each of its flags comes with the field found by abduction and a repair proposal. %
On bare lies the record verdict (evasion $0.740 \pm 0.038$) is evaded less often than the rules with hierarchical pooling and LightGBM on the raw fields and on par with the MLP (0.754); LightGBM on the residuals (0.685) and TabPFN v2, evaded less often on bare lies, leave almost none of their flagged lies beyond cover (0.004 and 0.024, against 0.247). %

By Proposition~\ref{prop:reduction}, a withheld field masks the axioms that require it, and the report lists every undeclared field with the constraints it leaves unchecked; withholding up to two fields lets 0.964 of lies pass the record verdict alone. %
\fwname{}'s optional critical-field alert addresses this channel, and each of its flags names the undeclared critical field or the violated constraint: against an attacker who knows the alert and withholds up to two fields, 0.729 of lies pass \fwname{} with the alert, on par with the residual MLP given the same alert (0.756) with all splits pooled. %
With the alert, false alarms on the 32 test controls are 0.148, below the Mahalanobis distance (0.656). %

\subsection{What the Logic Adds: Other Encodings of the Same Knowledge}
\label{sec:res-knowledge}\label{sec:res-ablation}%

The logical formulation provides the constraint-level verdicts used for diagnosis and repair. For record-level detection, learners on the same residuals perform similarly to \fwname{} in distribution and better on the unseen class (Section~\ref{sec:res-detect}). %
Against the network trained with the DL2 loss on the same residuals and targets, which learns one head per constraint in place of the graded predicates and is their like-for-like test, \fwname{}'s graded predicates trained through Real Logic give constraint verdicts with a higher macro-F1 and a better key, with the field and a repair from the same degrees (Section~\ref{sec:res-diag}). Of the other Real Logic properties of Section~\ref{sec:background}, guarded quantification gives the exact abstention that none of 207 learned method and cell rows keeps without an image (Section~\ref{sec:res-missing}), the shared truth scale the fused verdict (Section~\ref{sec:res-xmodal}) and the gate grounding stable across loss weights (Section~\ref{sec:res-grounding}). Stratification (E4) keeps one verdict per evidence source, so the report says whether the record or the image raised the flag (Appendix~\ref{S-app:results-ablation}). %
Training the knowledge base through the differentiable truths of Real Logic improves detection over keeping it at its initial values: AUROC rises from 0.907 to 0.979 on test and from 0.845 to 0.934 on the unseen class.
On generator labels, the full model outperforms the flat-bank ablation on every split. Removing in-band grading lowers detection on the unseen class and slightly on test and unseen families. The ablation using a generalized mean at $p=4$ is on par with the full model (Table~\ref{S-tab:ablation}). %
In detection on test the record verdict (0.979) is slightly above the DL2-loss network (0.965) and a per-constraint monotone model on the same residual magnitudes without LTN (0.966), and on par with the network out of distribution; \fwname{} reads these magnitudes as graded predicates in physical units (Figure~\ref{fig:report}c), with the key, abduction and repair from the same degrees. %
The generator's per-constraint targets (Section~\ref{sec:res-diag}) and the graded predicates act together: the ablation trained with the binary record label only equals the generator labels in 0.081 of test records and is below the crisp edits on all three measures. %
Given as a constraint loss on the declared fields or as rules in words, the same knowledge detects less than \fwname{}'s verdict: the DL2 method on the declared fields reaches $0.703 \pm 0.030$ and a language model prompted with the constraints A1-A11 in words (Qwen3-8B~\cite{yang2025qwen3}, anonymized records) 0.620, below the record verdict on every split (Table~\ref{S-tab:spec}), and both are below its key and verdicts (Section~\ref{sec:res-diag}). %

\begin{widefigure}[t]
  \centering
  \includegraphics[width=\bodyfigwidth]{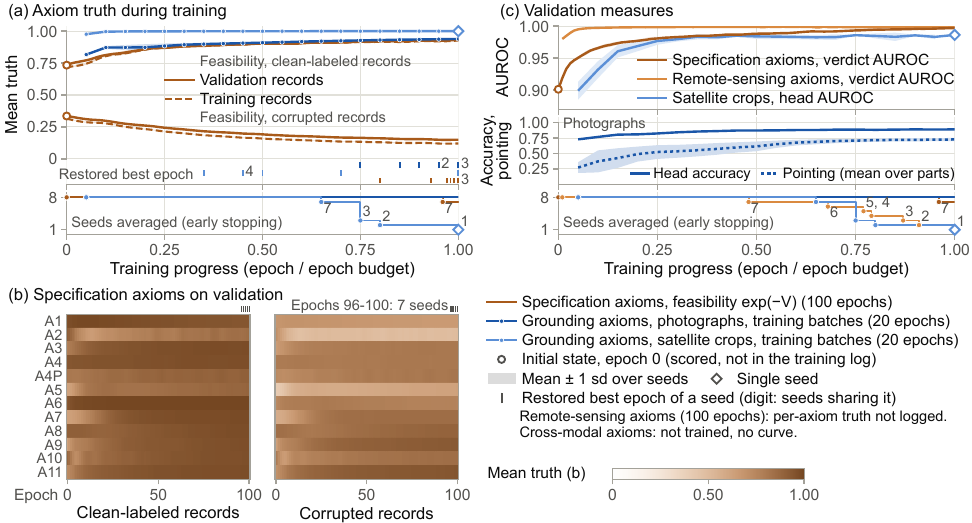}%
  \caption{Training dynamics of the axioms.}%
  \label{fig:dynamics}
\end{widefigure}

Figure~\ref{fig:dynamics} traces the training of the axioms over eight seeds per stratum. %
For the specification axioms, a re-run of \fwname{}'s training whose only addition is a copy of the state at each epoch reproduces every frozen checkpoint and its predictions bit for bit, together with every logged loss and validation AUROC; each state is scored at the end of its epoch on the training and validation records. %
From the initial state, shared by all seeds, to the restored best epoch, the mean feasibility $\exp(-V)$ on validation rises from 0.734 to 0.930 on clean-labeled records and falls from 0.335 to 0.151 on corrupted ones, while the verdict's validation AUROC rises from 0.901 to 0.997. %
Panel b averages each axiom's truth $\exp(-v_k)$, which the constraint verdicts compare with 0.5, over the records where the axiom is available; the feasibility of Eq.~\eqref{eq:pme} in panel a is the geometric mean of a record's available truths. %
The satisfaction of the grounding axioms, averaged over the training batches of each epoch while the weights move, rises from 0.816 at the first epoch to 0.936 at the restored best epoch on photographs and from 0.976 to 0.999 on satellite crops. %
The training logs of the remote-sensing axioms hold no per-axiom truth, so panel c shows their verdict's validation AUROC beside the photographs' head accuracy and pointing; the cross-modal axioms have no trainable parameter, and late epochs average fewer seeds where early stopping ended a run. %

Detection also improves with the knowledge the base encodes (Figure~\ref{fig:knowledge}). Retrained with subsets of its 12 specification constraints, the record verdict's intervention-label AUROC, averaged over the three held-out splits, rises from 0.535 with one constraint to 0.723 with all twelve (Spearman 0.93). It rises with diminishing returns in the number of constraints (panel a) and linearly with the share of edits that alter a field some of them read ($R^2$ 0.88, panel b); the residual MLP on the same subsets gains as well (0.540 to 0.706), with no trend in their gap. Removing one constraint (panel c) lowers detection for A5, the Breguet range, and A2, the cruise lift coefficient, and slightly for A4, A7 and A8, the thrust-to-weight, mass-balance and empty-weight-fraction checks, and raises it for none; it also lowers abduction for 7 of the 12 constraints and slightly for 3 more (Appendix~\ref{S-app:results-ablation}). %

\begin{widefigure}[t]
  \centering
  \includegraphics{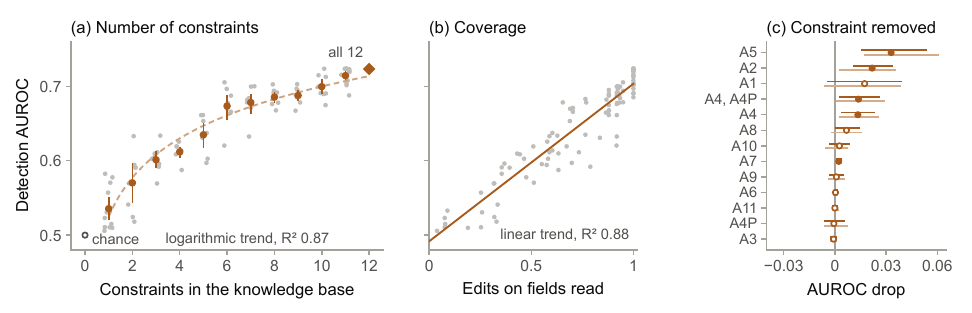}
  \caption{Detection and the knowledge encoded.}
  \label{fig:knowledge}
\end{widefigure}

\subsection{Deployment Map}
\label{sec:res-deploy}

\begin{widetable}[t]
  \centering
  \caption{Deployment map.}
  \label{tab:deploy}
  \scriptsize%
  \renewcommand{\arraystretch}{0.84}%
  \setlength{\tabcolsep}{3pt}
  \begin{threeparttable}
  \begin{tabularx}{\textwidth}{@{}>{\raggedright\arraybackslash}p{2.3cm}>{\raggedright\arraybackslash\hsize=1.16\hsize}X>{\raggedright\arraybackslash\hsize=0.84\hsize}X@{}}
    \toprule
    Regime & What \fwname{} provides & Strongest comparator \\
    \midrule
    \multicolumn{3}{@{}p{\textwidth}@{}}{\raggedright\textit{Workflow:} run \fwname{} first and read its report; on a class without training aircraft, enable its optional rank-then-explain setting, in which a residual learner raises the alarm and \fwname{} explains it; cross-check a repair value against a statistical imputer such as MICE; against omission, enable the optional critical-field alert.} \\ %
    \midrule
    \multicolumn{3}{@{}l}{\textbf{Constraint verdicts and diagnosis} (Section~\ref{sec:res-diag})} \\
    \textit{classical diagnosis and other encodings} & the largest-violation key, from the verdict's own degrees, above Fellegi-Holt, Z3 and the rules (also on intervention labels), the DL2-loss network's key, the DL2 method on the declared fields and a language model; 0.812 where the crisp edits flag nothing & the crisp edits, on par where they flag the record (generator labels) \\ %
    \specialrule{0.15pt}{0.3pt}{1.1pt}
    \textit{held or violated, test and unseen families} & the constraint verdicts, macro-F1 0.795 on generator-label test, above the DL2-loss network, with the key, the field and a repair & crisp edits, on par in macro-F1; the DL2-loss network, with fewer false violations (0.020 against 0.074) \\ %
    \specialrule{0.15pt}{0.3pt}{1.1pt}
    \textit{residual rankings} & the key, above both on 81 single-violation records and above the residual on the unseen class, with a held or violated verdict, a repair and exact abstention & the class-standardized residual and MLP-SHAP, on par in distribution; MLP-SHAP on the unseen class \\ %
    \midrule
    \multicolumn{3}{@{}l}{\textbf{Field localization and repair} (Section~\ref{sec:res-local})} \\
    \textit{field} & abduction, above random tie-breaking on every split and label family and above the Class I sizing-regression residual on the unseen class and intervention labels, with no oracle to break ties & an oracle tie-break, on par in distribution \\ %
    \specialrule{0.15pt}{0.3pt}{1.1pt}
    \textit{value, in distribution} & the repair proposal, with its field and violated constraint in the same report, issued for flagged records (the proposals change 0.562 of clean and control records); the solver reaches its target on 0.555 of single-field test records & MICE at the field of the Class I sizing-regression residual, on par, the recommended value-recovery step \\ %
    \midrule
    \multicolumn{3}{@{}l}{\textbf{Record coherence} (Section~\ref{sec:res-detect})} \\
    \textit{in distribution} & the record verdict, which with the key flags and correctly diagnoses 0.929 of corrupted test records, with the constraint, field and repair of each flag & the residual MLP, close in detection, and TabPFN v2, on par \\ %
    \specialrule{0.15pt}{0.3pt}{1.1pt}
    \textit{class without training aircraft} & rank-then-explain (optional): false alarms on 0.205 of controls and 0.168 of 19 pristine records, with the constraint, field and repair behind each alarm & the residual MLP and TabPFN v2, in detection \\ %
    \specialrule{0.15pt}{0.3pt}{1.1pt}
    \textit{intervention labels} & the key and abduction, without retraining, naming the constraint and the field & TabPFN v2 (statistics from the scored rows) or the MLP retrained on them, in detection \\ %
    \specialrule{0.15pt}{0.3pt}{1.1pt}
    \textit{record with an image} & the multimodal verifier (satellite), above a vision-language model given the rules and end-to-end fusions, and the cross-modal stratum (photographs), each naming the contradicted axiom, with the exact reduction & vision-language models given the axioms, on par on test photographs; learned fusions on image swaps and unseen-site pairs; the largest vision-language model on unseen families \\ %
    \midrule
    \multicolumn{3}{@{}l}{\textbf{Missing evidence and absent images} (Section~\ref{sec:res-missing})} \\
    \textit{10\% of fields hidden, test or unseen families} & the coverage, for the audit; \fwname{} with MICE imputation enabled, to detect & the MLP trained with masks (test) and the residual MLP (unseen families), against the masked verdict; the MLP with masks and the same imputer, on par with the MICE setting (test) \\ %
    \specialrule{0.15pt}{0.3pt}{1.1pt}
    \textit{half hidden, unseen class} & the masked verdict, above the residual MLP and TabPFN v2, with the checkable constraints and an exact verdict & LightGBM trained with masks, on par; MICE imputation at scoring, close \\ %
    \specialrule{0.15pt}{0.3pt}{1.1pt}
    \textit{image absent or partly missing} & the fused verdict, which reduces exactly to the record verdict & on labeled pairs, heads trained with modality dropout, such as CatBoost on knowledge-base truths, to rank \\ %
    \midrule
    \multicolumn{3}{@{}l}{\textbf{Grounding of the visual axioms} (Section~\ref{sec:res-grounding})} \\
    \textit{any backbone} & the grounding axiom, which localizes the annotated parts above RRR and GradMask under one protocol, with at most one collapsed head across loss weights & an auxiliary localization loss at its validation weight, on par with 3 runs each and higher on test pseudo-masks against the axiom's 8 seeds \\ %
    \midrule
    \multicolumn{3}{@{}l}{\textbf{Robustness to deliberate falsification} (Section~\ref{sec:res-falsif})} \\
    \textit{cover-ups, up to five further edits} & the record verdict, which a cover-up passes less often than the residual MLP, with the altered field and a repair for each flag & rules with hierarchical pooling or a Mahalanobis distance, on par \\ %
    \specialrule{0.15pt}{0.3pt}{1.1pt}
    \textit{omission of up to two fields, alert-aware} & the optional critical-field alert, naming the undeclared field or violated constraint of each flag & the residual MLP with the same alert, on par with all splits pooled \\ %
    \bottomrule
  \end{tabularx}
  \begin{tablenotes}[flushleft]\scriptsize\setlength\labelsep{0pt}
    \item[] On intervention labels without retraining, on unseen families, \fwname{}'s second head, its secondary key, reaches 0.848 (largest-violation key 0.828). Satellite cells: comparators not retrained or retuned, and fusions on knowledge-base truths read the truths of the saturating-envelope variant of \fwname{}'s remote-sensing stratum they were fitted on; asymmetries in Appendix~\ref{S-app:results-xmodal}. %
  \end{tablenotes}
  \end{threeparttable}
\end{widetable}

Table~\ref{tab:deploy} assigns each regime to the output of \fwname{} that serves it, with the workflow of an audit and the strongest comparator of each regime. %

\subsection{Limitations}
\label{sec:res-limits}

Each limitation is bounded by the protocol and opens a line of future work (Section~\ref{sec:end}). %
\emph{Domain and use.} \fwname{} is validated on aircraft, whose public civil records and relations a reader can verify; only its knowledge base, visual predicates, corruption generator and data are domain-specific (Appendix~\ref{app:porting}), and its verdicts support a human analyst, with aircraft detection, tracking and identification outside its scope because of the dual-use risks of such capabilities~\cite{brundage2018malicious}. %
\emph{Synthetic errors and scale.} Every error is generated or edited, so its type, magnitude and field are known and each verdict is checked against them, and the intervention labels are independent of the generator; they mark the constraints an edit involves, about a third of the altered rows break none of them physically, and expert per-constraint physical labels will score the verdicts against violations (Section~\ref{sec:end}); the 107 aircraft and one published split hold every test family and satellite site out of training. Naturally occurring errors, expert red teams, larger fleets and a replication on a new fleet will extend this evidence. %
\emph{Class-level relations.} A2 evaluates the cruise lift coefficient at maximum take-off mass and A5 uses latents shared by a class and engine type, estimable from 44 training aircraft; most violations reported on clean records come from these two relations (Section~\ref{sec:res-diag}), which per-type latents and a cruise-weight form of A2 will refine. %
\emph{Classes without training aircraft.} There, residual learners rank higher in record-level detection (Section~\ref{sec:res-detect}); the optional rank-then-explain setting lets such a learner raise the alarm while \fwname{} names the constraint, the field and a repair, and larger fleets will reduce the classes read with fleet statistics alone. %
\emph{Omission.} A withheld field masks the axioms that read it, the exact abstention of Proposition~\ref{prop:reduction}; the optional critical-field alert flags the omission, and class-level axioms will make the omission of a critical field a violation. %
\emph{Masks, statistics and encodings.} Automatic pseudo-masks supervise the grounding axiom, so the protocol is reproducible from public data, and hand-annotated part masks will extend it; no multiplicity correction is applied, which the new-fleet replication will address; encoding the knowledge base in DeepProbLog, Scallop or PSL, with probabilistic facts and proofs, changes its axiomatic and logical structure and is left to a dedicated comparison. %
\section{Conclusion and Future Works}
\label{sec:end}

\fwname{} is an integrated and interpretable neuro-symbolic framework for auditing engineering specifications: its graded axioms on Logic Tensor Networks, trained on fleet data, determine which physical constraints a declared specification respects and which it violates, and from the degree behind each verdict it derives the altered field, a repair proposal, exact abstention on missing evidence, part-grounded image evidence and a verification report. %
Validated on 107 aircraft with generated, intervention-labeled and falsified errors, it provides and evaluates these outputs together. %

Using a fixed rule to classify each constraint as held or violated, \fwname{} reaches a test macro-F1 of 0.795 with generator labels and, without retraining, 0.575 with intervention labels. On every split evaluated with generator labels, it outperforms three comparators in macro-F1: a network trained with the DL2 loss on the same residuals; DL2 applied to the declared fields; and a language model given the constraints in words. %

\noindent
The largest-violation key identifies the violated constraint in 0.942 of corrupted test records with generator labels, outperforming Fellegi--Holt, solver-based diagnosis and rule-based diagnosis. On independently labeled errors, abduction identifies the altered field in 0.593 of records without retraining. Minimum-cardinality localization reaches 0.306 with random tie-breaking and 0.453 with oracle tie-breaking. The Class I sizing-regression residual also reaches 0.453. %
The record verdict detects incoherent records at a test AUROC of 0.979 on generator labels, and a missing field or image leaves each verdict exactly that of the available constraints. %

\noindent
An image lets the audit check the declared record against visible attributes and identify the axiom behind a disagreement. On satellite pairs, the multimodal verifier achieves an AUROC of 0.994 across all pair types, compared with 0.748 for a prompted vision-language model.

\noindent
The report also shows where each visual head places its evidence. With the grounding axiom, these maps point to the annotated part in 0.737 of PASCAL-Part human masks. Localization is substantially better than RRR and GradMask on human masks from PASCAL-Part, PartImageNet and UDA-Part, with wider intervals on the 40 UDA-Part items (Appendix~\ref{S-app:expsup}).

Among the approaches evaluated here (the DL2 method on the declared fields, the network trained with the DL2 loss on the same residuals read per constraint, a constraint-layer repair ordering, LTN ablations, solver and rule diagnosis and prompted language and vision-language models), \fwname{} is, to our knowledge, the most suitable neuro-symbolic framework for auditing engineering specifications: it provides and evaluates together the outputs of Table~\ref{tab:sota}, and its key and abduction lead classical diagnosis, the DL2 method, a prompted language model and minimum-cardinality localization with random tie-breaking. No comparator returns all the outputs of the audit, and on the test aircraft none is ahead of \fwname{} in the macro-F1 of the constraint verdicts, in naming the violated constraint and the altered field or in the repair error on the altered field (Table~\ref{tab:glance}). With oracles fitted without test or out-of-distribution aircraft, its leads on intervention labels over Fellegi-Holt and Z3 in diagnosis and over minimum-cardinality localization with random tie-breaking hold on every split, at lower levels (Section~\ref{sec:res-setup}). Its logic serves the physical audit more than record classification: in record-level detection, learners on the same residuals are close to it and in some settings rank records higher (Section~\ref{sec:res-detect}). %
Its detection grows with the knowledge it encodes, following the share of edits that alter a field its constraints read, with diminishing returns in their number. 

\par\smallskip\indent
This study is a first approach to specification auditing, evaluated on aircraft and constructed errors. Broader physical knowledge, richer input evidence and larger, more diverse datasets may improve generalization to unseen systems, to be tested through independent replications. The logical formulation helps interpret anomalies by exposing which relations fail and which fields are implicated.
Future work will extend \fwname{} along the following directions:
\begin{itemize}
  \item \emph{Other neuro-symbolic encodings.} A version of the knowledge base in DeepProbLog, Scallop and PSL, with probabilistic facts and proofs in place of graded truths and masked quantification, will be compared with \fwname{}, together with the semantic loss and the monotone model read per constraint. %
  \item \emph{Aircraft-level latents and refined relations.} Per-type latents in A5 and a cruise-weight formulation of A2 will refine the constraint verdicts. Per-constraint thresholds selected on validation will calibrate these verdicts, and expert physical labels for each constraint will be used to evaluate them. The report will display every learned band in physical units. A new relation involving the ceiling will cover the only declared field not currently checked by any constraint. Additional relations can improve detection by checking different physical relationships among fields already covered elsewhere, as A2 does. %
\item \emph{Other engineering applications.} Visual recognition of components and geometry could extend the audit to mechanical assemblies, industrial equipment and civil structures, with adapted knowledge bases, visual predicates and corruption generators.
 \item \emph{Poorly characterized observations.} The framework could also support the analysis of observations whose identity or physical behaviour remains uncertain. Physical constraints could help compare candidate explanations and identify the additional measurements needed to distinguish them.

\item \emph{Generative design and simulation.} Exporting structured parameter vectors, units and constraints could connect the audit to agentic pipelines that generate parametric CAD models from text and images. Candidate models could then undergo finite element method (FEM) simulations, computational fluid dynamics (CFD) and other numerical analyses, feeding their results back into the audit.

\item \emph{Training and hyperparameters.} Further ablations will assess how supervision, loss balancing and joint versus separate module training affect diagnosis and generalization. Hyperparameter sensitivity will be evaluated on validation data.

\item \emph{Inference efficiency.} Profiling will measure the inference cost of each module. Batching, caching and adaptive search budgets will be evaluated in terms of latency, diagnostic accuracy and repair quality.
  \item \emph{Naturally occurring errors.} Cross-source discrepancies, variant mix-ups, with block abduction, and uncorroborated fields will bring the audit to real errors, and expert red teams will extend the falsification protocol. %
  \item \emph{Evidence, images and declarations.} Imputation channels chosen with block-wise hiding will extend \fwname{} with MICE imputation enabled, hand-annotated FGVC-Aircraft part masks and visual strata adapted to new sensors will extend the grounding axiom, and class-level axioms will make the omission of a critical field a violation. %
\end{itemize}

\section*{Declaration of Generative AI and AI-Assisted Technologies\\ in the Manuscript Preparation Process}
During the preparation of this work, the authors used AI tools to improve the language, readability, and grammar of the manuscript, as well as to assist with debugging and bug hunting.

\section*{Data Licenses}
FGVC-Aircraft~\cite{maji2013fine} is provided for non-commercial research only and its photographs are not redistributed, apart from the four photographs of Figure~\ref{fig:heatmaps}, reproduced for illustration with credit to their photographers (airliners.net); RarePlanes~\cite{shermeyer2021rareplanes} is distributed under the Creative Commons Attribution-ShareAlike 4.0 International license; the two satellite crops of Figure~\ref{fig:heatmaps} are adapted from it (cropped, resized and overlaid with activation maps) and shared under the same license, under which the tables and crop indices derived from it will be released. The human part masks used for evaluation are not redistributed.

\appendix
\counterwithin{table}{section}
\counterwithin{figure}{section}
\counterwithin{equation}{section}
\counterwithin{algocf}{section}   %
\emergencystretch=2em %

  \section{Porting to Other Engineered Systems}
\label{app:porting}

This appendix guides the port of \fwname{} to the declared specifications of another engineered system whose fields are linked by physical, definitional or regulatory relations with statable tolerances; Table~\ref{tab:porting} lists what the new system supplies, what is fitted again and what is reused.

\begin{widetable}[!htbp]
\centering
\caption{Porting to a new engineered system.}
\label{tab:porting}
\scriptsize%
\begin{tabularx}{\linewidth}{@{}L{0.15\linewidth}>{\raggedright\arraybackslash}XL{0.31\linewidth}@{}}
\toprule
Component & Content & Aircraft study \\
\midrule
\multicolumn{3}{@{}l}{\textit{Supplied by the new system}}\\
Knowledge base & one exact, differentiable quantity per relation, each with a graded predicate, an availability condition and a scope; bounded latents for rarely declared quantities; initial bands fitted on clean training records; regulatory bands frozen & 12 constraints on records and 10 on satellite annotations (Table~\ref{M-tab:axioms}); 336 trainable parameters on records, 126 of them latents, and 57 on satellite annotations; no neural network \\ %
Visual predicates & one sigmoid head per class, with the parts where its evidence must lie; cross-modal axioms with the same form & 25 heads per dataset (Table~\ref{tab:axioms-visual}) \\ %
Generator and data & corruptions that break one relation or swap a block of fields; noisy-but-coherent controls; an oracle with per-constraint labels; an intervention tolerance fixed before generation; sourced records with units, split by family or site & five corruption families, controls and intervention labels; 107 records of 71 families (Appendix~\ref{app:data}) \\ %
\midrule
\multicolumn{3}{@{}l}{\textit{Fitted again}}\\
Bands and visual modules & tolerances learned from the training fleet; visual heads specific to their sensor & 45.7\% of the learned bands with both edges inside the widened training fleet range (Appendix~\ref{app:results-refine}); with a visual stratum trained on photographs, the fused verdict reaches an AUROC of 0.724 on satellite pairs, against 0.796 for the record verdict alone (Appendix~\ref{app:results-xmodal}) \\ %
\midrule
\multicolumn{3}{@{}l}{\textit{Reused as Section~\ref{M-sec:meth} defines it}}\\
Verdict and report & masked verdict with slopes and bands learned per derived quantity, and its exact reduction for any knowledge base; a diagnosis key read from its degrees and a monotone second head; abduction, repair proposals and the report; stratified aggregation; grounding axiom with gate, guard and floor & Sections~\ref{M-sec:meth-kb} to~\ref{M-sec:meth-report}; Appendix~\ref{app:repo} \\ %
Protocol & priors built without held-out aircraft, intervention labels, black-box falsification within an edit budget, cluster-bootstrap intervals, comparators grouped by the knowledge they receive (Table~\ref{tab:comparators}), statistical imputers among them & Section~\ref{M-sec:res-setup}; Appendix~\ref{app:protocol} \\ %
\bottomrule
\end{tabularx}
\end{widetable}

The aircraft study suggests this ordered checklist.
\begin{enumerate}
\item \emph{Scope.} \fwname{} audits the coherence of a declaration with physics and with its evidence, and its report supports human review (Section~\ref{M-sec:res-limits}).
\item \emph{Fields and relations.} List the fields with their units and one relation per constraint, and let every constraint that reads an undeclared field abstain: on the records as declared, filling the undeclared fields at scoring lowers detection, with global means on every split and with MICE on test and unseen families, while with fields hidden at random MICE at scoring ranks unseen-family records higher (Appendix~\ref{app:results-missing}). %
\item \emph{Priors and labels.} Fit initial bands on clean training records only, and fix an intervention tolerance before generation, so that the intervention labels also reach errors that the generator oracle misses: on aircraft, the oracle flags no constraint on 613 of the 1,810 rows labeled as altered (33.9\%; Appendix~\ref{app:results-detect}). %
\item \emph{Masks and sensors.} Measure automatic part masks against public human masks, and train visual heads on images of the sensor in use (Sections~\ref{M-sec:res-missing} and~\ref{M-sec:res-grounding}; Appendix~\ref{app:results-xmodal}).
\item \emph{Comparators.} Include learners on the same physical residuals: on aircraft records a multilayer perceptron on them serves as the alarm of the optional rank-then-explain setting (Section~\ref{M-sec:res-detect}). %
\item \emph{Critical fields.} \fwname{} abstains on every constraint that reads an undeclared field; on aircraft, \fwname{}'s optional critical-field alert brings the evasion of lies that withhold fields back to the level of bare lies, which withhold none (Section~\ref{M-sec:res-falsif}). %
\end{enumerate}

\section{Implementation and Verification Report}
\label{app:repo}
\label{app:repo-files}%

This appendix summarizes how the implementation of \fwname{} is organized and shows an example of a verification report.
The implementation follows the seven modules of Section~\ref{M-sec:meth}. Every prediction file, of \fwname{} and of each comparator, has one format (instance identifier, split and score, with per-constraint outputs where the method returns them), so a single evaluation script scores every method in the same way.

\subsection{Example of a Verification Report}
\label{app:repo-report}

Figure~\ref{M-fig:report} (Section~\ref{M-sec:res-diag}) shows a report assembled from the saved outputs of \fwname{} (train prior, seed 0) for a record chosen by a rule fixed before any value was read: the first single-field corruption of the test split, in file order, whose altered field an available constraint reads.
The corruption divides the total thrust of a narrow-body turbofan by 1.2, and the generator oracle flags A4 and A9. %
The verdict flags the record ($V=1.217$, above the validation threshold $t^{\star}=0.310$ of this seed), and the report lists the degree of every available constraint, the largest being A4 (10.001), A5 (1.053), A8 (0.386) and A9 (0.384). By the rule of Section~\ref{M-sec:meth-report}, A4 and A5 exceed $\ln 2$ and are violated, A5 also on the clean record, while A8 and A9 are held. %
The largest-violation key names A4, which the oracle flags; it ranks A4 first in every seed, followed by A5 in this seed and by A8 in 5 of the 8 seeds. %
 %
 %
 %

\section{Physical Relations and Constants}
\label{app:physics}

The specification axioms evaluate closed-form relations of conceptual aircraft design~\cite{raymer2018aircraft} (Table~\ref{M-tab:axioms}).
Table~\ref{tab:constants} gives the constants of the relations and of the International Standard Atmosphere (ISO 2533:1975) with the initial values of the graded degrees, and Table~\ref{tab:rs} the remote-sensing constants with the Airplane Design Groups of the Federal Aviation Administration (FAA)\footnote{FAA Advisory Circular 150/5300-13B, Airport Design, March 31, 2022.}.
The train prior widens each per-class range of a derived quantity over the 44 clean training aircraft to $[0.88\min,\,1.12\max]$, as the oracle does; these initial bands are then learned with their slopes (Section~\ref{M-sec:meth-kb}), and business jets, absent from training, start from the band of all training classes. %
The learned bands act as tolerances learned from the fleet. On specification records, 45.7\% of them keep both edges inside the training fleet range widened in the same way, and the others extend beyond it (Appendix~\ref{app:results-refine}). In the remote-sensing stratum, 103 of the 144 learned group bands (the 18 bands of R2 for military roles, R3 and R4 in each of the 8 seeds) extend beyond the clean training range widened on each side by 12\% of its width. The initial bands of R2 for military roles, R3 and R4 (Table~\ref{tab:rs}) were set from the annotations and from knowledge of the aircraft types to contain every clean annotation, those of test and unseen sites included, and the generator oracle of the satellite benchmark was fitted on the clean annotations of every site, so both may carry held-out information; the satellite benchmark was not regenerated, nor the remote-sensing stratum retrained, without them. %

\begin{widetable}[!htbp]
\centering
\caption{Constants of the physical relations.}
\label{tab:constants}
\scriptsize%
\begin{tabularx}{\linewidth}{@{}l>{\raggedright\arraybackslash}p{0.3\linewidth}@{\hspace{12pt}}l>{\raggedright\arraybackslash}X@{}}
\toprule
\multicolumn{2}{@{}l}{Standard atmosphere} & \multicolumn{2}{l@{}}{Relations and graded degrees} \\
\midrule
$g$ & 9.80665~m\,s$^{-2}$ & units & SI, except $R$ (km), $F_{\mathrm{tot}}$ (kN) and $P_{\mathrm{tot}}$ (kW) \\
$R_{\mathrm{air}}$; $\gamma_{\mathrm{air}}$ & 287.05287~J\,kg$^{-1}$\,K$^{-1}$; 1.4 & A5 & $W_1=\max\{\MTOW-0.94\,m_{\mathrm{fuel}},\,0.05\,\MTOW\}$, a 6\% fuel reserve; log residual clipped to $[-3,3]$; available only when $\AR$ is also declared and, for propeller aircraft, $V_{\mathrm{cr}}$ \\
$T_0$; $\rho_0$ & 288.15~K; 1.225~kg\,m$^{-3}$ & A6 & log residual clipped to $[-2,2]$; wide $C_{L,\max}$ bounds, because published stall speeds mix clean and landing configurations \\
$\Gamma$ & $6.5\times10^{-3}$~K\,m$^{-1}$ (troposphere) & A7 & 95~kg per passenger with baggage; $f_u\in[0.3,1]$ \\
$h_{\mathrm{tr}}$; $T_{\mathrm{tr}}$ & 11{,}000~m; 216.65~K, isothermal above; density continuous & unused & an estimate of maximum $L/D$ from $\AR$ (Oswald factor 0.80, zero-lift drag coefficient 0.020) is implemented but read by no axiom; A5 takes $L/D$ from its latent \\
$\bar h$ & $h_{\mathrm{cr}}$ clamped to $[0,\,20{,}000]$~m & coordinates & $z=(\log_{10}q-m_g)/\sigma_g$ for a band, $z=\xi/\sigma_g$ for the residual $\xi$ of A1, A5, A6 and A7; $m_g$, $\sigma_g\ge10^{-3}$ from the clean training rows of group $g$ ($\sigma_g$ a root-mean-square around 0 for $\xi$), shrunk toward the fleet with weight $n_g/(n_g+5)$, the scatter through its square; a group without rows takes the fleet values \\
$a(\bar h)$ & $\sqrt{\gamma_{\mathrm{air}}R_{\mathrm{air}}T(\bar h)}$ & initial values & tolerances $\tau/\sigma_g$ with $\tau=0.05$ (A1), 0.25 (A5) and 0.15 (A6); slope $\kappa=8/\tilde w$, with $\tilde w$ the median initial band width in standard units (A7: $\kappa=25\,\sigma$, with $\sigma$ the fleet scatter of its margin, and lower edge at 0); in-band weight 0.05 \\
 & & latent bounds & sigmoid of one parameter per pair, rescaled to these bounds and starting at their midpoint; $L/D$: 15-21 (wide-body), 14-19 (narrow-body), 12-17 (regional jet), 11-17 (regional turboprop, business jet), 9-15 (general-aviation turboprop), 7-14 (general-aviation piston); $C_{L,\max}$: 2.2-3.2 (wide-body, narrow-body), 2.0-3.0 (regional jet), 1.9-3.0 (regional turboprop), 1.6-2.8 (business jet), 1.4-2.4 (general-aviation turboprop), 1.2-2.2 (general-aviation piston); $c_T$ (turbofan): $1.2\times10^{-5}$ to $2.3\times10^{-5}$~kg\,N$^{-1}$\,s$^{-1}$; $c_P$: $7.5\times10^{-8}$ to $1.1\times10^{-7}$ (turboprop), $7.0\times10^{-8}$ to $1.0\times10^{-7}$ (piston)~kg\,W$^{-1}$\,s$^{-1}$; $\eta_p$: 0.75-0.88 \\
 & & degrees & capped at $10^{4}$; truth $\exp(-v)$ never clamped; a masked axiom contributes no value and no gradient; $V=\ln 2$ without an available axiom \\
\midrule
\multicolumn{4}{@{}>{\raggedright\arraybackslash}p{\linewidth}@{}}{$g$: standard gravity; $R_{\mathrm{air}}$, $\gamma_{\mathrm{air}}$: gas constant and ratio of specific heats of air; $T_0$, $\rho_0$: sea-level temperature and density; $\Gamma$: temperature lapse rate; $h_{\mathrm{tr}}$, $T_{\mathrm{tr}}$: altitude and temperature of the tropopause; $\bar h$: altitude at which the atmosphere is evaluated; $a$: speed of sound; $q$: derived quantity of an axiom; $n_g$: number of clean training rows of group $g$.} \\
\bottomrule
\end{tabularx}
\end{widetable}

\begin{widetable}[!htbp]
\centering
\caption{Constants of the remote-sensing axioms.}
\label{tab:rs}
\scriptsize%
\setlength{\tabcolsep}{3pt}
\begin{tabular}[t]{@{}lc@{}}
\toprule
Design group & Wingspan (m) \\
\midrule
I & below 14.9352 \\
II & 14.9352-24.0792 \\
III & 24.0792-35.9664 \\
IV & 35.9664-52.1208 \\
V & 52.1208-65.2272 \\
VI & 65.2272-79.8576 \\
\bottomrule
\end{tabular}\hfill
\begin{tabular}[t]{@{}lcccc@{}}
\toprule
Role & R2 span (m), initial & R3 $L/b$, initial & R4 $n_{\mathrm{eng}}$, initial & R6 $P(\text{jet})$, initial \\
\midrule
Civil small & 0-15.005 (frozen) & 0.55-1.70 & 0-2 & 0.25 \\
Civil medium & 15.005-36.0 (frozen) & 0.70-1.45 & 1-4 & 0.55 \\
Civil large & 36.0-90.0 (frozen) & 0.75-1.35 & 2-4 & 0.98 \\
Military transport & 7-75 & 0.70-1.60 & 1-4 & 0.55 \\
Military bomber & 7-60 & 0.60-2.20 & 2-4 & 0.90 \\
Military fighter & 5-20 & 0.90-2.70 & 1-2 & 0.98 \\
Military trainer & 5-15 & 0.80-2.00 & 1-2 & 0.60 \\
\midrule
\multicolumn{5}{@{}l}{Initial tolerance 0.3 (R5) and 0.10 (R10); weight of R9 0.5; frozen: R1 and civil R2 bands} \\
\multicolumn{5}{@{}l}{Graded slots R1, R2 (civil and military), R3, R4 and R10; 57 trained parameters} \\
\bottomrule
\end{tabular}
\end{widetable}

\section{Data and Benchmark Details}
\label{app:data}

This appendix describes the specification records and the benchmarks built from the records and images of Section~\ref{M-sec:meth-data}: the corruption operators with their realized magnitudes (Table~\ref{tab:magnitudes}), the realized instances per type and split (Appendix~\ref{app:data-benchmarks}), the frame of the cross-dataset runs and the rule that chose the examples of Figure~\ref{M-fig:heatmaps}. Every generator is deterministic given its seed.
The 107 specification records cover seven classes: wide-body (17 aircraft of 8 families), narrow-body (18 of 6), regional jet (13 of 6), regional turboprop (11 of 8), business jet (19 of 17, the unseen class), general-aviation turboprop (12 of 11) and general-aviation piston (17 of 15). They were compiled from the type-certificate data sheets of the aviation authorities of the United States, the European Union, Canada, the United Kingdom and France and from manufacturer documents (airport-planning manuals, specification sheets and flight-manual excerpts), with other public sources used for cross-checks. The 67 jets declare thrust and the 40 propeller aircraft shaft power, and 18 records, all general-aviation aircraft, declare the stall speed; every record passed a loose screening with the verifier's physics module, so the clean records were checked against the relations that the verifier evaluates; the intervention labels, set from the edit alone, bypass this dependence. %
Table~\ref{tab:magnitudes} lists the corruption operators. A candidate that the oracle does not flag is generated again with its magnitude doubled, for at most six attempts (Section~\ref{M-sec:meth-data-gen}), so the realized magnitudes reflect the oracle tolerances and extend beyond the declared grid of 5 and 10\%. They extend furthest for performance corruptions, whose Breguet tolerance must absorb the largest residual among the clean records. %

\begin{widetable}[!htbp]
\centering
\caption{Corruption types and magnitudes.}
\label{tab:magnitudes}
\scriptsize%
\begin{tabularx}{\linewidth}{@{}L{0.14\linewidth}>{\raggedright\arraybackslash}XL{0.17\linewidth}*{9}{@{\hspace{5pt}}r}@{}}
\toprule
 & & & & & \multicolumn{7}{c@{}}{Realized magnitude $\epsilon$}\\
\cmidrule(l){6-12}
Operator & Fields altered & Kept if the oracle flags & Kept & Disc. & 5\% & 10\% & 20\% & 40\% & 80\% & 160\% & 320\% \\
\midrule
Single value & one of $S$, $F_{\mathrm{tot}}$ or $P_{\mathrm{tot}}$, $\OEW$, $V_{\mathrm{cr}}$ & the targeted envelope (A3, A4 or A4P, A8, A10) & 212 & 2 & 0 & 0 & 53 & 105 & 46 & 8 & 0 \\
Geometric & one of $b$, $S$, $\AR$ & A1 & 208 & 6 & 104 & 104 & 0 & 0 & 0 & 0 & 0 \\
Performance & $R$ or $m_{\mathrm{fuel}}$ & A5 & 189 & 25 & 1 & 2 & 13 & 35 & 55 & 66 & 17 \\
Cruise & one of $V_{\mathrm{cr}}$, $h_{\mathrm{cr}}$, $S$ & A2 & 206 & 8 & 0 & 8 & 59 & 85 & 46 & 7 & 1 \\
Frankenstein & one block of a record of another class & any constraint & 176 & 38 & \multicolumn{7}{c@{}}{-} \\
Noisy-but-coherent control & every declared non-integer field (relative standard deviation 0.02); $\AR$ re-derived & no constraint ($y=0$) & 214 & 0 & \multicolumn{7}{c@{}}{-} \\
\bottomrule
\end{tabularx}

\medskip
\begin{tabular}{@{}lrrrrr@{}}
\toprule
Constraints violated per corrupted instance & 1 & 2 & 3 & 4-6 & Total \\
\midrule
Corrupted instances & 553 & 253 & 125 & 60 & 991 \\
\bottomrule
\end{tabular}

\smallskip
\begin{minipage}{\linewidth}\scriptsize\raggedright
Kept: instances kept; Disc.: candidates discarded, because the operator does not apply to the record (a missing field, or for a Frankenstein swap no record of another class in the same split) or because the condition of the third column fails in all six attempts. Each operator draws two candidates per clean record, 214 in all; the 38 discarded Frankenstein swaps are those of the unseen class, a single class. Constraints violated: flagged by the generator oracle. Disc.\ has the same meaning in Table~\ref{tab:benchmarks}.
\end{minipage}
\end{widetable}

\subsection{Benchmarks}
\label{app:data-benchmarks}

Table~\ref{tab:benchmarks} gives the realized instances per type and split of four benchmarks: the specification records, the photographic mismatch pairs, the satellite annotations and the satellite multimodal pairs. Of the 100 FGVC-Aircraft variants, 68 are paired with a specification record, 39 with the record of their type and 29 with that of their family; their 6,800 photographs follow the split of the paired family (2,500 training, 1,300 validation, 800 test, 1,500 unseen-family and 700 unseen-class photographs), and the 32 unpaired variants enter no split. The satellite annotations start from 3,000 clean RarePlanes annotations, a sample stratified by role with a floor of 200 annotations per role (all annotations of a rarer role), and each operator draws one candidate from each of them, so the kept and discarded candidates of a type sum to 3,000.
In the cross-dataset runs of Appendix~\ref{app:results-xmodal}, a knowledge base fitted on one source scores the other through a 23-field unified frame (Figure~\ref{M-fig:data}(c)) with only the axioms R2 to R4, which both sources can evaluate; a specification record takes its role and design group from its clean wingspan. %
On the records of each split, the report prints the class of the nearest clean training record (Section~\ref{M-sec:meth-report}; example in Figure~\ref{M-fig:report}): this proximity names the class of a record from an unseen family and attributes each business jet, a class absent from training, to its nearest training class. %
\paragraph{Examples of Figure~\ref{M-fig:heatmaps}.} The figure shows, for seed 0, the network trained with the grounding axiom and the network trained without a localization term. A rule fixed in advance chose its examples: each is the first candidate on which both networks give a non-degenerate map (maximum above 0) and the axiom-trained network predicts the target class with an in-mask share $\rho$ of at least 0.15. Photographs for engine count, propulsion and tail configuration are taken in identifier order from validation and test, and from unseen families only when validation and test offer no such candidate. Satellite crops of civil aircraft, for the role small civil aircraft and for an engine count of 2, are taken in identifier order from validation and test sites, the second from a site other than the first's, with the annotated diamond as region. Both networks are drawn at their selected checkpoints; on photographs the network without a localization term is at its first epoch. Pointing is higher with the axiom in every seed pair at the selected and at the last checkpoints, and the automatic pseudo-masks include some background or off-part regions, visible in the figure. The examples illustrate the maps; pointing measures localization (Section~\ref{M-sec:res-grounding}). %

\begin{widetable}[!htbp]
\centering
\caption{Benchmarks.}
\label{tab:benchmarks}\vspace{3pt}
\scriptsize%
\begin{tabular}{@{}llrrrrrr@{}}
\toprule
Type & Label & Train & Val. & Test & Unseen fam. & Unseen class & Total \\
\midrule
\multicolumn{8}{@{}l}{\textit{Specification records}}\\
Clean & coherent & 44 & 17 & 16 & 11 & 19 & 107 \\
Single value & incoherent & 86 & 34 & 32 & 22 & 38 & 212 \\
Geometric & incoherent & 88 & 32 & 32 & 22 & 34 & 208 \\
Performance & incoherent & 81 & 29 & 28 & 22 & 29 & 189 \\
Cruise & incoherent & 86 & 32 & 32 & 22 & 34 & 206 \\
Frankenstein & incoherent & 88 & 34 & 32 & 22 & 0 & 176 \\
Noisy-but-coherent control & coherent & 88 & 34 & 32 & 22 & 38 & 214 \\
Total & & 561 & 212 & 204 & 143 & 192 & 1,312 \\
\midrule
\multicolumn{8}{@{}l}{\textit{Photographic mismatch pairs}}\\
Matched & coherent & 788 & 499 & 412 & 250 & 51 & 2,000 \\
Inter-class & mismatched & 227 & 108 & 70 & 138 & 0 & 543 \\
Intra-family & mismatched & 227 & 168 & 137 & 68 & 0 & 600 \\
Attribute & mismatched & 232 & 158 & 132 & 63 & 15 & 600 \\
Occlusion & coherent control & 161 & 115 & 69 & 52 & 3 & 400 \\
Degradation & coherent control & 163 & 87 & 88 & 50 & 12 & 400 \\
Total & & 1,798 & 1,135 & 908 & 621 & 81 & 4,543 \\
\bottomrule
\end{tabular}

\medskip
\begin{tabular}{@{}llrrrrr@{}}
\toprule
Type & Label & Train & Val. & Test & Unseen sites & Disc. \\
\midrule
\multicolumn{7}{@{}l}{\textit{Satellite annotations}}\\
Clean & coherent & 1,903 & 262 & 357 & 478 & - \\
Wingspan rescaling & incoherent & 1,903 & 262 & 357 & 478 & 0 \\
Length rescaling & incoherent & 1,139 & 149 & 224 & 306 & 1,182 \\
Design-group change & incoherent & 1,903 & 262 & 357 & 478 & 0 \\
Role change & incoherent & 1,903 & 262 & 357 & 478 & 0 \\
Engine-count swap & incoherent & 1,233 & 170 & 254 & 415 & 928 \\
Propulsion change & incoherent & 1,890 & 256 & 357 & 478 & 19 \\
Frankenstein & incoherent & 1,899 & 262 & 356 & 478 & 5 \\
Noisy-but-coherent control & coherent & 1,903 & 262 & 357 & 478 & 0 \\
Total & & 15,676 & 2,147 & 2,976 & 4,067 & 2,134 \\
\midrule
\multicolumn{6}{@{}l}{\textit{Satellite multimodal pairs}} & Total \\
Clean & coherent & 950 & 144 & 171 & 235 & 1,500 \\
Tabular corruption & incoherent & 956 & 152 & 162 & 230 & 1,500 \\
Image swap & incoherent & 971 & 134 & 170 & 225 & 1,500 \\
Geometric corruption & incoherent & 483 & 75 & 80 & 112 & 750 \\
Total & & 3,360 & 505 & 583 & 802 & 5,250 \\
\bottomrule
\end{tabular}

\smallskip
\begin{minipage}{\linewidth}\scriptsize\raggedright
Photographic pairs: matched, a photograph with its own record; inter-class, with the record of an aircraft of another class; intra-family, with the record of another variant of its family; attribute, with its own record altered in engine count, engine type or maximum take-off mass; occlusion and degradation, with its own record and a part of the photograph occluded or the photograph degraded (blur, compression or low light). Satellite multimodal pairs: clean, an annotation with its own crop; tabular corruption, the annotation altered in engine count, propulsion or role, with its own crop; image swap, the clean annotation with the crop of an aircraft of another role from the same split; geometric corruption, the annotation with its wingspan or length rescaled. Disc.: as in Table~\ref{tab:magnitudes}; in the multimodal panel the last column gives the totals.
\end{minipage}
\end{widetable}

\FloatBarrier
\section{Method Details}
\label{app:method}

This appendix completes Section~\ref{M-sec:meth}. Table~\ref{tab:axioms-visual} adds the visual predicates and the cross-modal axioms to the catalog of Table~\ref{M-tab:axioms}; Appendix~\ref{app:method-proof} proves Proposition~\ref{M-prop:reduction} and reports its numerical check; Figures~\ref{fig:rankexplain} and~\ref{fig:grounding} zoom into the verification report and the vision module; Appendix~\ref{app:method-forms} defines the localization terms of the matched grounding study; and Algorithm~\ref{alg:forward} transcribes the forward pass for one declared record.

\begin{widetable}[!htbp]
\centering
\caption{Axiom catalog: visual predicates and cross-modal axioms.}
\label{tab:axioms-visual}
\scriptsize
\setlength{\tabcolsep}{3pt}
\begin{threeparttable}
\begin{tabular}{@{}L{30mm}L{58mm}L{38mm}L{30mm}@{}}
\toprule
Predicate & Classes (one sigmoid head each) & Part region & Read by\\
\midrule
\multicolumn{4}{@{}l}{\textbf{Photographs} (8 predicates, 25 heads; regions from the pseudo-masks)}\\ %
EngineCount & 1, 2, 3, 4 & engine & AX1\\
EnginePlacement & underwing, rear, overwing & engine & AX4\\
Propulsion & jet, turboprop, piston & propeller and engine & AX2\\
WingPosition & high, mid, low & wing & AX5 (not implemented)\\
TailConfig & conventional, T-tail, cruciform, V-tail & tail & AX4\\
WingletPresence & present (binary) & none (conjuncts masked) & none\\
MainGearWheelCount & 2, 4, 6 to 10, 12 or more (no attribute labels) & landing gear & AX3\\
FuselageSlenderness & stubby, standard, slender (no attribute labels) & none (conjuncts masked) & none\\
\midrule
\multicolumn{4}{@{}l}{\textbf{Satellite crops} (7 predicates, 25 heads; region of every head: the annotated diamond)}\\ %
role & civil small, medium, large; military transport, bomber, fighter, trainer & diamond & AX4, AX5\\
wing type & straight, swept, delta, variable-swept & diamond & AX3\\
wing position & mid or low, high & diamond & none\\
propulsion & jet, propeller, unpowered & diamond & AX2\\
engine count & 0, 1, 2, 3, 4 & diamond & AX1\\
canards & no, yes & diamond & none\\
tail fins & 1, 2 & diamond & none\\
\bottomrule
\end{tabular}

\medskip
\begin{tabular}{@{}L{9mm}L{80mm}L{67mm}@{}}
\toprule
ID & Photographs: truth (availability) & Satellite crops: truth (availability)\\
\midrule
AX1 & $\pvis(n_{\mathrm{eng}}\mid I)$ (declared $n_{\mathrm{eng}}\in\{1,\dots,4\}$; engine visible) & $\pvis(n_{\mathrm{eng}}\mid I)$ (declared $n_{\mathrm{eng}}\in\{0,\dots,4\}$)\\ %
AX2 & $\pvis(\hat e\mid I)\,\big(1-\pvis(\text{propeller}\mid I)\,\sigma(25(M-0.62))\big)$, each factor entering only when its declared fields are present (engine type, or $V_{\mathrm{cr}}$ and $h_{\mathrm{cr}}$, declared; propeller or engine visible) & $\pvis(\pi\mid I)$ (declared $\pi$)\\ %
AX3 & $\sum_j\pvis(j\mid I)\,\mu_j(\log_{10}\MTOW)$, band edges at $\log_{10}\MTOW=4$, 5.05 and 5.5, slope 16 ($\MTOW$ declared; landing gear visible) & $\pvis(t\mid I)$ (declared $t$)\\ %
AX4 & $\IR\big(\pvis(\text{rear}\mid I),\,\pvis(\text{T-tail}\mid I)+\pvis(\text{cruciform}\mid I)\big)$ (engine and tail visible; no declared field) & $\pvis(r\mid I)$ (declared $r$)\\
AX5 & wing position against class: planned, not implemented & $1-\sum_{r'}\pvis(r'\mid I)\,\iota_{r'}$, where $\iota_{r'}=1$ when $r'$ is a civil role whose wingspan band does not overlap the band of the declared design group; weight 0.5 (declared civil $r$ and design group)\\ %
\bottomrule
\end{tabular}
\begin{tablenotes}[flushleft]
\scriptsize
\item $\pvis(\cdot\mid I)$: posterior of a predicate, the softmax over its heads (a sigmoid for a binary predicate); $\hat e$: visual class of the declared engine type (turbofan to jet); propeller: turboprop or piston; $M$: cruise Mach number $V_{\mathrm{cr}}/a(h_{\mathrm{cr}})$ of the declared fields, as in A10; $\mu_j$: soft membership of the declared mass in the wheel band $j$; $\IR$: Reichenbach implication; $[\cdot]$: indicator. Every head of a satellite crop shares the diamond region, visible whenever it is annotated. The photographic score reads the cross-modal axioms alone; the satellite verdict combines them with $\Rset$ (Section~\ref{M-sec:meth-agg}). Settings: Table~\ref{tab:config}.
\end{tablenotes}
\end{threeparttable}
\end{widetable}

\subsection{Proof of the Exact Reduction}
\label{app:method-proof}

\begin{pf}
(a)~Fix $\alpha$. A masked axiom has $\alpha_k=0$, and the implementation sets its degree (for a cross-modal axiom, its truth) to 0 by selection before any sum; its term and its weight therefore vanish from the numerator and from the denominator of Eq.~\eqref{M-eq:pme}, which reduce to the sums over $\mathcal S'$. If $\mathcal S'$ is empty, a tabular stratum takes $V_{\mathcal S}=\ln 2$ by definition and the cross-modal stratum has $\gamma_{\AXset}=0$. Every degree and its derivatives are finite, because undeclared fields carry a finite placeholder and degrees are capped at $10^4$, so the selection passes no value and no gradient from the degree $v_k$, or the truth $\sat_k$, of a masked axiom. The stratum verdict therefore has zero derivative with respect to them, and a field read only by masked axioms does not affect it.
(b)~Without an image, no cross-modal axiom is available. Then, as when every cross-modal axiom is masked, $\gamma_{\AXset}=0$, and since $\lambda_{\mathcal T}>0$, Eq.~\eqref{M-eq:strata} reduces to $\feas_{\mathcal T}=\exp(-V_{\mathcal T})$; the implementation returns $\feas_{\mathcal T}$ itself in this case, so the equality holds bit for bit. Otherwise, the cross-modal stratum enters both sums of Eq.~\eqref{M-eq:strata} with weight $\lambda_{\AXset}\gamma_{\AXset}$ and the tabular stratum keeps $\lambda_{\mathcal T}$, which is the stated renormalization over the strata present.
\end{pf}

In float32, the masked verdict equals the verdict of the knowledge base restricted to the available constraints bit for bit in 16 of 16 runs (Figure~\ref{M-fig:missing}a), eight seeds under each of the two tolerance priors: the train prior and the textbook prior, which takes handbook envelopes in place of the bands fitted on the training fleet (Table~\ref{tab:spec}, note f). It agrees with a float64 recomputation up to float32 rounding, within $2.2\times10^{-7}$ relative, and the reduction without an image also holds in all 180 runs scored with newly drawn masks that hide a nominal 30\% of the satellite images. %

\subsection{Method Components}
\label{app:method-forms}%
Figure~\ref{fig:rankexplain} zooms into the module that assembles the verification report and the optional settings of Section~\ref{M-sec:meth-report}.

\begin{widefigure}[t]
\centering
\begin{adjustbox}{max width=\textwidth}
\begin{tikzpicture}[
  zoomframe/.style = {draw=#1, dashed, rounded corners=4pt, line width=0.6pt, inner sep=3.5mm},
  tab/.style       = {minimum height=0pt, minimum width=0pt, inner sep=2pt, font=\scriptsize\bfseries},
  alarmflow/.style = {flow, dashed},
  dot/.style       = {circle, fill=ioline, inner sep=0pt, minimum size=1.6mm},
]
  \node[symb]   (kb)  at (0, 1.9)    {Knowledge base\\and graded\\predicates};
  \node[fusion] (hd)  at (0,-0.25)
       {Verdict and keys\\[1pt]{\scriptsize \fwname{}'s verdict,}\\{\scriptsize second head}};
  \node[neural] (vm)  at (0,-1.9)    {Vision module\\and grounding\\axiom};
  \node[neural, anchor=west] (alm) at ([xshift=3mm]kb.east) {Residual-\\learner alarm\\(optional)};
  \node[result] (out) at (14.05, 0.4)
       {Constraint verdicts,\\violated constraint,\\altered field, repair\\proposal, coverage};

  \node[symb] (dg) at (5.5, 0.95) {Constraint\\diagnosis};
  \node[symb] (fl) at (8.45, 1.9)  {Field\\localization};
  \node[symb, right=0.68cm of fl] (rp) {Repair\\proposals};
  \node[lbl, align=center, above=1mm of fl] (nfl) {knowledge-base\\abduction};
  \node[lbl, align=center, above=1mm of rp] (nrp) {minimum change\\to $\exp(-V)\geq\tau_r$};
  \node[fusion] (vr) at (8.45,-0.25)
       {Verification report\\[1pt]{\scriptsize verdict $V$, degrees $v_k$,}\\{\scriptsize evidence coverage,}\\{\scriptsize maps, $\rho_k$, nearest class}};
  \coordinate (fb) at (dg |- fl);
  \node[symb, dashed] (alr) at (5.5, 2.95) {Critical-field\\alert (optional)};

  \begin{scope}[on background layer]
    \node[zoomframe=ioline, fit=(dg)(fl)(rp)(nfl)(nrp)(vr)(alr)] (frame) {};
  \end{scope}
  \node[fusion, tab, anchor=west] at ([xshift=8mm]frame.south west) {Verification report and optional settings};

  \draw[alarmflow] (kb.north) -- ++(0,0.55) -| node[lbl, above, pos=0.25] {physical residuals} (alm.north);
  \draw[alarmflow, -] (alm) -- node[lbl, above, pos=0.62] {flag} (fb);
  \coordinate (fj) at ([xshift=-11.5mm]fb);
  \draw[flow, -] (hd.north) -- ++(0,0.45) -| node[lbl, above, pos=0.25] {flag, $V\ge t^{\star}$} (fj);
  \draw[flow, -] (fj) -- (fb);
  \node[dot] at (fj) {};
  \draw[flow] (fb) -- (fl);
  \draw[flow] (fb) -- (dg);
  \node[dot] at (fb) {};
  \draw[alarmflow] (alr) -- (fb);
  \draw[flow] (fl) -- (rp);
  \draw[flow] ([yshift=3mm]hd.east) -| node[lbl, above, pos=0.18] {degrees $v_k$} (dg.south);
  \draw[flow] (hd.east) -- node[lbl, below, pos=0.22] {verdict, degrees} (hd.east -| vr.west);
  \draw[flow] (vm.east) -- node[lbl, above] {maps, $\rho_k$} ([xshift=4mm]frame.west |- vm.east) |- ([yshift=-4mm]vr.west);
  \draw[flow] (dg.east) -- node[lbl, above, pos=0.23] {largest-violation key} (dg.east -| out.west);
  \draw[flow] (rp) -| node[lbl, above, pos=0.43] {flagged records} (out.north);
  \draw[flow] (vr.east |- out.west) -- node[lbl, above, pos=0.4] {every record} (out.west);
\end{tikzpicture}
\end{adjustbox}
\caption{Verification report and optional settings.}
\label{fig:rankexplain}
\end{widefigure}

Figure~\ref{fig:grounding} zooms on the vision module and the grounding axiom of Section~\ref{M-sec:meth-vision}. %
\begin{widefigure}[t]
\centering
\begin{adjustbox}{max width=\textwidth}
\begin{tikzpicture}[
  zoomframe/.style = {draw=#1, dashed, rounded corners=4pt, line width=0.6pt, inner sep=3.5mm},
  tab/.style       = {minimum height=0pt, minimum width=0pt, inner sep=2pt, font=\scriptsize\bfseries},
  dot/.style       = {circle, fill=ioline, inner sep=0pt, minimum size=1.6mm},
]
  \node[io]     (dat) at (-0.7, 1.3)  {Data and\\verification\\instances};
  \coordinate (prtrow) at (0,-2.1);
  \node[expert, anchor=west] (prt) at (dat.west |- prtrow) {Part\\definitions};
  \node[io]     (msk) at (4.35,-2.1)
       {Part masks of $I$\\[1pt]{\scriptsize OWLv2 + SAM (photographs),}\\{\scriptsize annotated diamond (satellite)}};
  \node[io]     (reg) at (8.2, -2.1) {Part region $\tilde M_k$,\\visibility $\vis_k$};
  \node[symb]   (ax)  at (13.85, 1.3) {Cross-modal\\axioms};

  \node[neural] (bb)  at (2.6, 1.3)   {Backbone};
  \node[neural] (hd)  at (5.2, 1.3)   {Sigmoid heads\\$P_k(I)$};
  \node[neural] (map) at (5.2, 0)     {Activation\\maps $\hat H_k(I)$};
  \node[symb]   (rho) at (7.9, 0)     {In-mask\\share $\rho_k$};
  \node[symb]   (av)  at (10.75, 0)
       {Grounding axiom\\$P_k\rightarrow\Loc_k(\rho_k)$\\[1pt]{\scriptsize training only}};
  \coordinate (br) at (10.75, 1.3);

  \begin{scope}[on background layer]
    \node[zoomframe=neuralline, fit=(bb)(hd)(map)(rho)(av)] (frame) {};
  \end{scope}
  \node[neural, tab, anchor=west] at ([xshift=4mm]frame.south west) {Vision module and grounding axiom};

  \draw[flow] (dat) -- node[lbl, above, pos=0.4] {image $I$} (bb);
  \draw[flow] (bb) -- (hd);
  \draw[flow] (bb.south) |- node[lbl, above, pos=0.75] {features} (map.west);
  \draw[flow] (hd) -- node[lbl, right] {weights} (map);
  \draw[flow] (map) -- (rho);
  \draw[flow] (rho) -- node[lbl, above] {$\rho_k$} (av);
  \draw[flow] (hd) -- (br) -- node[lbl, above, pos=0.35] {$\pvis$} (ax);
  \node[dot] at (br) {};
  \draw[flow] (br) -- node[lbl, right] {$P_k$} (av);
  \draw[flow] (prt) -- node[lbl, above] {prompts} node[lbl, below] {(photographs)} (msk);
  \draw[flow] (msk) -- (reg);
  \draw[flow] (rho |- reg.north) -- node[lbl, right, pos=0.78] {$\tilde M_k$} (rho);
  \coordinate (vb) at (av.south |- reg.east);
  \draw[flow, -] (reg.east) -- node[lbl, above] {$\vis_k$} (vb);
  \draw[flow] (vb) -- node[lbl, left, pos=0.55] {guard} (av.south);
  \draw[flow] (vb) -| node[lbl, above, pos=0.25] {availability} (ax.south);
  \node[dot] at (vb) {};
\end{tikzpicture}
\end{adjustbox}
\caption{Vision module and grounding axiom.}
\label{fig:grounding}
\end{widefigure}

\label{app:method-latents}
\label{app:method-xmodal}
\label{app:method-axiomaux}
\label{app:method-heads}
\label{app:method-readout}

Table~\ref{tab:config} gives the settings of the components. The matched grounding study isolates the gate of the grounding axiom: on photographs, it trains localization terms $\lambda\langle g_{ik}(1-\Loc_k(I_i))\rangle_{\vis}$ with different gates $g_{ik}$, at the loss weights $\lambda$ of 0.3, 1 and 3 used in Section~\ref{M-sec:res-grounding}, with three runs per weight. With $g_{ik}=P_k(I_i)$ the term is the grounding axiom; with $g_{ik}$ equal to the positive label of a supervised head and the normalizer of the axiom, only the gate changes (gate only); with $g_{ik}=1$ the gate is removed (ungated term). A fourth variant gates by the label and averages over the visible label-positive heads, which changes the normalizer together with the gate. The auxiliary loss of Section~\ref{M-sec:meth-vision} is trained at the same weights as a reference; Section~\ref{M-sec:res-grounding} and Appendix~\ref{app:expsup} report its comparison with the axiom.

On photographs, the engine-count and engine-type axioms AX1 and AX2 carry the instance-specific signal, since the gear heads read by AX3 have no attribute label and AX4 reads no declared field; a pair without an available axiom scores 0. On satellite crops, AX5 is available on 95.2\% of the pairs. %

Algorithm~\ref{alg:forward} transcribes from the code the forward pass of Figure~\ref{M-fig:pipeline} for one declared record.

\begin{algorithm}[!htbp]
  \caption{\fwname{} verification of a declared record.}
  \label{alg:forward}
  \DontPrintSemicolon
  \SetCommentSty{textit}
  \footnotesize
  \KwIn{$x$, $m$, optional image $I$, stratum $\mathcal T\in\{\Aset,\Rset\}$, threshold $t^{\star}$}
  \KwOut{verification report}
  \lForEach{axiom $k\in\mathcal T$}{availability $\alpha_k$, degree $v_k$ by Eq.~\eqref{M-eq:soft} ($-\ln$ of its truth for R5 to R9), truth $\sat_k=\exp(-v_k)$}
  $V_{\mathcal T}\leftarrow$ Eq.~\eqref{M-eq:pme}, or $\ln 2$ without available axioms;\quad $\feas\leftarrow\exp(-V_{\mathcal T})$\;
  \If{$I\neq\varnothing$}{
    \lForEach{axiom $j\in\AXset$}{availability $\alpha_j$, truth $\sat_j$}
    \lIf{$\gamma_{\AXset}=1$ and $I$ is a satellite crop ($\mathcal T=\Rset$)}{$\feas\leftarrow$ Eq.~\eqref{M-eq:strata}}
    \tcp{photographs: score $1-\feas_{\AXset}$, or 0 if $\gamma_{\AXset}=0$}
  }
  $V\leftarrow-\ln\feas$;\quad flagged $\leftarrow[V\ge t^{\star}]$;\quad $e_k\leftarrow\alpha_k v_k$;\quad $\hat p_{\mathrm{m}}$ and $c_k$ by Eq.~\eqref{M-eq:mono}\;
  report $\leftarrow$ degrees, truths, coverage, maps and the nearest clean training record\;
  \If{a specification record with at least one available constraint is flagged by its verdict or by an optional setting (the residual-learner alarm of rank-then-explain or the critical-field alert)}{
    diagnosis $\arg\max_{k:\,\alpha_k=1}v_k$ (secondary key $\arg\max_k c_k$);\quad field ranking by Eq.~\eqref{M-eq:abduction};\quad repair by Eq.~\eqref{M-eq:repair}\;
  }
  \Return{report}\;
\end{algorithm}
\providecommand{\oneci}[2]{\ensuremath{[#1,\penalty 200\ #2]}}%
\section{Protocol Details}
\label{app:protocol}

This appendix completes the setup of Section~\ref{M-sec:res-setup}: it lists the comparators and the configuration, states how the configuration was selected and which choices used data beyond the training split, and reports how the vision results change when the part detector's visibility threshold is re-selected on validation data alone.
Table~\ref{tab:comparators} orders the comparators by the knowledge they receive; each shares the partitions, the labels and the evaluation code of \fwname{}, with settings selected on validation data within a budget fixed before its runs, and one planned arm, the language model in its thinking mode, has no result: in a pilot run too few of its answers could be parsed, below the parse rate fixed before the runs. %
Table~\ref{tab:config} gives the configuration of every module of \fwname{}, of its optional settings and of the baseline detectors; bands, slopes, latents, visual heads, read-outs, operating thresholds and checkpoints are fitted or selected on data, and every other value was set once. The class bands of the generator oracle are fitted on all 107 aircraft, test and out-of-distribution aircraft included, and reach training through the generator labels and per-constraint targets (Section~\ref{M-sec:meth-data-gen}); Appendix~\ref{app:results-refine} refits the oracle without test or out-of-distribution aircraft.
\paragraph{Selection of the configuration.}
\fwname{}'s configuration was selected on validation data and on held-out training classes; during development, earlier configurations were also evaluated on the test splits. Validation data selected the configuration of both tabular strata, the largest-violation key, the abduction and repair rules and the optional settings, and held-out training classes also selected the key and the critical-field alert (Appendix~\ref{app:results-refine}). The rule that reads each constraint as held or violated has no tuned parameter. The part detector's visibility threshold was moved from 0.25 to 0.20 on a 240-image stratified subset including test and out-of-distribution images, by an informal trade-off, with no numeric criterion and no saved tuning output. The results of this paper use 0.20.

\paragraph{Visibility threshold re-selected on validation.}
Re-selected on 240 stratified validation images alone, under a criterion declared beforehand that keeps propeller detections on jets, which have no propeller, at or below 10\% of jet images, the threshold becomes 0.25, as in 197 of 200 resampled subsets: at 0.20 those detections reach 17\% on validation, against 7\% on the original subset. Weighing part visibility and those detections equally keeps 0.20. Without retraining, 0.25 turns 12.1\% of the part flags of training, validation and test images to not visible, 2,794 flags, 1,618 of them on the wing.

\paragraph{Vision module retrained at 0.25.}
Retrained at 0.25 with the same code, configuration and 8 seeds (even seeds on the RTX 5090, which trained the original module, and odd seeds on the RTX 4090; the even seeds alone give the same accuracy and pointing readings), the photograph module keeps its macro head accuracy (0.916 on test and 0.944 on unseen families, as at 0.20) and at most one collapsed head per seed on test. It also keeps its pointing lead over the network without a localization term ($+0.671$) and with permuted masks ($+0.414$), its lead over RRR, GradMask and Pukdee et al.\ on PASCAL-Part ($+0.568$, $+0.676$ and $+0.469$, every interval above zero) and the nine overall readings of the cross-modal stratum against Qwen3-VL-8B, InternVL3.5-8B and Qwen3-VL-32B on test, unseen families and the unseen class. Of the 21 readings by pair type, 4 change by one step under each of the two bootstrap streams of that comparison, 3 of them to the disadvantage of the stratum. The networks without a localization term or with permuted masks, RRR, GradMask and Pukdee et al.\ were trained with the flags at 0.20 and not retrained.
At 0.25 the cross-modal AUROC is 0.693 on test (0.692 at 0.20) and slightly lower on unseen families (0.716 against 0.733) and on the unseen class (0.827 against 0.844); the axiom on engine placement and tail configuration (AX4) is available on 0.784 of the test pairs scored for detection, instead of 0.956. Pointing on pseudo-masks is slightly lower on test (0.774 against 0.780) and on unseen families (0.778 against 0.784): there the seed-bootstrap and the image-and-seed bootstrap intervals include zero, while the intervals over paired seeds, \oneci{-0.010}{-0.001} and \oneci{-0.010}{-0.004}, do not. On PASCAL-Part human masks pointing is slightly lower with 3 runs (0.711 against 0.737, $-0.026$ \oneci{-0.051}{-0.002}) and lower over 8 seeds (0.708 against 0.729, $-0.020$ \oneci{-0.037}{-0.005}).
The auxiliary loss at weight 0.3, retrained at 0.25 with its 3 seeds, falls on PASCAL-Part as the axiom does (0.708 against 0.739, $-0.031$ \oneci{-0.055}{-0.005}). With both at 0.25 the two stay on par in localization on test pseudo-masks ($-0.009$), out of distribution ($-0.027$) and on PASCAL-Part ($+0.003$ \oneci{-0.028}{+0.032}), while the axiom keeps its out-of-distribution accuracy lead ($+0.016$ \oneci{+0.010}{+0.021}). With all 8 seeds of the axiom, the axiom's pointing on test pseudo-masks is slightly lower than that of the auxiliary loss at 0.25 ($-0.019$) and lower at 0.20 ($-0.026$). A first check against the auxiliary loss still at 0.20, added after the results, had shown the axiom lower on PASCAL-Part ($-0.028$); the like-for-like comparison attributes that difference to the threshold. %
\label{app:protocol-amendments}%

\begin{widetable}[tbp]
  \centering
  \caption{Comparators.}
  \label{tab:comparators}
  \scriptsize%
  \setlength{\tabcolsep}{4pt}%
  \begin{tabularx}{\textwidth}{@{}L{1.85cm}>{\raggedright\arraybackslash}XL{3.2cm}L{2.3cm}L{1.8cm}@{}}
    \toprule
    Knowledge & Comparators & Input & Output & Data \\
    \midrule
    None & Isolation Forest, Mahalanobis, MLP on normalized fields & declared fields & error score & records \\
     & LightGBM, XGBoost, CatBoost, TabPFN v2 on raw fields & declared fields, availability mask, one-hot categories & error score & records \\
    Crisp rules & train-fitted rule checker; rules with hierarchical pooling & band exceedances of the residuals & score, violated rules & records \\
    Classical diagnosis & Fellegi-Holt, first unsatisfiable core of Z3, MaxSMT & crisp constraints & violated set, minimum-cardinality localization & records \\
    Sizing statistics & Class I sizing regressions on maximum take-off mass, fitted on the training aircraft & declared fields & largest class-I residual, its field & records \\ %
    Residual learners & MLP, LightGBM, XGBoost, CatBoost, logistic regression, TabPFN v2, TabICL with its v2 checkpoint, local outlier factor, one-class SVM, autoencoder; class-standardized residual and its Ochiai ranking; SHAP attributions & physical residuals with availability & score, attribution & records \\
    Differentiable logic & DL2 method (network on declared fields, projected-gradient-descent (PGD) adversary); network with the DL2 loss and the per-constraint targets on the physical residuals & declared fields; physical residuals with availability & score, per-constraint probabilities & records \\
    Knowledge in words & Qwen3-8B with the constraints A1-A11 & anonymized record & verdict, score, first violated constraint & records \\
    Localization losses & auxiliary localization loss and the matched ablations of the grounding axiom & photograph, pseudo-masks in training & attribute heads, evidence maps & photographs \\
    Explanation supervision & RRR, GradMask, RES-L, Pukdee et al. & photograph, part masks in training & attribute heads, evidence maps & photographs, human masks \\
    Sequential pipeline & cross-modal stratum (vision trained alone, axioms on its posteriors); independent concept-bottleneck model with LightGBM & photograph or crop with the record & mismatch score & photographs, satellite pairs \\
    Late fusion & logistic regression, LightGBM, CatBoost, TabPFN on the record alone, on the record with the visual posteriors, and on these with the image embeddings or with the knowledge-base truths; best fusion; monotone read-out & record, visual posteriors, embeddings, knowledge-base truths & pair score & photographs, satellite pairs \\
    End-to-end fusion & AutoGluon-MM 1.6.3 (out of the box), TIP-style (re-implementation) & raw crop and declared fields & pair score & satellite pairs \\
    Vision-language models & Qwen3-VL-8B, InternVL3.5-8B, Qwen3-VL-32B (4-bit NormalFloat, NF4); Qwen3-VL-8B with the rules R1-R10 & photograph with the declared specification; crop with the declared vector and the rules & verdict, score & photographs, satellite pairs \\
    Imputers & MICE, missForest, nearest-neighbor imputation & record with missing fields & imputed values & records \\
    \midrule
    \textbf{\fwname{} (ours)} & specification and remote-sensing axioms with graded predicates, grounding axiom, cross-modal axioms & declared record with availability mask; optional photograph or satellite crop & verdict, per-constraint degrees (constraints respected and violated), evidence coverage and maps, largest-violation key, field localization, repair proposals, report & records, photographs, satellite pairs \\
    \bottomrule
  \end{tabularx}
\end{widetable}
\begin{table}[tbp]
\centering
\caption{Configuration.}
\label{tab:config}
\scriptsize%
\begin{tabularx}{\linewidth}{@{}L{0.15\linewidth}X@{}}
\toprule
Setting & Value \\
\midrule
Specification stratum & graded degrees in standard units of the clean training rows of each group, shrunk toward the fleet with weight $n_g/(n_g+5)$; in-band grading; hierarchical banks; truth $\exp(-v_k)$, degrees capped at $10^{4}$; masked mean verdict ($p=1$), $V=\ln 2$ without available axioms; 336 trainable parameters, 126 of them bounded latents; chosen on validation among eight candidates with eight seeds each \\ %
Remote-sensing stratum & six graded slots (R1, R2 for civil and military roles, R3, R4, R10) in standard units of the clean training rows; R5 to R9 as the negative logarithm of their truth; masked weighted mean verdict ($p=1$); in-band grading; flat banks; 57 trained parameters; chosen on validation among eight candidates \\ %
Training of the strata & AdamW~\cite{loshchilov2017decoupled}, learning rate $5\times10^{-3}$ multiplied by 5, that is $2.5\times10^{-2}$, for every trained parameter of a stratum (bands, tolerances, slopes, in-band weights, latents and the parameters of R5 and R6), weight decay 0, batch 256; at most 100 epochs, early stopping on validation AUROC with patience 15, best state restored; $\lamaux=0.5$; seeds 0-7 \\ %
Fusion of the strata & weighted mean between strata with $\lambda_{\Rset}=\lambda_{\AXset}=1$; masked $p$-mean error with $p=4$ inside the cross-modal stratum; Reichenbach implication (R7-R9, grounding axiom, photographic AX4, satellite AX5); axiom weight 1, except R9 and satellite AX5 (0.5) \\ %
\midrule
Second head & monotone logistic regression on the degrees, the availabilities and the verdict (on satellite pairs $V_{\Rset}$, the cross-modal truths and $1-\feas_{\AXset}$), inverse regularization in $\{10^{-4},\dots,10^{3}\}$ and, on specification records, the inclusion of the verdict selected on validation AUROC; operating thresholds: maximum of Youden's $J$~\cite{youden1950index} on validation \\ %
Abduction and repair & single-field log changes on a grid from $-3$ to 3 in steps of 0.005; smallest-change rule; feasibility threshold 0.9 ($V\leq-\ln 0.9\approx0.1054$) for both; repair on the $k_r=1$ top-ranked field by an augmented-Lagrangian method with Adam steps \\ %
Optional settings & rank-then-explain: residual MLP alarm at its validation Youden threshold; critical-field alert: undeclared fields required by an applicable axiom and declared by at least 95\% of the training records of the class, or of the engine type for classes with fewer than five training records; MICE imputation: chosen among nine configurations on validation \\ %
\midrule
Photographs & ResNet-50 with ImageNet weights up to stage 3 ($28\times28$ grid at 448~pixels), 25 sigmoid heads, class-activation maps; bounding box enlarged by 10\% per side, $448\times448$ pixels; AdamW, learning rate $3\times10^{-4}$ for heads and localization parameters and $3\times10^{-5}$ for the backbone, weight decay $10^{-4}$, gradient clipping at 5, batch 32, mixed precision; 20 fixed epochs without early stopping, checkpoint at the first epoch with the best validation macro head accuracy \\ %
Grounding & $\lambda_{\AVset}=1$; floor weight 0.1 and mass floor $S_0=1$; $\Loc_k$ with $\ell_k$ initialized at 0.5 and $\kappa_k$ at 10; visibility guard at 0.5 on the 0/1 visibility flag of each part (photographs: detector confidence at least 0.20, as in the pseudo-masks row; satellite crops: the annotated diamond) \\ %
Satellite crops & ResNet-50, ConvNeXt-T and ViT-B/16 (token maps), ImageNet weights, last stage; grounding region: the annotated diamond; aircraft box enlarged by 15\% per side, $224\times224$ pixels; batch 64; at most 20 epochs, early stopping on the validation macro head AUROC with patience 5; other settings as for photographs \\ %
Pseudo-masks & OWLv2 (base, patch 16, ensemble), boxes with score at least 0.15, up to 4 for engines and landing gear and 1 otherwise; SAM ViT-B on the kept boxes; visibility: detector confidence at least 0.20; dilation: maximum filter of radius 2 native pixels \\ %
Cross-modal axioms & photographs: AX2 Mach cap 0.62 and slope 25; AX3 band edges at $10^{4}$, $10^{5.05}$ and $10^{5.5}$~kg, slope 16 \\ %
\midrule
Baseline detectors & perceptron with two hidden layers of 64 units on the normalized fields and the mask (Adam, learning rate $10^{-3}$, weight decay $10^{-5}$, batch 256, at most 200 epochs, patience 20); Isolation Forest with 200 trees and Mahalanobis distance with Ledoit-Wolf covariance, both fitted on the clean training records \\ %
\bottomrule
\end{tabularx}
\end{table}

\providecommand{\twoci}[4]{\ensuremath{[#1,\penalty 200\ #2]\cdot\allowbreak[#3,\penalty 200\ #4]}}%
\providecommand{\oneci}[2]{\ensuremath{[#1,\penalty 200\ #2]}}%
\section{Additional Results and Figures}
\label{app:results}

\paragraph{The evaluation at a glance.}
\fwname{} is tested against 14 groups of comparators (Table~\ref{tab:comparators}), from knowledge-free detectors, residual learners and tabular foundation models to solvers, constraint losses and prompted language and vision-language models, on six benchmarks, four of them labeled independently of the generator oracle (Table~\ref{M-tab:evalmap}). Specification records carry two label families, generator and intervention labels, and are evaluated on three splits (test, unseen families and the unseen class), satellite pairs on two (test and unseen sites); each stratum of \fwname{} is trained with eight seeds (seeds 0 to 7, Table~\ref{tab:config}) and the satellite backbone grid with three seeds per backbone. Unless the text or a table note names another scheme, every interval on specification records rests on 5,000 cluster-bootstrap resamples at each of the two cluster levels (seed 20261002); the removals of Table~\ref{tab:loo} use 2,000, and the curve over the number of constraints in Appendix~\ref{app:results-ablation} resamples the subsets of each size and the seeds of each drawn subset (5,000 draws). Tables~\ref{tab:contrasts} to~\ref{tab:contrasts-4} print 168 contrasts with both intervals, and Table~\ref{tab:ablation} prints 41, of which 36 are not in Table~\ref{tab:contrasts-3}. %
Every primary diagnosis contrast of the largest-violation key keeps its reading in all 20 alternative bootstrap streams, except two on intervention labels, which keep it in 19 and 17 of the 20. So do 11 ablation contrasts, the unseen-class contrasts with half the fields hidden, also under each of 10 further bootstrap seeds of that analysis, and every primary localization and repair contrast except two on the record repair error. %

This appendix gives the complete levels behind Section~\ref{M-sec:res}, the audit tasks first: constraint verdicts, diagnosis, localization and repair, and missing evidence, then record coherence, records with images, grounding, falsification and the ablations; it ends with the selected components and the sensitivity analyses, including the generator oracles refitted without test or out-of-distribution aircraft (Appendix~\ref{app:results-refine}), and with the training checkpoints (Appendix~\ref{app:results-curves}).
Tables~\ref{tab:contrasts} to~\ref{tab:contrasts-4} give, for the comparisons of Sections~\ref{M-sec:res-diag} to~\ref{M-sec:res-missing} and~\ref{M-sec:res-falsif}, the difference with its intervals, written $[a,b]\cdot[c,d]$, [aircraft]~$\cdot$~[family] on specification records; those of Section~\ref{M-sec:res-grounding} are in Appendices~\ref{app:results-grounding} and~\ref{app:expsup}. %

\begin{table}[tbp]
  \centering
  \caption{Contrasts behind the body results: constraint diagnosis.}
  \label{tab:contrasts}
  \fontfamily{lmr}\fontsize{7.05}{8.0}\selectfont%
  \setlength{\tabcolsep}{0pt}%
  \ifdefined\tcciwidth\else\newlength{\tcciwidth}\fi
  \settowidth{\tcciwidth}{[\textminus 0.000,\,\textminus 0.000]\textperiodcentered[\textminus 0.000,\,\textminus 0.000]}%
  \begin{minipage}[t]{\dimexpr0.5\linewidth-4pt\relax}
  \begin{tabularx}{\linewidth}[t]{@{}>{\raggedright\arraybackslash}X@{\hspace{2pt}}r@{\hspace{2pt}}>{\raggedright\arraybackslash}p{\tcciwidth}@{}}
    \toprule
    Comparator & \multicolumn{2}{@{}l}{Difference and intervals} \\
    \midrule
    \multicolumn{3}{@{}>{\raggedright\arraybackslash}p{\linewidth}@{}}{\itshape Constraint diagnosis (top-1); \fwname{} returns the violated constraint from its own degrees, by the largest-violation key}\\
    \multicolumn{3}{@{}>{\raggedright\arraybackslash}p{\linewidth}@{}}{\hangindent=2pt\hangafter=0 Test}\\
    \hangindent=4pt\hangafter=0 \fwname{}'s second head & +0.002 & [\textminus 0.010,\,+0.018]\textperiodcentered\allowbreak[\textminus 0.013,\,+0.011]\\ %
    \hangindent=4pt\hangafter=0 Fellegi-Holt & +0.147 & [+0.068,\,+0.234]\textperiodcentered\allowbreak[+0.079,\,+0.298]\\ %
    \hangindent=4pt\hangafter=0 Z3 & +0.117 & [+0.040,\,+0.209]\textperiodcentered\allowbreak[+0.045,\,+0.256]\\ %
    \hangindent=4pt\hangafter=0 Rule diagnosis & +0.085 & [+0.027,\,+0.146]\textperiodcentered\allowbreak[+0.023,\,+0.150]\\ %
    \hangindent=4pt\hangafter=0 LightGBM-SHAP & +0.071 & [+0.025,\,+0.118]\textperiodcentered\allowbreak[+0.003,\,+0.098]\\ %
    \hangindent=4pt\hangafter=0 DL2 method & +0.499 & [+0.455,\,+0.539]\textperiodcentered\allowbreak[+0.428,\,+0.538]\\ %
    \hangindent=4pt\hangafter=0 Qwen3-8B\,\raisebox{2pt}{a} & +0.814 & [+0.759,\,+0.868]\textperiodcentered\allowbreak[+0.742,\,+0.875]\\ %
    \hangindent=4pt\hangafter=0 Class-standardized residual & +0.013 & [\textminus 0.020,\,+0.045]\textperiodcentered\allowbreak[\textminus 0.019,\,+0.041]\\ %
    \hangindent=4pt\hangafter=0 MLP-SHAP & +0.001 & [\textminus 0.043,\,+0.043]\textperiodcentered\allowbreak[\textminus 0.050,\,+0.028]\\ %
    \multicolumn{3}{@{}>{\raggedright\arraybackslash}p{\linewidth}@{}}{\hangindent=2pt\hangafter=0 Test, 81 records with one violated constraint}\\ %
    \hangindent=4pt\hangafter=0 Fellegi-Holt & +0.284 & [+0.186,\,+0.387]\textperiodcentered\allowbreak[+0.193,\,+0.476]\\ %
    \hangindent=4pt\hangafter=0 Z3 & +0.241 & [+0.152,\,+0.335]\textperiodcentered\allowbreak[+0.162,\,+0.398]\\ %
    \hangindent=4pt\hangafter=0 Rule diagnosis & +0.228 & [+0.132,\,+0.329]\textperiodcentered\allowbreak[+0.139,\,+0.391]\\ %
    \hangindent=4pt\hangafter=0 LightGBM-SHAP & +0.144 & [+0.070,\,+0.220]\textperiodcentered\allowbreak[+0.041,\,+0.195]\\ %
    \hangindent=4pt\hangafter=0 Class-standardized residual & +0.062 & [+0.016,\,+0.105]\textperiodcentered\allowbreak[+0.025,\,+0.104]\\ %
    \hangindent=4pt\hangafter=0 MLP-SHAP & +0.046 & [+0.005,\,+0.103]\textperiodcentered\allowbreak[+0.009,\,+0.074]\\ %
    \multicolumn{3}{@{}>{\raggedright\arraybackslash}p{\linewidth}@{}}{\hangindent=2pt\hangafter=0 Test, 140 records the crisp edits flag}\\ %
    \hangindent=4pt\hangafter=0 Fellegi-Holt & +0.083 & [\textminus 0.019,\,+0.192]\textperiodcentered\allowbreak[\textminus 0.016,\,+0.291]\\ %
    \hangindent=4pt\hangafter=0 Z3 & +0.050 & [\textminus 0.045,\,+0.162]\textperiodcentered\allowbreak[\textminus 0.041,\,+0.239]\\ %
    \hangindent=4pt\hangafter=0 Rule diagnosis & +0.014 & [\textminus 0.057,\,+0.087]\textperiodcentered\allowbreak[\textminus 0.053,\,+0.129]\\ %
    \multicolumn{3}{@{}>{\raggedright\arraybackslash}p{\linewidth}@{}}{\hangindent=2pt\hangafter=0 Unseen families}\\
    \hangindent=4pt\hangafter=0 DL2 method & +0.595 & [+0.544,\,+0.643]\textperiodcentered\allowbreak[+0.569,\,+0.656]\\ %
    \hangindent=4pt\hangafter=0 Qwen3-8B & +0.755 & [+0.691,\,+0.827]\textperiodcentered\allowbreak[+0.675,\,+0.817]\\ %
    \hangindent=4pt\hangafter=0 Class-standardized residual & +0.018 & [\textminus 0.036,\,+0.073]\textperiodcentered\allowbreak[\textminus 0.044,\,+0.100]\\ %
    \hangindent=4pt\hangafter=0 Rule diagnosis & +0.110 & [+0.024,\,+0.190]\textperiodcentered\allowbreak[0.000,\,+0.200]\\ %
    \multicolumn{3}{@{}>{\raggedright\arraybackslash}p{\linewidth}@{}}{\hangindent=2pt\hangafter=0 Unseen class}\\
    \hangindent=4pt\hangafter=0 \fwname{}'s second head & +0.119 & [+0.036,\,+0.219]\textperiodcentered\allowbreak[+0.037,\,+0.221]\\ %
    \hangindent=4pt\hangafter=0 Fellegi-Holt & +0.228 & [+0.152,\,+0.302]\textperiodcentered\allowbreak[+0.154,\,+0.310]\\ %
    \hangindent=4pt\hangafter=0 Z3 & +0.119 & [+0.083,\,+0.155]\textperiodcentered\allowbreak[+0.085,\,+0.156]\\ %
    \hangindent=4pt\hangafter=0 Rule diagnosis & +0.082 & [+0.046,\,+0.121]\textperiodcentered\allowbreak[+0.047,\,+0.124]\\ %
    \hangindent=4pt\hangafter=0 Class-standardized residual & +0.231 & [+0.151,\,+0.310]\textperiodcentered\allowbreak[+0.156,\,+0.303]\\ %
    \hangindent=4pt\hangafter=0 DL2 method & +0.599 & [+0.529,\,+0.672]\textperiodcentered\allowbreak[+0.529,\,+0.674]\\ %
    \hangindent=4pt\hangafter=0 Qwen3-8B & +0.734 & [+0.647,\,+0.816]\textperiodcentered\allowbreak[+0.646,\,+0.815]\\ %
    \hangindent=4pt\hangafter=0 LightGBM-SHAP & \textminus 0.032 & [\textminus 0.081,\,+0.017]\textperiodcentered\allowbreak[\textminus 0.081,\,+0.016]\\ %
    \hangindent=4pt\hangafter=0 MLP-SHAP & \textminus 0.073 & [\textminus 0.125,\,\textminus 0.023]\textperiodcentered\allowbreak[\textminus 0.124,\,\textminus 0.025]\\ %
    \multicolumn{3}{@{}>{\raggedright\arraybackslash}p{\linewidth}@{}}{\hangindent=2pt\hangafter=0 Unseen class, \fwname{}'s second head (its secondary key): the comparisons that the two keys read differently, a reading in plain words for either counting as different}\\ %
    \hangindent=4pt\hangafter=0 Z3 & \textminus 0.000 & [\textminus 0.085,\,+0.075]\textperiodcentered\allowbreak[\textminus 0.084,\,+0.072]\\ %
    \hangindent=4pt\hangafter=0 Rule diagnosis & \textminus 0.037 & [\textminus 0.143,\,+0.054]\textperiodcentered\allowbreak[\textminus 0.143,\,+0.051]\\ %
    \hangindent=4pt\hangafter=0 LightGBM-SHAP & \textminus 0.152 & [\textminus 0.283,\,\textminus 0.040]\textperiodcentered\allowbreak[\textminus 0.285,\,\textminus 0.043]\\ %
    \bottomrule
  \end{tabularx}
  \end{minipage}\hfill
  \begin{minipage}[t]{\dimexpr0.5\linewidth-4pt\relax}
  \begin{tabularx}{\linewidth}[t]{@{}>{\raggedright\arraybackslash}X@{\hspace{2pt}}r@{\hspace{2pt}}>{\raggedright\arraybackslash}p{\tcciwidth}@{}}
    \toprule
    Comparator & \multicolumn{2}{@{}l}{Difference and intervals} \\
    \midrule
    \multicolumn{3}{@{}>{\raggedright\arraybackslash}p{\linewidth}@{}}{\itshape Diagnosis on intervention labels (top-1), without retraining; \fwname{} returns the violated constraint by the same key}\\
    \multicolumn{3}{@{}>{\raggedright\arraybackslash}p{\linewidth}@{}}{\hangindent=2pt\hangafter=0 Test}\\
    \hangindent=4pt\hangafter=0 Fellegi-Holt & +0.180 & [+0.126,\,+0.243]\textperiodcentered\allowbreak[+0.135,\,+0.283]\\ %
    \hangindent=4pt\hangafter=0 Z3 & +0.146 & [+0.092,\,+0.214]\textperiodcentered\allowbreak[+0.090,\,+0.231]\\ %
    \hangindent=4pt\hangafter=0 Rule diagnosis & +0.132 & [+0.085,\,+0.181]\textperiodcentered\allowbreak[+0.079,\,+0.182]\\ %
    \hangindent=4pt\hangafter=0 MLP-SHAP & +0.074 & [+0.022,\,+0.119]\textperiodcentered\allowbreak[+0.005,\,+0.128]\\ %
    \hangindent=4pt\hangafter=0 Z3, records the crisp edits flag & +0.074 & [+0.001,\,+0.164]\textperiodcentered\allowbreak[+0.004,\,+0.215]\\ %
    \hangindent=4pt\hangafter=0 Rule diagnosis, same records & +0.054 & [\textminus 0.004,\,+0.115]\textperiodcentered\allowbreak[\textminus 0.000,\,+0.156]\\ %
    \hangindent=4pt\hangafter=0 LightGBM-SHAP & +0.058 & [\textminus 0.018,\,+0.129]\textperiodcentered\allowbreak[\textminus 0.066,\,+0.126]\\ %
    \hangindent=4pt\hangafter=0 Class-standardized residual & +0.096 & [+0.043,\,+0.145]\textperiodcentered\allowbreak[\textminus 0.014,\,+0.143]\\ %
    \multicolumn{3}{@{}>{\raggedright\arraybackslash}p{\linewidth}@{}}{\hangindent=2pt\hangafter=0 Test, \fwname{}'s second head (its secondary key): the comparisons that the two keys read differently, a reading in plain words for either counting as different}\\ %
    \hangindent=4pt\hangafter=0 MLP-SHAP & +0.065 & [+0.016,\,+0.108]\textperiodcentered\allowbreak[\textminus 0.002,\,+0.115]\\ %
    \hangindent=4pt\hangafter=0 Class-standardized residual & +0.087 & [+0.037,\,+0.131]\textperiodcentered\allowbreak[\textminus 0.016,\,+0.128]\\ %
    \hangindent=4pt\hangafter=0 Z3, records the crisp edits flag & +0.070 & [\textminus 0.003,\,+0.157]\textperiodcentered\allowbreak[\textminus 0.001,\,+0.209]\\ %
    \hangindent=4pt\hangafter=0 Rule diagnosis, same records & +0.050 & [\textminus 0.008,\,+0.110]\textperiodcentered\allowbreak[\textminus 0.006,\,+0.152]\\ %
    \multicolumn{3}{@{}>{\raggedright\arraybackslash}p{\linewidth}@{}}{\hangindent=2pt\hangafter=0 Unseen families}\\
    \hangindent=4pt\hangafter=0 \fwname{}'s second head & \textminus 0.019 & [\textminus 0.035,\,\textminus 0.006]\textperiodcentered\allowbreak[\textminus 0.046,\,\textminus 0.006]\\ %
    \hangindent=4pt\hangafter=0 MLP-SHAP & +0.101 & [+0.057,\,+0.150]\textperiodcentered\allowbreak[+0.057,\,+0.155]\\ %
    \hangindent=4pt\hangafter=0 LightGBM-SHAP & +0.024 & [\textminus 0.031,\,+0.097]\textperiodcentered\allowbreak[\textminus 0.033,\,+0.136]\\ %
    \hangindent=4pt\hangafter=0 Class-standardized residual & +0.068 & [\textminus 0.0032,\,+0.138]\textperiodcentered\allowbreak[+0.011,\,+0.174]\\ %
    \addlinespace[2pt]
    \multicolumn{3}{@{}>{\raggedright\arraybackslash}p{\linewidth}@{}}{\itshape Rank-then-explain, flagged and correctly diagnosed (5 seed pairs); \fwname{} returns the violated constraint behind the residual MLP's alarm}\\ %
    \multicolumn{3}{@{}>{\raggedright\arraybackslash}p{\linewidth}@{}}{\hangindent=2pt\hangafter=0 Test, against \fwname{}'s verdict with its own alarm and the same key (a declared sensitivity)}\\
    \hangindent=4pt\hangafter=0 Flagged and diagnosed & \textminus 0.028 & [\textminus 0.052,\,\textminus 0.006]\textperiodcentered\allowbreak[\textminus 0.064,\,\textminus 0.013]\\ %
    \hangindent=4pt\hangafter=0 False alarms, 32 controls & \textminus 0.094 & [\textminus 0.2125,\,\textminus 0.0125]\textperiodcentered\allowbreak[\textminus 0.256,\,0.000]\\ %
    \multicolumn{3}{@{}>{\raggedright\arraybackslash}p{\linewidth}@{}}{\hangindent=2pt\hangafter=0 Comparators behind the same alarm}\\
    \hangindent=4pt\hangafter=0 Fellegi-Holt, test & +0.132 & [+0.060,\,+0.211]\textperiodcentered\allowbreak[+0.066,\,+0.274]\\ %
    \hangindent=4pt\hangafter=0 Z3, test & +0.101 & [+0.031,\,+0.190]\textperiodcentered\allowbreak[+0.028,\,+0.231]\\ %
    \hangindent=4pt\hangafter=0 MLP-SHAP, test & \textminus 0.009 & [\textminus 0.047,\,+0.026]\textperiodcentered\allowbreak[\textminus 0.061,\,+0.021]\\ %
    \hangindent=4pt\hangafter=0 MLP-SHAP, unseen class & \textminus 0.041 & [\textminus 0.098,\,+0.013]\textperiodcentered\allowbreak[\textminus 0.098,\,+0.009]\\ %
    \bottomrule
  \end{tabularx}
  \end{minipage}
  \par\smallskip
  \begin{minipage}{\linewidth}\raggedright
    Difference: \fwname{}, in the setting named by the headings above the row, minus the comparator named by the row or by its heading (rank-then-explain minus the comparator in its block); for false alarms, repair errors and evasion, where lower is better, a negative difference favors that setting. Intervals: 95\% cluster-bootstrap percentiles at each of the two cluster levels, [aircraft]\textperiodcentered[family] on specification records, [annotated aircraft]\textperiodcentered[site] on satellite pairs and [image]\textperiodcentered[variant] on photographs. Generator-oracle labels unless intervention labels are named; native NaN: TabPFN v2 with native missing values.
    \par\raisebox{2pt}{a}\,Qwen3-8B: the language model given the constraints in words, on anonymized records.
  \end{minipage}
\end{table}

\begin{table}[tbp]
  \centering
  \caption{Contrasts behind the body results: field localization, repair and missing evidence.}
  \label{tab:contrasts-2}
  \fontfamily{lmr}\fontsize{7.05}{8.0}\selectfont%
  \setlength{\tabcolsep}{0pt}%
  \ifdefined\tcciwidth\else\newlength{\tcciwidth}\fi
  \settowidth{\tcciwidth}{[\textminus 0.000,\,\textminus 0.000]\textperiodcentered[\textminus 0.000,\,\textminus 0.000]}%
  \begin{minipage}[t]{\dimexpr0.5\linewidth-4pt\relax}
  \begin{tabularx}{\linewidth}[t]{@{}>{\raggedright\arraybackslash}X@{\hspace{2pt}}r@{\hspace{2pt}}>{\raggedright\arraybackslash}p{\tcciwidth}@{}}
    \toprule
    Comparator & \multicolumn{2}{@{}l}{Difference and intervals} \\
    \midrule
    \multicolumn{3}{@{}>{\raggedright\arraybackslash}p{\linewidth}@{}}{\itshape Field localization (exact top-1); \fwname{} returns the altered field by abduction, the single-field change that restores consistency}\\
    \multicolumn{3}{@{}>{\raggedright\arraybackslash}p{\linewidth}@{}}{\hangindent=2pt\hangafter=0 Test}\\
    \hangindent=4pt\hangafter=0 Minimum cardinality & +0.375 & [+0.286,\,+0.470]\textperiodcentered\allowbreak[+0.293,\,+0.529]\\ %
    \hangindent=4pt\hangafter=0 Ochiai ranking & +0.339 & [+0.234,\,+0.450]\textperiodcentered\allowbreak[+0.246,\,+0.413]\\ %
    \hangindent=4pt\hangafter=0 Largest class-I residual & +0.145 & [+0.065,\,+0.232]\textperiodcentered\allowbreak[+0.071,\,+0.203]\\ %
    \hangindent=4pt\hangafter=0 Same comparator, \fwname{} with the largest-feasibility rule\,\raisebox{2pt}{a} & +0.095 & [\textminus 0.006,\,+0.192]\textperiodcentered\allowbreak[\textminus 0.020,\,+0.176]\\ %
    \hangindent=4pt\hangafter=0 Oracle tie-break & +0.113 & [\textminus 0.033,\,+0.280]\textperiodcentered\allowbreak[\textminus 0.028,\,+0.454]\\ %
    \multicolumn{3}{@{}>{\raggedright\arraybackslash}p{\linewidth}@{}}{\hangindent=2pt\hangafter=0 Unseen families}\\
    \hangindent=4pt\hangafter=0 Minimum cardinality & +0.359 & [+0.241,\,+0.485]\textperiodcentered\allowbreak[+0.241,\,+0.496]\\ %
    \hangindent=4pt\hangafter=0 Ochiai ranking & +0.179 & [+0.041,\,+0.305]\textperiodcentered\allowbreak[+0.029,\,+0.302]\\ %
    \hangindent=4pt\hangafter=0 Oracle tie-break & +0.065 & [\textminus 0.125,\,+0.274]\textperiodcentered\allowbreak[\textminus 0.089,\,+0.370]\\ %
    \hangindent=4pt\hangafter=0 Largest class-I residual & +0.043 & [\textminus 0.041,\,+0.136]\textperiodcentered\allowbreak[\textminus 0.018,\,+0.141]\\ %
    \multicolumn{3}{@{}>{\raggedright\arraybackslash}p{\linewidth}@{}}{\hangindent=2pt\hangafter=0 Unseen class}\\
    \hangindent=4pt\hangafter=0 Minimum cardinality & +0.360 & [+0.254,\,+0.478]\textperiodcentered\allowbreak[+0.257,\,+0.480]\\ %
    \hangindent=4pt\hangafter=0 Largest class-I residual & +0.526 & [+0.394,\,+0.653]\textperiodcentered\allowbreak[+0.380,\,+0.646]\\ %
    \hangindent=4pt\hangafter=0 Ochiai ranking & +0.344 & [+0.220,\,+0.463]\textperiodcentered\allowbreak[+0.225,\,+0.460]\\ %
    \hangindent=4pt\hangafter=0 Oracle tie-break & +0.163 & [\textminus 0.002,\,+0.335]\textperiodcentered\allowbreak[\textminus 0.003,\,+0.351]\\ %
    \multicolumn{3}{@{}>{\raggedright\arraybackslash}p{\linewidth}@{}}{\hangindent=2pt\hangafter=0 Intervention labels, without retraining, test}\\
    \hangindent=4pt\hangafter=0 Minimum cardinality & +0.287 & [+0.220,\,+0.359]\textperiodcentered\allowbreak[+0.238,\,+0.411]\\ %
    \hangindent=4pt\hangafter=0 Same, records the crisp edits flag & +0.248 & [+0.153,\,+0.338]\textperiodcentered\allowbreak[+0.164,\,+0.406]\\ %
    \hangindent=4pt\hangafter=0 Oracle tie-break & +0.140 & [+0.041,\,+0.248]\textperiodcentered\allowbreak[+0.041,\,+0.361]\\ %
    \hangindent=4pt\hangafter=0 Largest class-I residual & +0.140 & [+0.073,\,+0.204]\textperiodcentered\allowbreak[+0.068,\,+0.200]\\ %
    \addlinespace[2pt]
    \multicolumn{3}{@{}>{\raggedright\arraybackslash}p{\linewidth}@{}}{\itshape Repair (log-space error, test); \fwname{} returns a repair proposal from its own abduction and verdict}\\
    \multicolumn{3}{@{}>{\raggedright\arraybackslash}p{\linewidth}@{}}{\hangindent=2pt\hangafter=0 Error on the altered field}\\
    \hangindent=4pt\hangafter=0 No repair & \textminus 0.233 & [\textminus 0.262,\,\textminus 0.204]\textperiodcentered\allowbreak[\textminus 0.272,\,\textminus 0.214]\\ %
    \hangindent=4pt\hangafter=0 MICE, class-I localization & +0.013 & [\textminus 0.022,\,+0.047]\textperiodcentered\allowbreak[\textminus 0.033,\,+0.044]\\ %
    \hangindent=4pt\hangafter=0 Regression, class-I localization & \textminus 0.027 & [\textminus 0.065,\,+0.008]\textperiodcentered\allowbreak[\textminus 0.089,\,+0.004]\\ %
    \hangindent=4pt\hangafter=0 missForest, class-I localization & \textminus 0.028 & [\textminus 0.068,\,+0.011]\textperiodcentered\allowbreak[\textminus 0.085,\,+0.007]\\ %
    \hangindent=4pt\hangafter=0 PiShield-style ordering, class-I localization & \textminus 0.085 & [\textminus 0.119,\,\textminus 0.049]\textperiodcentered\allowbreak[\textminus 0.117,\,\textminus 0.049]\\ %
    \multicolumn{3}{@{}>{\raggedright\arraybackslash}p{\linewidth}@{}}{\hangindent=2pt\hangafter=0 Error on the whole record, class-I localization}\\
    \hangindent=4pt\hangafter=0 Nearest neighbor & \textminus 0.106 & [\textminus 0.171,\,\textminus 0.046]\textperiodcentered\allowbreak[\textminus 0.210,\,\textminus 0.050]\\ %
    \hangindent=4pt\hangafter=0 Regression & \textminus 0.082 & [\textminus 0.142,\,\textminus 0.018]\textperiodcentered\allowbreak[\textminus 0.186,\,\textminus 0.027]\\ %
    \hangindent=4pt\hangafter=0 MICE & +0.039 & [\textminus 0.015,\,+0.099]\textperiodcentered\allowbreak[\textminus 0.013,\,+0.112]\\ %
    \hangindent=4pt\hangafter=0 missForest & \textminus 0.037 & [\textminus 0.102,\,+0.029]\textperiodcentered\allowbreak[\textminus 0.133,\,+0.021]\\ %
    \hangindent=4pt\hangafter=0 PiShield-style ordering & \textminus 0.063 & [\textminus 0.120,\,+0.001]\textperiodcentered\allowbreak[\textminus 0.128,\,\textminus 0.013]\\ %
    \multicolumn{3}{@{}>{\raggedright\arraybackslash}p{\linewidth}@{}}{\hangindent=2pt\hangafter=0 Error on the whole record, other comparators}\\
    \hangindent=4pt\hangafter=0 No repair & \textminus 0.166 & [\textminus 0.217,\,\textminus 0.107]\textperiodcentered\allowbreak[\textminus 0.215,\,\textminus 0.110]\\ %
    \hangindent=4pt\hangafter=0 Fellegi-Holt & \textminus 0.114 & [\textminus 0.176,\,\textminus 0.042]\textperiodcentered\allowbreak[\textminus 0.163,\,\textminus 0.047]\\ %
    \hangindent=4pt\hangafter=0 MaxSMT & \textminus 0.108 & [\textminus 0.170,\,\textminus 0.035]\textperiodcentered\allowbreak[\textminus 0.175,\,\textminus 0.042]\\ %
    \hangindent=4pt\hangafter=0 MICE, Fellegi-Holt localization & \textminus 0.151 & [\textminus 0.378,\,\textminus 0.001]\textperiodcentered\allowbreak[\textminus 0.497,\,+0.024]\\ %
    \bottomrule
  \end{tabularx}
  \end{minipage}\hfill
  \begin{minipage}[t]{\dimexpr0.5\linewidth-4pt\relax}
  \begin{tabularx}{\linewidth}[t]{@{}>{\raggedright\arraybackslash}X@{\hspace{2pt}}r@{\hspace{2pt}}>{\raggedright\arraybackslash}p{\tcciwidth}@{}}
    \toprule
    Comparator & \multicolumn{2}{@{}l}{Difference and intervals} \\
    \midrule
    \multicolumn{3}{@{}>{\raggedright\arraybackslash}p{\linewidth}@{}}{\itshape Fields hidden at random (AUROC); \fwname{} returns its verdict on the available constraints and lists the unchecked ones}\\
    \multicolumn{3}{@{}>{\raggedright\arraybackslash}p{\linewidth}@{}}{\hangindent=2pt\hangafter=0 Unseen class, half of the fields hidden}\\
    \hangindent=4pt\hangafter=0 Residual MLP & +0.0410 & [+0.0085,\,+0.0738]\textperiodcentered\allowbreak[+0.0074,\,+0.0734]\\ %
    \hangindent=4pt\hangafter=0 TabPFN v2, native NaN & +0.0382 & [+0.0047,\,+0.0736]\textperiodcentered\allowbreak[+0.0060,\,+0.0735]\\ %
    \hangindent=4pt\hangafter=0 LightGBM trained with masks & +0.0113 & [\textminus 0.0366,\,+0.0655]\textperiodcentered\allowbreak[\textminus 0.0387,\,+0.0704]\\ %
    \multicolumn{3}{@{}>{\raggedright\arraybackslash}p{\linewidth}@{}}{\hangindent=2pt\hangafter=0 10\% of the fields hidden}\\ %
    \hangindent=4pt\hangafter=0 MLP trained with masks, test & \textminus 0.0425 & [\textminus 0.0705,\,\textminus 0.0168]\textperiodcentered\allowbreak[\textminus 0.0627,\,\textminus 0.0038]\\ %
    \hangindent=4pt\hangafter=0 Residual MLP, unseen families & \textminus 0.0925 & [\textminus 0.1266,\,\textminus 0.0635]\textperiodcentered\allowbreak[\textminus 0.1320,\,\textminus 0.0507]\\ %
    \addlinespace[2pt]
    \multicolumn{3}{@{}>{\raggedright\arraybackslash}p{\linewidth}@{}}{\itshape Satellite images hidden or from another sensor (AUROC), test\,\raisebox{2pt}{b}; \fwname{} returns the fused verdict, the record verdict bit for bit when the image is hidden}\\
    \multicolumn{3}{@{}>{\raggedright\arraybackslash}p{\linewidth}@{}}{\hangindent=2pt\hangafter=0 30\% of the images hidden}\\ %
    \hangindent=4pt\hangafter=0 CatBoost on knowledge-base truths & \textminus 0.031 & [\textminus 0.043,\,\textminus 0.020]\textperiodcentered\allowbreak[\textminus 0.042,\,\textminus 0.015]\\ %
    \hangindent=4pt\hangafter=0 Read-out trained with modality dropout & \textminus 0.026 & [\textminus 0.036,\,\textminus 0.018]\textperiodcentered\allowbreak[\textminus 0.036,\,\textminus 0.012]\\ %
    \multicolumn{3}{@{}>{\raggedright\arraybackslash}p{\linewidth}@{}}{\hangindent=2pt\hangafter=0 Visual stratum trained on ground photographs, scoring satellite crops}\\
    \hangindent=4pt\hangafter=0 Record verdict alone & \textminus 0.071 & [\textminus 0.094,\,\textminus 0.052]\textperiodcentered\allowbreak[\textminus 0.103,\,\textminus 0.039]\\ %
    \bottomrule
  \end{tabularx}
  \end{minipage}
  \par\smallskip
  \begin{minipage}{\linewidth}\raggedright
    Difference, intervals and notation as in Table~\ref{tab:contrasts}.
    \par\raisebox{2pt}{a}\,Abduction with the largest-feasibility rule, a sensitivity not selected on validation; with the nearest threshold of the selected rule, both intervals of the difference from the class-I residual stay above zero.
    \par\raisebox{2pt}{b}\,The comparators were not retrained or retuned, and those on knowledge-base truths keep the truths of the saturating-envelope variant of the remote-sensing stratum they were fitted on (Appendix~\ref{app:results-xmodal}).
  \end{minipage}
\end{table}

\begin{table}[tbp]
  \centering
  \caption{Contrasts behind the body results: record coherence.}
  \label{tab:contrasts-3}
  \fontfamily{lmr}\fontsize{7.05}{8.0}\selectfont%
  \setlength{\tabcolsep}{0pt}%
  \ifdefined\tcciwidth\else\newlength{\tcciwidth}\fi
  \settowidth{\tcciwidth}{[\textminus 0.000,\,\textminus 0.000]\textperiodcentered[\textminus 0.000,\,\textminus 0.000]}%
  \begin{minipage}[t]{\dimexpr0.5\linewidth-4pt\relax}
  \begin{tabularx}{\linewidth}[t]{@{}>{\raggedright\arraybackslash}X@{\hspace{2pt}}r@{\hspace{2pt}}>{\raggedright\arraybackslash}p{\tcciwidth}@{}}
    \toprule
    Comparator & \multicolumn{2}{@{}l}{Difference and intervals} \\
    \midrule
    \multicolumn{3}{@{}>{\raggedright\arraybackslash}p{\linewidth}@{}}{\itshape Record coherence (AUROC); \fwname{} returns its verdict, each flag with the violated constraint, field and repair}\\
    \multicolumn{3}{@{}>{\raggedright\arraybackslash}p{\linewidth}@{}}{\hangindent=2pt\hangafter=0 Test, learners on the raw fields}\\
    \hangindent=4pt\hangafter=0 CatBoost & +0.187 & [+0.114,\,+0.259]\textperiodcentered\allowbreak[+0.094,\,+0.335]\\ %
    \hangindent=4pt\hangafter=0 TabPFN v2 & +0.197 & [+0.118,\,+0.266]\textperiodcentered\allowbreak[+0.105,\,+0.320]\\ %
    \hangindent=4pt\hangafter=0 XGBoost & +0.205 & [+0.119,\,+0.280]\textperiodcentered\allowbreak[+0.109,\,+0.334]\\ %
    \hangindent=4pt\hangafter=0 LightGBM & +0.231 & [+0.136,\,+0.315]\textperiodcentered\allowbreak[+0.119,\,+0.395]\\ %
    \multicolumn{3}{@{}>{\raggedright\arraybackslash}p{\linewidth}@{}}{\hangindent=2pt\hangafter=0 Test, other detectors}\\
    \hangindent=4pt\hangafter=0 Rules with pooling & +0.075 & [+0.039,\,+0.115]\textperiodcentered\allowbreak[+0.041,\,+0.132]\\ %
    \hangindent=4pt\hangafter=0 Train-fitted rule checker & +0.151 & [+0.101,\,+0.172]\textperiodcentered\allowbreak[+0.083,\,+0.165]\\ %
    \hangindent=4pt\hangafter=0 Mahalanobis & +0.186 & [+0.132,\,+0.241]\textperiodcentered\allowbreak[+0.136,\,+0.269]\\ %
    \hangindent=4pt\hangafter=0 DL2 method & +0.276 & [+0.223,\,+0.322]\textperiodcentered\allowbreak[+0.224,\,+0.353]\\ %
    \hangindent=4pt\hangafter=0 Qwen3-8B & +0.359 & [+0.275,\,+0.437]\textperiodcentered\allowbreak[+0.296,\,+0.427]\\ %
    \multicolumn{3}{@{}>{\raggedright\arraybackslash}p{\linewidth}@{}}{\hangindent=2pt\hangafter=0 Test, learners on the residuals}\\
    \hangindent=4pt\hangafter=0 MLP & \textminus 0.010 & [\textminus 0.022,\,+0.002]\textperiodcentered\allowbreak[\textminus 0.017,\,+0.002]\\ %
    \hangindent=4pt\hangafter=0 TabPFN v2 & +0.021 & [\textminus 0.008,\,+0.069]\textperiodcentered\allowbreak[\textminus 0.011,\,+0.037]\\ %
    \hangindent=4pt\hangafter=0 One-class SVM & +0.022 & [+0.0003,\,+0.039]\textperiodcentered\allowbreak[\textminus 0.0001,\,+0.043]\\ %
    \hangindent=4pt\hangafter=0 LightGBM & +0.023 & [+0.0003,\,+0.046]\textperiodcentered\allowbreak[\textminus 0.0004,\,+0.034]\\ %
    \multicolumn{3}{@{}>{\raggedright\arraybackslash}p{\linewidth}@{}}{\hangindent=2pt\hangafter=0 Unseen families}\\
    \hangindent=4pt\hangafter=0 Train-fitted rule checker & +0.090 & [+0.043,\,+0.121]\textperiodcentered\allowbreak[+0.014,\,+0.109]\\ %
    \hangindent=4pt\hangafter=0 DL2 method & +0.296 & [+0.244,\,+0.345]\textperiodcentered\allowbreak[+0.241,\,+0.395]\\ %
    \hangindent=4pt\hangafter=0 Qwen3-8B & +0.254 & [+0.204,\,+0.305]\textperiodcentered\allowbreak[+0.188,\,+0.332]\\ %
    \hangindent=4pt\hangafter=0 MLP & \textminus 0.008 & [\textminus 0.019,\,+0.0002]\textperiodcentered\allowbreak[\textminus 0.017,\,+0.001]\\ %
    \hangindent=4pt\hangafter=0 TabPFN v2, fields and residuals & \textminus 0.009 & [\textminus 0.021,\,+0.0001]\textperiodcentered\allowbreak[\textminus 0.019,\,+0.0002]\\ %
    \multicolumn{3}{@{}>{\raggedright\arraybackslash}p{\linewidth}@{}}{\hangindent=2pt\hangafter=0 Unseen class, other detectors}\\
    \hangindent=4pt\hangafter=0 DL2 method & +0.355 & [+0.312,\,+0.394]\textperiodcentered\allowbreak[+0.326,\,+0.383]\\ %
    \hangindent=4pt\hangafter=0 Qwen3-8B & +0.266 & [+0.181,\,+0.356]\textperiodcentered\allowbreak[+0.176,\,+0.347]\\ %
    \multicolumn{3}{@{}>{\raggedright\arraybackslash}p{\linewidth}@{}}{\hangindent=2pt\hangafter=0 Unseen class, learners on the residuals}\\
    \hangindent=4pt\hangafter=0 MLP & \textminus 0.045 & [\textminus 0.074,\,\textminus 0.005]\textperiodcentered\allowbreak[\textminus 0.075,\,\textminus 0.004]\\ %
    \hangindent=4pt\hangafter=0 TabPFN v2 & \textminus 0.050 & [\textminus 0.080,\,\textminus 0.015]\textperiodcentered\allowbreak[\textminus 0.083,\,\textminus 0.013]\\ %
    \hangindent=4pt\hangafter=0 TabPFN v2, native NaN & \textminus 0.051 & [\textminus 0.075,\,\textminus 0.021]\textperiodcentered\allowbreak[\textminus 0.077,\,\textminus 0.019]\\ %
    \hangindent=4pt\hangafter=0 TabICL & \textminus 0.046 & [\textminus 0.078,\,\textminus 0.011]\textperiodcentered\allowbreak[\textminus 0.080,\,\textminus 0.009]\\ %
    \hangindent=4pt\hangafter=0 TabPFN v2, fields and residuals & \textminus 0.044 & [\textminus 0.075,\,\textminus 0.007]\textperiodcentered\allowbreak[\textminus 0.079,\,\textminus 0.005]\\ %
    \addlinespace[2pt]
    \multicolumn{3}{@{}>{\raggedright\arraybackslash}p{\linewidth}@{}}{\itshape \fwname{} and the comparators retrained on intervention labels (AUROC), test; \fwname{} (0.807) returns its verdict with the per-constraint degrees}\\
    \hangindent=4pt\hangafter=0 Residual MLP & \textminus 0.053 & [\textminus 0.084,\,\textminus 0.028]\textperiodcentered\allowbreak[\textminus 0.105,\,\textminus 0.033]\\ %
    \hangindent=4pt\hangafter=0 TabPFN v2, native NaN\,\raisebox{2pt}{a} & \textminus 0.078 & [\textminus 0.114,\,\textminus 0.047]\textperiodcentered\allowbreak[\textminus 0.119,\,\textminus 0.062]\\ %
    \hangindent=4pt\hangafter=0 TabPFN v2, imputed\,\raisebox{2pt}{a} & \textminus 0.081 & [\textminus 0.121,\,\textminus 0.048]\textperiodcentered\allowbreak[\textminus 0.145,\,\textminus 0.060]\\ %
    \bottomrule
  \end{tabularx}
  \end{minipage}\hfill
  \begin{minipage}[t]{\dimexpr0.5\linewidth-4pt\relax}
  \begin{tabularx}{\linewidth}[t]{@{}>{\raggedright\arraybackslash}X@{\hspace{2pt}}r@{\hspace{2pt}}>{\raggedright\arraybackslash}p{\tcciwidth}@{}}
    \toprule
    Comparator & \multicolumn{2}{@{}l}{Difference and intervals} \\
    \midrule
    \multicolumn{3}{@{}>{\raggedright\arraybackslash}p{\linewidth}@{}}{\itshape False alarms on the 32 controls at a TPR of 0.90, test\,\raisebox{2pt}{b}; each alarm of \fwname{} names the constraint that raised it or, with the alert, the undeclared critical field}\\ %
    \multicolumn{3}{@{}>{\raggedright\arraybackslash}p{\linewidth}@{}}{\hangindent=2pt\hangafter=0 \fwname{}'s verdict}\\
    \hangindent=4pt\hangafter=0 Mahalanobis & \textminus 0.570 & [\textminus 0.844,\,\textminus 0.384]\textperiodcentered\allowbreak[\textminus 0.792,\,\textminus 0.385]\\ %
    \hangindent=4pt\hangafter=0 Rules with pooling & \textminus 0.244 & [\textminus 0.500,\,\textminus 0.031]\textperiodcentered\allowbreak[\textminus 0.620,\,\textminus 0.056]\\ %
    \hangindent=4pt\hangafter=0 Z3 edit count & \textminus 0.244 & [\textminus 0.491,\,\textminus 0.062]\textperiodcentered\allowbreak[\textminus 0.544,\,\textminus 0.093]\\ %
    \hangindent=4pt\hangafter=0 Fellegi-Holt & \textminus 0.244 & [\textminus 0.482,\,\textminus 0.008]\textperiodcentered\allowbreak[\textminus 0.569,\,\textminus 0.050]\\ %
    \hangindent=4pt\hangafter=0 Residual MLP & +0.078 & [0.000,\,+0.188]\textperiodcentered\allowbreak[0.000,\,+0.113]\\ %
    \multicolumn{3}{@{}>{\raggedright\arraybackslash}p{\linewidth}@{}}{\hangindent=2pt\hangafter=0 \fwname{} with the critical-field alert\,\raisebox{2pt}{c}}\\
    \hangindent=4pt\hangafter=0 \fwname{}'s verdict & +0.062 & [0.000,\,+0.188]\textperiodcentered\allowbreak[0.000,\,+0.250]\\ %
    \hangindent=4pt\hangafter=0 Mahalanobis & \textminus 0.508 & [\textminus 0.812,\,\textminus 0.306]\textperiodcentered\allowbreak[\textminus 0.767,\,\textminus 0.264]\\ %
    \hangindent=4pt\hangafter=0 Z3 edit count & \textminus 0.181 & [\textminus 0.444,\,\textminus 0.006]\textperiodcentered\allowbreak[\textminus 0.492,\,\textminus 0.009]\\ %
    \hangindent=4pt\hangafter=0 Rules with pooling & \textminus 0.181 & [\textminus 0.444,\,+0.062]\textperiodcentered\allowbreak[\textminus 0.545,\,+0.111]\\ %
    \hangindent=4pt\hangafter=0 Fellegi-Holt & \textminus 0.181 & [\textminus 0.437,\,+0.033]\textperiodcentered\allowbreak[\textminus 0.486,\,+0.008]\\ %
    \hangindent=4pt\hangafter=0 Residual MLP & +0.141 & [0.000,\,+0.301]\textperiodcentered\allowbreak[0.000,\,+0.278]\\ %
    \addlinespace[2pt]
    \multicolumn{3}{@{}>{\raggedright\arraybackslash}p{\linewidth}@{}}{\itshape Satellite pairs (AUROC)\,\raisebox{2pt}{d}; \fwname{} returns the fused verdict and names the cross-modal axiom the image contradicts}\\
    \multicolumn{3}{@{}>{\raggedright\arraybackslash}p{\linewidth}@{}}{\hangindent=2pt\hangafter=0 \fwname{}'s fused verdict, ResNet-50}\\
    \hangindent=4pt\hangafter=0 AutoGluon-MM, test & +0.165 & [+0.137,\,+0.194]\textperiodcentered\allowbreak[+0.132,\,+0.197]\\ %
    \hangindent=4pt\hangafter=0 AutoGluon-MM, unseen sites & +0.110 & [+0.090,\,+0.132]\textperiodcentered\allowbreak[+0.098,\,+0.144]\\ %
    \hangindent=4pt\hangafter=0 TIP-style, test & +0.067 & [+0.046,\,+0.086]\textperiodcentered\allowbreak[+0.055,\,+0.081]\\ %
    \hangindent=4pt\hangafter=0 TIP-style, unseen sites & +0.048 & [+0.032,\,+0.064]\textperiodcentered\allowbreak[+0.035,\,+0.086]\\ %
    \multicolumn{3}{@{}>{\raggedright\arraybackslash}p{\linewidth}@{}}{\hangindent=2pt\hangafter=0 \fwname{}'s multimodal verifier and Qwen3-VL-8B given the rules}\\
    \hangindent=4pt\hangafter=0 ConvNeXt-T, test & +0.247 & [+0.208,\,+0.286]\textperiodcentered\allowbreak[+0.201,\,+0.283]\\ %
    \hangindent=4pt\hangafter=0 ConvNeXt-T, unseen sites & +0.324 & [+0.287,\,+0.362]\textperiodcentered\allowbreak[+0.200,\,+0.356]\\ %
    \hangindent=4pt\hangafter=0 ResNet-50, test & +0.241 & [+0.202,\,+0.280]\textperiodcentered\allowbreak[+0.196,\,+0.279]\\ %
    \hangindent=4pt\hangafter=0 ViT-B/16, test & +0.242 & [+0.204,\,+0.281]\textperiodcentered\allowbreak[+0.197,\,+0.277]\\ %
    \multicolumn{3}{@{}>{\raggedright\arraybackslash}p{\linewidth}@{}}{\hangindent=2pt\hangafter=0 \fwname{}'s fused verdict, test image swaps}\\
    \hangindent=4pt\hangafter=0 Late fusion trained with swaps & \textminus 0.012 & [\textminus 0.026,\,\textminus 0.0033]\textperiodcentered\allowbreak[\textminus 0.018,\,\textminus 0.004]\\ %
    \multicolumn{3}{@{}>{\raggedright\arraybackslash}p{\linewidth}@{}}{\hangindent=2pt\hangafter=0 Record verdict alone and Qwen3-VL-8B given the rules}\\
    \hangindent=4pt\hangafter=0 Test & +0.048 & [\textminus 0.009,\,+0.103]\textperiodcentered\allowbreak[\textminus 0.009,\,+0.094]\\ %
    \addlinespace[2pt]
    \multicolumn{3}{@{}>{\raggedright\arraybackslash}p{\linewidth}@{}}{\itshape Photographs (AUROC), cross-modal stratum; \fwname{} returns the contradicted axiom with the part map of its evidence}\\
    \hangindent=4pt\hangafter=0 Qwen3-VL-8B, test & +0.025 & [\textminus 0.010,\,+0.058]\textperiodcentered\allowbreak[\textminus 0.046,\,+0.098]\\ %
    \hangindent=4pt\hangafter=0 InternVL3.5-8B, test & +0.032 & [\textminus 0.002,\,+0.067]\textperiodcentered\allowbreak[\textminus 0.033,\,+0.086]\\ %
    \hangindent=4pt\hangafter=0 Qwen3-VL-32B, test & \textminus 0.020 & [\textminus 0.055,\,+0.016]\textperiodcentered\allowbreak[\textminus 0.073,\,+0.042]\\ %
    \hangindent=4pt\hangafter=0 Qwen3-VL-32B, unseen families & \textminus 0.094 & [\textminus 0.131,\,\textminus 0.058]\textperiodcentered\allowbreak[\textminus 0.154,\,\textminus 0.056]\\ %
    \bottomrule
  \end{tabularx}
  \end{minipage}
  \par\smallskip
  \begin{minipage}{\linewidth}\raggedright
    Difference, intervals and notation as in Table~\ref{tab:contrasts}; Table~\ref{tab:ablation} gives the ablation and the same residuals in other models.
    \par\raisebox{2pt}{a}\,Its preprocessing statistics include the scored rows.
    \par\raisebox{2pt}{b}\,Ties at the threshold interpolated; counting every tie, \fwname{} with the critical-field alert differs by \textminus 0.852 from the pooled rules [\textminus 0.992,\,+0.062]\textperiodcentered[\textminus 0.955,\,+0.111], Fellegi-Holt [\textminus 0.992,\,+0.031]\textperiodcentered[\textminus 0.955,\,0.000] and the Z3 count [\textminus 0.992,\,\textminus 0.039]\textperiodcentered[\textminus 0.955,\,\textminus 0.067].
    \par\raisebox{2pt}{c}\,\fwname{}'s optional critical-field alert: a record is flagged when the verdict or the alert fires (Table~\ref{tab:spec}, note g).
    \par\raisebox{2pt}{d}\,The comparators were not retrained or retuned, and those on knowledge-base truths keep the truths of the saturating-envelope variant of the remote-sensing stratum they were fitted on (Appendix~\ref{app:results-xmodal}).
  \end{minipage}
\end{table}

\begin{table}[tbp]
  \centering
  \caption{Contrasts behind the body results: deliberate falsification.}
  \label{tab:contrasts-4}
  \fontfamily{lmr}\fontsize{7.05}{8.0}\selectfont%
  \setlength{\tabcolsep}{0pt}%
  \ifdefined\tcciwidth\else\newlength{\tcciwidth}\fi
  \settowidth{\tcciwidth}{[\textminus 0.000,\,\textminus 0.000]\textperiodcentered[\textminus 0.000,\,\textminus 0.000]}%
  \begin{minipage}[t]{\dimexpr0.5\linewidth-4pt\relax}
  \begin{tabularx}{\linewidth}[t]{@{}>{\raggedright\arraybackslash}X@{\hspace{2pt}}r@{\hspace{2pt}}>{\raggedright\arraybackslash}p{\tcciwidth}@{}}
    \toprule
    Comparator & \multicolumn{2}{@{}l}{Difference and intervals} \\
    \midrule
    \multicolumn{3}{@{}>{\raggedright\arraybackslash}p{\linewidth}@{}}{\itshape Black-box attacks (evasion); \fwname{} returns its verdict, each flag with the field found by abduction and a repair proposal}\\
    \multicolumn{3}{@{}>{\raggedright\arraybackslash}p{\linewidth}@{}}{\hangindent=2pt\hangafter=0 Cover-ups, up to five further edits}\\
    \hangindent=4pt\hangafter=0 Residual MLP & \textminus 0.064 & [\textminus 0.073,\,\textminus 0.055]\textperiodcentered\allowbreak[\textminus 0.075,\,\textminus 0.053]\\ %
    \hangindent=4pt\hangafter=0 Residual TabPFN v2 & \textminus 0.059 & [\textminus 0.071,\,\textminus 0.046]\textperiodcentered\allowbreak[\textminus 0.072,\,\textminus 0.043]\\ %
    \hangindent=4pt\hangafter=0 Residual one-class SVM & \textminus 0.062 & [\textminus 0.077,\,\textminus 0.048]\textperiodcentered\allowbreak[\textminus 0.081,\,\textminus 0.043]\\ %
    \hangindent=4pt\hangafter=0 Raw-field LightGBM & \textminus 0.065 & [\textminus 0.075,\,\textminus 0.055]\textperiodcentered\allowbreak[\textminus 0.076,\,\textminus 0.055]\\ %
    \hangindent=4pt\hangafter=0 Isolation Forest & \textminus 0.049 & [\textminus 0.064,\,\textminus 0.035]\textperiodcentered\allowbreak[\textminus 0.064,\,\textminus 0.028]\\ %
    \hangindent=4pt\hangafter=0 Rules with pooling & +0.004 & [\textminus 0.004,\,+0.013]\textperiodcentered\allowbreak[\textminus 0.005,\,+0.013]\\ %
    \hangindent=4pt\hangafter=0 Mahalanobis & +0.010 & [\textminus 0.0033,\,+0.025]\textperiodcentered\allowbreak[\textminus 0.005,\,+0.028]\\ %
    \multicolumn{3}{@{}>{\raggedright\arraybackslash}p{\linewidth}@{}}{\hangindent=2pt\hangafter=0 Bare lies}\\
    \hangindent=4pt\hangafter=0 Rules with pooling & \textminus 0.113 & [\textminus 0.143,\,\textminus 0.084]\textperiodcentered\allowbreak[\textminus 0.149,\,\textminus 0.064]\\ %
    \hangindent=4pt\hangafter=0 Raw-field LightGBM & \textminus 0.169 & [\textminus 0.208,\,\textminus 0.130]\textperiodcentered\allowbreak[\textminus 0.205,\,\textminus 0.121]\\ %
    \hangindent=4pt\hangafter=0 Residual MLP & \textminus 0.016 & [\textminus 0.052,\,+0.023]\textperiodcentered\allowbreak[\textminus 0.064,\,+0.052]\\ %
    \hangindent=4pt\hangafter=0 Residual LightGBM & +0.052 & [+0.009,\,+0.096]\textperiodcentered\allowbreak[+0.005,\,+0.115]\\ %
    \hangindent=4pt\hangafter=0 Residual TabPFN v2 & +0.105 & [+0.058,\,+0.155]\textperiodcentered\allowbreak[+0.056,\,+0.176]\\ %
    \bottomrule
  \end{tabularx}
  \end{minipage}\hfill
  \begin{minipage}[t]{\dimexpr0.5\linewidth-4pt\relax}
  \begin{tabularx}{\linewidth}[t]{@{}>{\raggedright\arraybackslash}X@{\hspace{2pt}}r@{\hspace{2pt}}>{\raggedright\arraybackslash}p{\tcciwidth}@{}}
    \toprule
    Comparator & \multicolumn{2}{@{}l}{Difference and intervals} \\
    \midrule
    \multicolumn{3}{@{}>{\raggedright\arraybackslash}p{\linewidth}@{}}{\itshape Omissions of up to two fields (evasion); \fwname{} returns its verdict and lists each undeclared field with the constraints it leaves unchecked}\\
    \hangindent=4pt\hangafter=0 Residual MLP & +0.026 & [+0.014,\,+0.039]\textperiodcentered\allowbreak[+0.012,\,+0.044]\\ %
    \hangindent=4pt\hangafter=0 Residual one-class SVM & +0.143 & [+0.103,\,+0.184]\textperiodcentered\allowbreak[+0.103,\,+0.200]\\ %
    \hangindent=4pt\hangafter=0 Mahalanobis & +0.119 & [+0.071,\,+0.178]\textperiodcentered\allowbreak[+0.069,\,+0.196]\\ %
    \addlinespace[2pt]
    \multicolumn{3}{@{}>{\raggedright\arraybackslash}p{\linewidth}@{}}{\itshape Omissions of up to two fields, attacker aware of the alert, \fwname{} with the critical-field alert\,\raisebox{2pt}{a}; each flag names the undeclared critical field or the violated constraint}\\
    \multicolumn{3}{@{}>{\raggedright\arraybackslash}p{\linewidth}@{}}{\hangindent=2pt\hangafter=0 Comparators with the same alert, all splits}\\
    \hangindent=4pt\hangafter=0 Residual MLP & \textminus 0.027 & [\textminus 0.064,\,+0.014]\textperiodcentered\allowbreak[\textminus 0.073,\,+0.046]\\ %
    \hangindent=4pt\hangafter=0 Residual one-class SVM & +0.019 & [\textminus 0.035,\,+0.073]\textperiodcentered\allowbreak[\textminus 0.058,\,+0.104]\\ %
    \multicolumn{3}{@{}>{\raggedright\arraybackslash}p{\linewidth}@{}}{\hangindent=2pt\hangafter=0 Residual MLP with the same alert, per split}\\
    \hangindent=4pt\hangafter=0 Test & \textminus 0.077 & [\textminus 0.117,\,\textminus 0.029]\textperiodcentered\allowbreak[\textminus 0.119,\,+0.016]\\ %
    \hangindent=4pt\hangafter=0 Unseen families & \textminus 0.074 & [\textminus 0.103,\,\textminus 0.043]\textperiodcentered\allowbreak[\textminus 0.099,\,\textminus 0.008]\\ %
    \hangindent=4pt\hangafter=0 Unseen class & +0.125 & [+0.060,\,+0.197]\textperiodcentered\allowbreak[+0.060,\,+0.197]\\ %
    \bottomrule
  \end{tabularx}
  \end{minipage}
  \par\smallskip
  \begin{minipage}{\linewidth}\raggedright
    Difference, intervals and notation as in Table~\ref{tab:contrasts}.
    \par\raisebox{2pt}{a}\,\fwname{}'s optional critical-field alert: a record is flagged when the verdict or the alert fires (Table~\ref{tab:spec}, note g).
  \end{minipage}
\end{table}

\subsection{Constraint Verdicts}
\label{app:results-verdicts}

\fwname{}'s constraint verdicts reach a macro-F1 of 0.795 on generator-label test and, without retraining, 0.575 on intervention-label test (Table~\ref{tab:verdicts}), from a rule fixed by the semantics (a degree above $\ln 2$) with no tuned parameter; comparator thresholds were chosen on validation rows only. %
Against the network trained with the DL2 loss on the same residuals and per-constraint targets, their macro-F1 is higher in all six split and label-family cells, with every interval above zero: on test by $+0.134$ \twoci{+0.086}{+0.242}{+0.099}{+0.264} with generator and $+0.149$ \twoci{+0.090}{+0.268}{+0.095}{+0.276} with intervention labels, on unseen families by $+0.243$ and $+0.250$ and on the unseen class by $+0.164$ and $+0.121$. In all six cells they also find every labeled violation of a corrupted record more often (on generator-label test 0.934 against 0.522, $+0.413$ \twoci{+0.367}{+0.460}{+0.371}{+0.510}) and reach a higher micro recall (0.959 against 0.635 on generator-label test). Their micro precision is on par on test and lower on the unseen class (0.458 against 0.577 on generator and 0.517 against 0.639 on intervention labels), and their exact-set agreement is higher on unseen families (0.721 against 0.508 and 0.373 against 0.221) and on par on test and the unseen class. Over the 8 seeds, \fwname{}'s generator-label test macro-F1 spans 0.767 to 0.808 (sd 0.013) and its exact-set agreement 0.446 to 0.549, the network's macro-F1 0.442 to 0.838; the intervals average the seeds. The network reports fewer false violations on clean-labeled and control records on test and on the unseen class (0.020 against 0.074 on generator-label test, 0.041 against 0.125 on the unseen class), and the crisp edits are on par in macro-F1 on generator-label test (exact-set agreement, macro-F1 and false-violation rate 0.569, 0.829 and 0.036 against 0.504, 0.795 and 0.074; macro-F1 $-0.034$ \twoci{-0.114}{+0.075}{-0.132}{+0.064}) and below the constraint verdicts in macro-F1 on intervention-label unseen families (0.569 against 0.694, $+0.125$ \twoci{+0.053}{+0.187}{+0.035}{+0.186}); beyond the verdicts, \fwname{} also returns the altered field and a repair from the same degrees. %

\begin{table}[tbp]
  \centering
  \caption{Constraint verdicts against per-constraint labels on test.}
  \label{tab:verdicts}
  \scriptsize
  \setlength{\tabcolsep}{4pt}
  \begin{threeparttable}
  \begin{tabular}{@{}>{\raggedright\arraybackslash}p{8.2cm}cc@{}}
    \toprule
    Reading & Generator-oracle labels & Intervention labels \\
    \midrule
    \multicolumn{3}{@{}l}{\textit{Differentiable logic: a probability per constraint}} \\
    Network with the DL2 loss on the same residuals & 0.474/0.661/0.020 & 0.222/0.426/0.019 \\
    \quad same network, thresholds tuned on validation & 0.436/0.746/0.056 & 0.218/0.541/0.056 \\
    DL2 method on the declared fields & 0.106/0.425/0.140 & 0.048/0.259/0.141 \\
    \quad same method, thresholds tuned on validation & 0.053/0.484/0.216 & 0.015/0.319/0.214 \\
    \multicolumn{3}{@{}l}{\textit{Crisp rules and edits: a crisp violated set}} \\
    Crisp edits (Fellegi-Holt, Z3), small classes pooled & 0.569/0.829/0.036 & 0.212/0.505/0.036 \\
    Rule checker, per-class bands, small classes not pooled & 0.103/0.691/0.243 & 0.049/0.494/0.252 \\
    \multicolumn{3}{@{}l}{\textit{Residual statistics: a z-score per residual}} \\
    Class-standardized residual, threshold on specification validation & 0.446/0.714/0.069 & 0.194/0.480/0.078 \\
    \quad same residual, threshold on intervention validation & - & 0.138/0.560/0.174 \\
    \multicolumn{3}{@{}l}{\textit{Knowledge in words: a judgment with the constraints it lists}} \\
    Language model given the constraints in words & 0.113/0.153/0.140 & - \\
    \quad same model, read only when it calls a record incoherent & 0.235/0.065/0.000 & - \\
    \midrule
    \textbf{\fwname{}}, graded degree and held or violated verdict per constraint & \textbf{0.504/0.795/0.074} & \textbf{0.283/0.575/0.084} \\
    \bottomrule
  \end{tabular}
  \begin{tablenotes}[flushleft]\scriptsize
    \item[] Test split. Generator-oracle labels follow the axioms' relations and were the per-constraint training targets; intervention labels, set from the edit alone, are independent of them. Each cell: exact-set agreement / macro-F1 / false-violation rate (lower is better); - not measured. Intervention labels mark every available constraint whose scope contains a field moved beyond tolerance; their false-violation rate is computed on unaltered records and their noisy-but-coherent copies; on records edited only within tolerance, which these labels also count as unaltered, it is 0.119 for \fwname{}, 0.063 for the DL2-loss network and 0.087 for the crisp edits. The ablation trained with the binary record label only, without per-constraint targets, reaches 0.081/0.634/0.194 with generator-oracle labels (Section~\ref{M-sec:res-knowledge}). %
  \end{tablenotes}
  \end{threeparttable}
\end{table}

\begin{table}[tbp]
  \centering
  \caption{Diagnosis, localization and repair.}
  \label{tab:diag}
  \scriptsize%
  \begin{threeparttable}
  \begin{tabular*}{\linewidth}{@{\extracolsep{\fill}}lccc@{}}
    \toprule
    \textit{Constraint diagnosis, top-1 on corrupted records} & Test & Unseen families & Unseen class \\
    \midrule
    MLP-SHAP on residuals & 0.942 & 0.895 & 0.970 \\ %
    Class-standardized residual & 0.929 & 0.955 & 0.667 \\ %
    LightGBM-SHAP on residuals & 0.871 & 0.856 & 0.930 \\ %
    Rule diagnosis, rules ranked by exceedance & 0.857 & 0.863 & 0.815 \\ %
    Z3, first unsatisfiable core & 0.825 & 0.799 & 0.778 \\ %
    Fellegi-Holt, complete violated set & 0.795 & 0.799 & 0.670 \\ %
    DL2 method, declared fields & 0.443 & 0.377 & 0.298 \\ %
    Language model, constraints in words & 0.128 & 0.218 & 0.163 \\ %
    Chance & 0.177 & 0.166 & 0.152 \\ %
    \fwname{}, second head (secondary key) & $0.940 \pm 0.005$ & 0.973 & $0.778 \pm 0.064$ \\ %
    \textbf{\fwname{} (ours), largest-violation key} & \textbf{0.942} & \textbf{0.973} & $\mathbf{0.897 \pm 0.007}$ \\ %
    \midrule
    \textit{Field localization, top-1 on single-field corruptions} & Test & Unseen families & Unseen class \\
    \midrule
    Minimum cardinality, oracle tie-break & 0.629 & 0.670 & 0.481 \\ %
    Minimum cardinality, random tie-break & 0.367 & 0.377 & 0.284 \\ %
    Minimum cardinality, worst tie-break & 0.210 & 0.205 & 0.156 \\ %
    Largest class-I residual & 0.597 & 0.693 & 0.119 \\ %
    Class-standardized residual with Ochiai & 0.403 & 0.557 & 0.300 \\ %
    Chance & 0.077 & 0.076 & 0.079 \\ %
    \textbf{\fwname{} (ours), abduction} & $\mathbf{0.742 \pm 0.011}$ & $\mathbf{0.736 \pm 0.010}$ & $\mathbf{0.644 \pm 0.014}$ \\ %
    \midrule
    \textit{Repair on test records, lower is better} & Altered field & Record & Clean records changed \\
    \midrule
    MICE, class-I localization & 0.120 & 0.161 & 0.917 \\ %
    MICE, class-I localization and detection gate & 0.155 & 0.162 & 0.271 \\ %
    Class-I regression, class-I localization & 0.160 & 0.282 & 1.000 \\ %
    missForest, class-I localization & 0.161 & 0.238 & 1.000 \\ %
    MICE, Fellegi-Holt localization & 0.211 & 0.352 & 0.312 \\ %
    Class-I PiShield-style ordering & 0.218 & 0.264 & 0.250 \\ %
    Fellegi-Holt, all optimal solutions & 0.241 & 0.314 & 0.250 \\ %
    No repair & 0.366 & 0.366 & 0.000 \\ %
    \textbf{\fwname{} (ours), repair proposal} & $\mathbf{0.133 \pm 0.002}$ & $\mathbf{0.200 \pm 0.002}$ & $\mathbf{0.562 \pm 0.033}$ \\ %
    \midrule
    \textit{Flagged and correctly diagnosed, corrupted records} & Test & Unseen families & Unseen class \\
    \midrule
    \multicolumn{4}{@{}l}{Residual-MLP alarm, explained by (with \fwname{}'s key: its optional rank-then-explain setting)} \\
    \hspace{1em}MLP-SHAP & 0.910 & 0.858 & 0.890 \\ %
    \hspace{1em}Class-standardized residual & 0.895 & 0.878 & 0.661 \\ %
    \hspace{1em}Rule diagnosis, rules ranked by exceedance & 0.832 & 0.806 & 0.791 \\ %
    \hspace{1em}Z3 & 0.800 & 0.743 & 0.759 \\ %
    \hspace{1em}Fellegi-Holt & 0.770 & 0.738 & 0.648 \\ %
    \hspace{1em}\fwname{}, second head (secondary key) & $0.900 \pm 0.011$ & $0.875 \pm 0.015$ & $0.732 \pm 0.053$ \\ %
    \hspace{1em}\textbf{\fwname{} (ours), largest-violation key} & $\mathbf{0.901 \pm 0.013}$ & $\mathbf{0.875 \pm 0.015}$ & $\mathbf{0.849 \pm 0.043}$ \\ %
    \addlinespace[3pt]
    \multicolumn{4}{@{}l}{\fwname{}'s own alarm, explained by its largest-violation key} \\
    \hspace{1em}\fwname{} with the critical-field alert & 0.929 & $0.925 \pm 0.016$ & $0.871 \pm 0.007$ \\ %
    \hspace{1em}\textbf{\fwname{}'s verdict (ours)} & \textbf{0.929} & $\mathbf{0.925 \pm 0.016}$ & $\mathbf{0.864 \pm 0.008}$ \\ %
    \bottomrule
  \end{tabular*}
  \begin{tablenotes}\scriptsize
    \item Generator-oracle labels, train prior; comparators as in Table~\ref{tab:comparators}; top-1 rates count tied candidates at their expected value under random tie-breaking. Rows of \fwname{}: mean $\pm$ sd over 8 seeds (in the last block 5 seed pairs, the seeds 0 to 4 for which MLP-SHAP was computed), without $\pm$ where every seed agrees; comparator rows: mean over their seeds. Every row of \fwname{} reads the same degrees, which also give the held or violated verdict per constraint and the report. %
    \item Largest-violation key: the available constraint with the largest violation degree, selected on held-out training classes. Second head: the monotone read-out on the degrees, kept as the secondary key. SHAP: grouped exact Shapley values of the residual model, one group per constraint. Oracle and worst tie-break: ties among the optimal localizations broken toward and away from the altered field. Repair errors, on single-field corruptions: log-space error of the altered field, plus the change of the correct fields for the record; clean records changed: share of the 48 clean and control test records. Class-I PiShield-style ordering: sequential projection onto the crisp constraints, fields in ascending order of their class-I residual. Detection gate: imputation only on the records that the class-I residual flags, a residual outside its 95\% prediction interval. Residual-MLP alarm and own alarm: the residual MLP or the verdict at its validation threshold; with the alert, the verdict or \fwname{}'s optional critical-field alert. %
  \end{tablenotes}
  \end{threeparttable}
\end{table}

\subsection{Diagnosis, Localization and Repair}
\label{app:results-diag}

From the same degrees, \fwname{}'s largest-violation key names the violated constraint in 0.942 of corrupted test records and in $0.897 \pm 0.007$ on the unseen class, abduction finds the altered field in $0.742 \pm 0.011$ of single-field test corruptions, and the repair proposal lowers the log-space error on the altered field from 0.366 to 0.133 (Table~\ref{tab:diag}, the generator-label levels behind Figure~\ref{M-fig:diagnosis}b and c; Table~\ref{tab:verdicts} gives the test levels of panel a). %
On unseen families the key, 0.973 on each seed, is above Fellegi-Holt ($+0.174$ \twoci{+0.051}{+0.304}{+0.074}{+0.365}), Z3 ($+0.174$ \twoci{+0.047}{+0.311}{+0.101}{+0.286}), MLP-SHAP ($+0.077$ \twoci{+0.020}{+0.143}{+0.018}{+0.189}) and LightGBM-SHAP ($+0.117$ \twoci{+0.052}{+0.184}{+0.055}{+0.163}) and on par with the class-standardized residual ($+0.018$); these unseen-family contrasts were not checked across the 20 alternative bootstrap streams. %
Class-I localization names the field of the largest residual of the Class I statistical sizing regressions on maximum take-off mass fitted on the training aircraft (the Class I sizing-regression residual of Section~\ref{M-sec:res-local}). %
MLP-SHAP, the attribution of a second model on the residuals, ranks the violated constraint first more often on the unseen class (0.970), and MICE at the field of the class-I residual is on par with the repair on both errors (0.120 and 0.161); \fwname{}'s key, abduction and repair come with a held or violated verdict per constraint and the report. %
With 8 seed pairs instead of 5, the optional rank-then-explain setting stays above Fellegi-Holt ($+0.130$) and Z3 ($+0.100$) behind the same alarm on test, on par with the class-standardized residual and below \fwname{}'s verdict with its own alarm and the same key ($-0.028$); strict abstention, which gives no diagnosis when no available degree exceeds $\ln 2$, withdraws the diagnosis of 3 to 5 records per seed, none of them correctly diagnosed. %

\subsection{Missing Evidence}
\label{app:results-missing}

Masked quantification makes every verdict of \fwname{} a function of the available evidence alone: Figure~\ref{M-fig:missing} in Section~\ref{M-sec:res-missing} shows the exact reduction on specification records and satellite pairs and the unseen class with half the fields hidden, where the masked verdict (0.577) is ahead of the residual MLP and TabPFN v2 with native missing values and on par with LightGBM trained with masks (Table~\ref{tab:contrasts-2}). With 10\% of the fields hidden in distribution, the MLP trained with masks ranks test records higher ($-0.0425$) and the residual MLP unseen-family records ($-0.0925$); both read every field, imputed or masked in training, while the masked verdict reads the declared evidence alone and states which constraints remain checkable.
With no field hidden beyond those the records already leave undeclared, the masked verdict is ahead of MICE imputation of every missing field at scoring, by $+0.051$ \twoci{+0.031}{+0.063}{+0.031}{+0.062} on test and $+0.035$ \twoci{+0.011}{+0.062}{+0.012}{+0.064} on unseen families, and ahead of global means at scoring on every split ($+0.147$, $+0.121$ and $+0.164$). With fields hidden on unseen families, MICE at scoring ranks higher, with differences of $-0.055$ \twoci{-0.081}{-0.029}{-0.088}{-0.030}, $-0.103$ \twoci{-0.130}{-0.078}{-0.139}{-0.084} and $-0.113$ \twoci{-0.140}{-0.087}{-0.155}{-0.096} with 10, 20 and 30\% of the fields hidden; MICE fills the values that the masked verdict leaves unread, and the report lists the constraints that remain checkable. With MICE imputation enabled, an optional setting of \fwname{} in place of exact masking (Section~\ref{M-sec:res-missing}), the verdict is on par with the MLP trained with masks given the same imputer, $+0.0089$ \twoci{-0.0039}{+0.0230}{-0.0037}{+0.0349} on test with 10\% of the fields hidden. %

\subsection{Record Coherence}
\label{app:results-detect}

\fwname{}'s record verdict reaches a test AUROC of $0.979 \pm 0.002$ on generator labels, from 0.937 on cruise to 1.000 on geometric corruptions by corruption type, each type scored against the 48 clean-labeled test records (16 clean records and their 32 noisy-but-coherent copies; means of 8 seeds), and each of its flags names the violated constraint, the altered field and a repair proposal. %
Table~\ref{tab:spec} groups the detectors of Section~\ref{M-sec:res-detect} by the knowledge they receive, with what each returns. On test the record verdict is on par with TabPFN v2 and close to the residual MLP, LightGBM and the one-class SVM on the residuals and ahead of the learners on the raw fields, the rule checkers, the Mahalanobis distance, the DL2 method and the language model (Table~\ref{tab:contrasts-3}). On the unseen class five learners on the residuals rank higher in record-level detection (residual MLP 0.978, TabPFN v2 up to 0.984); there, with \fwname{}'s optional rank-then-explain setting, the residual MLP raises the alarm and \fwname{} explains it with the violated constraint, the altered field and a repair, flagging and correctly diagnosing $0.849 \pm 0.043$ of the corrupted unseen-class records (Table~\ref{tab:diag}). %

Against intervention labels the record verdict reaches 0.701, 0.750 and 0.719 on test, unseen families and the unseen class, without retraining, while the generator oracle itself, with its bands on all 107 aircraft, reaches 0.676, 0.709 and 0.714 on the same labels. These labels count as incoherent the 613 altered rows that break no constraint under those bands, and as coherent the edits within tolerance, some of which do break one. With only the clean records and their noisy copies as negatives, the verdict reaches 0.840, 0.847 and 0.811. On the altered rows that break no constraint, scored against all clean rows, it stays below chance (0.394, 0.449 and 0.439), as the comparators that use the physics do on the pooled held-out splits. %

\begin{table}[tbp]
  \centering
  \caption{Detection.}
  \label{tab:spec}\vspace{3pt}
  \scriptsize%
  \renewcommand{\TPTtagStyle}[1]{\smash{\font\TPTmarkfont=ecrm0700 at 7.2pt\TPTmarkfont #1}}%
  \begin{threeparttable}
  \begin{tabular}{@{}lcccc@{}}
    \toprule
    & \multicolumn{3}{c}{AUROC} & \\
    \cmidrule(lr){2-4}
    Detector & Test & Unseen families & Unseen class & FPR, controls \\
    \midrule
    \multicolumn{5}{@{}l}{\textit{Learners on the physical residuals: score; the network with the DL2 loss also a probability per constraint}} \\
    MLP & $0.990 \pm 0.003$ & $0.999 \pm 0.001$ & $0.978 \pm 0.011$ & 0.008 \\ %
    TabPFN v2 & $0.958 \pm 0.006$ & $0.999 \pm 0.001$ & $0.983 \pm 0.006$ & - \\ %
    TabPFN v2, native missing values & $0.957 \pm 0.009$ & $0.999 \pm 0.001$ & $0.984 \pm 0.006$ & - \\ %
    TabPFN v2, raw fields and residuals\tnote{a} & $0.965 \pm 0.007$ & $1.000 \pm 0.000$ & $0.978 \pm 0.006$ & - \\ %
    TabICL & $0.953 \pm 0.003$ & $0.996 \pm 0.001$ & $0.979 \pm 0.005$ & - \\ %
    LightGBM & $0.957 \pm 0.004$ & $0.993 \pm 0.002$ & $0.904 \pm 0.006$ & - \\ %
    XGBoost, CatBoost and logistic regression & 0.893-0.958 & 0.988-0.992 & 0.876-0.930 & - \\ %
    Learners on raw fields and residuals\tnote{a} & 0.897-0.955 & 0.975-0.997 & 0.881-0.905 & - \\ %
    Network with the DL2 loss & $0.965 \pm 0.021$ & - & $0.921 \pm 0.018$ & - \\ %
    One-class SVM, label-free & 0.958 & 0.969 & 0.901 & - \\ %
    Local outlier factor and autoencoder, label-free & 0.893-0.940 & 0.919-0.952 & 0.462-0.917 & - \\ %
    \midrule
    \multicolumn{5}{@{}l}{\textit{Crisp rules on the knowledge base: score and crisp violated set}} \\
    Rules with hierarchical pooling\tnote{c} & 0.904 & 0.933 & 0.924\tnote{b} & 0.330\tnote{d} \\ %
    Z3 violated-edit count\tnote{c} & 0.867 & 0.868 & 0.825 & 0.330\tnote{d} \\ %
    Fellegi-Holt\tnote{c} & 0.856 & 0.867 & 0.757 & 0.330\tnote{d} \\ %
    Rule checker, train-fitted bands & 0.828 & 0.901 & 0.924\tnote{b} & - \\ %
    \midrule
    \multicolumn{5}{@{}l}{\textit{Knowledge in other forms: score with a probability per constraint (DL2 method) or a judgment (language model)}} \\
    DL2 method, declared fields & $0.703 \pm 0.030$ & - & $0.579 \pm 0.027$ & - \\ %
    Language model (Qwen3-8B), constraints in words & 0.620 & 0.737 & 0.667 & - \\ %
    \midrule
    \multicolumn{5}{@{}l}{\textit{Knowledge-free: error score}} \\
    Mahalanobis distance & 0.793 & 0.739 & 0.616 & 0.656 \\ %
    Learners on raw fields\tnote{a} & 0.748-0.792 & 0.834-0.863 & 0.505-0.641 & - \\ %
    MLP on normalized fields & $0.711 \pm 0.040$ & $0.713 \pm 0.037$ & $0.537 \pm 0.021$ & - \\ %
    Isolation Forest & $0.576 \pm 0.022$ & $0.542 \pm 0.016$ & $0.525 \pm 0.028$ & - \\ %
    \midrule
    \multicolumn{5}{@{}l}{\textit{\fwname{}, Logic Tensor Network: score, verdict per constraint, key, altered field, repair and report}} \\
    Saturating-envelope ablation\tnote{e} & $0.821 \pm 0.008$ & $0.883 \pm 0.005$ & $0.898 \pm 0.004$ & 0.363 \\ %
    Textbook prior (sensitivity)\tnote{f} & $0.977 \pm 0.003$ & $0.988 \pm 0.001$ & $0.929 \pm 0.002$ & 0.062 \\ %
    \fwname{}, second head & $0.976 \pm 0.002$ & $0.993 \pm 0.002$ & $0.822 \pm 0.043$ & - \\ %
    \fwname{} with the critical-field alert\tnote{g} & - & - & - & 0.148 \\ %
    \textbf{\fwname{}'s verdict (ours)} & $\mathbf{0.979 \pm 0.002}$ & $\mathbf{0.991 \pm 0.002}$ & $\mathbf{0.934 \pm 0.001}$ & \textbf{0.086} \\ %
    \bottomrule
  \end{tabular}
  \begin{tablenotes}\scriptsize
    \item AUROC on generator-oracle labels, mean $\pm$ sd over seeds where a method has several; a range gives the lowest and highest mean of the detectors of its row. FPR, controls: false-positive rate on the 32 noisy-but-coherent test controls at a true-positive rate of 0.90, mean over seeds. A cell marked - is not reported; in the alert row, the AUROC is the verdict's (note g). %
    \item[a] Raw fields: the declared fields with the availability mask and one-hot categories. The learners on raw fields are LightGBM, XGBoost, CatBoost and TabPFN v2; the learners on raw fields and residuals are LightGBM, XGBoost and CatBoost. %
    \item[b] Band-width exceedances. With log-ratio exceedances the rule checker scores 0.748 and the rules with hierarchical pooling 0.720; validation holds no business jet to choose between the two normalizations. %
    \item[c] Pooling candidates designed after a diagnosis on test; validation supports them independently. Fellegi-Holt (fewest fields to change, ties broken by the smallest change in log space) and the Z3 count (number of violated edits) use the same pooled bands; their scores do not depend on the exceedance normalization of note b. %
    \item[d] Every record that no edit flags scores 0, and these ties straddle the threshold: the printed value interpolates them, and it is 1.000 when every tie counts as a flag; every other row has the same value under both conventions. %
    \item[e] Clamped envelope truths in place of the graded predicates that carry the residual magnitude. %
    \item[f] Handbook envelopes in place of the bands fitted on the training fleet; its class mapping was written after the per-split audit; values of the verdict alone, without the alert. %
    \item[g] \fwname{}'s optional critical-field alert: a record is flagged when the verdict or the alert fires. The alert leaves the score unchanged, so the AUROC columns are the verdict's. Among the genuine test records and their copies, the alert fires only on one aircraft of a training class, on its record and on its two noisy-but-coherent copies; 5 of the 6 false alerts on the 46 genuine held-out records fall on the 19 records of the unseen business-jet class, whose alert pool is the turbofan training records. %
  \end{tablenotes}
  \end{threeparttable}
\end{table}\begin{table}[tbp]
  \centering
  \caption{Ablation study.}
  \label{tab:ablation}
  \fontfamily{lmr}\fontsize{7.05}{8.0}\selectfont%
  \setlength{\tabcolsep}{4.5pt}%
  \ifdefined\ablw\else\newlength{\ablw}\newlength{\ablmk}\newlength{\ablt}\fi
  \settowidth{\ablmk}{\textperiodcentered}%
  \settowidth{\ablw}{[\textminus 0.0000347,\,\textminus 0.0000008]}%
  \settowidth{\ablt}{[\textminus 0.0050,\,+0.0035]}\ifdim\ablt>\ablw\setlength{\ablw}{\ablt}\fi
  \settowidth{\ablt}{[\textminus 0.0064,\,+0.0014]}\ifdim\ablt>\ablw\setlength{\ablw}{\ablt}\fi
  \settowidth{\ablt}{[\textminus 0.0068,\,+0.0002]}\ifdim\ablt>\ablw\setlength{\ablw}{\ablt}\fi
  \settowidth{\ablt}{0.979\quad \textminus 0.0000158\,b}\ifdim\ablt>\ablw\setlength{\ablw}{\ablt}\fi
  \settowidth{\ablt}{0.991\quad \textminus 0.0000032\,b}\ifdim\ablt>\ablw\setlength{\ablw}{\ablt}\fi
  \addtolength{\ablw}{\ablmk}%
  \newcommand*{\ablc}[5]{\makebox[\linewidth][s]{#1\hfill#2\makebox[\ablmk][r]{#3}}\newline #4\makebox[\ablmk][l]{\textperiodcentered}\newline #5\hspace*{\ablmk}}%
  \newcommand*{\ablb}[1]{\makebox[\linewidth][l]{\textbf{#1}}}%
  \newcommand*{\ablx}{\makebox[\linewidth][s]{-\hfill -\hspace*{\ablmk}}}%
  \newcommand*{\ablh}{\makebox[\linewidth][s]{AUROC\hfill Difference\hspace*{\ablmk}}}%
  \begin{threeparttable}
  \begin{tabularx}{\linewidth}{@{}>{\raggedright\arraybackslash}X*{3}{>{\raggedleft\arraybackslash}p{\ablw}}@{}}
    \toprule
     & \multicolumn{1}{c}{Test} & \multicolumn{1}{c}{Unseen families} & \multicolumn{1}{c}{Unseen class} \\
    \cmidrule(lr){2-2}\cmidrule(lr){3-3}\cmidrule(l){4-4}
    Ablation or model & \ablh & \ablh & \ablh \\
    \midrule
    \multicolumn{4}{@{}l}{\itshape Ablations of \fwname{} on specification records, generator-oracle labels, train prior; intervals [aircraft]\textperiodcentered[family]} \\
    Saturating envelopes (bounded envelope truths) & \ablc{0.821}{+0.158}{}{[+0.073,\,+0.223]}{[+0.022,\,+0.196]} & \ablc{0.883}{+0.108}{}{[+0.031,\,+0.163]}{[+0.014,\,+0.123]} & \ablc{0.898}{+0.035}{}{[+0.0031,\,+0.062]}{[+0.005,\,+0.067]} \\ %
    Generalized mean, \textit{p}\,=\,4, instead of the mean degree & \ablc{0.983}{\textminus 0.004}{}{[\textminus 0.020,\,+0.013]}{[\textminus 0.008,\,+0.004]} & \ablc{0.980}{+0.011}{}{[\textminus 0.004,\,+0.041]}{[\textminus 0.004,\,+0.065]} & \ablc{0.940}{\textminus 0.006}{}{[\textminus 0.016,\,+0.006]}{[\textminus 0.015,\,+0.006]} \\ %
    Without in-band grading & \ablc{0.970}{+0.009}{}{[+0.002,\,+0.017]}{[+0.0003,\,+0.015]} & \ablc{0.977}{+0.014}{}{[+0.0012,\,+0.039]}{[+0.0010,\,+0.052]} & \ablc{0.908}{+0.025}{}{[+0.014,\,+0.036]}{[+0.014,\,+0.036]} \\ %
    Flat banks instead of hierarchical banks & \ablc{0.965}{+0.015}{}{[+0.004,\,+0.026]}{[+0.0031,\,+0.023]} & \ablc{0.960}{+0.031}{}{[+0.006,\,+0.065]}{[+0.009,\,+0.090]} & \ablc{0.915}{+0.019}{}{[+0.008,\,+0.030]}{[+0.008,\,+0.031]} \\ %
    Binary label alone, without per-constraint targets & \ablc{0.964}{+0.015}{}{[+0.005,\,+0.026]}{[+0.0028,\,+0.023]} & \ablc{0.974}{+0.017}{}{[+0.0019,\,+0.046]}{[+0.0028,\,+0.060]} & \ablc{0.921}{+0.012}{}{[+0.0024,\,+0.023]}{[+0.0025,\,+0.024]} \\ %
    Textbook prior instead of the train prior & \ablc{0.977}{+0.002}{}{[\textminus 0.004,\,+0.009]}{[\textminus 0.009,\,+0.007]} & \ablc{0.988}{+0.003}{}{[\textminus 0.003,\,+0.016]}{[\textminus 0.0019,\,+0.010]} & \ablc{0.929}{+0.004}{}{[\textminus 0.018,\,+0.027]}{[\textminus 0.018,\,+0.025]} \\ %
    Knowledge base at its initial values, untrained & \ablc{0.907}{+0.073}{}{[+0.045,\,+0.091]}{[+0.035,\,+0.089]} & \ablc{0.930}{+0.061}{}{[+0.015,\,+0.106]}{[+0.009,\,+0.118]} & \ablc{0.845}{+0.088}{}{[+0.058,\,+0.109]}{[+0.056,\,+0.111]} \\ %
    Neural predicates, learned monotone functions & \ablc{0.984}{\textminus 0.005}{}{[\textminus 0.022,\,+0.010]}{[\textminus 0.011,\,+0.0018]} & \ablc{0.991}{+0.0004}{}{[\textminus 0.008,\,+0.012]}{[\textminus 0.0028,\,+0.017]} & \ablc{0.958}{\textminus 0.024}{}{[\textminus 0.043,\,\textminus 0.0009]}{[\textminus 0.044,\,\textminus 0.0018]} \\ %
    \addlinespace[2pt]
    \multicolumn{4}{@{}l}{\itshape Learners on the same physical residuals, generator-oracle labels; intervals [aircraft]\textperiodcentered[family]} \\
    Monotone model on the residual magnitudes, without LTN & \ablc{0.966}{+0.013}{}{[\textminus 0.0019,\,+0.029]}{[\textminus 0.007,\,+0.023]} & \ablc{0.993}{\textminus 0.002}{}{[\textminus 0.013,\,+0.006]}{[\textminus 0.011,\,+0.006]} & \ablc{0.921}{+0.012}{}{[\textminus 0.019,\,+0.042]}{[\textminus 0.023,\,+0.042]} \\ %
    Network with the DL2 loss on the residuals & \ablc{0.965}{+0.014}{}{[+0.0028,\,+0.026]}{[+0.004,\,+0.025]} & \ablx & \ablc{0.921}{+0.013}{}{[\textminus 0.010,\,+0.039]}{[\textminus 0.011,\,+0.039]} \\ %
    TabPFN v2 & \ablc{0.958}{+0.021}{}{[\textminus 0.008,\,+0.069]}{[\textminus 0.011,\,+0.037]} & \ablc{0.999}{\textminus 0.008}{}{[\textminus 0.020,\,+0.0003]}{[\textminus 0.017,\,+0.0002]} & \ablc{0.983}{\textminus 0.050}{}{[\textminus 0.080,\,\textminus 0.015]}{[\textminus 0.083,\,\textminus 0.013]} \\ %
    MLP & \ablc{0.990}{\textminus 0.010}{}{[\textminus 0.022,\,+0.002]}{[\textminus 0.017,\,+0.002]} & \ablc{0.999}{\textminus 0.0076}{}{[\textminus 0.0185,\,+0.0002]}{[\textminus 0.0170,\,+0.0014]} & \ablc{0.978}{\textminus 0.045}{}{[\textminus 0.074,\,\textminus 0.005]}{[\textminus 0.075,\,\textminus 0.004]} \\ %
    \midrule
    \textbf{\fwname{}'s verdict (ours), full model} & \ablb{0.979} & \ablb{0.991} & \ablb{0.934} \\ %
    \midrule
    \multicolumn{4}{@{}l}{\itshape Satellite pairs, ablations of \fwname{}'s fused verdict; intervals [annotated aircraft]\textperiodcentered[site]} \\
     & \multicolumn{1}{c}{Test} & \multicolumn{1}{c}{Unseen sites} & \\
    \cmidrule(lr){2-2}\cmidrule(lr){3-3}
    Single masked quantifier, 30\% of images withheld & \ablc{-}{\textminus 0.002}{}{[\textminus 0.0064,\,+0.0014]}{[\textminus 0.0050,\,+0.0035]} & \ablc{-}{\textminus 0.008}{}{[\textminus 0.0117,\,\textminus 0.0046]}{[\textminus 0.0086,\,\textminus 0.0045]} &  \\ %
    Single masked quantifier, all images & \ablc{0.995}{\textminus 0.004}{}{[\textminus 0.0097,\,\textminus 0.0002]}{[\textminus 0.0068,\,+0.0002]} & \ablc{0.992}{\textminus 0.013}{}{[\textminus 0.0201,\,\textminus 0.0073]}{[\textminus 0.0147,\,\textminus 0.0058]} &  \\ %
    Kleene-Dienes implication, $\max(1-a,b)$~\cite{vankrieken2022analyzing}, instead of Reichenbach & \ablc{0.991}{\textminus 0.0000032}{}{[\textminus 0.0000137,\,0]}{[\textminus 0.0000118,\,0]} & \ablc{0.979}{\textminus 0.0000158}{}{[\textminus 0.0000347,\,\textminus 0.0000008]}{[\textminus 0.0000338,\,0]} &  \\ %
    \midrule
    \textbf{\fwname{}'s fused verdict (ours), all images} & \ablb{0.991} & \ablb{0.979} &  \\ %
    \bottomrule
  \end{tabularx}
  \begin{tablenotes}\fontfamily{lmr}\fontsize{7.05}{8.0}\selectfont
    \item AUROC: mean over 8 seeds, one model for the knowledge base at its initial values and for the monotone model; on satellite pairs nine runs, three backbones with three seeds each; with 30\% of images withheld, differences only, from the three ResNet-50 runs with 20 random image masks each, masks that differ from those behind the 30\% levels of Section~\ref{M-sec:res-missing}. Difference: \fwname{}'s verdict, on satellite pairs its fused verdict on the same images, minus the row, so a positive difference favors \fwname{}; under it, its 95\% cluster-bootstrap intervals at the two levels of the block heading, one per line; - not reported. Values near 0 are printed with more decimals, seven in the Kleene-Dienes cells. %
    \item The ablations with \textit{p}\,=\,4, without in-band grading and with flat banks are trained models, and the textbook prior is a sensitivity. The knowledge base at its initial values starts from the train prior and is a single deterministic model. Other detectors are in Table~\ref{tab:spec}. %
  \end{tablenotes}
  \end{threeparttable}
\end{table}

\subsection{Records with Images}
\label{app:results-xmodal}

Without an image, \fwname{}'s fused verdict equals the record verdict bit for bit in 9 of 9 satellite cells, against 0 of 207 learned method and cell rows, and its second head with ConvNeXt-T (8 seeds) reaches $0.998 \pm 0.001$ on test and $0.991 \pm 0.003$ on unseen sites over all pair types. %
On the satellite pairs of Section~\ref{M-sec:res-xmodal}, the best learned comparators rank the pairs higher after learning from labeled pairs: CatBoost on knowledge-base truths on test swaps, TabPFN on visual posteriors and image embeddings on unseen-site swaps, and late fusions on knowledge-base truths over all pair types on unseen sites (nine runs of three backbones). Learned heads also rank the pairs higher in the contrasts of Tables~\ref{tab:contrasts-2} and~\ref{tab:contrasts-3}, where \fwname{}'s fused verdict differs by $-0.012$ from the late fusion trained with swaps on test swaps and by $-0.031$ from CatBoost on knowledge-base truths with 30\% of the images hidden. The fused verdict, instead, aggregates \fwname{}'s strata without labels and names the contradicted axiom; on photographs, Qwen3-VL-32B ranks unseen-family pairs higher than the cross-modal stratum ($-0.094$). %
With 30\% of satellite images hidden, the fused verdict (0.947) keeps the exact reduction; the read-out trained with modality dropout ranks the pairs at 0.973 without keeping it. When a visual stratum trained on ground photographs scores satellite crops, the record verdict alone (0.796) is the fallback across sensors without labels: the fused verdict reaches 0.724 and an out-of-domain abstention restores the record level, while the fusion selected with satellite labels reaches 0.786 and a refit trained on satellite labels, a reference with privileged access to them, reaches 0.979. %
Fitted on specification records and applied without satellite training through the shared axioms R2 to R4 (Appendix~\ref{app:data-benchmarks}), the knowledge base reaches $0.767 \pm 0.007$ on satellite test annotations and $0.764 \pm 0.008$ on unseen sites, against 0.796 and 0.812 for the remote-sensing record verdict trained on these annotations; fitted on satellite annotations and applied to specification records, it reaches $0.541 \pm 0.003$ (8 seeds, no interval); the military roles, absent from the records, keep their prior bands. %
Comparators on satellite pairs were not retrained or retuned, and those that read knowledge-base truths (CatBoost and TabPFN on knowledge-base truths, the best fusion and the fusion selected without satellite labels) keep the truths of the saturating-envelope variant of the remote-sensing stratum they were fitted on (the specification-record row of this ablation is in Table~\ref{tab:ablation}). \fwname{}'s graded remote-sensing stratum was selected on validation among eight candidate configurations; it trains with per-constraint targets that the end-to-end, fusion and vision-language comparators do not receive, and its validation split serves both early stopping and the choice of configuration. %

\subsection{Grounding}
\label{app:results-grounding}

On the FGVC pseudo-masks the grounding axiom raises test pointing over the network without a localization term on every backbone, by $+0.664$ with ConvNeXt-T, $+0.699$ with ResNet-50 at 512 pixels, $+0.615$ with ViT token-CAM and $+0.677$ with ResNet-50 at 448 pixels (8 against 3 seeds), each with an exact seed-bootstrap interval above zero. Figure~\ref{fig:partgrounding} gives its pointing levels against the controls (panel a, over its 8 seeds on FGVC-Aircraft and its 3 runs at weight 1 on PASCAL-Part; the location prior, a reference without learning, always points to the grid cell that the part covers most often in other images: the training images for FGVC-Aircraft and the other half of the images for PASCAL-Part) and its comparison with the localization and explanation-supervision methods (panel b, 3 runs per method at the weights selected on validation, where the axiom points to the part in 0.790 of test pseudo-masks). At the same loss weight the auxiliary loss puts about 4.3 times the axiom's localization weight on each conjunct (Section~\ref{M-sec:meth-vision}), so the two are compared at their weights selected on validation (Appendix~\ref{app:expsup}). %

\begin{widefigure}[tbp]
  \centering
  \includegraphics{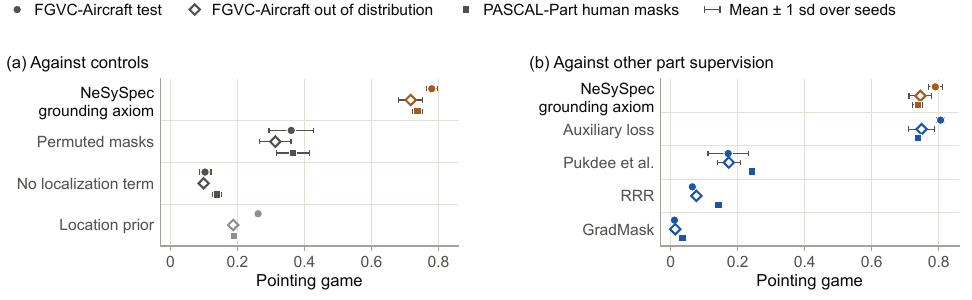}
  \caption{Grounding of visual parts.}
  \label{fig:partgrounding}
\end{widefigure}

On satellite crops (ConvNeXt-T, seed intervals over 8 paired seeds) the axiom keeps attribute and head AUROC on par with the network without a localization term. %

\subsection{Deliberate Falsification}
\label{app:results-indep}

\paragraph{Protocol.}
Each of the 736 falsification problems inflates one capacity of a genuine held-out record, the range, the cruise speed, the maximum take-off mass or the ceiling, by 5, 10, 20 or 40\% (46 records: the 16 clean test records, 11 of unseen families and 19 of the unseen class). A bare lie submits the inflated record; a cover-up also edits up to five further declared fields within the ranges of the fleet, found by a black-box search that reads only the detector's score; an omission withholds up to two declared fields other than the inflated one. A lie evades a detector when the submitted record scores below the detector's validation threshold, and evasion is the share of evading lies among the problems whose genuine record the detector accepts; the smallest number of further edits that covers a flagged lie is censored at six. %

\fwname{} with the critical-field alert, an optional setting, flags a record when the verdict or the alert fires. %
Under black-box falsification, a lie flagged by \fwname{}'s record verdict is covered less often than one flagged by the residual MLP: with up to five further edits 0.936 of cover-ups pass against 1.000 (Table~\ref{tab:contrasts-4}), and 0.247 of its flagged bare lies cannot be covered, against 0.000. %
The smallest cover of one of its flagged lies has 2.69 further edited fields on average (2.68 with the critical-field alert), against 1.01 to 1.63 for the learned detectors (on the residuals, MLP 1.38, LightGBM 1.37 and TabPFN v2 1.63) and more for the Mahalanobis distance and the pooled rules (3.45 and 3.62; point estimates, censored at six). On the 72 problems that the verdict and the residual MLP flag on every seed, 0.413 of the verdict's flagged lies cannot be covered, against 0.000, with 3.76 edits per cover against 1.58. %
On omissions of up to two fields and on bare lies some comparators are evaded less often (Table~\ref{tab:contrasts-4}: on omissions the residual MLP $+0.026$, the residual one-class SVM $+0.143$ and the Mahalanobis distance $+0.119$; on bare lies residual LightGBM $+0.052$ and residual TabPFN v2 $+0.105$). A withheld field masks the axioms that read it (Proposition~\ref{M-prop:reduction}); with the optional critical-field alert, which names the undeclared field and the constraints it leaves unchecked, \fwname{} is evaded less than the residual MLP with the same alert on unseen families ($-0.074$) and more on the unseen class ($+0.125$). %
On this fleet a rule without the knowledge base (an alert on any undeclared field that the record's class, or its engine type for classes with fewer than five training records, declares in at least 95\% of the training records, whether or not an applicable axiom reads it) gives the alert's protection with the same false alerts on the genuine records; \fwname{}'s report also names the constraints that each withheld field leaves unchecked. %

\subsection{Ablation and Other Forms of the Same Knowledge}
\label{app:results-ablation}

Training the knowledge base through Real Logic is ahead of the same knowledge base at its initial values on every split ($+0.073$, $+0.061$ and $+0.088$), and the full model is ahead of the saturating-envelope ablation and of the ablation with flat banks on every split; Table~\ref{tab:ablation} gives each ablation with its difference and both intervals, Section~\ref{M-sec:res-knowledge} states the reading, and the textbook prior, a sensitivity, is on par with the full model on test and the unseen class and close to it on unseen families. %
On satellite pairs, a single masked quantifier in place of the strata ranks unseen-site pairs higher (differences $-0.008$ with 30\% of the images withheld and $-0.013$ with all images) and is close to the fused verdict on test, while \fwname{}'s strata keep one verdict per evidence source, so the report says whether the record or the image raised the flag. %

\paragraph{The knowledge encoded.}
To measure what each constraint contributes, \fwname{} was retrained with subsets of its 12 specification constraints. A removed constraint is unavailable for every record in training and scoring, while the record labels of training stay unchanged. The record verdict was then scored on intervention labels, which no constraint defines, and on generator labels, which the same constraints define, so that a removal hides the corruptions of its constraint by construction (Figure~\ref{M-fig:knowledge} in Section~\ref{M-sec:res-knowledge}, all panels on intervention labels; Table~\ref{tab:loo}). %
Over 96 subsets of 1 to 11 constraints, all 12 of sizes 1 and 11 and 8 random ones of each size from 2 to 10, with 3 seeds each, and the full knowledge base, the intervention-label AUROC averaged over test, unseen families and the unseen class has a mean per size that rises from one size to the next, from 0.535 with one constraint to 0.723 with all twelve (Spearman 0.93 \oneci{0.90}{0.95}), by steps whose intervals include zero at 5 of the 11 steps. The returns diminish: a logarithmic trend fits better than a linear one ($R^2$ 0.87 against 0.84; on the unseen class alone the two fit equally), the gain is about 0.028 per constraint up to six and 0.008 beyond, and the twelfth still adds a small gain (0.009 \oneci{0.003}{0.015}). %
The AUROC grows linearly with the share of edits that alter a field read by some active constraint ($R^2$ 0.88, against 0.74 for a logarithmic trend), a share that goes from 0.22 with one constraint to all edits with twelve, while on the edits of fields the constraints read it stays between 0.70 and 0.75 at every size: the gain follows the coverage of the edits. The residual MLP on the same subsets gains from the same knowledge (0.540 to 0.706), with no trend in its gap to \fwname{}. %
Removing one constraint from the full knowledge base lowers the intervention-label AUROC for A5 and A2, slightly for A4, A7 and A8 and for A4 with A4P together, and leaves it on par for A1, A4P, A9 and A10 and close for A3, A6 and A11; removing A1 lowers it on the unseen class and raises it slightly on test, and no removal raises it averaged over the three splits. Stall speed, engine count and fuselage length, the fields that only A6, A9 and A11 read, are never edited alone in the intervention records, and stall speed is never edited at all; A3, A4P and A10 read only fields that other constraints also read, as do A2, A4, A7 and A8, whose removal still lowers the AUROC, slightly for A4, A7 and A8. On generator labels the removal of A1, A2, A4, A4P, A5 or A8 lowers the AUROC, that of A7 or A10 slightly, and that of A3 raises it slightly (0.001); abduction is lower without A1 (0.187), A5 (0.077), A3, A4, A4P, A8 and A11, slightly lower without A2, A7 and A10, and on par without A6 and A9. In Figure~\ref{M-fig:knowledge}c the darker line gives the aircraft interval and the lighter the family interval, with filled points where both exclude zero. %

\begin{table}[tbp]
  \centering
  \caption{Specification constraints removed from the knowledge base.}
  \label{tab:loo}
  \scriptsize
  \setlength{\tabcolsep}{3pt}
  \begin{threeparttable}
  \begin{tabular}{@{}lc>{\centering\arraybackslash}p{2.35cm}c>{\centering\arraybackslash}p{2.35cm}c>{\centering\arraybackslash}p{2.35cm}@{}}
    \toprule
    & \multicolumn{2}{c}{Detection AUROC, intervention labels} & \multicolumn{2}{c}{Detection AUROC, generator labels} & \multicolumn{2}{c}{Abduction top-1, intervention labels} \\
    \cmidrule(lr){2-3}\cmidrule(lr){4-5}\cmidrule(l){6-7}
    Removed & Drop & Intervals & Drop & Intervals & Drop & Intervals \\
    \midrule
    None (level) & 0.723 & & 0.968 & & 0.542 & \\ %
    A5 & +0.033 & [+0.016,\,+0.053]\newline [+0.017,\,+0.060] & +0.059 & [+0.042,\,+0.079]\newline [+0.043,\,+0.078] & +0.077 & [+0.054,\,+0.100]\newline [+0.049,\,+0.098] \\ %
    A2 & +0.022 & [+0.011,\,+0.034]\newline [+0.003,\,+0.035] & +0.041 & [+0.031,\,+0.053]\newline [+0.032,\,+0.056] & +0.035 & [\textminus 0.000,\,+0.071]\newline [+0.006,\,+0.075] \\ %
    A1 & +0.017 & [\textminus 0.004,\,+0.039]\newline [\textminus 0.006,\,+0.038] & +0.136 & [+0.120,\,+0.150]\newline [+0.119,\,+0.153] & +0.187 & [+0.161,\,+0.214]\newline [+0.164,\,+0.212] \\ %
    A4 and A4P & +0.014 & [+0.003,\,+0.026]\newline [+0.001,\,+0.029] & +0.032 & [+0.021,\,+0.044]\newline [+0.022,\,+0.049] & +0.052 & [+0.024,\,+0.078]\newline [+0.023,\,+0.081] \\ %
    A4 & +0.013 & [+0.004,\,+0.023]\newline [+0.003,\,+0.025] & +0.019 & [+0.011,\,+0.029]\newline [+0.009,\,+0.030] & +0.034 & [+0.015,\,+0.055]\newline [+0.011,\,+0.054] \\ %
    A8 & +0.007 & [+0.000,\,+0.014]\newline [\textminus 0.002,\,+0.015] & +0.022 & [+0.013,\,+0.032]\newline [+0.011,\,+0.030] & +0.030 & [+0.012,\,+0.050]\newline [+0.014,\,+0.052] \\ %
    A10 & +0.003 & [\textminus 0.003,\,+0.008]\newline [\textminus 0.005,\,+0.007] & +0.007 & [+0.002,\,+0.013]\newline [+0.002,\,+0.013] & +0.017 & [+0.001,\,+0.035]\newline [+0.000,\,+0.033] \\ %
    A7 & +0.002 & [+0.001,\,+0.004]\newline [+0.001,\,+0.004] & +0.002 & [+0.001,\,+0.004]\newline [+0.001,\,+0.004] & +0.011 & [+0.003,\,+0.021]\newline [+0.000,\,+0.021] \\ %
    A9 & +0.001 & [\textminus 0.003,\,+0.005]\newline [\textminus 0.004,\,+0.005] & \textminus 0.003 & [\textminus 0.008,\,+0.003]\newline [\textminus 0.008,\,+0.003] & +0.001 & [\textminus 0.006,\,+0.009]\newline [\textminus 0.008,\,+0.008] \\ %
    A6 & +0.000 & [\textminus 0.000,\,+0.001]\newline [\textminus 0.000,\,+0.001] & \textminus 0.000 & [\textminus 0.001,\,+0.000]\newline [\textminus 0.001,\,+0.000] & \textminus 0.000 & [\textminus 0.006,\,+0.005]\newline [\textminus 0.006,\,+0.005] \\ %
    A11 & \textminus 0.000 & [\textminus 0.001,\,+0.001]\newline [\textminus 0.001,\,+0.002] & \textminus 0.001 & [\textminus 0.002,\,+0.001]\newline [\textminus 0.002,\,+0.000] & +0.032 & [+0.021,\,+0.043]\newline [+0.021,\,+0.044] \\ %
    A4P & \textminus 0.001 & [\textminus 0.006,\,+0.005]\newline [\textminus 0.006,\,+0.007] & +0.017 & [+0.010,\,+0.024]\newline [+0.011,\,+0.028] & +0.024 & [+0.005,\,+0.044]\newline [+0.006,\,+0.045] \\ %
    A3 & \textminus 0.001 & [\textminus 0.002,\,+0.001]\newline [\textminus 0.002,\,+0.001] & \textminus 0.001 & [\textminus 0.003,\,\textminus 0.000]\newline [\textminus 0.002,\,\textminus 0.000] & +0.010 & [+0.004,\,+0.017]\newline [+0.004,\,+0.015] \\ %
    \bottomrule
  \end{tabular}
  \begin{tablenotes}[flushleft]\scriptsize
    \item[] Drop: the full knowledge base minus the knowledge base without the constraint, 8 seeds each, paired on the same records; detection averaged over test, unseen families and the unseen class, abduction over the 695 single-field intervention records of the three splits. Intervals: [aircraft] above [family], 95\% cluster-bootstrap percentiles, 2,000 draws. Generator labels are defined by the same constraints, so a removal hides the corruptions of its constraint by construction. %
  \end{tablenotes}
  \end{threeparttable}
\end{table}

\subsection{Selected Components and Sensitivity Analyses}
\label{app:results-refine}

\paragraph{Selected components.}
The diagnosis key was selected on held-out training classes: with each of the six training classes left out in turn, the verifier retrained and every statistic, prior and head recomputed without it, the largest-violation key ranked a violated constraint first in 0.802 of the held-out corrupted records (mean over the six classes), against 0.630 for the second head, with a higher mean on each held-out class (differences from 0.006 to 0.432); on validation records of the seen classes the two keys differ by 0.0008, and the second head is kept as the secondary key. %
\fwname{}'s optional critical-field alert was selected because, on validation, it lowered evasion by an attacker who knows the alert from 0.949 to 0.739 and fired on 2 of the 17 genuine validation records, and because, with each training class held out in turn, it fired on 0.028 of that class's genuine records (mean over six classes; limit 0.25; largest rate 0.091, against 0.263 on the unseen class). %
Validation selected the abduction rule, the smallest-change rule of Table~\ref{tab:config}, and its feasibility threshold 0.9, also chosen in 7 of 8 leave-one-seed-out selections. With threshold 0.8, abduction reaches 0.750 on single-field test corruptions, its readings against minimum-cardinality localization unchanged and still ahead of the Class I residual on test. The largest-feasibility rule of Table~\ref{tab:contrasts-2}, which ranks every field by the lowest verdict it reaches, gives 0.692 on test, 0.747 on unseen families and 0.660 on the unseen class, on par with the Class I residual on test. On intervention labels every reading holds under both alternatives except the margin over the oracle tie-break ($+0.112$ and $+0.095$). %
For a class without training aircraft \fwname{} reads fleet statistics; five alternative reference statistics (engine-type or declared-mass pools with the union or weighted mean of class bands, and the nearest training class), compared on held-out training classes, met no adoption rule, and on the business jets each flags every clean record (1.000 against 0.322 with fleet statistics) with a detection AUROC of 0.836 to 0.906 against 0.934; under the mass pools and the nearest class a withheld field changes the reference class of 16.0, 21.7 and 1.8\% of masked held-out records. %

\paragraph{Sensitivity analyses.}
On generator labels, every band alternative that places more of \fwname{}'s learned bands inside the training fleet range widened by 12\%, $[0.88\min,\,1.12\max]$, lowers validation detection by more than 0.005: the learned bands act as tolerances learned from the fleet, 45.7\% of them with both edges inside that range. The alternatives project the band edges onto that range after every step, or pull them toward it with a penalty of weight 0.01, 0.1 or 1.0 (eight seeds each). Retrained on intervention labels, the four alternatives hold 68.5, 86.7, 96.5 and 100\% of the 46 counted bands per seed (those with a training fleet range) inside the range and stay within the validation margin; the projection reaches 0.804 on test, against 0.807 for \fwname{} retrained on the same labels. The penalty of weight 0.01, the alternative selected on validation, keeps its share at 45.7\% and was not adopted; it reaches on generator-label test a detection AUROC of 0.977, on par with 0.979, the same diagnosis top-1 (0.942), a localization top-1 of 0.728 against 0.742 and a false-positive rate on the controls of 0.082 against 0.086, and on the unseen class a detection AUROC slightly lower, 0.932 against 0.934; with weight 1.0 (53.8\% inside) unseen-class detection is lower, 0.924 against 0.934. These alternatives are sensitivity analyses of \fwname{}. %

\paragraph{Oracle bands fitted on the training aircraft.}
With the class bands of the generator oracle fitted on the 44 clean training aircraft alone (business jets, which have no training aircraft, on the band of all training aircraft), and the generator, its seed and the family splits unchanged, the benchmark holds 1,239 instances instead of 1,312. These bands flag 38 of the 63 clean held-out aircraft, 15 of the 16 in test, mostly on A1, A9 and A11; controls must stay clean under the oracle, so the test controls fall from 32 to 4. \fwname{} was retrained with its selected configuration on 8 seeds, as were the residual MLP and TabPFN~v2 with their full selection protocol and the DL2-loss network with its selected hyperparameters, whose search was not repeated; the crisp edits and minimum-cardinality localization were recomputed, and the rule diagnosis, TabPFN~v2 on intervention labels and the alternative bootstrap streams were not rerun (Tables~\ref{tab:trainoracle} and~\ref{tab:trainoracle-ci}). The validation rule of the crisp edits now selects per-class bands without pooling (validation AUROC 0.919 against 0.915), the bands of the new label oracle itself, with which the crisp edits lead \fwname{} on generator labels in key and macro-F1 on every split; the tables keep the pooled bands of the paper, and on intervention labels \fwname{} is on par with the unpooled crisp edits in macro-F1 on test ($-0.006$) and ahead on the other two splits. %
On intervention labels, whose records are the same under both oracles, \fwname{} stays ahead on every split of the DL2-loss network in macro-F1 ($+0.045$ \twoci{+0.012}{+0.142}{+0.012}{+0.163} on test), of Fellegi-Holt and Z3 in diagnosis ($+0.126$ and $+0.092$ on test) and of minimum-cardinality localization with random tie-break ($+0.226$ on test); on test it is on par with the oracle tie-break ($+0.079$) and with the crisp edits in macro-F1 ($-0.016$), and the DL2-loss network has the higher exact-set agreement ($-0.149$) and fewer false violations. %
\fwname{}'s own test levels there are lower, with more false violations, every paired interval excluding zero: macro-F1 0.488 against 0.575, exact-set agreement 0.070 against 0.283, false-violation rate 0.219 against 0.084, key 0.732 against 0.786 and abduction 0.532 against 0.593, while detection is on par (0.695 against 0.701). On unseen families key, abduction and detection are on par and macro-F1 comparable (0.666 against 0.694), exact-set agreement is lower (0.255 against 0.373) and the false-violation rate slightly higher (0.088 against 0.028); on the unseen class the verdicts, abduction and detection are on par and the key is lower (0.743 against 0.778). Trained on the flags of these bands through its per-constraint targets, \fwname{} learns narrower bands where the oracle's bands narrow. Over the 46 counted bands, the median ratio of its learned widths to those learned under the paper's oracle is 0.903 (seed-bootstrap interval \oneci{0.897}{0.911}), and 64\% of the bands narrow, with a moderate rank correlation with the oracle's width ratios over 45 class bands (Spearman 0.43). The narrowing is largest for the thrust bands of A9 and the passenger-density bands of A11 (median 0.71), which carry the largest rises in false violations on the 16 clean test aircraft ($+0.35$ and $+0.33$; 0.061 to 0.208 over all constraints), followed by A2 ($+0.26$), whose learned bands shift. Seven of its eight seeds reached their best epoch by epoch 22, against 80 to 100 for the paper's seeds; the seed whose best epoch is 93 shows a similar narrowing on the bands of A3 and the thrust bands of A9 (0.78 and 0.74) and a smaller one on the passenger-density bands of A11 (0.78). %
On generator labels the records differ between the two benchmarks: the held-out corrupted records keep the violations their clean origin already has under the narrower bands, and the clean record itself stays labeled clean. There the residual MLP is slightly higher in detection on test (0.950 against 0.910, $-0.040$ \twoci{-0.075}{-0.010}{-0.071}{+0.001}) and on unseen families ($-0.013$), as is TabPFN~v2 on unseen families ($-0.020$), and stays slightly ahead on the unseen class (0.986 against 0.959), where TabPFN~v2 is now on par ($-0.008$). The key reaches 0.944 on test and 0.931 on unseen families (0.973 before), slightly ahead of Fellegi-Holt and Z3 on test ($+0.102$) and on par with them on unseen families ($+0.094$); on the unseen class, where the pooled crisp bands equal those of the label oracle, Fellegi-Holt and Z3 lead ($-0.094$). The verdicts stay ahead of the DL2-loss network in macro-F1 on every split ($+0.288$ on test), are on par with the crisp edits on test and unseen families and below them on the unseen class ($-0.125$), with more false violations than the crisp edits on every split. Abduction is now ahead of the oracle tie-break on test ($+0.199$ \twoci{+0.042}{+0.370}{+0.052}{+0.528}) and stays ahead of random tie-breaking on every split. The repair still lowers the error on the altered field on every split (0.126 against 0.236 without repair on test); as the verifier now rates almost every clean and control test record as violating, its target is reached on 0.036 of single-field test records (0.555 before) and it proposes a change for 0.988 of the clean and control test records (0.588 on unseen families and 0.707 on the unseen class); the repair rule re-selected on the new validation split (feasibility threshold 0.8 instead of 0.9) reaches the target on 0.190 and changes 0.894 of them. Bands fitted on all 107 aircraft cover the range of the fleet the records describe, which bands fitted on 44 aircraft do not. %

\paragraph{Oracle fitted on training and validation aircraft, tolerances chosen out of family.}
Fitted on the 61 clean training and validation aircraft, the oracle had each of its three tolerances set, by a rule declared before any result, to the smallest value of a grid that flags at most 5\% of the clean aircraft when their family is held out in turn: the range margin, the relative widening of each class band ($[0.88\min,\,1.12\max]$ at 0.12), becomes 0.50 instead of 0.12 (2 of 61 aircraft flagged, against 24 at 0.12); the equality factor, which multiplies the largest absolute log-residual of the clean records of each class for the identities A1, A5 and A6, becomes 1.25 instead of 1.15 (3 of 60, at the limit of the rule); and the mass-balance margin of A7, subtracted from the smallest clean mass-balance margin, stays at 0.05; with these values 5 of the 61 aircraft are flagged by some tolerance. This oracle flags 14 of the 46 clean test and out-of-distribution aircraft: 11 of the 16 in test, 10 of them narrowbodies on A1, whose class tolerance rests on 3 aircraft of 2 families, 3 business jets on A9, judged by the band fitted on all 61 aircraft, and none of the unseen families; its wider bands let the generator escalate further (median magnitude 0.8 against 0.4), and the benchmark holds 1,260 instances with 10 test controls. %
Retrained on its labels as above (Tables~\ref{tab:trainoracle} and~\ref{tab:trainoracle-ci}), \fwname{} on intervention labels keeps its false violations at the paper's level on test (0.073 against 0.084, on par), with fewer on the unseen class (0.087 against 0.128), and its exact-set agreement on par on test and unseen families and slightly higher on the unseen class (0.206 against 0.155); its macro-F1 is lower on test (0.537 against 0.575) and unseen families (0.636 against 0.694), its abduction on test (0.491 against 0.593) and unseen families (0.508 against 0.594), and its key on the unseen class (0.735 against 0.778) and slightly on test (0.738 against 0.786), while detection is on par on test and unseen families and close on the unseen class (0.711 against 0.719). %
Against the comparators on intervention labels it stays ahead on every split of Fellegi-Holt and Z3 in diagnosis and of minimum-cardinality localization with random tie-break, ahead of the DL2-loss network in macro-F1 on unseen families, slightly on test ($+0.037$) and on par on the unseen class, with the network keeping fewer false violations on test and the unseen class. In macro-F1 it is on par with the crisp edits on test and unseen families and slightly ahead on the unseen class ($+0.029$), and, against the bands with a 25\% margin, small classes pooled, that the validation rule of the crisp edits now selects (validation AUROC 0.943 against 0.942), on par on test and ahead on the other two splits. In abduction it is on par with the oracle tie-break on every split, and against the Class I residual on par on test, lower on unseen families ($-0.096$) and ahead on the unseen class. %
On generator labels, with larger corruptions, \fwname{} reaches a detection AUROC of 0.976 on test, slightly below the residual MLP (0.994), and on the unseen class slightly below the residual MLP and TabPFN~v2 (0.988 against 0.999 for both); its macro-F1 is on par with the DL2-loss network on test and the unseen class and ahead on unseen families, ahead of the crisp edits on test, slightly ahead on unseen families and below them on the unseen class; its key, 0.955 on test, is ahead of Fellegi-Holt on every split and of Z3 on test and unseen families, and on par with Z3 on the unseen class; abduction is on par with the oracle tie-break on every split and ahead of random tie-breaking; the repair lowers the error on the altered field on every split, reaches its target on 0.570 of single-field test records and changes 0.418 of the clean and control test records; and 109 of the 154 corrupted test records carry a violation their clean origin already has. Under these tolerances the bands widen, the training corruptions grow and training stops earlier (best epochs 15 to 27, against 80 to 100), so their effects on the key and abduction levels are not separated. %

\begin{table}[tbp]
  \centering
  \caption{Generator oracles: levels of \fwname{} and its comparators.}
  \label{tab:trainoracle}
  \scriptsize
  \setlength{\tabcolsep}{4pt}
  \begin{threeparttable}
  \begin{tabular}{@{}>{\raggedright\arraybackslash}p{7.4cm}ccc@{}}
    \toprule
    & \multicolumn{3}{c}{Oracle fitted on} \\
    \cmidrule(l){2-4}
    Reading & 107 aircraft & 44 aircraft & 61 aircraft \\
    \midrule
    \multicolumn{4}{@{}l}{\textit{Benchmark}} \\
    Instances & 1,312 & 1,239 & 1,260 \\ %
    Test controls & 32 & 4 & 10 \\ %
    Clean test aircraft flagged by the oracle, of 16 & 0 & 15 & 11 \\ %
    \multicolumn{4}{@{}l}{\textit{Intervention labels, same records: \fwname{}}} \\
    Constraint verdicts, macro-F1, test & 0.575 & 0.488 & 0.537 \\ %
    \quad unseen families & 0.694 & 0.666 & 0.636 \\ %
    \quad unseen class & 0.431 & 0.444 & 0.400 \\ %
    Constraint verdicts, exact-set, test & 0.283 & 0.070 & 0.223 \\ %
    \quad unseen families & 0.373 & 0.255 & 0.348 \\ %
    \quad unseen class & 0.155 & 0.153 & 0.206 \\ %
    Constraint verdicts, false-violation rate, test & 0.084 & 0.219 & 0.073 \\ %
    \quad unseen families & 0.028 & 0.088 & 0.015 \\ %
    \quad unseen class & 0.128 & 0.120 & 0.087 \\ %
    Largest-violation key, top-1, test & 0.786 & 0.732 & 0.738 \\ %
    \quad unseen families & 0.828 & 0.808 & 0.795 \\ %
    \quad unseen class & 0.778 & 0.743 & 0.735 \\ %
    Abduction, top-1, test & 0.593 & 0.532 & 0.491 \\ %
    \quad unseen families & 0.594 & 0.580 & 0.508 \\ %
    \quad unseen class & 0.465 & 0.453 & 0.441 \\ %
    Detection AUROC, test & 0.701 & 0.695 & 0.696 \\ %
    \quad unseen families & 0.750 & 0.747 & 0.737 \\ %
    \quad unseen class & 0.719 & 0.716 & 0.711 \\ %
    \multicolumn{4}{@{}l}{\textit{Intervention labels, test: \fwname{} minus the comparator}} \\
    DL2-loss network, macro-F1 & +0.149 & +0.045 & +0.037 \\ %
    DL2-loss network, exact-set & +0.061 & \textminus 0.149 & \textminus 0.058 \\ %
    DL2-loss network, false-violation rate & +0.065 & +0.194 & +0.059 \\ %
    Crisp edits, small classes pooled, macro-F1 & +0.071 & \textminus 0.016 & +0.032 \\ %
    Fellegi-Holt, key top-1 & +0.180 & +0.126 & +0.132 \\ %
    Z3, key top-1 & +0.146 & +0.092 & +0.098 \\ %
    Minimum cardinality, random tie-break, abduction & +0.287 & +0.226 & +0.186 \\ %
    Minimum cardinality, oracle tie-break, abduction & +0.140 & +0.079 & +0.038 \\ %
    Class I sizing-regression residual, abduction & +0.140 & +0.079 & +0.038 \\ %
    \multicolumn{4}{@{}l}{\textit{Generator labels, test and unseen class: different records under each oracle}} \\
    Detection AUROC, residual MLP & 0.990 & 0.950 & 0.994 \\ %
    Detection AUROC, TabPFN~v2 on the residuals & 0.958 & 0.932 & 0.983 \\ %
    \textbf{Detection AUROC, \fwname{}} & \textbf{0.979} & \textbf{0.910} & \textbf{0.976} \\ %
    Detection AUROC, unseen class, residual MLP & 0.978 & 0.986 & 0.999 \\ %
    Detection AUROC, unseen class, TabPFN~v2 on the residuals & 0.983 & 0.967 & 0.999 \\ %
    \textbf{Detection AUROC, unseen class, \fwname{}} & \textbf{0.934} & \textbf{0.959} & \textbf{0.988} \\ %
    Constraint verdicts, DL2-loss network & 0.474/0.661/0.020 & 0.145/0.456/0.022 & 0.431/0.760/0.012 \\ %
    Constraint verdicts, crisp edits, small classes pooled & 0.569/0.829/0.036 & 0.421/0.664/0.030 & 0.256/0.662/0.059 \\ %
    \textbf{Constraint verdicts, \fwname{}} & \textbf{0.504/0.795/0.074} & \textbf{0.176/0.745/0.187} & \textbf{0.360/0.773/0.059} \\ %
    Key top-1, chance & 0.177 & 0.373 & 0.243 \\ %
    Key top-1, Fellegi-Holt & 0.795 & 0.842 & 0.733 \\ %
    Key top-1, Z3 & 0.825 & 0.842 & 0.822 \\ %
    \textbf{Key top-1, \fwname{}} & \textbf{0.942} & \textbf{0.944} & \textbf{0.955} \\ %
    Abduction top-1, minimum cardinality, random tie-break & 0.367 & 0.192 & 0.423 \\ %
    Abduction top-1, minimum cardinality, oracle tie-break & 0.629 & 0.421 & 0.590 \\ %
    \textbf{Abduction top-1, \fwname{}} & \textbf{0.742} & \textbf{0.620} & \textbf{0.706} \\ %
    Repair, log error on the altered field, no repair & 0.366 & 0.236 & 0.511 \\ %
    \textbf{Repair, log error on the altered field, \fwname{}} & \textbf{0.133} & \textbf{0.126} & \textbf{0.169} \\ %
    Repair, target reached on single-field records, \fwname{} & 0.555 & 0.036 & 0.570 \\ %
    Repair, clean and control records changed, \fwname{} & 0.562 & 0.988 & 0.418 \\ %
    \bottomrule
  \end{tabular}
  \begin{tablenotes}[flushleft]\scriptsize
    \item[] Oracle fitted on all 107 aircraft (the paper), on the 44 training aircraft with the paper's tolerances, and on the 61 training and validation aircraft with tolerances chosen by leaving one family out (range margin 0.50, equality factor 1.25, mass-balance margin 0.05). Learned methods: means over 8 seeds. Intervention labels, set from the edit alone, are the same records under every oracle; generator labels differ between the three benchmarks. Constraint verdicts on generator labels: exact-set agreement / macro-F1 / false-violation rate. On intervention labels only \fwname{} and the DL2-loss network change; the crisp edits, Fellegi-Holt, Z3, minimum cardinality and the Class I residual are the same under every oracle, and since the oracle tie-break and the Class I residual reach the same intervention-label test level (0.453), their differences from \fwname{} coincide. Intervals of the intervention-label rows in Table~\ref{tab:trainoracle-ci}; those of the differences under the oracle on 107 aircraft in the DL2-loss macro-F1, the key and abduction are in Tables~\ref{tab:contrasts} to~\ref{tab:contrasts-4} and Appendix~\ref{app:results-verdicts}. %
  \end{tablenotes}
  \end{threeparttable}
\end{table}
\begin{table}[tbp]
  \centering
  \caption{Generator oracles fitted without test or out-of-distribution aircraft: paired changes and differences with their intervals.}
  \label{tab:trainoracle-ci}
  \scriptsize
  \setlength{\tabcolsep}{3pt}
  \begin{threeparttable}
  \begin{tabular}{@{}>{\raggedright\arraybackslash}p{4.6cm}cccc@{}}
    \toprule
    & \multicolumn{2}{c}{Oracle on 44 aircraft} & \multicolumn{2}{c}{Oracle on 61 aircraft} \\
    \cmidrule(lr){2-3}\cmidrule(l){4-5}
    Reading & Value & Intervals & Value & Intervals \\
    \midrule
    \multicolumn{5}{@{}l}{\textit{Intervention labels, same records: \fwname{}, change from the oracle on 107 aircraft}} \\
    Constraint verdicts, macro-F1, test & \textminus 0.087 & [\textminus 0.173,\,\textminus 0.022]\textperiodcentered[\textminus 0.146,\,\textminus 0.029] & \textminus 0.039 & [\textminus 0.109,\,\textminus 0.007]\textperiodcentered[\textminus 0.098,\,\textminus 0.0034] \\ %
    \quad unseen families & \textminus 0.028 & [\textminus 0.054,\,+0.017]\textperiodcentered[\textminus 0.061,\,+0.003] & \textminus 0.058 & [\textminus 0.105,\,\textminus 0.022]\textperiodcentered[\textminus 0.116,\,\textminus 0.026] \\ %
    \quad unseen class & +0.013 & [\textminus 0.010,\,+0.041]\textperiodcentered[\textminus 0.011,\,+0.039] & \textminus 0.032 & [\textminus 0.059,\,+0.003]\textperiodcentered[\textminus 0.060,\,+0.002] \\ %
    Constraint verdicts, exact-set, test & \textminus 0.213 & [\textminus 0.309,\,\textminus 0.125]\textperiodcentered[\textminus 0.279,\,\textminus 0.117] & \textminus 0.060 & [\textminus 0.164,\,+0.032]\textperiodcentered[\textminus 0.187,\,+0.043] \\ %
    \quad unseen families & \textminus 0.118 & [\textminus 0.226,\,\textminus 0.005]\textperiodcentered[\textminus 0.188,\,\textminus 0.008] & \textminus 0.026 & [\textminus 0.083,\,+0.027]\textperiodcentered[\textminus 0.108,\,+0.032] \\ %
    \quad unseen class & \textminus 0.002 & [\textminus 0.040,\,+0.044]\textperiodcentered[\textminus 0.041,\,+0.041] & +0.050 & [+0.001,\,+0.109]\textperiodcentered[\textminus 0.007,\,+0.118] \\ %
    Constraint verdicts, false-violation rate, test & +0.135 & [+0.092,\,+0.178]\textperiodcentered[+0.068,\,+0.168] & \textminus 0.011 & [\textminus 0.044,\,+0.025]\textperiodcentered[\textminus 0.049,\,+0.030] \\ %
    \quad unseen families & +0.061 & [+0.011,\,+0.111]\textperiodcentered[\textminus 0.011,\,+0.099] & \textminus 0.013 & [\textminus 0.032,\,+0.002]\textperiodcentered[\textminus 0.042,\,+0.003] \\ %
    \quad unseen class & \textminus 0.007 & [\textminus 0.024,\,+0.010]\textperiodcentered[\textminus 0.024,\,+0.010] & \textminus 0.041 & [\textminus 0.059,\,\textminus 0.023]\textperiodcentered[\textminus 0.061,\,\textminus 0.021] \\ %
    Largest-violation key, top-1, test & \textminus 0.054 & [\textminus 0.094,\,\textminus 0.019]\textperiodcentered[\textminus 0.109,\,\textminus 0.023] & \textminus 0.048 & [\textminus 0.091,\,\textminus 0.007]\textperiodcentered[\textminus 0.101,\,\textminus 0.002] \\ %
    \quad unseen families & \textminus 0.020 & [\textminus 0.059,\,+0.016]\textperiodcentered[\textminus 0.055,\,+0.045] & \textminus 0.033 & [\textminus 0.063,\,+0.004]\textperiodcentered[\textminus 0.060,\,+0.035] \\ %
    \quad unseen class & \textminus 0.035 & [\textminus 0.066,\,\textminus 0.004]\textperiodcentered[\textminus 0.065,\,\textminus 0.009] & \textminus 0.043 & [\textminus 0.072,\,\textminus 0.018]\textperiodcentered[\textminus 0.070,\,\textminus 0.017] \\ %
    Abduction, top-1, test & \textminus 0.061 & [\textminus 0.101,\,\textminus 0.024]\textperiodcentered[\textminus 0.102,\,\textminus 0.030] & \textminus 0.102 & [\textminus 0.151,\,\textminus 0.059]\textperiodcentered[\textminus 0.173,\,\textminus 0.063] \\ %
    \quad unseen families & \textminus 0.014 & [\textminus 0.048,\,+0.021]\textperiodcentered[\textminus 0.036,\,+0.043] & \textminus 0.086 & [\textminus 0.136,\,\textminus 0.043]\textperiodcentered[\textminus 0.129,\,\textminus 0.060] \\ %
    \quad unseen class & \textminus 0.012 & [\textminus 0.045,\,+0.025]\textperiodcentered[\textminus 0.044,\,+0.025] & \textminus 0.024 & [\textminus 0.077,\,+0.033]\textperiodcentered[\textminus 0.077,\,+0.035] \\ %
    Detection AUROC, test & \textminus 0.006 & [\textminus 0.022,\,+0.008]\textperiodcentered[\textminus 0.026,\,+0.007] & \textminus 0.005 & [\textminus 0.022,\,+0.011]\textperiodcentered[\textminus 0.030,\,+0.009] \\ %
    \quad unseen families & \textminus 0.003 & [\textminus 0.031,\,+0.025]\textperiodcentered[\textminus 0.025,\,+0.030] & \textminus 0.013 & [\textminus 0.035,\,+0.005]\textperiodcentered[\textminus 0.043,\,+0.017] \\ %
    \quad unseen class & \textminus 0.003 & [\textminus 0.016,\,+0.005]\textperiodcentered[\textminus 0.015,\,+0.005] & \textminus 0.008 & [\textminus 0.020,\,\textminus 0.000]\textperiodcentered[\textminus 0.020,\,+0.000] \\ %
    \multicolumn{5}{@{}l}{\textit{Intervention labels: \fwname{} minus the comparator under each oracle}} \\
    DL2-loss network, macro-F1, test & +0.045 & [+0.012,\,+0.142]\textperiodcentered[+0.012,\,+0.163] & +0.037 & [+0.002,\,+0.113]\textperiodcentered[+0.010,\,+0.143] \\ %
    \quad unseen families & +0.220 & [+0.184,\,+0.277]\textperiodcentered[+0.209,\,+0.267] & +0.129 & [+0.094,\,+0.168]\textperiodcentered[+0.101,\,+0.172] \\ %
    \quad unseen class & +0.141 & [+0.106,\,+0.185]\textperiodcentered[+0.111,\,+0.182] & +0.000 & [\textminus 0.029,\,+0.040]\textperiodcentered[\textminus 0.028,\,+0.038] \\ %
    DL2-loss network, exact-set, test & \textminus 0.149 & [\textminus 0.203,\,\textminus 0.095]\textperiodcentered[\textminus 0.207,\,\textminus 0.077] & \textminus 0.058 & [\textminus 0.148,\,+0.026]\textperiodcentered[\textminus 0.199,\,+0.050] \\ %
    \quad unseen families & +0.041 & [\textminus 0.065,\,+0.143]\textperiodcentered[\textminus 0.016,\,+0.134] & +0.081 & [+0.023,\,+0.133]\textperiodcentered[+0.033,\,+0.152] \\ %
    \quad unseen class & \textminus 0.093 & [\textminus 0.166,\,\textminus 0.016]\textperiodcentered[\textminus 0.163,\,\textminus 0.013] & \textminus 0.033 & [\textminus 0.101,\,+0.037]\textperiodcentered[\textminus 0.105,\,+0.034] \\ %
    DL2-loss network, false-violation rate, test & +0.194 & [+0.147,\,+0.242]\textperiodcentered[+0.126,\,+0.228] & +0.059 & [+0.022,\,+0.100]\textperiodcentered[+0.021,\,+0.101] \\ %
    \quad unseen families & +0.070 & [+0.031,\,+0.112]\textperiodcentered[+0.017,\,+0.102] & +0.010 & [\textminus 0.004,\,+0.028]\textperiodcentered[\textminus 0.009,\,+0.032] \\ %
    \quad unseen class & +0.102 & [+0.052,\,+0.152]\textperiodcentered[+0.053,\,+0.152] & +0.064 & [+0.020,\,+0.109]\textperiodcentered[+0.023,\,+0.112] \\ %
    Crisp edits, small classes pooled, macro-F1, test & \textminus 0.016 & [\textminus 0.061,\,+0.053]\textperiodcentered[\textminus 0.072,\,+0.045] & +0.032 & [\textminus 0.025,\,+0.098]\textperiodcentered[\textminus 0.036,\,+0.088] \\ %
    \quad unseen families & +0.097 & [+0.025,\,+0.178]\textperiodcentered[+0.006,\,+0.177] & +0.067 & [\textminus 0.007,\,+0.142]\textperiodcentered[\textminus 0.039,\,+0.148] \\ %
    \quad unseen class & +0.073 & [+0.033,\,+0.103]\textperiodcentered[+0.035,\,+0.103] & +0.029 & [+0.001,\,+0.057]\textperiodcentered[+0.002,\,+0.056] \\ %
    Fellegi-Holt, key top-1, test & +0.126 & [+0.086,\,+0.170]\textperiodcentered[+0.097,\,+0.193] & +0.132 & [+0.084,\,+0.186]\textperiodcentered[+0.092,\,+0.227] \\ %
    \quad unseen families & +0.183 & [+0.114,\,+0.256]\textperiodcentered[+0.139,\,+0.280] & +0.170 & [+0.103,\,+0.244]\textperiodcentered[+0.118,\,+0.268] \\ %
    \quad unseen class & +0.231 & [+0.173,\,+0.292]\textperiodcentered[+0.175,\,+0.298] & +0.223 & [+0.159,\,+0.289]\textperiodcentered[+0.161,\,+0.296] \\ %
    Z3, key top-1, test & +0.092 & [+0.047,\,+0.153]\textperiodcentered[+0.038,\,+0.164] & +0.098 & [+0.046,\,+0.168]\textperiodcentered[+0.040,\,+0.187] \\ %
    \quad unseen families & +0.179 & [+0.101,\,+0.279]\textperiodcentered[+0.129,\,+0.250] & +0.166 & [+0.093,\,+0.266]\textperiodcentered[+0.101,\,+0.264] \\ %
    \quad unseen class & +0.145 & [+0.107,\,+0.186]\textperiodcentered[+0.107,\,+0.186] & +0.137 & [+0.094,\,+0.181]\textperiodcentered[+0.095,\,+0.183] \\ %
    Minimum cardinality, random tie-break, abduction, test & +0.226 & [+0.160,\,+0.295]\textperiodcentered[+0.176,\,+0.351] & +0.186 & [+0.119,\,+0.256]\textperiodcentered[+0.125,\,+0.302] \\ %
    \quad unseen families & +0.269 & [+0.164,\,+0.389]\textperiodcentered[+0.181,\,+0.471] & +0.197 & [+0.063,\,+0.336]\textperiodcentered[+0.066,\,+0.402] \\ %
    \quad unseen class & +0.288 & [+0.215,\,+0.359]\textperiodcentered[+0.219,\,+0.364] & +0.276 & [+0.185,\,+0.363]\textperiodcentered[+0.191,\,+0.371] \\ %
    Minimum cardinality, oracle tie-break, abduction, test & +0.079 & [\textminus 0.013,\,+0.185]\textperiodcentered[\textminus 0.017,\,+0.299] & +0.038 & [\textminus 0.052,\,+0.142]\textperiodcentered[\textminus 0.048,\,+0.248] \\ %
    \quad unseen families & +0.120 & [\textminus 0.033,\,+0.290]\textperiodcentered[\textminus 0.003,\,+0.440] & +0.048 & [\textminus 0.123,\,+0.230]\textperiodcentered[\textminus 0.084,\,+0.350] \\ %
    \quad unseen class & +0.145 & [+0.015,\,+0.273]\textperiodcentered[+0.022,\,+0.280] & +0.133 & [\textminus 0.013,\,+0.277]\textperiodcentered[\textminus 0.003,\,+0.286] \\ %
    Class I sizing-regression residual, abduction, test & +0.079 & [+0.022,\,+0.131]\textperiodcentered[+0.016,\,+0.115] & +0.038 & [\textminus 0.016,\,+0.095]\textperiodcentered[\textminus 0.020,\,+0.064] \\ %
    \quad unseen families & \textminus 0.024 & [\textminus 0.098,\,+0.049]\textperiodcentered[\textminus 0.071,\,+0.098] & \textminus 0.096 & [\textminus 0.171,\,\textminus 0.033]\textperiodcentered[\textminus 0.144,\,\textminus 0.022] \\ %
    \quad unseen class & +0.362 & [+0.291,\,+0.433]\textperiodcentered[+0.292,\,+0.432] & +0.350 & [+0.282,\,+0.424]\textperiodcentered[+0.283,\,+0.423] \\ %
    \bottomrule
  \end{tabular}
  \begin{tablenotes}[flushleft]\scriptsize
    \item[] Intervals: [aircraft]\textperiodcentered[family], 95\% cluster-bootstrap percentiles; the first panel gives the paired change of \fwname{} on the same records, the second the difference at each oracle (Table~\ref{tab:trainoracle} gives the levels and the differences under the oracle on 107 aircraft). %
  \end{tablenotes}
  \end{threeparttable}
\end{table}

\FloatBarrier
\subsection{Training Checkpoints}
\label{app:results-curves}

Figure~\ref{M-fig:dynamics} in Section~\ref{M-sec:res-knowledge} shows the training curves of the axioms; two readings complete it. At the frozen checkpoint, restored at the best validation epoch, the mean truth of each specification constraint is lower on corrupted than on clean-labeled records in 46 of 46 split and constraint pairs with available evidence, by as little as $1.0\times10^{-5}$ on the unseen class. %
With the train prior (8 seeds), the best epoch, selected by early stopping on the validation AUROC (Table~\ref{tab:config}), reaches the declared cap of 100 epochs in three of eight seeds. 
\FloatBarrier%
\section{Explanation-Supervision Comparison}
\label{app:expsup}

This appendix gives the results and the protocol of the comparison with explanation supervision summarized in Section~\ref{M-sec:res-grounding}. %
Under the protocol described at the end of this appendix, the grounding axiom localizes the annotated parts more than RRR~\cite{ross2017right}, GradMask~\cite{simpson2019gradmask} and Pukdee et al.~\cite{pukdee2023learning}, with every cluster-bootstrap interval of the difference above zero on the FGVC pseudo-masks, on PASCAL-Part human masks (levels in Figure~\ref{fig:partgrounding}b) and on two further human-mask datasets. Against RRR, GradMask and Pukdee et al., in this order, its pointing exceeds theirs by $+0.725$ \oneci{+0.700}{+0.750}, $+0.779$ and $+0.618$ on the FGVC test pseudo-masks, by $+0.594$ \oneci{+0.559}{+0.630}, $+0.702$ and $+0.496$ on PASCAL-Part, by $+0.523$ \oneci{+0.475}{+0.571}, $+0.631$ and $+0.420$ on PartImageNet and by $+0.483$ \oneci{+0.359}{+0.595}, $+0.636$ and $+0.446$ on the 40 items of UDA-Part, with wider intervals on the last two; Pukdee et al.\ is run at the calibration of the protocol, which flattened their teacher toward the uniform prediction. The intervals of all 27 differences, nine mask configurations by three baselines, lie above zero, also under an independent bootstrap seed. %
Across the weights 0.3, 1 and 3 the axiom keeps at most one collapsed head, a head whose F1 is at most 0.05, against up to 5.7 on average for the auxiliary loss. At weight 1 it keeps an out-of-distribution accuracy lead of $+0.016$ \oneci{+0.012}{+0.020} over the auxiliary loss at its validation weight 0.3; test accuracy, by contrast, is degenerate (a constant predictor scores 0.926). %
\paragraph{Protocol.}
A first set of runs of RRR, GradMask, RES~\cite{gao2022res} (its RES-L variant, adapted to these part masks) and Pukdee et al.\ used $\lambda=1$ and penalties coupled across a batch through batch normalization~\cite{ioffe2015batch} in training mode. The comparison retrains RRR, GradMask and Pukdee et al.\ with per-example penalties, the normalization constants held fixed, selects $\lambda\in\{0.1,1,10\}$ times a calibration constant on validation with one seed per value, and evaluates three seeds at the selected value. RES-L is not retrained: on these masks its adapted objective does not reward localization, so its pointing does not measure what RES can localize. A second arm of Pukdee et al., added after the results, measures its calibration constant at the first projection step instead of at initialization; it avoids the flattened teacher and weights the constraint about three orders of magnitude less. It points to the part in 0.179 of PASCAL-Part human masks, where the axiom's lead over it is $+0.558$ \oneci{+0.524}{+0.591}, and the intervals of the axiom's differences from it also lie above zero on the FGVC pseudo-masks and on every other human-mask set. Holding the normalization constants fixed makes each penalty per-example and leaves the network in inference mode unconstrained by it, a gap that a diagnostic measures for RRR. %

\FloatBarrier 


\begin{thebibliography}{10}

\bibitem{wienand2014detecting}
Dominik Wienand and Heiko Paulheim.
\newblock Detecting incorrect numerical data in {DBpedia}.
\newblock In {\em The Semantic Web: Trends and Challenges ({ESWC} 2014)},
  volume 8465 of {\em Lecture Notes in Computer Science}, pages 504--518.
  Springer, 2014.

\bibitem{sun2020openap}
Junzi Sun, Jacco~M. Hoekstra, and Joost Ellerbroek.
\newblock {OpenAP}: An open-source aircraft performance model for air
  transportation studies and simulations.
\newblock {\em Aerospace}, 7(8):104, 2020.

\bibitem{lichtenthaler2004technology}
Eckhard Lichtenthaler.
\newblock Technology intelligence processes in leading {European} and {North
  American} multinationals.
\newblock {\em R\&D Management}, 34(2):121--135, 2004.

\bibitem{zhang2016technology}
Yi~Zhang, Douglas K.~R. Robinson, Alan~L. Porter, Donghua Zhu, Guangquan Zhang,
  and Jie Lu.
\newblock Technology roadmapping for competitive technical intelligence.
\newblock {\em Technological Forecasting and Social Change}, 110:175--186,
  2016.

\bibitem{glassman2012intelligence}
Michael Glassman and Min~Ju Kang.
\newblock Intelligence in the internet age: The emergence and evolution of open
  source intelligence ({OSINT}).
\newblock {\em Computers in Human Behavior}, 28(2):673--682, 2012.

\bibitem{pastorgalindo2020osint}
Javier Pastor-Galindo, Pantaleone Nespoli, F{\'e}lix G{\'o}mez~M{\'a}rmol, and Gregorio
  Mart{\'i}nez~P{\'e}rez.
\newblock The not yet exploited goldmine of {OSINT}: Opportunities, open
  challenges and future trends.
\newblock {\em IEEE Access}, 8:10282--10304, 2020.

\bibitem{maji2013fine}
Subhransu Maji, Esa Rahtu, Juho Kannala, Matthew Blaschko, and Andrea Vedaldi.
\newblock Fine-grained visual classification of aircraft.
\newblock arXiv:1306.5151, 2013.

\bibitem{shermeyer2021rareplanes}
Jacob Shermeyer, Thomas Hossler, Adam Van~Etten, Daniel Hogan, Ryan Lewis, and
  Daeil Kim.
\newblock {RarePlanes}: Synthetic data takes flight.
\newblock In {\em Proceedings of the IEEE Winter Conference on Applications of
  Computer Vision (WACV 2021)}, pages 207--217, 2021.

\bibitem{han2022adbench}
Songqiao Han, Xiyang Hu, Hailiang Huang, Minqi Jiang, and Yue Zhao.
\newblock {ADBench}: Anomaly detection benchmark.
\newblock In {\em Advances in Neural Information Processing Systems 35 (NeurIPS
  2022), Datasets and Benchmarks Track}, pages 32142--32159, 2022.

\bibitem{hollmann2025accurate}
Noah Hollmann, Samuel M{\"u}ller, Lennart Purucker, Arjun Krishnakumar, Max
  K{\"o}rfer, Shi~Bin Hoo, Robin~Tibor Schirrmeister, and Frank Hutter.
\newblock Accurate predictions on small data with a tabular foundation model.
\newblock {\em Nature}, 637(8045):319--326, 2025.

\bibitem{tsai2025anollm}
Che-Ping Tsai, Ganyu Teng, Phillip Wallis, and Wei Ding.
\newblock {AnoLLM}: Large language models for tabular anomaly detection.
\newblock In {\em Proceedings of the 13th International Conference on Learning
  Representations (ICLR 2025)}, pages 7562--7592, 2025.

\bibitem{lundberg2017unified}
Scott~M. Lundberg and Su-In Lee.
\newblock A unified approach to interpreting model predictions.
\newblock In {\em Advances in Neural Information Processing Systems 30 (NIPS
  2017)}, pages 4765--4774, 2017.

\bibitem{fellegi1976systematic}
I.~P. Fellegi and D.~Holt.
\newblock A systematic approach to automatic edit and imputation.
\newblock {\em Journal of the American Statistical Association},
  71(353):17--35, 1976.

\bibitem{reiter1987theory}
Raymond Reiter.
\newblock A theory of diagnosis from first principles.
\newblock {\em Artificial Intelligence}, 32(1):57--95, 1987.

\bibitem{davilagarcez2023neurosymbolic}
Artur d'Avila Garcez and Lu{\'i}s~C. Lamb.
\newblock Neurosymbolic {AI}: The 3rd wave.
\newblock {\em Artificial Intelligence Review}, 56(11):12387--12406, 2023.

\bibitem{vonrueden2021informed}
Laura von Rueden, Sebastian Mayer, Katharina Beckh, Bogdan Georgiev, Sven
  Giesselbach, Raoul Heese, Birgit Kirsch, Julius Pfrommer, Annika Pick,
  Rajkumar Ramamurthy, Michal Walczak, Jochen Garcke, Christian Bauckhage, and
  Jannis Schuecker.
\newblock Informed machine learning: {A} taxonomy and survey of integrating
  prior knowledge into learning systems.
\newblock {\em IEEE Transactions on Knowledge and Data Engineering},
  35(1):614--633, 2023.

\bibitem{acharya2025integrating}
Kamal Acharya, Iman Sharifi, Mehul Lad, Liang Sun, and Houbing Song.
\newblock Integrating neurosymbolic {AI} in advanced air mobility: A
  comprehensive survey.
\newblock In {\em Proceedings of the Thirty-Fourth International Joint
  Conference on Artificial Intelligence (IJCAI 2025), Survey Track}, pages 10362--10370, 2025.

\bibitem{mondal2025logic}
Semanto Mondal, Antonino Ferraro, Fabiano Pecorelli, and Giuseppe De~Pietro.
\newblock A {Logic Tensor Network}-based neurosymbolic framework for
  explainable diabetes prediction.
\newblock {\em Applied Sciences}, 15(21):11806, 2025.

\bibitem{badreddine2022ltn}
Samy Badreddine, Artur d'Avila Garcez, Luciano Serafini, and Michael Spranger.
\newblock {Logic Tensor Networks}.
\newblock {\em Artificial Intelligence}, 303:103649, 2022.

\bibitem{fischer2019dl2}
Marc Fischer, Mislav Balunovi{\'c}, Dana Drachsler-Cohen, Timon Gehr, Ce~Zhang, and
  Martin Vechev.
\newblock {DL}2: Training and querying neural networks with logic.
\newblock In {\em
  Proceedings of the 36th International Conference on Machine Learning (ICML 2019)},
  volume~97 of {\em Proceedings of Machine Learning Research}, pages
  1931--1941. PMLR, 2019.

\bibitem{vankrieken2022analyzing}
Emile van Krieken, Erman Acar, and Frank van Harmelen.
\newblock Analyzing differentiable fuzzy logic operators.
\newblock {\em Artificial Intelligence}, 302:103602, 2022.

\bibitem{kang2023physics}
Ruiyuan Kang, Tingting Mu, Panagiotis Liatsis, and Dimitrios Kyritsis.
\newblock Physics-driven {ML}-based modelling for correcting inverse
  estimation.
\newblock In {\em Advances in Neural Information Processing Systems 36
  ({NeurIPS} 2023)}, pages 15639--15659, 2023.

\bibitem{vanderloo2021data}
Mark P.~J. van~der Loo and Edwin de~Jonge.
\newblock Data validation infrastructure for {R}.
\newblock {\em Journal of Statistical Software}, 97(10):1--31, 2021.

\bibitem{ni2024automatic}
Wei Ni, Xiaoye Miao, Xiangyu Zhao, Yangyang Wu, Shuwei Liang, and Jianwei Yin.
\newblock Automatic data repair: Are we ready to deploy?
\newblock {\em Proceedings of the {VLDB} Endowment}, 17(10):2617--2630, 2024.

\bibitem{rekatsinas2017holoclean}
Theodoros Rekatsinas, Xu~Chu, Ihab~F. Ilyas, and Christopher R{\'e}.
\newblock {HoloClean}: Holistic data repairs with probabilistic inference.
\newblock {\em Proceedings of the VLDB Endowment}, 10(11):1190--1201, 2017.

\bibitem{mahdavi2020baran}
Mohammad Mahdavi and Ziawasch Abedjan.
\newblock {Baran}: Effective error correction via a unified context
  representation and transfer learning.
\newblock {\em Proceedings of the {VLDB} Endowment}, 13(12):1948--1961, 2020.

\bibitem{demoura2008z3}
Leonardo de~Moura and Nikolaj Bj{\o}rner.
\newblock {Z3}: An efficient {SMT} solver.
\newblock In {\em Tools and Algorithms for the Construction and Analysis of
  Systems ({TACAS} 2008)}, volume 4963 of {\em Lecture Notes in Computer
  Science}, pages 337--340. Springer, 2008.

\bibitem{bjorner2015nuz}
Nikolaj Bj{\o}rner, Anh-Dung Phan, and Lars Fleckenstein.
\newblock {$\nu$Z}: {A}n optimizing {SMT} solver.
\newblock In {\em Tools and Algorithms for the Construction and Analysis of
  Systems ({TACAS} 2015)}, volume 9035 of {\em Lecture Notes in Computer
  Science}, pages 194--199. Springer, 2015.

\bibitem{eastman2009automatic}
C.~Eastman, Jae-min Lee, Yeon-suk Jeong, and Jin-kook Lee.
\newblock Automatic rule-based checking of building designs.
\newblock {\em Automation in Construction}, 18(8):1011--1033, 2009.

\bibitem{iversen2026leveraging}
Odin Iversen and Lizhen Huang.
\newblock Leveraging large language models for {BIM}-based automated compliance
  checking.
\newblock {\em Automation in Construction}, 182:106707, 2026.

\bibitem{schelter2018automating}
Sebastian Schelter, Dustin Lange, Philipp Schmidt, Meltem Celikel, Felix
  Biessmann, and Andreas Grafberger.
\newblock Automating large-scale data quality verification.
\newblock {\em Proceedings of the {VLDB} Endowment}, 11(12):1781--1794, 2018.

\bibitem{chen2026ai}
Xin Chen and Kai Cheng.
\newblock {AI}-assisted auditing review of mechanical engineering drawings: A
  {Perception-Cognition-Collaboration} reference architecture and its
  implementation perspectives.
\newblock {\em International Journal of Computer Integrated Manufacturing},
  pages 1--22, 2026. \newblock Published online, doi:10.1080/0951192X.2026.2714117.

\bibitem{jung2018combining}
Daniel Jung, Kok~Yew Ng, Erik Frisk, and Mattias Krysander.
\newblock Combining model-based diagnosis and data-driven anomaly classifiers
  for fault isolation.
\newblock {\em Control Engineering Practice}, 80:146--156, 2018.

\bibitem{mohammadi2025consistency}
Arman Mohammadi, Mattias Krysander, and Daniel Jung.
\newblock Consistency-based diagnosis using data-driven residuals and limited
  training data.
\newblock {\em Control Engineering Practice}, 159:106283, 2025.

\bibitem{bohne2025domain}
Tim Bohne, Anne-Kathrin~Patricia Windler, and Martin Atzmueller.
\newblock A domain-agnostic neuro-symbolic architecture for multimodal
  human-in-the-loop anomaly detection and complex fault diagnosis.
\newblock {\em {IEEE} Access}, 13:210201--210236, 2025.

\bibitem{huang2026integrating}
Haoliang Huang, Zihuang Cai, Zhuo Tang, Yifan Liu, Chen Tian, Kenli Li, and
  Changjian Chen.
\newblock Integrating data validation with large language models for
  regulation-guided tabular anomaly detection.
\newblock In {\em Proceedings of the 64th Annual Meeting of the Association for
  Computational Linguistics (Volume 1: Long Papers)}, pages 6559--6581.
  Association for Computational Linguistics, 2026.

\bibitem{yan2024gidcl}
Mengyi Yan, Yaoshu Wang, Yue Wang, Xiaoye Miao, and Jianxin Li.
\newblock {GIDCL}: A graph-enhanced interpretable data cleaning framework with
  large language models.
\newblock {\em Proceedings of the {ACM} on Management of Data},
  2(6):236:1--236:29, 2024.

\bibitem{ni2025zeroed}
Wei Ni, Kaihang Zhang, Xiaoye Miao, Xiangyu Zhao, Yangyang Wu, Yaoshu Wang, and
  Jianwei Yin.
\newblock {ZeroED}: Hybrid zero-shot error detection through large language
  model reasoning.
\newblock In {\em 2025 {IEEE} 41st International Conference on Data Engineering
  ({ICDE})}, pages 3126--3139, 2025.

\bibitem{donadello2017logic}
Ivan Donadello, Luciano Serafini, and Artur d'Avila Garcez.
\newblock {Logic Tensor Networks} for semantic image interpretation.
\newblock In {\em Proceedings of the Twenty-Sixth International Joint
  Conference on Artificial Intelligence (IJCAI 2017)}, pages 1596--1602, 2017.

\bibitem{hogea2024logiclstm}
Eduard Hogea, Darian~M. Onchi{\c{s}}, Ruqiang Yan, and Zheng Zhou.
\newblock {LogicLSTM}: Logically-driven long short-term memory model for fault
  diagnosis in gearboxes.
\newblock {\em Journal of Manufacturing Systems}, 77:892--902, 2024.

\bibitem{hogea2026rule}
Eduard Hogea, Darian~M. Onchi{\c{s}}, Ruqiang Yan, and Zheng Zhou.
\newblock Rule guided transformers for dynamic knowledge adaptation in rotating
  machinery fault diagnosis.
\newblock {\em Advanced Engineering Informatics}, 72:104444, 2026.

\bibitem{manhaeve2021neural}
Robin Manhaeve, Sebastijan Duman{\v{c}}i{\'c}, Angelika Kimmig, Thomas
  Demeester, and Luc De~Raedt.
\newblock Neural probabilistic logic programming in {DeepProbLog}.
\newblock {\em Artificial Intelligence}, 298:103504, 2021.

\bibitem{li2023scallop}
Ziyang Li, Jiani Huang, and Mayur Naik.
\newblock Scallop: A language for neurosymbolic programming.
\newblock {\em Proceedings of the ACM on Programming Languages},
  7(PLDI):1463--1487, 2023.

\bibitem{yang2020neurasp}
Zhun Yang, Adam Ishay, and Joohyung Lee.
\newblock {NeurASP}: Embracing neural networks into answer set programming.
\newblock In {\em Proceedings of the Twenty-Ninth International Joint
  Conference on Artificial Intelligence ({IJCAI} 2020)}, pages 1755--1762,
  2020.

\bibitem{hu2025efficient}
Wen-Chao Hu, Wang-Zhou Dai, Yuan Jiang, and Zhi-Hua Zhou.
\newblock Efficient rectification of neuro-symbolic reasoning inconsistencies
  by abductive reflection.
\newblock In {\em Proceedings of the {AAAI} Conference on Artificial
  Intelligence}, volume~39, pages 17333--17341, 2025.

\bibitem{xu2018semantic}
Jingyi Xu, Zilu Zhang, Tal Friedman, Yitao Liang, and Guy {Van den Broeck}.
\newblock A semantic loss function for deep learning with symbolic knowledge.
\newblock In {\em Proceedings of the 35th International Conference on Machine
  Learning (ICML 2018)}, volume~80 of {\em Proceedings of Machine Learning
  Research}, pages 5502--5511. PMLR, 2018.

\bibitem{stoian2024pishield}
Mihaela~C{\u{a}}t{\u{a}}lina Stoian, Alex Tatomir, Thomas Lukasiewicz, and
  Eleonora Giunchiglia.
\newblock {PiShield}: A {PyTorch} package for learning with requirements.
\newblock In {\em Proceedings of the Thirty-Third International Joint
  Conference on Artificial Intelligence (IJCAI 2024), Demo Track}, pages
  8805--8809, 2024.

\bibitem{daniele2019kenn}
Alessandro Daniele and Luciano Serafini.
\newblock Knowledge enhanced neural networks.
\newblock In {\em {PRICAI} 2019: Trends in Artificial Intelligence}, volume 11670 of {\em Lecture Notes in Computer Science}, pages
  542--554. Springer, 2019.

\bibitem{ciravegna2023logic}
Gabriele Ciravegna, Pietro Barbiero, Francesco Giannini, Marco Gori, Pietro
  Li{\`o}, Marco Maggini, and Stefano Melacci.
\newblock Logic explained networks.
\newblock {\em Artificial Intelligence}, 314:103822, 2023.

\bibitem{bach2017hinge}
Stephen~H. Bach, Matthias Broecheler, Bert Huang, and Lise Getoor.
\newblock Hinge-loss {M}arkov random fields and probabilistic soft logic.
\newblock {\em Journal of Machine Learning Research}, 18(109):1--67, 2017.

\bibitem{hsia2025neurosymbolic}
Yung-Shen Hsia, Fang Yu, and Jie-Hong~Roland Jiang.
\newblock Neuro-symbolic compliance: Integrating {LLMs} and {SMT} solvers for
  automated financial legal analysis.
\newblock In {\em 2025 2nd {IEEE/ACM} International Conference on {AI}-powered
  Software ({AIware})}, pages 1--10. IEEE, 2025.

\bibitem{yang2025qwen3}
An~Yang et~al.
\newblock {Qwen3} technical report.
\newblock arXiv:2505.09388, 2025.

\bibitem{bai2025qwen3vl}
Shuai Bai et~al.
\newblock {Qwen3-VL} technical report.
\newblock arXiv:2511.21631, 2025.

\bibitem{doris2025designqa}
Anna~C. Doris, Daniele Grandi, Ryan Tomich, Md~Ferdous Alam, Mohammadmehdi
  Ataei, Hyunmin Cheong, and Faez Ahmed.
\newblock {DesignQA}: A multimodal benchmark for evaluating large language
  models' understanding of engineering documentation.
\newblock {\em Journal of Computing and Information Science in Engineering},
  25(2):021009, 2025.

\bibitem{barbara2023neurosymbolic}
Vito Barbara, Massimo Guarascio, Nicola Leone, Giuseppe Manco, Alessandro
  Quarta, Francesco Ricca, and Ettore Ritacco.
\newblock Neuro-symbolic {AI} for compliance checking of electrical control
  panels.
\newblock {\em Theory and Practice of Logic Programming}, 23(4):748--764, 2023.

\bibitem{kim2026visual}
Jisoo Kim, TaeYoon Kwack, Jinwoo Jang, and Honguk Woo.
\newblock Visual compliance via executable safety rule entailment.
\newblock arXiv:2609.18328, 2026.
\newblock Accepted to {EMNLP} 2026.

\bibitem{ross2017right}
Andrew~Slavin Ross, Michael~C. Hughes, and Finale Doshi-Velez.
\newblock Right for the right reasons: Training differentiable models by
  constraining their explanations.
\newblock In {\em Proceedings of the Twenty-Sixth International Joint
  Conference on Artificial Intelligence (IJCAI 2017)}, pages 2662--2670, 2017.

\bibitem{simpson2019gradmask}
Becks Simpson, Francis Dutil, Yoshua Bengio, and Joseph~Paul Cohen.
\newblock {GradMask}: Reduce overfitting by regularizing saliency.
\newblock arXiv:1904.07478, 2019.

\bibitem{gao2022res}
Yuyang Gao, Tong~Steven Sun, Guangji Bai, Siyi Gu, Sungsoo~Ray Hong, and Liang
  Zhao.
\newblock {RES}: A robust framework for guiding visual explanation.
\newblock In {\em Proceedings of the 28th ACM SIGKDD Conference on Knowledge
  Discovery and Data Mining}, pages 432--442. ACM, 2022.

\bibitem{rao2023studying}
Sukrut Rao, Moritz B{\"o}hle, Amin Parchami-Araghi, and Bernt Schiele.
\newblock Studying how to efficiently and effectively guide models with
  explanations.
\newblock In {\em Proceedings of the IEEE/CVF International Conference on Computer Vision
  (ICCV 2023)}, pages 1922--1933, 2023.

\bibitem{pukdee2023learning}
Rattana Pukdee, Dylan Sam, J.~Zico Kolter, Maria-Florina Balcan, and Pradeep
  Ravikumar.
\newblock Learning with explanation constraints.
\newblock In {\em Advances in Neural Information Processing Systems 36 (NeurIPS
  2023)}, pages 49883--49926, 2023.

\bibitem{du2024tip}
Siyi Du, Shaoming Zheng, Yinsong Wang, Wenjia Bai, Declan~P. O'Regan, and Chen
  Qin.
\newblock {TIP}: Tabular-image pre-training for multimodal classification with
  incomplete data.
\newblock In {\em Computer Vision -- {ECCV} 2024}, volume 15073 of {\em Lecture
  Notes in Computer Science}, pages 478--496. Springer, 2024.

\bibitem{tang2024autogluon}
Zhiqiang Tang, Haoyang Fang, Su~Zhou, Taojiannan Yang, Zihan Zhong, Cuixiong
  Hu, Katrin Kirchhoff, and George Karypis.
\newblock {AutoGluon-Multimodal} ({AutoMM}): Supercharging multimodal {AutoML}
  with foundation models.
\newblock In {\em Proceedings of the Third International Conference on
  Automated Machine Learning (AutoML 2024)}, volume 256 of {\em Proceedings of
  Machine Learning Research}, pages 15/1--35. PMLR, 2024.

\bibitem{stoian2024realistic}
Mihaela~C{\u{a}}t{\u{a}}lina Stoian, Salijona Dyrmishi, Maxime Cordy, Thomas
  Lukasiewicz, and Eleonora Giunchiglia.
\newblock How realistic is your synthetic data? {C}onstraining deep generative
  models for tabular data.
\newblock In {\em Proceedings of the 12th International Conference on Learning
  Representations (ICLR 2024)}, pages 32167--32204, 2024.

\bibitem{radtke2022expert}
Maximilian-Peter Radtke and J{\"u}rgen Bock.
\newblock Expert knowledge induced {Logic Tensor Networks}: A bearing fault
  diagnosis case study.
\newblock {\em PHM Society European Conference}, 7(1):421--431, 2022.

\bibitem{gaikwad2026neurosymbolic}
Devashish Gaikwad, Wil M.~P. van~der Aalst, and Gyunam Park.
\newblock Neuro-symbolic process anomaly detection.
\newblock In {\em Advanced Information Systems Engineering ({CAiSE} 2026)},
  volume 16559 of {\em Lecture Notes in Computer Science}, pages 240--255.
  Springer, 2026.

\bibitem{gao2024going}
Yuyang Gao, Siyi Gu, Junji Jiang, Sungsoo~Ray Hong, Dazhou Yu, and Liang Zhao.
\newblock Going beyond {XAI}: A systematic survey for explanation-guided
  learning.
\newblock {\em ACM Computing Surveys}, 56(7):188:1--188:39, 2024.

\bibitem{koh2020concept}
Pang~Wei Koh, Thao Nguyen, Yew~Siang Tang, Stephen Mussmann, Emma Pierson, Been
  Kim, and Percy Liang.
\newblock Concept bottleneck models.
\newblock In {\em Proceedings of the 37th International Conference on Machine
  Learning (ICML 2020)}, volume 119 of {\em Proceedings of Machine Learning
  Research}, pages 5338--5348. PMLR, 2020.

\bibitem{wang2025internvl35}
Weiyun Wang et~al.
\newblock {InternVL3.5}: Advancing open-source multimodal models in
  versatility, reasoning, and efficiency.
\newblock arXiv:2508.18265, 2025.

\bibitem{carraro2024ltntorch}
Tommaso Carraro, Luciano Serafini, and Fabio Aiolli.
\newblock {LTNtorch}: {PyTorch} implementation of {Logic Tensor Networks}.
\newblock arXiv:2409.16045, 2024.

\bibitem{raymer2018aircraft}
Daniel Raymer.
\newblock {\em Aircraft Design: A Conceptual Approach}.
\newblock American Institute of Aeronautics and Astronautics, Inc., sixth
  edition, 2018.

\bibitem{loshchilov2017decoupled}
Ilya Loshchilov and Frank Hutter.
\newblock Decoupled weight decay regularization.
\newblock In {\em Proceedings of the 7th International Conference on Learning
  Representations (ICLR 2019)}, 2019.
\newblock OpenReview Bkg6RiCqY7.

\bibitem{he2016resnet}
Kaiming He, Xiangyu Zhang, Shaoqing Ren, and Jian Sun.
\newblock Deep residual learning for image recognition.
\newblock In {\em Proceedings of the IEEE Conference on Computer Vision and
  Pattern Recognition (CVPR 2016)}, pages 770--778, 2016.

\bibitem{deng2009imagenet}
Jia Deng, Wei Dong, Richard Socher, Li-Jia Li, Kai Li, and Li~Fei-Fei.
\newblock {ImageNet}: A large-scale hierarchical image database.
\newblock In {\em Proceedings of the IEEE Conference on Computer Vision and Pattern
  Recognition (CVPR 2009)}, pages 248--255, 2009.

\bibitem{zhou2016learning}
Bolei Zhou, Aditya Khosla, Agata Lapedriza, Aude Oliva, and Antonio Torralba.
\newblock Learning deep features for discriminative localization.
\newblock In {\em Proceedings of the IEEE Conference on Computer Vision and
  Pattern Recognition (CVPR 2016)}, pages 2921--2929, 2016.

\bibitem{minderer2023scaling}
Matthias Minderer, Alexey Gritsenko, and Neil Houlsby.
\newblock Scaling open-vocabulary object detection.
\newblock In {\em Advances in Neural Information Processing Systems 36 (NeurIPS
  2023)}, pages 72983--73007, 2023.

\bibitem{kirillov2023segment}
Alexander Kirillov, Eric Mintun, Nikhila Ravi, Hanzi Mao, Chloe Rolland, Laura
  Gustafson, Tete Xiao, Spencer Whitehead, Alexander~C. Berg, Wan-Yen Lo, Piotr
  Doll{\'a}r, and Ross Girshick.
\newblock Segment anything.
\newblock In {\em Proceedings of the IEEE/CVF International Conference on
  Computer Vision (ICCV 2023)}, pages 3992--4003, 2023.

\bibitem{youden1950index}
W.~J. Youden.
\newblock Index for rating diagnostic tests.
\newblock {\em Cancer}, 3(1):32--35, 1950.

\bibitem{byrd1995limited}
Richard~H. Byrd, Peihuang Lu, Jorge Nocedal, and Ciyou Zhu.
\newblock A limited memory algorithm for bound constrained optimization.
\newblock {\em SIAM Journal on Scientific Computing}, 16(5):1190--1208, 1995.

\bibitem{guidotti2024counterfactual}
Riccardo Guidotti.
\newblock Counterfactual explanations and how to find them: Literature review
  and benchmarking.
\newblock {\em Data Mining and Knowledge Discovery}, 38(5):2770--2824, 2024.

\bibitem{nocedal2006numerical}
Jorge Nocedal and Stephen~J. Wright.
\newblock {\em Numerical Optimization}.
\newblock Springer Series in Operations Research and Financial Engineering.
  Springer, New York, second edition, 2006.

\bibitem{kingma2014adam}
Diederik~P. Kingma and Jimmy Ba.
\newblock Adam: A method for stochastic optimization.
\newblock In {\em Proceedings of the 3rd International Conference on Learning
  Representations (ICLR 2015)}, San Diego, 2015.
\newblock arXiv:1412.6980.

\bibitem{vanbuuren2011mice}
Stef van Buuren and Karin Groothuis-Oudshoorn.
\newblock {mice}: Multivariate imputation by chained equations in {R}.
\newblock {\em Journal of Statistical Software}, 45(3):1--67, 2011.

\bibitem{chen2014detect}
Xianjie Chen, Roozbeh Mottaghi, Xiaobai Liu, Sanja Fidler, Raquel Urtasun, and
  Alan Yuille.
\newblock Detect what you can: Detecting and representing objects using
  holistic models and body parts.
\newblock In {\em Proceedings of the IEEE Conference on Computer Vision and Pattern
  Recognition (CVPR 2014)}, pages 1979--1986, 2014.

\bibitem{he2022partimagenet}
Ju~He, Shuo Yang, Shaokang Yang, Adam Kortylewski, Xiaoding Yuan, Jie-Neng
  Chen, Shuai Liu, Cheng Yang, Qihang Yu, and Alan Yuille.
\newblock {PartImageNet}: A large, high-quality dataset of parts.
\newblock In {\em Computer Vision -- {ECCV} 2022}, volume 13668 of {\em Lecture
  Notes in Computer Science}, pages 128--145. Springer, 2022.

\bibitem{liu2022udapart}
Qing Liu, Adam Kortylewski, Zhishuai Zhang, Zizhang Li, Mengqi Guo, Qihao Liu,
  Xiaoding Yuan, Jiteng Mu, Weichao Qiu, and Alan Yuille.
\newblock Learning part segmentation through unsupervised domain adaptation
  from synthetic vehicles.
\newblock In {\em Proceedings of the IEEE/CVF Conference on Computer Vision and
  Pattern Recognition (CVPR 2022)}, pages 19118--19129, 2022.

\bibitem{field2007bootstrapping}
C.~A. Field and A.~H. Welsh.
\newblock Bootstrapping clustered data.
\newblock {\em Journal of the Royal Statistical Society Series B: Statistical
  Methodology}, 69(3):369--390, 2007.

\bibitem{ke2017lightgbm}
Guolin Ke, Qi~Meng, Thomas Finley, Taifeng Wang, Wei Chen, Weidong Ma, Qiwei
  Ye, and Tie-Yan Liu.
\newblock {LightGBM}: A highly efficient gradient boosting decision tree.
\newblock In {\em Advances in Neural Information Processing Systems 30 (NIPS
  2017)}, pages 3146--3154, 2017.

\bibitem{zhang2018top}
Jianming Zhang, Sarah~Adel Bargal, Zhe Lin, Jonathan Brandt, Xiaohui Shen, and
  Stan Sclaroff.
\newblock Top-down neural attention by excitation backprop.
\newblock {\em International Journal of Computer Vision}, 126(10):1084--1102,
  2018.

\bibitem{liu2022convnet}
Zhuang Liu, Hanzi Mao, Chao-Yuan Wu, Christoph Feichtenhofer, Trevor Darrell,
  and Saining Xie.
\newblock A {ConvNet} for the 2020s.
\newblock In {\em Proceedings of the IEEE/CVF Conference on Computer Vision and
  Pattern Recognition (CVPR 2022)}, pages 11966--11976, 2022.

\bibitem{rubin1976inference}
Donald~B. Rubin.
\newblock Inference and missing data.
\newblock {\em Biometrika}, 63(3):581--592, 1976.

\bibitem{neverova2016moddrop}
Natalia Neverova, Christian Wolf, Graham Taylor, and Florian Nebout.
\newblock {ModDrop}: Adaptive multi-modal gesture recognition.
\newblock {\em IEEE Transactions on Pattern Analysis and Machine Intelligence},
  38(8):1692--1706, 2016.

\bibitem{brundage2018malicious}
Miles Brundage et~al.
\newblock The malicious use of artificial intelligence: Forecasting,
  prevention, and mitigation.
\newblock arXiv:1802.07228, 2018.

\bibitem{ioffe2015batch}
Sergey Ioffe and Christian Szegedy.
\newblock Batch normalization: Accelerating deep network training by reducing
  internal covariate shift.
\newblock In {\em Proceedings of the 32nd International Conference on Machine
  Learning (ICML 2015)}, volume~37 of {\em Proceedings of Machine Learning
  Research}, pages 448--456. PMLR, 2015.

\end{thebibliography}
\end{document}